%%% save the original kernel definitions
%\let\latexarabic\arabic
%\let\latexdocument\document
%\let\latexenddocument\enddocument
%

%%% now we can load the class
%\RequirePackage{etoolbox}
%\csdef{input@path}{%
%	{basty/}% cls, sty files
	%{img/}% eps files
%}%
%\csgdef{bibdir} {./babib/}% bst, bib files

\documentclass[preprint]{imsart}
%
%\pubyear{0000}
%\volume{00}
%\issue{0}
%\doi{0000}
%\firstpage{1}
%\lastpage{1}

%
\usepackage[margin=1.1in,a4paper]{geometry}\RequirePackage{etoolbox}
%\csdef{input@path}{%
%	{basty/}% cls, sty files
%	{babib/}% bib files
%}%
%\csgdef{bibdir} {bib}% bst, bib files
%\documentclass[manuscript]{biometrika}

%%% recover the original definitions
%\let\document\latexdocument
%\let\enddocument\latexenddocument
%\AtEndDocument{\printhistory}
%\let\arabic\latexarabic
%\def\rm{}
%\usepackage{geometry}
%
%\geometry{
%	textwidth=33pc,
%	textheight=\dimexpr48\baselineskip+\topskip\relax,
%	marginparsep=11pt,
%	marginparwidth=107pt,
%	footnotesep=6.65pt,
%	headheight=9pt,
%	headsep=9pt,
%	footskip=30pt,
%}

%%%%%%%%%%%%%%%%%%%%%%%%%%%%%%%%%%%%%%
\usepackage[utf8]{inputenc}
\usepackage{amsmath}
\usepackage{amsfonts}
\usepackage{amssymb}
\usepackage{graphicx}
\usepackage{amsthm}
\usepackage{yfonts}
\usepackage{accents}
\usepackage{enumerate}
\usepackage[colorlinks,citecolor=blue,urlcolor=blue,filecolor=blue,linkcolor=blue,backref=page]{hyperref}%{red!80!black}
%\usepackage{xr}
%\externaldocument{wbb-pet-suppl}
%\hypersetup{colorlinks=true,  citecolor = blue, linkcolor = red}
% \usepackage[colorinlistoftodos]{todonotes}

\allowdisplaybreaks

\usepackage{tikzsymbols}
\usetikzlibrary{positioning}
\usetikzlibrary{%
	calc,%
	decorations.pathmorphing,%
	fadings,%
	shadings,%
	hobby,
	arrows.meta,
	shapes.arrows,
	graphs,
	quotes,
	decorations.markings,
	patterns
	%  	arrows,
	%  	background
}

\usepackage[ruled, linesnumbered]{algorithm2e}
%\makeatletter
%%\renewcommand{\AlTitleSty}[1]{\textit{#1}\unskip}% default definition
%\renewcommand{\AlCapSty}[1]{{{#1}}\unskip}% default definition
%\renewcommand{\algorithmcfname}{\textit{Algorithm}}%
%\renewcommand{\algocf@captiontext}[2]{\quad #1\algocf@typo. \AlCapFnt{}#2} % text of caption
%\renewcommand{\AlTitleFnt}[1]{#1\unskip}% default definition
%\def\@algocf@capt@plain{top}
%\renewcommand{\algocf@makecaption}[2]{%
%	\addtolength{\hsize}{\algomargin}%
%	\sbox\@tempboxa{\algocf@captiontext{#1}{#2}}%
%	\ifdim\wd\@tempboxa >\hsize%     % if caption is longer than a line
%	\hskip .5\algomargin%
%	\parbox[t]{\hsize}{\algocf@captiontext{#1}{#2}}% then caption is not centered
%	\else%
%	\global\@minipagefalse%
%	\hbox to\hsize{\box\@tempboxa}% else caption is centered
%	\fi%
%	\addtolength{\hsize}{-\algomargin}%
%}
%\makeatother
\usepackage{multicol}
\usepackage{caption, subcaption}
\usepackage{float}
\usepackage{dsfont}
\usepackage{accents}
\usepackage{natbib}
\numberwithin{equation}{section}

\newtheorem{theorem}{Theorem}%[section]

\newtheorem{conj}{Conjecture}

\newtheorem{lem}{Lemma}[section]

\newtheorem{prop}{Proposition}%[section]

\newtheorem{remark}{Remark}%[section]

\newtheorem{proposition}{Proposition}%[section]

\newtheorem{definition}{Definition}%[section]

\newtheorem{assumption}{Assumption}

\newtheorem{prob}{Problem}

\DeclareMathOperator*{\argmin}{arg\,min}
\DeclareMathOperator*{\argmax}{arg\,max}

\newcommand{\R}{\mathbb{R}}
\newcommand{\Sp}{\mathbb{S}}
\newcommand{\iid}{\stackrel{iid}{\sim}}

\let\oldnl\nl% Store \nl in \oldnl
\newcommand{\nonl}{\renewcommand{\nl}{\let\nl\oldnl}}% Remove line number for one line

\begin{document}
%	\jname{Biometrika}
%	%% The year, volume, and number are determined on publication
%	\jyear{2021}
%	\jvol{0}
%	\jnum{0}
	%\author{\'{E}ric Barat}
	%\affil{Université Paris-Saclay, CEA, List, F-91120, Palaiseau, France \\ \email{eric.barat@cea.fr}}
	
	%\author{Thomas Dautremer}
	%\affil{Université Paris-Saclay, CEA, List, F-91120, Palaiseau, France \\ \email{thomas.dautremer@cea.fr}}
\begin{frontmatter}
\title{Nonparametric posterior learning for emission tomography with multimodal data}
\runtitle{}

\begin{aug}
	\author{\fnms{Fedor} \snm{Goncharov}\thanksref{addr1}\ead[label=e0]{fedor.goncharov@cea.fr}},
	\author{\fnms{\'Eric} \snm{Barat}\thanksref{addr1}\ead[label=e1]{eric.barat@cea.fr}}
	\and
	\author{\fnms{Thomas} \snm{Dautremer}\thanksref{addr1}\ead[label=e2]{thomas.dautremer@cea.fr}}
	\runauthor{}
	
	\address[addr1]{Université Paris-Saclay, CEA, List, F-91120, Palaiseau, France.\\ fedor.goncharov@cea.fr, eric.barat@cea.fr, thomas.dautremer@cea.fr}	%\printead{e0}, \printead{e1}, \printead{e2}}

%	\affil{Université Paris-Saclay, CEA, List, F-91120, Palaiseau, France \\ \email{fedor.goncharov@cea.fr} \email{eric.barat@cea.fr} \email{thomas.dautremer@cea.fr}}
%	\maketitle

	\begin{abstract}
		We continue studies of the uncertainty quantification problem 
		in emission tomographies such as PET or SPECT when additional multimodal data (e.g., anatomical MRI images)
		are available. To solve the aforementioned problem we 
		adapt the recently proposed nonparametric posterior learning technique to the 
		context of Poisson-type data in emission tomography. 
		Using this approach we derive sampling algorithms which are trivially parallelizable, scalable and very easy to implement. In addition, we prove  conditional consistency and tightness for the distribution of produced samples in the small noise limit (i.e., when the acquisition time tends to infinity) and derive new geometrical and necessary condition on how MRI images must be used. This condition arises naturally in the context of identifiability problem for misspecified generalized Poisson models.
		We also contrast our approach with Bayesian Markov Chain Monte Carlo sampling based on one  data augmentation scheme which is very popular in the context of Expectation-Maximization algorithms for PET or SPECT. We show theoretically and also numerically that such data augmentation significantly increases mixing times for the Markov chain. In view of this, our algorithms seem to give a reasonable trade-off between design complexity, scalability, numerical load and assessment for the uncertainty.
	\end{abstract}
%	\begin{keywords}
%		tomography, inverse problems, MCMC, Bayesian inference, bootstrap
%	\end{keywords}
\end{aug}
\end{frontmatter}

	\section{Introduction}
	Emission tomographies (further referred as ET) such as Positron Emission Tomography (PET) or Single Photon Emission Computed Tomography (SPECT) are  functional imaging modalities of nuclear medicine which are used to image activity processes and, in particular, metabolism in soft tissues via the uptake of certain injected biomarkers. The level of metabolism provides critical information for diagnostics and treatment of cancers; see e.g.,   \citet{weber2005pet}, \citet{loredana2018tumor} and references therein. 
	
	In this work we continue studies on the two following problems:
	
	\begin{prob}
		\label{prob:intro:quant-uncertainty}
		Quantify the uncertainty of reconstructions in ET. 
	\end{prob}
	
	\begin{prob}
		\label{prob:intro:multimodal-data}
		Regularize the inverse problem using the multimodal data (e.g., images from CT or MRI).
	\end{prob}

	Problem~\ref{prob:intro:quant-uncertainty} is not new and several approaches have been established already which in turn can be grouped according to the  statistical view of the problem: frequentist (\citet{fessler1996meanvar}, \citet{barrett1994noise}, \citet{li2011noise}), Bayesian (\citet{higdon1997fully}, \citet{weir1997bayesian}, \citet{marco2007multiscale}, \citet{sitek2012data}, 
	\citet{bochkina2014}, \citet{filipovic2018pet}) and bootstrap  (\citet{haynor1989resampling}, \citet{dahlbom2001estimation}, \citet{lartizien2010comparison}, \citet{filipovic2021reconstruction}). The list of given references is far from being complete and it should also include references therein.
	
	Problem~\ref{prob:intro:multimodal-data} can be splitted further depending on which type of exterior data are used - CT or MRI. The most common use of both modalities consists in extracting boundaries of anatomical features on side images and embedding them into regularization schemes via special penalties and/or non-invariant filters; see e.g., \citet{fessler1992regularized}, \citet{chun2013post}, \citet{hero1999minimax},
	\citet{comtat2001clinically}, 
	\citet{vunckx2011pet}.
	Main reasons to use multimodal data in ET are the ill-posedness of corresponding inverse problems (in PET/SPECT forward operators are ill-conditioned; see e.g., \citet{thorsten2016poisson}) and very low signal-to-noise ratio in the raw measured data. All this together results in loss of resolution in reconstructed images and consequently in oversmoothing, e.g., when applying spatially invariant filters for post-smoothing.
	In our work as multimodal data we use series of presegmented anatomical MRI images. 
	%The modality of  PET(SPECT)-CT is well known and first efforts to improve the image quality were made even before the advent of ET-CT coupled scanners. Since then there is a concensus that side information may substantially improve the quality of images but should be used with care in case of misalignment between features on ET images and on side ones; see, e.g.,  \citet{fessler1992regularized}, \citet{comtat2001clinically}.	 
	Problem~\ref{prob:intro:multimodal-data} for additional MRI data is now of particular interest due to appearance of commercially available models of PET-MRI scanners \citet{luna2013functional}, \citet{judenhofer2008simultaneous} which allow simultaneous registrations of both signals, thus significantly reducing motion effects.	
	Moreover, in the experiment on tumor imaging in \citet{bowsher2004mri} correlations between PET and MRI signals were observed, therefore, potentially MRI data can be used to regularize accurately the inverse problem.
	In Section~\ref{sect:prelim} we explain in detail how we use MRI data and compare our approach with previous works.
	
	%An important feature of this work is that  Problems~\ref{prob:intro:quant-uncertainty},~\ref{prob:intro:multimodal-data} are considered simultaneously. Main conceptual difficulty here is that there is no precise definition of an optimal solution, where the latter can be seen as some numerical algorithm.  
	%To be more precise such algorithm should take as an input raw ET data (sinogram or list-mode), acquisition geometry, MRI data,  calibration parameters for regularization and parameters which are necessary for uncertainty quantification (e.g., region of interest in the final image, desired statistic (mean, median, quantiles etc.)). As an output we expect exactly what the algorithm is designed for given the input parameters. 
	%The most universal form of an algorithm would be a sampler which at the end generates reconstructed images following  some probability distribution. Then, using generated samples one can perform inference on any desired statistic (modulo numerical resources available). 

	For Problem~\ref{prob:intro:quant-uncertainty} already the definition of uncertainty for reconstructions in ET is not obvious: during time interval $(0, t)$ raw data $Y^t$ (sinogram) is generated from unknown distribution $P^t$ (typically it is assumed to be from the generalized Poisson model with unknown intensity parameter $\lambda_*$ and known design $A$, i.e., $P^t = P^t_{A, \lambda} = \mathrm{Po}(t A\lambda_*)$), so any reconstruction $\widehat{\lambda}^t$ would be also a function of observed data, that is $\widehat{\lambda}^t=\widehat{\lambda}^t(Y^t)$ and uncertainty propagates directly from $Y^t$. %For example,  $\widehat{\lambda}^t$ could be a random variable taking its values in some vector space or a cone which could be finite-dimensional (typically $\R^n$ or $\R^n_+$ for suitable $n$) or infinite-dimensional (for example, some functional spaces and respective cones, e.g., $L^2(\Omega)$, $H_0^s(\Omega), \, s > 0$, where $\Omega \subset \R^3$ is the imaging domain).
	This is known as frequentist approach, and for ET it often leads to estimation of confidence intervals for the maximum likelihood estimator (MLEM) or for penalized maximum log-likelihood estimator (pMLEM or MAP) (both are $M$-estimators \citet{vaart2000asymptotic}); see e.g., \citet{fessler1996meanvar}. In particular, frequentist approach has an advantage of being relatively robust to model misspecification (i.e., when $P^t\not= P^t_{A,\lambda}$ for any $A$ and $\lambda$). In this case for large $t$ estimate $\widehat{\lambda}^t$ will tend to a projection of $P^t$ onto  $P^t_{A, \lambda}$ with respect to some chosen distance between probability distributions (e.g., for Kullback-Liebler divergence). Under additional assumptions on $P^t$ even in misspecified case it is still possible to establish asymptotic distribution of $\widehat{\lambda}^t$ (e.g., via asymptotic normality), from which, for example, the  asymptotic confidence intervals can be retrieved. However, use of asymptotic results for ET practice seems doubtful since very little data are available in a single scan.
	
	Bayesian approach is also used for uncertainty quantification in ET. In this case the initial uncertainty on the parameter of interest (e.g., anatomical information from side images, assumptions on support and smoothness) is  encoded in some prior measure $\pi_{\mathcal{M}}(\lambda)$ which is updated using model family $P^t_{A,\lambda}$ and data $Y^t$ to define posterior distribution via the well-known Bayes' formula; see e.g., \citet{bochkina2014}. Sampling from such posteriors is done via Markov Chain Monte Carlo (MCMC) techniques \citet{weir1997bayesian}, \citet{higdon1997fully}, \citet{marco2007multiscale}, \citet{filipovic2018pet}. Common bottlenecks here are: complicated design of the algorithm and  its implementation, high numerical load per iteration, lack of scalability and most importantly -- poor mixing in constructed chains; see e.g.,~\citet{vandyk2001art},~\citet{duan2018scaling}. Additional issue is the misspecification of the model which cannot be included in the classical Bayesian framework and for robust inference it leads to the recently proposed general Bayesian updating and bootstrap-type sampling; see \citet{pompe2021introducing}, Section~1.
	
	As noted above bootstrap is another attractive technique to assess the uncertainty which can be also seen as some probabilistic sensitivity analysis or as approximate/exact sampling via (nonparametric) Bayesian posteriors; see e.g., \citet{newton1994wbb}, \citet{lyddon2018npl}, \citet{fong2019scalable}. Nontrivial questions for ET are the following ones: (1) how to define a bootstrap procedure for Poisson-type raw data in ET and also include side information (multimodal images) (2) provide theoretical guarantees on the coverage by asymptotic credible intervals. A common approach to answer question (1) is to use resampling in list-mode data; see e.g., \citet{haynor1989resampling}, \citet{dahlbom2001estimation}. Such approach targets to resample photon counts and then propagate the uncertainty by using some reconstruction algorithm (e.g., FBP (Filtered backprojection), MLEM or MAP (maximum a posteriori)). 
	In this sense our approach is similar to bootstrap as it will be explained further. Question (2) is often resolved by demonstrating asymptotic equivalence between bootstrap, Bayesian and frequentist approaches via Bernstein von-Mises type theorems; see e.g., \citet{vaart2000asymptotic}, \citet{lyddon2018npl}, \citet{ng2020random} or equivalence of Edgeworth's expansions for higher orders; see \citet{pompe2021introducing}. 
	
	In view of the above discussion, we note that for practice it seems that it is not of great  importance which kind of uncertainty model is used -- frequentist, Bayesian or bootstrap. Most important is to make usable the resulting framework and algorithms by practitioners, hence, it should be simple, tractable and numerically feasible.
	
	Being inspired with nonparametric posterior learning (further referred as NPL) originating from \citet{lyddon2018npl}, \citet{fong2019scalable}, we propose sampling algorithms for ET with and without MRI data at hand. Therefore, our main contribution is that we extend the NPL originally proposed for regular statistical models and i.i.d data to the non-regular generalized Poisson model of ET (see \citet{bochkina2014}), where the raw data are not i.i.d but a realization from a point process. The initial motivation for this work was the problem of poor mixing for the Gibbs-type sampler in \citet{filipovic2018pet} which was designed for posterior sampling in the PET-MRI context. Below we give a detailed analysis of this phenomenon and give few empirical advises on design of MCMC-samplers for ill-posed inverse problems such as PET or SPECT.
	Our new algorithms solve the above problem since sampled images are automatically i.i.d, moreover, the scheme is trivially parallelizable, scalable and very easy to implement because it relies on the well-known EM-type reconstruction methods from  \citet{shepp1982mlem}, \citet{fessler1995sagepet}. Because of the aforementioned non-regularity of the model we conduct a separate theoretical study of our algorithms for when large dataset is available (for ET this is equivalent to $t\rightarrow +\infty$) and establish consistency and tightness of the posterior for almost any trajectory $Y^t$, $t\in (0, +\infty)$. Establishing further the asymptotic normality requires existence of a strongly consistent estimator which has specific contraction rates in the span of the design and for components activated by positivity constraints. Existence of such estimator is left conjectured, however, we propose one candidate and explain the intuition behind which makes the requirement quite natural.
	
	Though our main theoretical results rely on the assumption of well-specified model, at the end we study the identification problem for the KL-criterion in the misspecified case with wrong design. If a certain geometrical condition on design matrix and observed asymptotic sinogram are satisfied, then the identification problem has positive answer and  negative otherwise. In particular, the latter result gives a clue to extend our theoretical results to fully misspecified scenario for the model of ET when design matrix is incorrect. The latter case is meaningful in practice since the design in ET is always computed very approximately and it does not reflect very complicated photon-matter interactions inside the human  body.

	This paper is organized as follows. In Section~\ref{sect:prelim} we give notations and all necessary preliminaries on statistical models of ET and on  use of multimodal data. In Section~\ref{sect:motivating-example-mcmc} we give a very informative example for the problem of poor mixing for MCMC in ET. In Section~\ref{sect:new-algo} we adapt nonparametric posterior learning for ET context and derive our sampling algorithms.
	In Section~\ref{sect:new-thero-results} we study theoretically the asymptotic properties of our algorithms.
	In Section~\ref{sect:conclusion} we discuss our results and possibilities for future work.

	\section{Preliminaries}
	\label{sect:prelim}
	\subsection{Notations} By $\mathbb{N}_0$ we denote the set of non-negative all integers, $\mathbb{R}^n_+$ denotes the nonnegative cone of $\R^n$, by $x \succeq  y$, $x\in \R^n, \, y\in \R^n$, we denote the property that $x_j \geq y_j$ for all $j=1, \dots, n$, $x\succ y$ denotes the same but with strict inequalities,  $\langle x, y \rangle$ stands for the scalar product $x^Ty$  (we will use both notations), $R_+(A)$ denotes the image of positive cone $\R^p_+$ under action of operator $A \in \mathrm{Mat}(d, p)$, 
	by $X\sim F$ we denote the property that random variable $X$ has distribution $F$, $\mathrm{Po}(\lambda)$ denotes the Poisson distribution with intensity $\lambda, \, \lambda \geq 0$, by $\Gamma(\alpha, \beta)$ we denote the gamma distribution with shape parameter $\alpha$, and scale $\beta$ ($\xi \sim \Gamma(\alpha, \beta)$, $\mathbb{E}\xi = \alpha \beta^{-1}$, $\mathrm{var}(\xi) = \alpha\beta^{-2}$).  %$\mathcal{N}(\mu, \Sigma)$ denotes the normal distribution with mean $\mu \in \R^n$ and covariance $\Sigma \in S^n_+$, where the latter denotes the cone of positive definite matrices of size $n\times n$.
	Let $A\in \mathrm{Mat}(d,p)$, $I\subset \{1, \dots, d\}$, then 
	$\mathrm{cond}(A)$ denotes the condition number of $A$, \
	$A_I$ denotes the submatrix of $A$ with rows indexed by elements in $I$, $\mathrm{Span}(A^T)$ denotes the span of the rows of $A$ being considered as vectors in $\R^p$.
	Let $Z$ be a complete separable metric space equipped with metric $\rho_Z(\cdot, \cdot)$ and boundedly finite non-negative measure $dz$, $B(Z)$ denotes the sigma algebra of borel sets in~$Z$. By $\mathcal{PP}^t$ we denote a point process on $Z$ defined for each $t\in \R_+$ and $\mathcal{PP}^t_{\Lambda}$ denotes the Poisson point process on $Z$ with intensity $t\Lambda$, where $\Lambda$ is the nonnegative function $\Lambda = \Lambda(z), \, z\in Z$, $\Lambda$ is integrable with respect to $dz$. Weighted gamma process on $Z$ is denoted by $GP(\alpha, \beta) = G_{\alpha, \beta}$, where $\alpha$ is the shape measure on $Z$ and $\beta$ is the scale which is a non-negative function $Z$ and also $\alpha$-integrable; see, e.g., \citet{albertlo1982bayesiannon}. Finally, by $\mathcal{KL}(P, Q)$ we denote the standard Kullback-Leibler divergence between probability distributions $P$, $Q$.

	\subsection{Mathematical model for ET} Raw data in ET are described by vector $Y^t = (Y_1^t, \dots, Y_d^t) \in (\mathbb{N}_0)^d$ called sinogram which stands for the  photon counts recorded during exposure time $t$ along $d$ lines of response (LORs).
	It is assumed that 
	\begin{align}\label{eq:poisson-model-pet}
	\begin{split}
	&Y^t_i \sim\mathrm{Po}(t \Lambda_i), \, \Lambda_i = a_i^T\lambda, \\
	&Y_i^t\text{ are mutually independent for } i\in \{1, \dots, d\},
	\end{split}
	\end{align}
	where $\lambda \in \R^p_+$ is the parameter of interest on which we aim to perform inference. In practice, vector $\lambda$ denotes the spatial  emission concentration of the isotope (or tracer uptake) measured in [Bq/mm$^3$], that is  $\lambda_j$ is the concentration at pixel $j\in \{1, \dots, \,  p\}$. 
	Vector $\Lambda = (\Lambda_1, \dots, \, \Lambda_d)$ denotes the observed photon intensities along LORs $\{1, \dots, \, d\}$, respectively. To separate the LORs with strictly positive intensities from those ones with zeros we introduce the following notations:
	\begin{align}\label{eq:ind-lors-pos-zeros}
	I_0(\Lambda) = \{i : \Lambda_i = 0\}, \, I_1(\Lambda) = \{i : \Lambda_i  > 0\}, \, I_0 \sqcup I_1 = \{1, \dots, d\}.
	\end{align}

	Collection of $a_i\in \R^p$ in \eqref{eq:poisson-model-pet} constitute  matrix $A = [a_1^T, \dots, a_d^T]^T$, $A\in \mathrm{Mat}(d, p)$  which is called by projector or system matrix in applied literature on ET and by design or design matrix in statistical literature. Each element $a_{ij}$ in $A$ denotes the probability to observe a pair of photons along LOR  $i\in \{1, \dots, d\}$ if both they were emitted from pixel $j\in \{1, \dots, p\}$. In view of such interpretation, for design $A$ we assume the following:
	\begin{align}\label{eq:design-matrix-positivity-restr-1}
	&a_{ij} \geq 0 \text{ for all pairs } (i, j), \\
	\label{eq:design-matrix-positivity-restr-2}
	&A_j = \sum\limits_{i=1}^{d} a_{ij}, \, 0 < A_j \leq 1 \text{ for all }j\in \{1, \dots, p\}, \\
	\label{eq:design-matrix-positivity-restr-3}
	&\sum_{j=1}^{p} a_{ij} > 0 \text{ for all } i \in \{1, \dots, d\}.
	\end{align}
	
	If any of formulas \eqref{eq:design-matrix-positivity-restr-2}, \eqref{eq:design-matrix-positivity-restr-3} would not be satisfied, then, in practice it would mean that either some pixel is not detectable at all (hence it can be completely removed from the model) or some detector pair is broken and cannot detect any of incoming photons. These scenarios are outside of our scope.
	
	It is well-known that the inverse problems for PET and SPECT are mildly ill-posed (see e.g., \citet{thorsten2016poisson}, \citet{natterer2001mathematics}), which in practice means that 
	\begin{equation}\label{eq:design-matrix-non-empt-ker}
	\ker A \neq \{0\}.
	\end{equation}

	\begin{remark}\label{prelim:rem:ill-posedness}
		Numerically $A$ represents a discretized version of weighted Radon transform operator $R_a$ for ET with complete data (see e.g., \citet{natterer2001mathematics}). Since $A$ approximates $R_a$ in strong operator norm (e.g., for $R_a : L_0^2(D) \rightarrow L_0^2([-1,1]\times \Sp^1)$, $D$ is the centered unit ball in $\R^2$) we know that 
		\begin{equation}\label{eq:prelim:ill-posedness:deg}
		\sigma_k \asymp k^{-1/2}, \, k = 1, \dots, p,
		\end{equation}
		where $\sigma_k$ are the singular values of $A$. In particular, even if $A$ is injective for $p$ large enough, due to \eqref{eq:prelim:ill-posedness:deg}, it may happen that $\mathrm{cond}(A) > \varepsilon^{-1}_F$, where $\varepsilon_F$ is the floating-point precision.
		In the latter case, due to the cancelling effect singular values of $A$ numerically will be equivalent to machine zeros which means then exactly the existence of a nontrivial kernel for $A$.
	\end{remark}

	Likelihood and negative log-likelihood functions for model in  \eqref{eq:poisson-model-pet} are given by the formulas:
	\begin{align}\label{eq:prelim:poiss-prob-model}
	P_{A, \lambda}^t(Y^t) &= \mathrm{pr}(Y^t \mid A, \lambda, t) = \prod_{i=1}^d \dfrac{(ta_i^T\lambda)^{Y_i^t}}{Y_i^t!}
	e^{-t a_i^T\lambda}, \, \lambda \in \R^p_+, \, t \geq 0,  \\
	\label{eq:prelim:poiss-log-likelihood-model}
	L( \lambda \mid Y^t, A, t )& = \sum_{i=1}^{d}	- Y_i^t \log(t\Lambda_i) + t\Lambda_i, \, \Lambda_i = a_i^T\lambda.
	\end{align}
	
	For $A$ satisfying \eqref{eq:design-matrix-non-empt-ker} and for any $Y^t$ function  $L(\lambda \mid Y^t, A, t)$ is not strictly convex even at the point of the global minima since $L(\lambda + u \mid Y^t, A, t) = L(\lambda \mid Y^t, A, t)$ for any $\lambda\in \R^p_+$ and $u\in \ker A$. To avoid numerical instabilities due to this phenomenon a convex penalty $\varphi(\lambda)$ is added to $L(\lambda \mid Y^t, A, t)$, so we also consider the  penalized negative log-likelihood:
	\begin{equation}\label{eq:prelim:poiss-log-likelihood-penalized}
	L_{p}( \lambda \mid Y^t, A, t, \beta^t) = L(\lambda \mid Y^t, A, t) + \beta^t \varphi(\lambda), \, \lambda\in \R^p_+,
	\end{equation}
	where  $\beta^t \geq 0$ is the regularization coefficient. We assume that  $\beta^t$ may increase with time $t$ at a certain rate which is important for practice in order to increase the signal-to-noise ratio in reconstructed images.

	\subsection{Regularization penalty}
	
	The role of regularization penalty $\varphi(\lambda)$ in  \eqref{eq:prelim:poiss-log-likelihood-penalized} is to decrease the numerical instability in the underlying inverse problem and to make function  $L_{p}(\lambda \mid Y^t, A, t, \beta^t)$ more convex, especially in directions close to $\ker A$. 
	
	In view of this we assume that 
	\begin{align}
	\label{eq:prelim:penalty-cond-convex}
	&\varphi  \text{ is continous and convex on $\R^p$}, \\
	\label{eq:prelim:penalty-cond-strict-conv}
	&g_u(w)= 
	\varphi(u + w) \text{ is strictly convex in } w\in \ker A \text{ for any } u\in \mathrm{Span}(A^T).
	\end{align}
	In Subsection~\ref{subsect:theory:consistency} and in our proofs we use extensively the following technical result.
	
	\begin{lem}\label{lem:consistency:lem-kernel-continuity}
		Let $\varphi(\lambda)$ be the function satisfying  \eqref{eq:prelim:penalty-cond-convex}, \eqref{eq:prelim:penalty-cond-strict-conv}, $A$ satisfies conditions in \eqref{eq:design-matrix-positivity-restr-1}-\eqref{eq:design-matrix-positivity-restr-3}.
		Let $\lambda \in \R^p_+$ and $U\subset \mathrm{Span}(A^T)$ be a compact such that 
		\begin{equation}
		\{ w : \lambda + u + w\succeq 0, \, w\in \ker A\} \text{ is non-empty for any } u\in U.
		\end{equation}
		Then, mapping defined by the formula
		\begin{align}\label{eq:proofs:lemma-existence-minimizer-ker-a}
		w_{A,\lambda}(u) = \argmin_{\substack{w:\lambda + u + w\succeq 0, \\ w\in\ker A}}
		\varphi(\lambda + u + w), \, u\in U
		\end{align}
		is one-to-one. Moreover, $w_{A,\lambda}(u)$ is continuous on $U$.
	\end{lem}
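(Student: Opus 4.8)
The plan is to prove two things: first, that the formula \eqref{eq:proofs:lemma-existence-minimizer-ker-a} actually defines a single-valued map (for every $u\in U$ the minimizer exists and is unique), which I read as the content of the first assertion; and second, that this map is continuous. Throughout I would use the orthogonal splitting $\R^p=\mathrm{Span}(A^T)\oplus\ker A$ (recall $\ker A=(\mathrm{Span}(A^T))^\perp$). Writing $\lambda=\lambda_\parallel+\lambda_\perp$ accordingly and setting $u'=\lambda_\parallel+u\in\mathrm{Span}(A^T)$, $w'=\lambda_\perp+w\in\ker A$, the objective becomes $\varphi(\lambda+u+w)=\varphi(u'+w')$, so by \eqref{eq:prelim:penalty-cond-strict-conv} and the fact that $w\mapsto\lambda_\perp+w$ is an affine bijection of $\ker A$, the map $w\mapsto\varphi(\lambda+u+w)$ is strictly convex on $\ker A$ for each fixed $u$.

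\emph{Well-definedness.} The key structural fact, which I would isolate as a preliminary observation, is that $\ker A\cap\R^p_+=\{0\}$: if $v\in\ker A$ with $v\succeq0$ then $0=\sum_j a_{ij}v_j$ for every $i$ forces $a_{ij}v_j=0$ for all $(i,j)$ by the nonnegativity \eqref{eq:design-matrix-positivity-restr-1}, so $v_j>0$ would give a zero column, contradicting $A_j>0$ in \eqref{eq:design-matrix-positivity-restr-2}. Consequently the feasible set $C(u)=\{w\in\ker A:\lambda+u+w\succeq0\}$ (nonempty by hypothesis, closed and convex) has recession cone $\ker A\cap\R^p_+=\{0\}$ and is therefore \emph{compact}. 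A continuous function attains its minimum on a compact set, and strict convexity makes the minimizer unique; hence $w_{A,\lambda}(u)$ is well-defined. The same recession argument, run over sequences $u_n\in U$ together with compactness of $U$, shows that $\bigcup_{u\in U}C(u)$ is bounded (otherwise a normalized escaping direction would produce a nonzero element of $\ker A\cap\R^p_+$); I would record this uniform bound for the continuity argument.

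\emph{Continuity.} I would encode the constraint through a basis matrix $B$ of $\ker A$, writing $w=B\theta$, so that $C(u)$ becomes $\{\theta:(-B)\theta\preceq\lambda+u\}$, a polyhedron whose right-hand side depends affinely (hence continuously) on $u$. The plan is then the standard subsequential argument: given $u_n\to u$, the minimizers $w_n=w_{A,\lambda}(u_n)$ lie in the uniformly bounded family, so along any subsequence $w_{n_k}\to\bar w$; closedness of the graph of $C(\cdot)$ (pass to the limit in $w\in\ker A$ and in $\lambda+u+w\succeq0$) gives $\bar w\in C(u)$. To upgrade this to optimality I need, for each competitor $w\in C(u)$, feasible points $w_{n_k}'\in C(u_{n_k})$ with $w_{n_k}'\to w$; then $\varphi(\lambda+u_{n_k}+w_{n_k})\le\varphi(\lambda+u_{n_k}+w_{n_k}')$ passes in the limit, by continuity of $\varphi$, to $\varphi(\lambda+u+\bar w)\le\varphi(\lambda+u+w)$, and uniqueness from the first step forces $\bar w=w_{A,\lambda}(u)$; since every subsequential limit of the bounded sequence equals this common value, $w_n\to w_{A,\lambda}(u)$.

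The main obstacle is exactly the \emph{lower hemicontinuity} of $u\mapsto C(u)$ invoked above — producing nearby feasible points $w_{n_k}'\to w$ — since mere nonemptiness of the sets $C(u)$ does not by itself control how the feasible region deforms. Here I would use the polyhedral structure via Hoffman's error bound: for the consistent system $(-B)\theta\preceq\lambda+u$ there is a constant $\kappa$ depending only on $B$ with $\mathrm{dist}(\theta_0,C(u_n))\le\kappa\,\|((-B)\theta_0-(\lambda+u_n))_+\|$ for every $\theta_0$. Taking $\theta_0$ feasible at $u$, the residual is bounded in norm by $\|(\lambda+u)-(\lambda+u_n)\|=\|u-u_n\|\to0$, so $\mathrm{dist}(\theta_0,C(u_n))\to0$ and the required approximating sequence exists. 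This gives lower hemicontinuity, which together with the closed-graph upper hemicontinuity and compact-valuedness would also let one cite Berge's maximum theorem as an equivalent packaging of the same conclusion.
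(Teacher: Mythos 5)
Your proposal is correct, and it follows the same overall skeleton as the paper's proof: well-definedness comes from compactness of the feasible slice (via the observation $\ker A\cap\R^p_+=\{0\}$, which the paper proves by the same escaping-direction contradiction) plus strict convexity of $\varphi$ along $\ker A$, and continuity comes from the subsequential argmin-stability argument, whose only delicate point is producing, for each competitor $w$ feasible at $u$, nearby feasible points at $u_{n}$. Where you genuinely diverge is in the technical engine for that last step. The paper proves a separate Hausdorff-continuity lemma for the slices $S_{A,\lambda}(\{u\})=(\lambda+u+\ker A)\cap\R^p_+$ by invoking Walkup--Wets' theorem on Lipschitz behavior of parallel sections of polyhedra, and then converts Hausdorff proximity into a convergent sequence of feasible points via Euclidean projections controlled by an Attouch--Wets quantitative stability estimate (which in turn requires auxiliary bounds on the projection radii). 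You instead coordinatize $\ker A$ by a basis matrix $B$, so the feasible set becomes the polyhedron $\{\theta:(-B)\theta\preceq\lambda+u\}$ with right-hand side affine in $u$, and apply Hoffman's error bound: the distance from any point feasible at $u$ to the feasible set at $u_n$ is at most $\kappa\|u-u_n\|$, with $\kappa$ depending only on $B$. This is a cleaner and more self-contained route --- one classical inequality replaces two cited results plus the bookkeeping around bounded Hausdorff distances --- and it even yields the Lipschitz rate for the feasibility correspondence directly, which is essentially the content the paper extracts from Walkup--Wets. A minor point in your favor: you also justify carefully that \eqref{eq:prelim:penalty-cond-strict-conv}, stated for shifts $u\in\mathrm{Span}(A^T)$, applies to $\lambda+u$ with general $\lambda\in\R^p_+$ by splitting $\lambda$ orthogonally and absorbing $\lambda_\perp$ into the $\ker A$ variable; the paper uses this fact implicitly without comment.
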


	%Moreover, we require that 
	%\begin{align}
	%\begin{split}\label{eq:prelim:penalty-boundeness-of-minimizers-prop}
	%	&\text{for any real $C$ and bounded set } U\subset \mathrm{Span}(A^T) \\
	%	&\text{set }\bigcup_{u\in U} \{w : g_u(w)\leq C, w \in \ker A\} \text{ is also bounded in }\ker A.
	%\end{split}
	%\end{align}
	%Property \eqref{eq:prelim:penalty-boundeness-of-minimizers-prop} guarantees that unique minimizers of $g_u(w)$ with respect to $w$ for $u\in U$ (possibly also under some convex restrictions) will belong to a  bounded, hence precompact set. We use this to prove consistency for our sampling algorithm when data $Y^t$ for arbitrarily large $t$ is available.

	%It is also easy to check that \eqref{eq:prelim:penalty-boundeness-of-minimizers-prop} is also satisfied for $\varphi$ from \eqref{eq:prelim:numerical:log-cosh-prior-def}.

	\subsection{Multimodal data for ET} 
	\label{subsect:multimodal}
	From \eqref{eq:poisson-model-pet} one can see that recorded signal $Y^t$ is essentially the Poisson noise
	for which its signal-to-noise ratio (SNR) is proportional to $\sqrt{t\Lambda}$ and is quite low in practice (e.g., because of low injected dose and moderate~$t$ in standard medical protocols). In order to increase the SNR in reconstructed images and not to loose a lot in resolution it is proposed to regularize the inverse problem using  multimodal data -- images from CT or MRI. We choose MRI since it provides anatomical information with high contrast in soft tissues in comparison to CT (see Figures~\ref{fig:mri-ct-comparison} (a), (b)).
	
\begin{figure}[H]
	\centering
 	\subcaptionbox{CT}{\includegraphics[height=30mm]{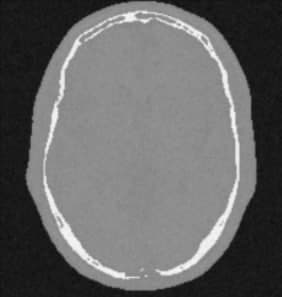}} \hspace{4em}%
 	\subcaptionbox{MRI}{\includegraphics[height=30mm]{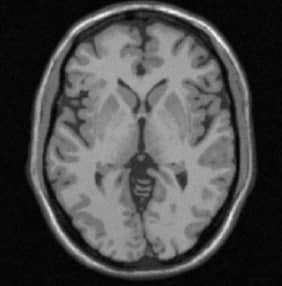}} \hspace{4em}%
	\subcaptionbox{$M\in \mathcal{M}$}{\includegraphics[height=30mm]{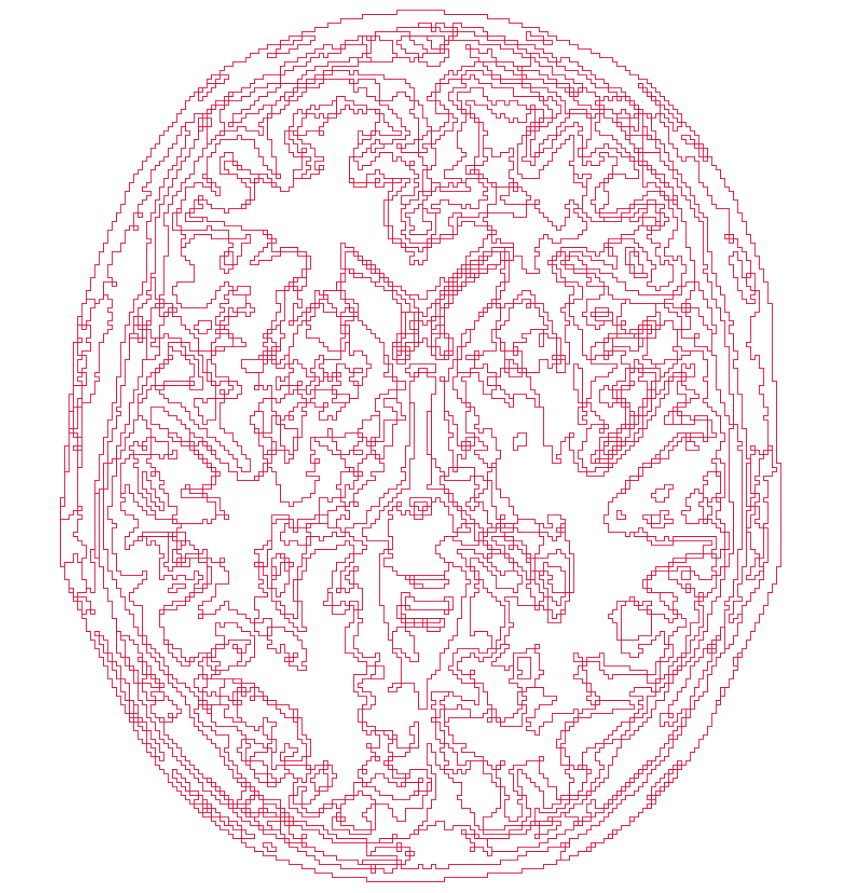}}%{example-image-a}}
	\caption{Multimodal data for ET of the brain}
	\label{fig:mri-ct-comparison}
\end{figure}

	We assume that our exterior data consists of $r$ presegmented MRI images $\mathcal{M} = \{M_1, \dots, M_r\}$ (see Figure~\ref{fig:mri-ct-comparison} (c)) (segmentations of MRI images are precomputed using the ddCRP algorithm from \citet{Ghosh2011ddcrp}). In fact, MRI-guided reconstructions in PET is an active topic of research (see the discussion in \citet{filipovic2021reconstruction} and also references therein) and still a lot of work is needed to describe precisely correlations between ET and MRI signals (especially from biological point of view); see e.g., \citet{bowsher2004mri}. Current use of MRI data is purely image-based: spatially regularizing penalties are constructed using MRI data in \citet{bowsher1996bayesian}, \citet{bowsher2004mri}, \citet{vunckx2011pet}, models built upon MRI-segmented data for  locally-constant tracer distribution are used in \citet{filipovic2018pet} and also in our work. Our approach is ideologically different from ones in \citet{vunckx2011pet} because we use $\mathcal{M}$ to construct models of tracer distributions and then sample ``pseudo-sinogram'' to mix it with real observed data $Y^t$. That is MRI data are used only in observation space for ET. This has a practical feature of interpretability for our main  calibration parameter which reflects the ratio between number of real detected photons $N^t = \sum Y^t$ and the number of ``pseudo-photons'' generated from the MRI-based models.

	\section{A motivating example for NPL in ET}
\label{sect:motivating-example-mcmc}

Recently a Gibbs-type sampler was proposed in \citet{filipovic2018pet} for  Bayesian inference for PET-MRI. Despite a number of  positive practical features (spatial regularization, use of multimodal data) the problem of slow mixing for the corresponding Markov chain was observed. Below we consider its  simplified version which shares the same mixing problem and explain the phenomenon numerically and theoretically.

\par In algorithms for ETs it is common  to introduce data augmentation (latent variables) $n^t = \{n_{ij}^{t}\}$, where 
$n_{ij}^t$ is the number of photons emitted from pixel $j$ and detected in LOR~$i$, 
$n^t_{ij} \sim \mathrm{Po}(t a_{ij} \lambda_j)$, $n^t_{ij}$ are mutually independent for all $(i,j)$; see e.g., \citet{shepp1982mlem}.

\par In view of this physical interpretation of $n^t$,  for variable $(n^t, Y^t)$ the following coherence condition must be satisfied:
\begin{equation}\label{eq:example:mcmc-latent:consistency-cond}
\sum\limits_{j=1}^{p}n^{t}_{ij} = Y_i^t \text{ for all }i\in \{1, \dots, p\}.
\end{equation}
From \eqref{eq:example:mcmc-latent:consistency-cond} it follows that $Y^t$ is a function of $n^t$, so $(Y^t, n^t)$ is indeed a data augmentation of~$Y^t$. Note that $n^t$ are not observed in a real experiment but $n^t$ greatly simplifies the design of samplers (see e.g.,  \citet{james2003gwprocesses}, \citet{filipovic2018pet}), because conditional distributions $p(n^t \mid Y^t, A, \lambda, t)$, $p(\lambda \mid n^t, A, t)$ admit very simple analytical forms even for nontrivial priors involving multimodal data. For our example below we use only a simple pixel-wise positivity gamma-prior:
\begin{equation}\label{eq:example:pixel-gamma-prior}
\pi (\lambda) = \prod_{j=1}^{p}\pi_j(\lambda_j), \, \pi_j = \Gamma(\alpha, \beta^{-1}), \, \alpha > 0, \, \beta > 0,
\end{equation}
where $\alpha$, $\beta$ are some fixed constants. For the prior in \eqref{eq:example:pixel-gamma-prior} and model \eqref{eq:poisson-model-pet}
conditional distributions $p(n^t \mid Y^t, A, \lambda, t)$, $p(\lambda \mid n^t, A, t)$ are as follows:

\begin{align}
\begin{split}\label{eq:example:latent-multinomial-distr}
p(n_{ij}^t \mid Y^t, A, \lambda, t) &= \mathrm{Multinomial}(Y_i^t, p_{i1}(\lambda), \dots, p_{ip}(\lambda)),\\
p_{ij}(\lambda) &= \dfrac{a_{ij}\lambda_j}{\sum_k a_{ik}\lambda_k}, \, i \in \{ 1, \dots, d\},
\end{split}\\
\label{eq:example:latent-gamma-pixel-distr}
p(\lambda \mid n^t, Y^t, A, t) &= \Gamma\left(
\sum\limits_{i=1}^{d} n_{ij}^t + \alpha, (tA_j + \beta)^{-1}
\right), 
\end{align}
where $A_j$ is defined in   \eqref{eq:design-matrix-positivity-restr-2}. 

Using \eqref{eq:example:latent-multinomial-distr}, \eqref{eq:example:latent-gamma-pixel-distr} the construction a Gibbs sampler for Bayesian posterior sampling from $p(\lambda \mid Y^t, A, t)$ is straightforward.\\

%\begin{algo}
%	\label{alg:example:pet-gibbs}
%	Gibbs sampler for $p(\lambda \mid Y^t, A, t)$
%	\vspace*{-2pt}
%	\begin{tabbing}
%		\qquad \enspace Input: initial point $\lambda_0\in \R^p_+$, \\
%		\qquad \qquad \enspace parameters $(\alpha, \beta)$ for prior $\pi(\lambda_j) \sim \Gamma(\alpha, \beta^{-1})$, \\
%		\qquad \qquad \enspace $A$, $B$ -- number of samples \\ 
%		\qquad \enspace For $k=1$ to $B$ do \\
%		\qquad \qquad \enspace 1. $n^t_k \leftarrow p(n^t \mid Y^t, A, \lambda_{k-1}, t)$ defined in    \eqref{eq:example:latent-multinomial-distr}\\
%		\qquad \qquad \enspace 2. $\lambda_k^t \leftarrow p(\lambda \mid n^t_k, Y^t, A, t)$ defined in  \eqref{eq:example:latent-gamma-pixel-distr}\\
%		\qquad \enspace Output: $\{\lambda_{k}^t\}_{k=1}^{B}$\\
%		\qquad \enspace Folklore: empirical distribution of $\{\lambda_k^t\}_{k=1}^B$ converges\\ 
%		\qquad \qquad \enspace to $p(\lambda \mid Y^t, A, t)$ for $B\rightarrow +\infty$
%	\end{tabbing}
%\end{algo}

\begin{center}
	\begin{minipage}{0.87\textwidth}
		\begin{algorithm}[H]
			\KwData{sinogram $Y^t$}
			\KwIn{initial point $\lambda_0\in \R^p_+$, 
				parameters $(\alpha, \beta)$ for prior $\pi(\lambda_j) \sim \Gamma(\alpha, \beta^{-1})$, $A$, $B$ -- number of samples}
			\For{ $k=1$ \KwTo $B$ }{
				Sample $n^t_k \sim p(n^t \mid Y^t, A, \lambda_{k-1}, t)$ %using   \eqref{eq:example:latent-multinomial-distr}\;
				Sample $\lambda_k^t \sim p(\lambda \mid n^t_k, Y^t, A, t)$ %using  \eqref{eq:example:latent-gamma-pixel-distr}\;
			}
			\KwOut{samples $\{\lambda_k^t\}_{k=1}^B$}
			\KwResult{empirical distribution of $\{\lambda_k^t\}_{k=1}^B$ approximates posterior $p(\lambda \mid Y^t, A, t)$}
			\caption{Gibbs sampler for $p(\lambda \mid Y^t, A, t)$}
			\label{alg:example:pet-gibbs}
		\end{algorithm}
	\end{minipage}
\end{center}

%	\begin{figure}[H]
%		\centering\begin{figure}[H]
%			\centering
%			\begin{tikzpicture}
%			\node (S0) at (0, 0) [circle, draw=blue, thick] {$\lambda^t_{0}$};	
%			\node (Z0) at (1, -2) [circle, draw=red, thick] {$n_{0}^{t}$};
%			\draw[-{Latex[length=1.5mm]}] (S0) -- (Z0);

%			\node (S1) at (2, 0) [circle, draw=blue, thick] {$\lambda^t_{1}$};	
%			\draw[-{Latex[length=1.5mm]}] (Z0) -- (S1);

%			\node (Z1) at (3, -2) [circle, draw=red, thick] {$n_{1}^{t}$};
%			\draw[-{Latex[length=1.5mm]}] (S1) -- (Z1);

%			\node (S2) at (4, 0) [circle, draw=blue, thick] {$\lambda^t_{2}$};	
%			\draw[-{Latex[length=1.5mm]}] (Z1) -- (S2);

%			\node (Z2) at (5, -2) [circle, draw=red, thick] {$n^t_{2}$};
%			\draw[-{Latex[length=1.5mm]}] (S2) -- (Z2);

%			\node (SINF) at (6, 0) {$\dots$};
%			\draw[-{Latex[length=1.5mm]}] (Z2) -- (SINF);

%			\end{tikzpicture}
%		\end{figure}
%		\caption{scheme for posterior sampling in Algorithm~\ref{alg:example:pet-gibbs}}
%		\label{sch:gibbs-sampler}
%	\end{figure}

\begin{remark}
	One may argue that prior in \eqref{eq:example:pixel-gamma-prior} is a very bad choice from practical point of view, especially in view of ill-posedness of the inverse problem since it does not bring any regularization. However, we consider the mixing rate for the Markov chain in Algorithm~\ref{alg:example:pet-gibbs} in the small noise limit, i.e., when $t\rightarrow +\infty$, and for the latter it is known from the Bernstein von-Mises theorem (see \citet{bochkina2014}) that asymptotically for $t\rightarrow +\infty$ any prior effect will disappear no matter the choice of $\pi(\lambda)$.
\end{remark}

%Let $h=h(\lambda)$ be a bounded functional with respect to $L^2$-norm for   integration measure $p(\lambda \mid Y^t, A, t)$, that is 

%\begin{equation}\label{eq:example:boundeness-of-functional}
%\int\limits_{\R^p_+} \midh(\lambda)\mid^2 p(\lambda \mid Y^t, A, t) \, d\lambda < +\infty.
%\end{equation}
We choose $h(\lambda)$ to be linear, i.e., $h(\lambda) = h^T\lambda$, for some $h\in \R^p$, and  
consider the correlations between values of $h(\lambda)$ for subsequent samples from the Markov chain in Algorithm~\ref{alg:example:pet-gibbs}
\begin{equation}\label{eq:example:corr-value}
\gamma^t(h) = \mathrm{corr}(h(\lambda^t_{k+1}), h(\lambda^t_k) \mid Y^t, t).
\end{equation}
In formula \eqref{eq:example:corr-value} we assumed that the chain is in stationary state, i.e. $k$ can be any. 

Markov chain for the sampler in Algorithm~\ref{alg:example:pet-gibbs} coincides with data augmentation schemes from \citet{liu1994fraction}, \citet{liu1994covariance}, where the latter are exactly Gibbs samplers with only one layer of latent variables. In Bayesian context $\gamma^t(h)$ is known as fraction of missing information; see  \citet{liu1994fraction}. In particular, in \citet{liu1994fraction} authors gave an exact formula for $\gamma^t(h)$ which can be written for our example as follows:
\begin{align}\label{eq:example:formula-fraction}
\gamma^t(h) = 1-\dfrac{\mathbb{E}[\mathrm{var}(h(\lambda) \mid n^t, Y^t, t) \mid Y^t, t]}{\mathrm{var}(h(\lambda) \mid Y^t, t)}.
\end{align}
For simplicity assume that 
\begin{equation}\label{eq:example:positivity-true-point-assump}
\lambda_{*j} > 0 \text{ for all }j\in \{1, \dots, p\}.
\end{equation}

Exact formulas for the nominator and the denominator in \eqref{eq:example:formula-fraction} for arbitrary $t$ seem difficult (if possible) to obtain, however, in the asymptotic regime $t\rightarrow +\infty$ one can apply the Bernstein von-Mises type theorem from \citet{bochkina2014} and arrive to the following simple expression:
\begin{align}\label{eq:example:asymptotic-fraction-missinfo-def}
\gamma(h) = \lim_{t\rightarrow +\infty} \gamma^t(h) = 1-\dfrac{h^TF_{aug}^{-1}(\lambda_*)h}{h^TF_{obs}^{-1}(\lambda_*)h}, \, h\in \R^p, \, \text{ a.s. } Y^t, t\in (0, +\infty).
\end{align}
where 
\begin{align}
&\lambda_*\in \R^p_+ \text{ is the true parameter},\\
\label{eq:example:f-obs-expression}
&F_{obs}(\lambda_*) = \sum\limits_{i=1}^{d} \dfrac{a_ia_i^T}{\Lambda_i^*} = A^TD_{\Lambda^*}^{-1}A, \, D_{\Lambda^*} = \mathrm{diag}(\dots, \Lambda_i^*, \dots), \, \Lambda^*_i = a_i^T\lambda_*,\\
&F_{aug}(\lambda_*) = \mathrm{diag}(\dots, c_j, \dots), \, 
c_j = A_j / \lambda_{*j}.
\end{align}

From \eqref{eq:design-matrix-positivity-restr-3}, \eqref{eq:example:positivity-true-point-assump} it follows that $\Lambda_i^* > 0$ for all $i$, therefore division by $\Lambda^*_i$ in \eqref{eq:example:f-obs-expression} is well-defined. Matrices $F_{obs}(\lambda_*)$, $F_{aug}(\lambda_*)$ are the Fisher information matrices at $\lambda_*$ for Poisson models with observables $Y^t$, $n^t$, respectively. Note also that $F_{obs}$ is not invertible in the usual sense, so in \eqref{eq:example:asymptotic-fraction-missinfo-def} its pseudo-inversion in the sense of Moore-Penrose is considered.

\begin{remark}\label{rem:gibbs-pet-example:positivity-exaplained}
	Assumption in \eqref{eq:example:positivity-true-point-assump} is not practical and a precise analytic formula which extends \eqref{eq:example:asymptotic-fraction-missinfo-def} for $\lambda_*\in \partial\R^p_+$ can be established using the results from \citet{bochkina2014}. 
	The point is that model \eqref{eq:poisson-model-pet} is non-regular since the parameter of interest belongs to a domain with a boundary, so a separate result for Bernstein von-Mises phenomenon is needed in this case. 
	For our purposes it is sufficient to consider the case in \eqref{eq:example:positivity-true-point-assump} since we are mostly interested in mixing times of the Markov chain in areas with positive tracer concentration.
\end{remark}

Let $h_1, \dots, h_p$ be the orthonormal basis of eigenvectors of  $F_{obs}(\lambda_*)$ being ordered according to their corresponding  eigenvalues $s_1 \geq s_2 \dots \geq s_p \geq 0$.  Intuitively, vectors $\{h_m\}_{m=1}^{p}$ constitute a basis in space of reconstructed images where higher indices $m$ correspond to higher frequencies on images (see Figure~\ref{fig:example:orederd-projectors}).

\begin{figure}[H]
	\centering
	\begin{subfigure}{0.24\textwidth}
		\centering
		\includegraphics[width=30mm, height=30mm, trim={3.0cm 0.5cm 3.5cm 1cm}, clip]{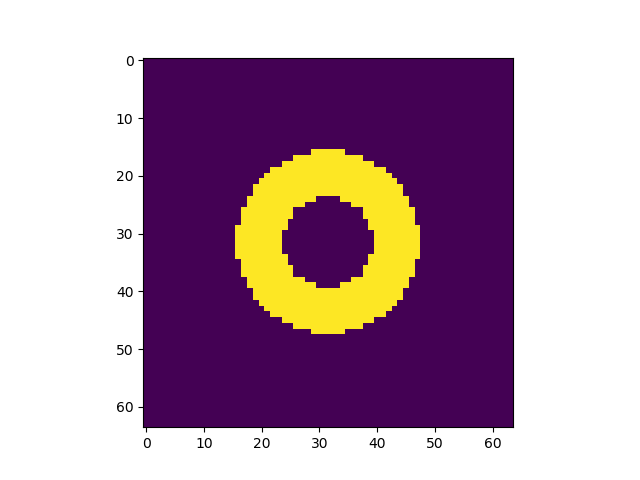}
		\caption{$\lambda_*$}
	\end{subfigure}
	\hspace{-1.2em}
	\begin{subfigure}{0.24\textwidth}
		\centering
		\includegraphics[width=30mm, height=30mm]{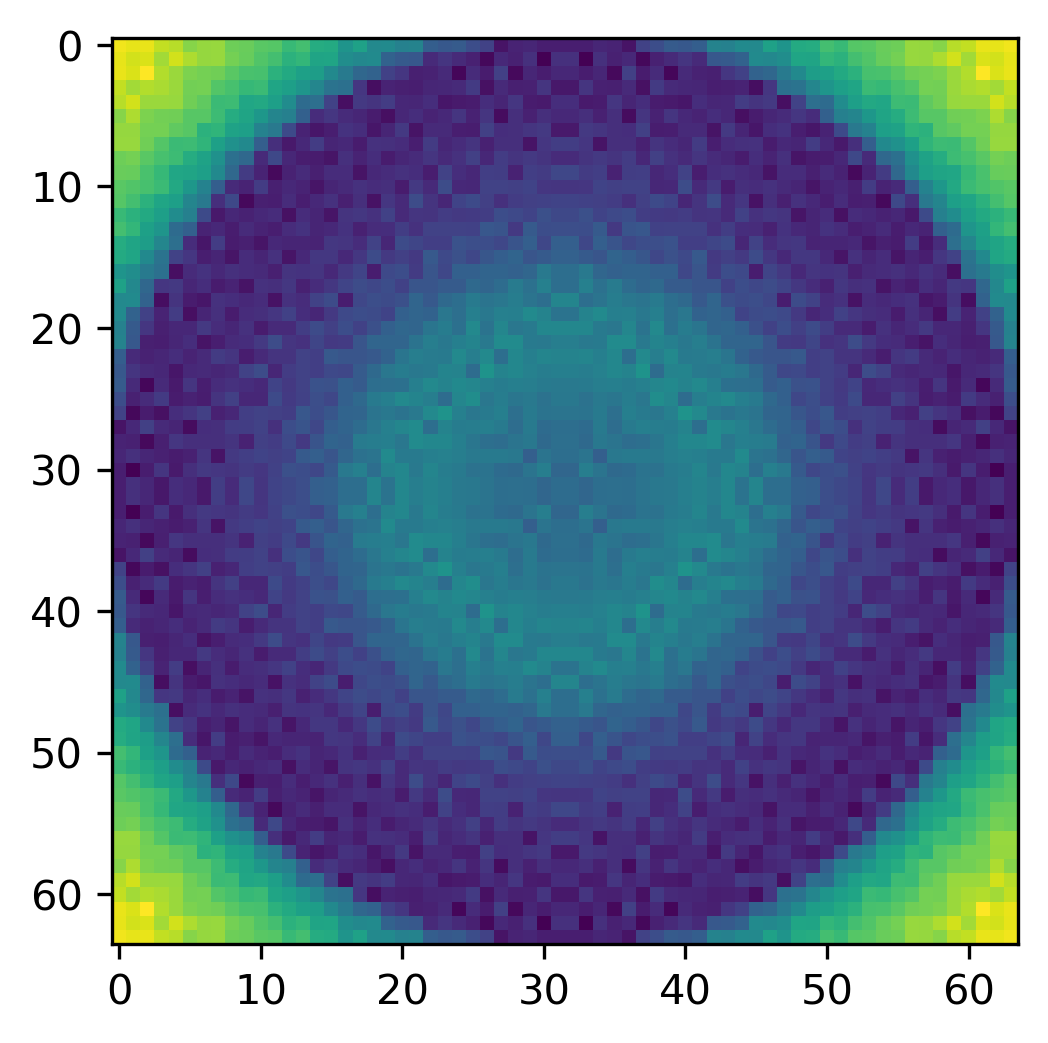}
		\caption{$h_1$}
	\end{subfigure}
	\hspace{-0.8em}
	\begin{subfigure}{0.24\textwidth}
		\includegraphics[width=30mm, height=30mm]{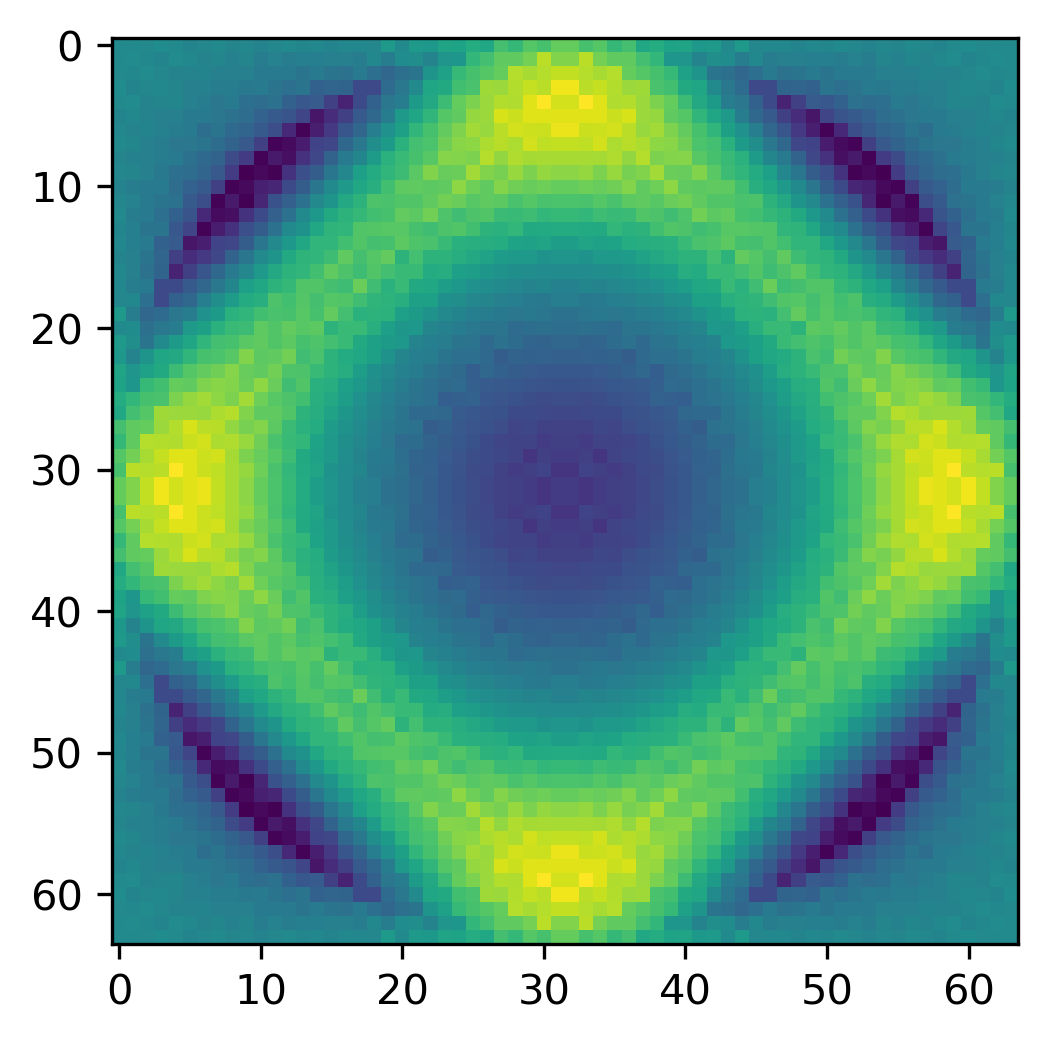}
		\caption{$h_2$}
	\end{subfigure}
	\hspace{-1.5em}
	\begin{subfigure}{0.24\textwidth}
		\includegraphics[width=30mm, height=30mm]{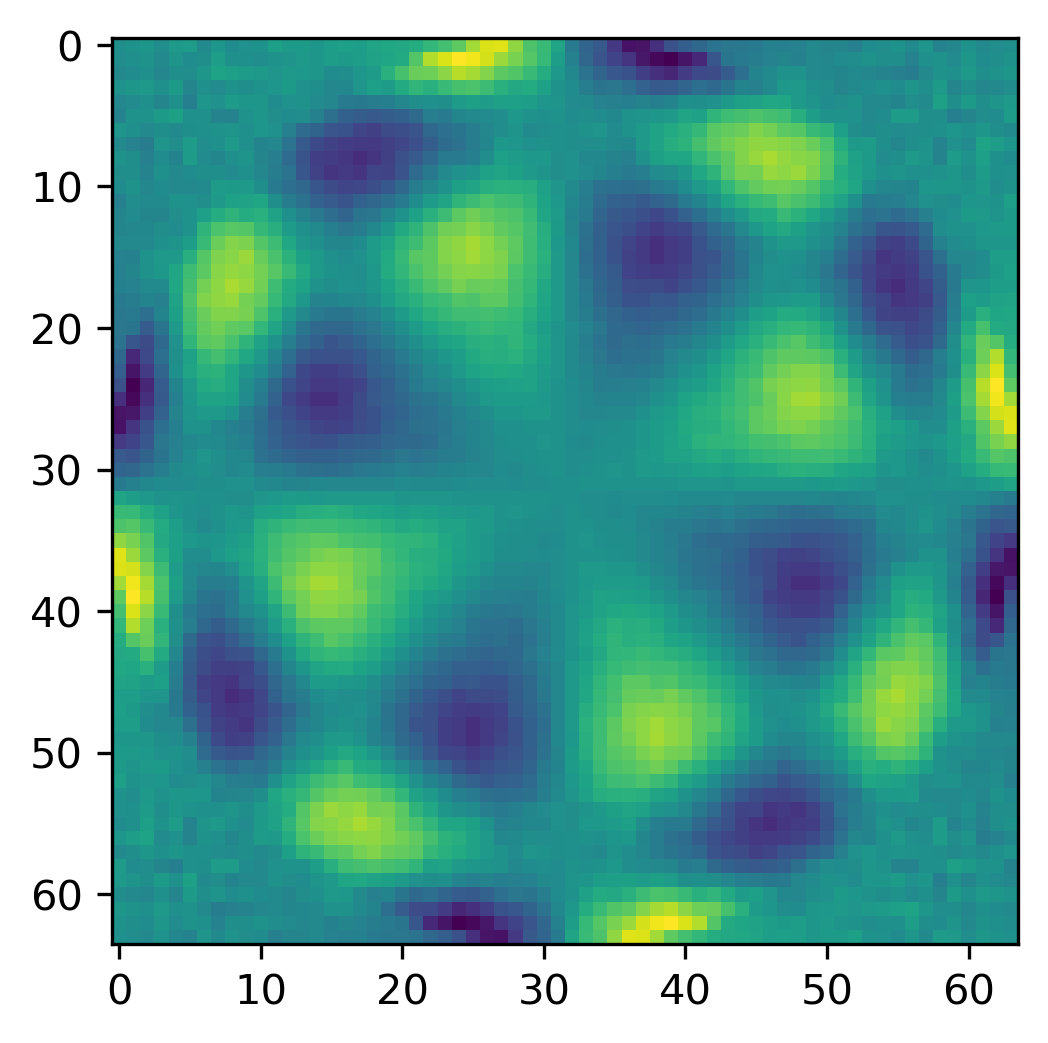}
		\caption{$h_{50}$}
	\end{subfigure}
	\caption{eigenvectors $h_m$ for $F_{obs}(\lambda_*)$}
	\label{fig:example:orederd-projectors}
\end{figure}

From  \eqref{eq:example:asymptotic-fraction-missinfo-def} it follows that 
\begin{align}\label{eq:example:fraction-eigenvect-projections}
\gamma(h_m) = 1 - s_m h_m^TF_{aug}^{-1}h_m.
\end{align}

Matrix $F_{aug}(\lambda_*)$ is well-conditioned, continuously invertible and the quadratic term in  \eqref{eq:example:fraction-eigenvect-projections} admits the following bound:
\begin{equation}\label{eq:example:augmented-fisher-bound}
F^{-1}_{aug}(\lambda_*) = \mathrm{diag}(\dots, \frac{\lambda_{*j}}{A_j}, \dots) \Rightarrow  h^T_mF^{-1}_{aug}(\lambda_*)h_m \leq 
\frac{\max_j(\lambda_{*j})}{\min_j (A_j)}.
\end{equation}
Regular behavior of $F_{aug}^{-1}$ is not surprising because this is the Fisher information matrix for latent variables $n^t$ for which the inverse problem is not ill-posed at all. From \eqref{eq:example:f-obs-expression} and the ill-conditioning nature of $A$ it follows that $F_{obs}(\lambda_*)$ is ill-conditioned\footnote{In practice the  ill-conditioning of $F_{obs}(\lambda_*)$ is commonly observed in ET practice in form of very slow convergence of non-penalized EM-algorithms; see \citet{green1990bayesian}.}, moreover, $s_m \approx 0$ for large~$m$. From this and \eqref{eq:example:fraction-eigenvect-projections}, \eqref{eq:example:augmented-fisher-bound} we conclude that 
\begin{equation}\label{eq:example:correlation-one}
\gamma(h_m) \approx 1 \text{ for large }m.
\end{equation}
Formulas \eqref{eq:example:corr-value}, \eqref{eq:example:correlation-one} constitute a clear evidence of poor mixing in the Markov chain in Algorithm~\ref{alg:example:pet-gibbs}. Though  \eqref{eq:example:asymptotic-fraction-missinfo-def}-\eqref{eq:example:correlation-one} were derived for $t\rightarrow +\infty$, they reflect well the behavior of the chain for moderate $t$ which is seen from the numerical experiment below (see Supplementary Materials, Section~\ref{app:numerical-mixing-example} for details).

\begin{figure}[H]
	\begin{subfigure}{0.49\textwidth}
		\centering
		\includegraphics[width=73mm, height=55mm]{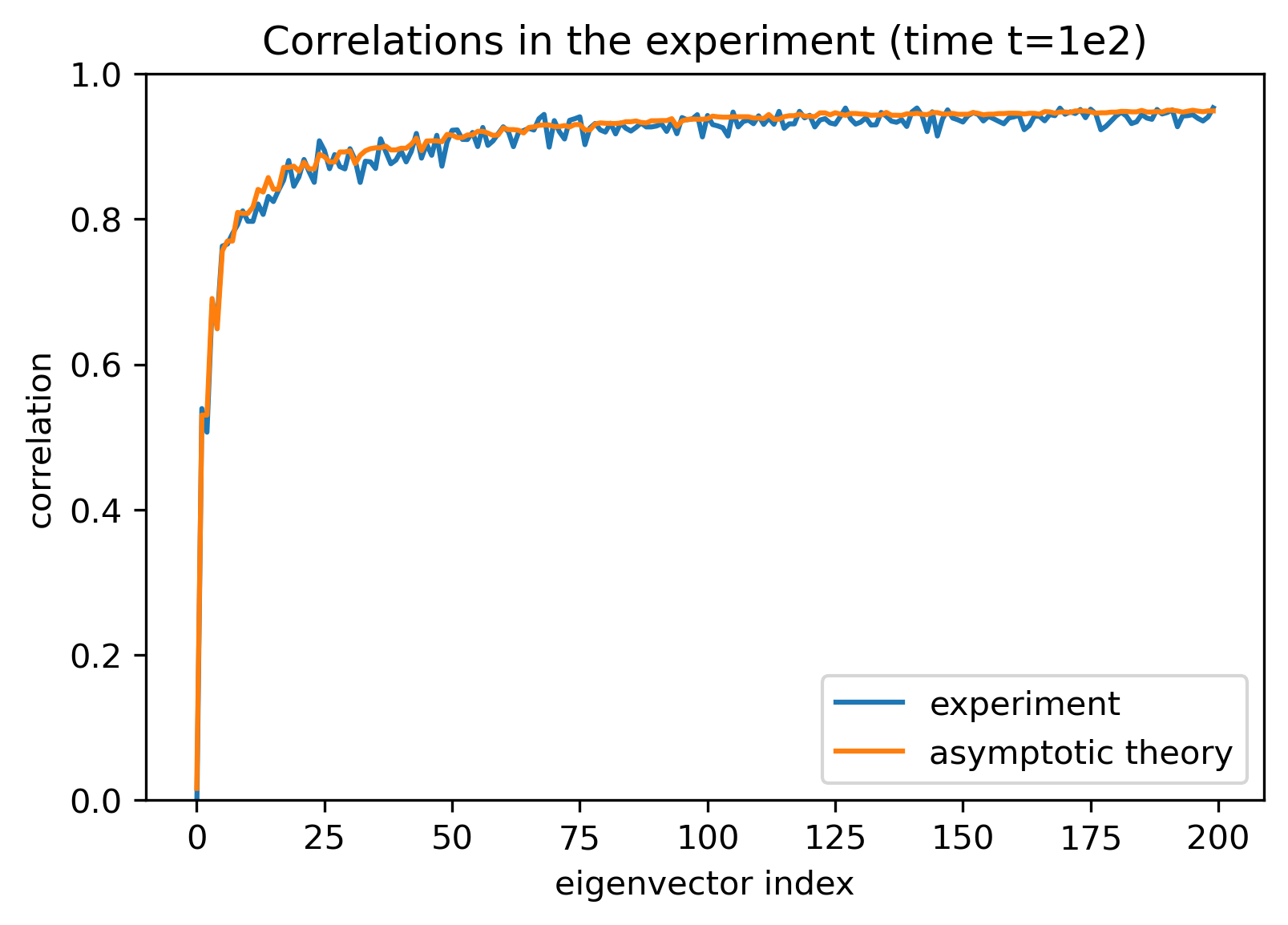}
	\end{subfigure}
	\begin{subfigure}{0.5\textwidth}
		\includegraphics[width=73mm, height=55mm]{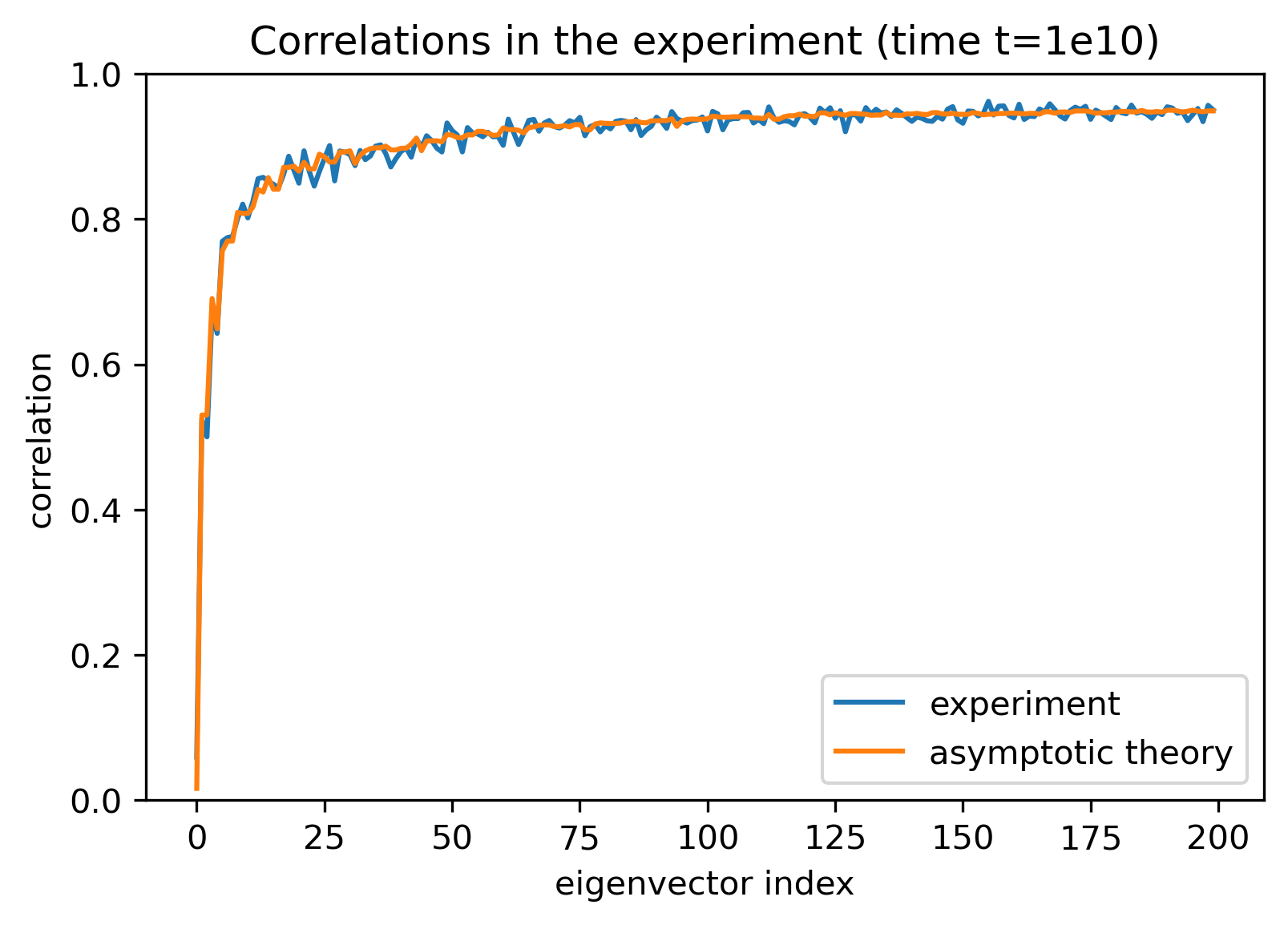}
	\end{subfigure}
	\caption{$\mathrm{corr}(h^T\lambda_k^t, h^T\lambda^t_{k+1} \mid Y^t)$ for $t=10^2, 10^{10}$ for $h=h_m$; blue curve -- empirical correlations computed from $2000$ samples, orange curve -- values for $\gamma(h_m)$ for $m=1,\dots, 200$ by formula~\eqref{eq:example:asymptotic-fraction-missinfo-def}.}
	\label{fig:example:corrs-gibbs}
\end{figure}
In Figure~\ref{fig:example:corrs-gibbs} values given by formula \eqref{eq:example:fraction-eigenvect-projections} are in full correspondence with our numerical results demonstrating that $\mathrm{corr}(h_m^T\lambda^t_k, h_{m}^T\lambda^t_{k+1} \mid Y^t, t)$ increase fast with $m$. Here one concludes that mixing is much slower for high-frequency parts of images. Therefore, to estimate reliably, say mean $h^T\lambda$ ($h\in \R^p$ may be a domain mask), one needs almost infinite number of samples if $h$ contains  a high-frequency component in terms of~$\{h_m\}_{m=1}^p$ (see Supplementary Material, Section~\ref{app:mixing-example} for details). This also can be seen as a recommendation for choosing $h$ in practice: $h$ should belong to  $\mathrm{Span}(A^T)$ and projections $h^Th_m$ should be as small as possible for large $m$.

Note that such behavior of the sampler is not due to the choice of pixel-wise prior but due to sampling of $n^t_{ij}$ which correspond to observations for the well-posed inverse problem. In this situation a practical advice would be to avoid sampling of missing data in the Markov chain or to use a strong smoothing prior/regularizer (for example by  greatly increasing regularization coefficients so that asymptotic arguments in \eqref{eq:example:asymptotic-fraction-missinfo-def} will no longer hold but the posterior consistency is still preserved). The latter approach will accelerate mixing at cost of oversmoothing in sampled images. 

By this negative but informative example we support the message in \citet{vandyk2001art} saying that design of a data augmentation scheme while preserving good mixing in the Markov chain is an ``Art'', especially in the case of ill-posed inverse problems. In view of poor mixing, complexity of the design and implementation, lack of scalability and high numerical load while using MCMC (\citet{weir1997bayesian}, \citet{higdon1997fully}, \citet{marco2007multiscale}, \citet{duan2018scaling}, \citet{filipovic2018pet}) we turn to NPL as a practical relaxation of Bayesian sampling for the problem of ETs.

	\section{Nonparametric posterior learning for emission tomography}
	\label{sect:new-algo}
	
	%First, in Subsections~\ref{subsect:new-algo:npl-poisson}-\ref{subsect:propagation-uncertainty} we explain our   adaptation using fully nonparametric model for ET. Then, in Subsections~\ref{subsect:new-algo:binning},~\ref{subsect:final-algo}, by binning the observations we derive sampling  algorithms for the finite-dimensional model in~\eqref{eq:poisson-model-pet}. We present two versions of sampling algorithms: in the first one no MRI-data are used and the algorithm is ideologically closer to bootstrap algorithms in \citet{newton1994wbb}, and in the second one we use MRI images to construct nonparametric priors on spatial temporal point processes, so the algorithm is closer to ones in \citet{lyddon2018npl}, \citet{fong2019scalable}.

	%\subsection{Nonparametric case}
	%\label{subsect:new-algo:npl-poisson}
	
	%In what follows we consider spatio-temporel (Poisson) point processes, in particular, in the context of nonparametric statistical inverse problems. For brevity we do not define precisely all functional spaces and operators involved but explain only the necessary concepts to define nonparametric posterior learning for emission tomographies. Precise definitions and theoretical statements on this topic can be found, for example, in \citet{daley2005pointproc1},~\citet{daley2007pointproc2},~\citet{thorsten2016poisson}, \citet{jacod2013limit}. In the end, only finite-dimensional  versions of presented algorithms are used in practice.
	
	\subsection{Nonparametric model for emission tomography}
	%Weighted likelihood bootstrap (WLB) was originally proposed in \citet{newton1994wbb} and recently generalized to NPL originating from  \citet{lyddon2018npl}, \citet{fong2019scalable}. In particular, in these  works it was assumed that observed data are i.i.d. which does not fit directly the observations in emission tomography,  where the raw data are described by realizations from spatio-temporal (temporal stationary) Poisson point process; see e.g., \citet{thorsten2016poisson}. Below we adapt the idea behind NPL to the case of Poisson-type data and construct bootstrap sampling algorithms to assess the uncertainty. 
	
	\label{subsect:new-algo:npl-poisson}
	Nonparametric framework for ET can be seen as a classical scanning scenario with a machine having infinite number of infinitely small detectors.
	Let $Z$ be the manifold of all detector positions in the acquisition geometry of a scanner (e.g., $Z=(\R\times \Sp^1)/\mathbb{Z}_2$, i.e., all non-oriented straight lines in $\R^2$) for full angle acquisition in a single plane slice. For completeness we assume that $Z$ is equipped with a boundedly-finite measure $dz$ (which reflects the sensitivity of detectors for various lines) and with a metric $\rho_Z$ describing distances between the lines (e.g., $\rho_Z$ could be a geodesic distance on cylinder $\R\times \Sp^1 / \mathbb{Z}_2$).
	
	For exposure time $t$  the raw data are given by random measure $Z^t$ generated by a counting point process:
	\begin{align}
	Z^t = \sum_{j=1}^{N^t} \delta_{(t_j,z_j)}, \, (t_j,z_j)\in (0, \infty)\times Z, \, 
	t_j \leq t_{j+1}, \, t_j \leq t, 
	\end{align}
	where 
	\begin{align}
	&N^t \text{ is total number of registered photons}, \\
	&\{z_j\}_{j=1}^{N^t}, \, \{t_j\}_{j=1}^{N^t} \text{ are the LORs and arrival times of registrations, respectively}.
	\end{align}
	In practical literature on PET/SPECT sample $Z^t$ is known as list-mode data, whereas sinogram $Y^t$ is the version of $Z^t$ binned to finite spatial resolution and integrated withing interval $[0,t)$. Under the assumption of temporal stationarity $Y^t$ contains the same amount of information as $Z^t$ since the first one is then a sufficient statistic.
	
	For statistical model of $Z^t$ one takes the family of temporal stationary Poisson point processes $\mathcal{P}\mathcal{P}^t_{A\lambda}$ on $Z$, where $A$, $\lambda$ stand for the nonparametric versions of the projector and vector denoting the tracer concentration, respectively; see Section~\ref{sect:prelim}. For example, in such model the intensity parameter in LOR $z\in Z$ during time interval $[0, t)$ is $t\Lambda(z)dz = t[A\lambda](z)dz$.
	
	The negative log-likelihood for $\mathcal{PP}^t_{A\lambda}$ and observation $Z^t$ is defined via the following formula (see, e.g., \citet{thorsten2016poisson}, Section 2):
	\begin{align}\label{eq:new-algo:npl-poisson:neg-likl-pp}
	\begin{split}
	L( \lambda \mid Z^t, A, \lambda, t) &= -\sum_{j=1}^{N^t}\log(\Lambda(z_j)) + 
	\int_{Z\times [0,t)}\hspace{-0.8cm}\Lambda(z)\, dz dt \\
	& = -\int_{Z\times [0,t)}\hspace{-0.8cm}\log(\Lambda) dZ^t + t \int_{Z}\Lambda(z)\, dz, \, \Lambda(z) = A\lambda(z).
	\end{split}
	\end{align}
	
	\subsection{Misspecification and the KL-projection} 
	\label{subsect:misspecification-kl-projection}
	In reality our model assumption on $Z^t$ is always incorrect (i.e., $\mathcal{P}\mathcal{P}^t_{A\lambda}$ being  misspecified) and $Z^t\sim \mathcal{PP}^t$ for some point temporal stationary process $\mathcal{P}\mathcal{P}^t$, $\mathcal{PP}^t\not = \mathcal{P}\mathcal{P}^t_{A\lambda}$ for any $\lambda\succeq 0$. Since the (penalized) maximum log-likelihood estimates are the most popular in ET, we say that the best one can hope to reconstruct using family  $\mathcal{P}\mathcal{P}^t_{A\lambda}$ is the projection of $\mathcal{P}\mathcal{P}^t$ onto $\mathcal{PP}^t_{A\lambda}$ in the sense of Kullback-Leibler divergence:
	\begin{equation}\label{eq:new-algo:npl-poisson:opt-kullback-projection}
	\lambda_{*}(\mathcal{PP}^t) = \argmin_{\lambda\succeq 0} \mathcal{KL}(\mathcal{P}\mathcal{P}^t, \mathcal{P}\mathcal{P}^t_{A\lambda}).
	\end{equation} 
	Note that due to temporal stationarity of $\mathcal{PP}^t$, $\mathcal{PP}^t_{A\lambda}$, parameter  $\lambda_*$ in \eqref{eq:new-algo:npl-poisson:opt-kullback-projection} is independent of $t$ ($t$ being the proportionality factor in \eqref{eq:new-algo:npl-poisson:opt-kullback-projection} so it has no effect on $\lambda_*$).
	Since  $A$ is  ill-conditioned (see formula~\eqref{eq:design-matrix-non-empt-ker}), in general, $\lambda_*$ in \eqref{eq:new-algo:npl-poisson:opt-kullback-projection} may not be defined uniquely. For this we consider the penalized KL-projection defined by the formula:
	\begin{align}\label{eq:new-algo:npl-poisson:opt-kullback-projection-pen}
	\lambda_{*}(\mathcal{PP}^t, \beta^t) = \argmin_{\lambda\succeq 0}[ \mathcal{KL}(\mathcal{P}\mathcal{P}^t, \mathcal{P}\mathcal{P}^t_{A\lambda}) + \beta^t \varphi(\lambda)], 
	\end{align}
	where $\beta^t$ is the regularization coefficient and $\varphi(\lambda)$ is a nonparametric extension of penalty from Section~\ref{sect:prelim}. From \eqref{eq:new-algo:npl-poisson:neg-likl-pp} and the definition of Kullback-Leibler divergence it follows that 
	\begin{align}\label{eq:new-algo:npl-poisson:kullback-opened}
	\mathcal{KL}(\mathcal{P}\mathcal{P}^t, \mathcal{PP}^t_{A\lambda}) = -\int_{Z\times [0,t)}\hspace{-0.7cm}\log(\Lambda)\mathbb{E}_{\mathcal{P}\mathcal{P}^t}[dZ^t] + t\int_Z \Lambda(z)dz, 
	\end{align}
	where $\mathbb{E}_{\mathcal{P}\mathcal{P}^t}$ is the expectation with respect to $\mathcal{P}\mathcal{P}^t$. Putting together \eqref{eq:new-algo:npl-poisson:opt-kullback-projection-pen},  \eqref{eq:new-algo:npl-poisson:kullback-opened}, for the penalized KL-projection we get the following formulas:
	\begin{align}\label{eq:new-algo:npl-poisson:kullback-projection}
	\lambda_* &= \argmin_{\lambda\succeq 0} \mathbb{L}_p(\lambda\mid \mathcal{PP}^t,\, A, \, t, \, \beta^t),\\
	\label{eq:new-algo:npl-poisson:kullback-functional}
	\begin{split}
	\mathbb{L}_p(\lambda\mid \mathcal{PP}^t, \, A, \, t, \, \beta^t)&= -\int_{Z\times [0,t)}\log(\Lambda) \mathbb{E}_{\mathcal{P}\mathcal{P}^t}[dZ^t] + t\int_Z \Lambda(z)dz + \beta^t\varphi(\lambda), \\
	\Lambda(z) &= A\lambda(z).
	\end{split}
	\end{align} 
	
	\subsection{Propagation of uncertainty and the generic algorithm}
	\label{subsect:propagation-uncertainty} 
	Following the idea from \citet{lyddon2018npl}, we say that uncertainty on $\lambda$ propagates from the one on $\mathcal{PP}^t$ via  \eqref{eq:new-algo:npl-poisson:kullback-projection}, \eqref{eq:new-algo:npl-poisson:kullback-functional}.
	Let $\pi_{\mathcal{M}}$ be a prior in which we encode our beliefs over a set of possible $\mathcal{PP}^t$'s, that is $\pi_{\mathcal{M}}$ is a nonparametric prior on spatio-temporal point processes on $(0, \infty)\times Z$. In particular,  $\pi_{\mathcal{M}}$ is constructed using multimodal  data $\mathcal{M}$.
	Let data be the list-mode $Z^t$ (or the sinogram $Y^t$), then our prior beliefs on $\mathcal{PP}^t$ can be updated in form of posterior distribution  $\pi_\mathcal{M}(\cdot \mid Z^t \vee Y^t, t)$.
	In this case the definition of NPL for ET with multimodal data is straightforward as shown below.\\
	
%	\begin{algo}
%		\label{alg:npl-posterior-sampling}
%		NPL for ET with multimodal data
%		\vspace*{-2pt}
%		\begin{tabbing}
%			\qquad \enspace Input: list-mode $Z^t$ or sinogram $Y^t$,  $\mathcal{M}$, \\
%			\qquad \qquad \enspace $A$, $\beta^t$,  $\varphi(\lambda)$, 
%			$B$ -- number of samples\\ 
%			\qquad \enspace For $k=1$ to $B$ do \\
%			\qquad \qquad \enspace 1. $\widetilde{\mathcal{PP}}^t \leftarrow \pi_{\mathcal{M}}(\cdot \mid Z^t \vee Y^t, t)$\\
%			\qquad \qquad \enspace 2. $\widetilde{\lambda}_b^t \leftarrow  \argmin\limits_{\lambda\succeq 0} \mathbb{L}_p(\lambda\mid \widetilde{\mathcal{PP}}^t, A, t, \beta^t)$ for $\mathbb{L}_p(\cdot)$ defined in \eqref{eq:new-algo:npl-poisson:kullback-functional}\\
%			\qquad \enspace Output: $\{\widetilde{\lambda}_{b}^t\}_{k=1}^{B}$
%		\end{tabbing}
%	\end{algo}

	\begin{center}
		\begin{minipage}{0.87\textwidth}
			\begin{algorithm}[H]
				\vspace{0.1cm}
				\KwData{list-mode $Z^t$ or sinogram $Y^t$,  $\mathcal{M}$}
				\vspace{0.2cm}
				\KwIn{$B$ -- number of samples, $A$, $\beta^t$,  $\varphi(\lambda)$}
				\vspace{0.2cm}
				\For{ $b=1$ \KwTo $B$ }{
					\vspace{0.2cm}
					Draw point process $\widetilde{\mathcal{PP}}^t\sim \pi_{\mathcal{M}}(\cdot \mid Z^t \vee Y^t, t)$\;
					Compute $\widetilde{\lambda}_b^t = \argmin\limits_{\lambda\succeq 0} \mathbb{L}_p(\lambda\mid \widetilde{\mathcal{PP}}^t, A, t, \beta^t)$ for $\mathbb{L}_p(\cdot)$ defined in \eqref{eq:new-algo:npl-poisson:kullback-functional}\;
					
				}
				\vspace{0.2cm}
				\KwOut{$\{\widetilde{\lambda}^t_b\}_{b=1}^B$}
				\caption{NPL for ET with multimodal data}
				\label{alg:npl-posterior-sampling}
			\end{algorithm}
		\end{minipage}
	\end{center}
	
	As it has already been outlined in \citet{lyddon2018npl}, \citet{fong2019scalable}, the above scheme generates i.i.d samples and is trivially parallelizable which is a strong advantage in front of MCMC sampling from pure Bayesian posteriors (see  Section~\ref{sect:motivating-example-mcmc}).
	
	\subsection{Construction of $\pi_{\mathcal{M}}(\cdot )$ and of posterior $\pi_{\mathcal{M}}(\cdot \mid Z^t \vee Y^t, t)$} Sample $Z^t\sim\mathcal{PP}^t$ is a purely atomic random measure on $(0, \infty) \times Z$ which stands for photon registration events along various lines $z\in Z$ during period $[0, t)$. It is intuitive to assume mutual independence of emission events inside the patient, which is then translated as follows:
	\begin{align}
	\nonumber
	&\text{for any finite family of mutually disjoint bounded Borel sets $\{B_i\}_{i=1}^{N}$}, \, B_i\in B(Z),\\
	\label{eq:new-algo:npl-poisson:gt-mutual-independence}
	&\text{$Z^t(B_i\times [0,t)) = \int_{B_i\times [0,t)}\hspace{-0.8cm}dZ^t$, $i = 1, \dots, N$,} 
	\text{ are mutually independent}.
	\end{align}
	\par Measure $Z^t$ which satisfies \eqref{eq:new-algo:npl-poisson:gt-mutual-independence} is known as completely random measure; see \citet{daley2007pointproc2}, Chapter 10. In particular, under the additional and intuitive assumption that $Z^t$ contains no fixed atoms (i.e., $Z^t$~is purely atomic but locations and  registration times differ from sample to sample) the representation theorem of Kingman says that $\mathcal{PP}^t$ is characterized uniquely by a Poisson point process with some intensity measure $\mu$ on $Z\times [0, +\infty)$; see \citet{daley2007pointproc2}, Section 10.1, Theorem 10.1.III. Therefore, any prior on $\mathcal{PP}^t$ must be a prior on~$\mu$. 
	
	In view of the above discussion and temporal stationarity of $\mathcal{PP}^t$ we assume that 
	\begin{align}\label{eq:new-algo:npl-poisson:assumption-prior-ppt}
	&\mathcal{PP}^t = \mathcal{PP}^t_{\Lambda}, \text{ for some intensity }
	\Lambda \text{ on }Z, \text{ that is if } Z^t\sim \mathcal{PP}^t, \text{ then}\\ \nonumber
	&Z^t(B\times [0,t)) \sim \mathrm{Po}(t \Lambda(B)), \, \Lambda(B) = \int_{B}\Lambda(z) \, dz, 
	\text{ for any } B\in B(Z).
	\end{align}
	Note that $Y^t(B) = Z^t(B\times [0,t))$, where $Y^t$ are the sinogram data. The above assumption can also be interpreted that we do not rely completely on design $A$ when inferring on $\mathcal{PP}^t$ (moreover, $A$ is known only approximately in practice).
	
	Hence, to build $\pi_{\mathcal{M}}$ we construct a prior on $\Lambda$ using $\mathcal{M}$. For the prior on $\Lambda$ we use the mixture of gamma processes (further denoted by MGP) which can be written as follows:
	\begin{align} 
	\begin{split}\label{eq:new-algo:npl-poisson:mgp-intro}
	&\Lambda_{\mathcal{M}}\sim P_{\mathcal{M}}(\cdot ), \, \Lambda \mid \Lambda_{\mathcal{M}} \sim GP(\theta^t \Lambda_{\mathcal{M}}, (\theta^t)^{-1} \mathds{1}_Z), 
	\end{split}
	\end{align}
	where $\Lambda_{\mathcal{M}}$ is the mixing parameter, $P_{\mathcal{M}}(\cdot)$ is the mixing distribution (hyperprior),
	$\theta^t$ is a positive scalar, $\mathds{1}_Z$ is the identity function on $Z$, $GP(\alpha, \beta) = G_{\alpha, \beta}$ is the weighted gamma process on $Z$ (shape $\alpha$ and scale $\beta$).

	In short, we will use the following notation 
	\begin{align}\label{eq:new-algo:npl-poisson:mgp-short}
		\pi_{\mathcal{M}}(\cdot) = \mathrm{MGP}(P_{\mathcal{M}}, t, \theta^t \Lambda_{\mathcal{M}}, (\theta^t)^{-1}).
	\end{align}

	Note that the scale parameter in the gamma process in \eqref{eq:new-algo:npl-poisson:mgp-intro} is constant for all $Z$ and is equal  to  $(\theta^t)^{-1}$. Such choice allows to center gamma process $\Lambda$ on $\Lambda_{\mathcal{M}}$, so  $\theta^t$ controls only the spread (e.g., $\theta^t =  0$ corresponds to improper uniform distribution on $Z$, $\theta^t = +\infty$ corresponds to prior $\mathcal{PP}^t_{\Lambda_{\mathcal{M}}}$, where $\Lambda_{\mathcal{M}}\sim P_{\mathcal{M}}(\cdot)$).
	
	The key to compute the posterior for MGP in \eqref{eq:new-algo:npl-poisson:mgp-short} is the following theorem which is an adaptation of Theorem~3.1 from~\citet{albertlo1982bayesiannon}.
	
	\begin{theorem}\label{thm:non-parametric-poisson-gamma-posterior}
		Let $Y^t\sim \mathcal{PP}^t_{\Lambda}$ and $G_{\alpha, \beta}$ be the prior on~$\Lambda$. Then, the posterior distribution of $\Lambda$ is a weighted gamma process $G_{\alpha + Y^t, \frac{\beta}{1 + t\beta}}$.
	\end{theorem}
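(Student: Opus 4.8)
The plan is to establish the conjugacy first for a finite measurable partition of $Z$, where the weighted gamma process degenerates to a finite product of independent ordinary gamma laws and the Poisson point process degenerates to a finite product of independent Poisson counts, and then to lift the cellwise result to all of $Z$ via the characterization of a completely random measure by its finite-dimensional distributions (equivalently, its Laplace functional). Two structural facts drive the argument: (i) $G_{\alpha,\beta}$ has independent increments, so for any Borel partition $B_1,\dots,B_k$ of $Z$ on which $\beta$ is (approximately) constant the variables $\Lambda(B_1),\dots,\Lambda(B_k)$ are mutually independent with $\Lambda(B_i)\sim\Gamma(\alpha(B_i),\beta^{-1})$ in the rate form; and (ii) conditionally on $\Lambda$, the process $\mathcal{PP}^t_{\Lambda}$ assigns independent counts $Y^t(B_i)\sim\mathrm{Po}(t\Lambda(B_i))$ to disjoint cells. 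Since both the prior and the likelihood factorize over the cells, the posterior factorizes as well, and it suffices to update each cell in isolation.

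First I would carry out the scalar update. On a single cell $B$ set $a=\alpha(B)$, $s=\beta$, $y=Y^t(B)$: the prior density is proportional to $\Lambda(B)^{a-1}e^{-\Lambda(B)/s}$ and the Poisson likelihood to $(t\Lambda(B))^{y}e^{-t\Lambda(B)}$, so the posterior density is proportional to $\Lambda(B)^{a+y-1}\exp(-\Lambda(B)(s^{-1}+t))$, i.e.\ a gamma law with shape $a+y$ and scale $(s^{-1}+t)^{-1}=s/(1+ts)$. Thus, cell by cell, the shape rises from $\alpha(B)$ to $\alpha(B)+Y^t(B)$ and the scale contracts from $\beta$ to $\beta/(1+t\beta)$, matching exactly the claimed parameters of $G_{\alpha+Y^t,\,\beta/(1+t\beta)}$; mutual independence across cells is preserved, so the posterior is again a completely random measure with these cellwise gamma marginals.

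The main obstacle is the passage to arbitrarily fine partitions together with the correct bookkeeping of the atoms of $Y^t$, a point made delicate by the fact that $\Lambda$ is almost surely purely atomic, so the naive ``likelihood $\prod_j \Lambda(z_j)$'' has no pointwise meaning. As the partition is refined, a cell $B$ eventually isolates a single atom $z_j$ of $Y^t$, so $Y^t(B)=1$ while $\alpha(B)\to 0$ for diffuse $\alpha$; the cellwise posterior shape $\alpha(B)+1$ then produces, in the limit, a fixed atom of the posterior located at $z_j$ and distributed as $\Gamma(1,\cdot)$ with the contracted scale $\beta(z_j)/(1+t\beta(z_j))$. This is precisely the effect of adding the counting measure $Y^t=\sum_j\delta_{z_j}$ to the shape measure, so the limiting object has a diffuse part inherited from $\alpha$ and fixed atoms at the observed points, i.e.\ it is $G_{\alpha+Y^t,\,\beta/(1+t\beta)}$. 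To make this rigorous I would check that the posterior finite-dimensional distributions over a generating field of Borel sets converge to those of $G_{\alpha+Y^t,\,\beta/(1+t\beta)}$ and then invoke the uniqueness of completely random measures through the Kingman representation (\citet{daley2007pointproc2}, Section~10.1) to identify the two processes; equivalently, one may compute the posterior Laplace functional directly from the disintegration of the joint law of $(\Lambda,Y^t)$ exactly as in \citet{albertlo1982bayesiannon}, Theorem~3.1. The partition/limit procedure is what simultaneously gives the formal likelihood rigorous content and explains why the posterior must acquire fixed atoms at the data locations.
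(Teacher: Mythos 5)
Your argument is correct, but it takes a genuinely different route from the paper's. The paper proves this theorem by pure reduction: it observes that a superposition $N_1+\dots+N_n$ of $n$ i.i.d.\ Poisson point processes of intensity $\nu$ is a single sample of intensity $n\nu$, so the exposure time $t$ is the exact analogue of the sample size $n$ in \citet{albertlo1982bayesiannon}, and then invokes Lo's Theorem~3.1 with $n$ replaced by $t$, asserting that Lo's Section~3 arguments are insensitive to this replacement. You instead rebuild the conjugacy from first principles: scalar gamma--Poisson updates on the cells of a partition where $\beta$ is nearly constant, factorization of prior and likelihood across cells by complete randomness, and identification of the refinement limit through finite-dimensional distributions and Kingman/CRM uniqueness, with the fixed atoms of the posterior shape measure $\alpha+Y^t$ emerging from cells shrinking onto observed points. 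The paper's route buys brevity and outsources all measure-theoretic care (in particular the disintegration that gives the formal likelihood rigorous meaning) to Lo; your route buys a self-contained, mechanism-revealing proof at the price of carrying those technicalities yourself. Two points deserve explicit attention if you write it out in full: (i) your cellwise computation conditions on the binned counts, not on the full point pattern, so you need the observation that the gamma updates are consistent under refinement (independent gammas with a common scale add in the shape parameter) together with martingale convergence of conditional laws along the refining filtration in order to pass to the posterior given $Y^t$ itself; (ii) since $\Lambda$ is a.s.\ purely atomic under the prior, observed locations can carry multiplicity greater than one, so a shrinking cell isolates a count $Y^t(\{z_j\})\geq 1$ rather than exactly $1$, and the limiting fixed atom has shape equal to that multiplicity --- which is precisely what the shape measure $\alpha+Y^t$ encodes.
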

	
	From the result of Theorem~\ref{thm:non-parametric-poisson-gamma-posterior} it follows that posterior for MGP in \eqref{eq:new-algo:npl-poisson:mgp-short} is also an MGP:
	\begin{equation}\label{eq:new-algo:npl-poisson:mgp-posterior-short}
	\pi_{\mathcal{M}}(\cdot \mid Z^t, t) = \mathrm{MGP}(P_{\mathcal{M}}(\widetilde{\Lambda}^t_{\mathcal{M}} \mid Z^t\vee Y^t, t), t,  Y^t + \theta^t \widetilde{\Lambda}^t_{\mathcal{M}}, (\theta^t + t)^{-1}),
	\end{equation}
	where $P_{\mathcal{M}}(\widetilde{\Lambda}^t_{\mathcal{M}} \mid Z^t \vee Y^t, t)$ is posterior for the mixing parameter. From \eqref{eq:new-algo:npl-poisson:mgp-intro}-\eqref{eq:new-algo:npl-poisson:mgp-posterior-short} one can see that samples from the MGP posterior are normalized random pseudo-sinograms $\widetilde{\Lambda}_{\mathcal{M}}^t$ in the MRI-based model linearly combined with observed data $Z^t$. Therefore, regularizing effect of MRI originally takes place in the observation space through pseudo-observations.
	
	%Our constructions of $P_{\mathcal{M}}(\cdot)$, $P_{\mathcal{M}}(\cdot \mid Z^t, t)$ are  explained in Subsection~\ref{subsect:final-algo}, where finite-dimensional sampling algorithms are considered.
	
	\begin{remark}
		MGP prior in \eqref{eq:new-algo:npl-poisson:mgp-short} and the posterior in \eqref{eq:new-algo:npl-poisson:mgp-posterior-short} are direct analogs of MDP (Mixture of Dirichlet processes) prior and posterior from \citet{lyddon2018npl}, respectively. Weighted gamma processes as priors were also considered in \citet{james2003gwprocesses} for various semiparametric intensity models including very elaborate Poisson model for PET (temporal non-stationarity, detector transition kernels). In particular, in \citet{james2003gwprocesses} a weighted gamma prior was used in the image space (i.e., as a prior on $\lambda$) but not in observation space and the sampling from posteriors was based on data augmentation schemes similar to the one in Section~\ref{sect:motivating-example-mcmc} for which MCMC is difficult. %In addition, in Section~\ref{subsect:theory:misspecification} we show that a badly chosen even injective design can lead to loss of idenitifiability for $\lambda$ in models of type~\eqref{eq:poisson-model-pet}. 
		In our approach most of complexity is moved to construction of a ``good'' prior in observation space which should be initially centered at the true (KL-optimal) intensity map built from MRI data which also puts zero (or small) mass on $\Lambda \not \in R_+(A)$ 
		(see also formula \eqref{eq:new-algo:npl-poisson:mgp-intro}).
	\end{remark}
	
	\subsection{Binning to parametric models and algorithms}
	\label{subsect:new-algo:binning}
	Each detector has a screen of finite size which detects incoming photons from a family of lines in $Z$. Let the machine detect photons along $d$ LORs. Mathematically it means that 
	$
	Z = \left(\bigsqcup_{i=1}^{d}Z_i\right) \bigsqcup \overline{Z},
	$
	where each set $Z_i\in B(Z)$ corresponds to set of lines which are visible in LOR  $i$, $\overline{Z}$ are the lines which are not visible at all. For each $i$ we define binning of the data and the corresponding intensities by the formulas:
	\begin{align}\label{eq:new-algo:binning:binning-operator}
	&\left(\int_{Z_i\times [0, t)} \hspace{-0.5cm}dZ^t, \int_{Z_i}\Lambda(z)\, dz\right) = (Y_i^t, \Lambda_i), \\
	&Y_i^t \text{ are mutually independent and } Y_i^t\sim \mathrm{Po}(t \Lambda_i), i\in\{1,\dots, d\}.
	\end{align}
	Nonparametric weighted gamma prior and its posterior in \eqref{eq:new-algo:npl-poisson:mgp-short}, \eqref{eq:new-algo:npl-poisson:mgp-posterior-short}, penalized negative log-likelihood in \eqref{eq:new-algo:npl-poisson:kullback-functional} are also binned in a similar way with \eqref{eq:new-algo:binning:binning-operator}, so the finite-dimensional version of Algorithm~\ref{alg:npl-posterior-sampling} can be written as follows\\
	
%	\begin{algo}
%		\label{alg:npl-posterior-sampling:binned}
%		Binned NPL for ET with multimodal data
%		%\vspace*{-12pt}
%		\begin{tabbing}
%			\qquad \enspace Input: list-mode $Z^t$ or sinogram $Y^t$,  $\mathcal{M}$, $\theta^t$, \\
%			\qquad \qquad \enspace $A$, $\beta^t$,  $\varphi(\lambda)$, 
%			$B$ -- number of samples\\ 
%			\qquad \enspace For $k=1$ to $B$ do \\
%			\qquad \qquad \enspace 1. $\widetilde{\Lambda}^t_{\mathcal{M}} = (\widetilde{\Lambda}^t_{\mathcal{M}, 1}, \dots, \widetilde{\Lambda}^t_{\mathcal{M}, d}) \leftarrow P_{\mathcal{M}}(\widetilde{\Lambda}^t_{\mathcal{M}} \mid Y^t, t)$\\
%			\qquad \qquad \enspace 2. $\widetilde{\Lambda}_{b,i}^t \leftarrow \Gamma(Y_i^t + \theta^t \widetilde{\Lambda}^t_{\mathcal{M}, i}, (\theta^t + t)^{-1})$ independently for each $i$\\
%			\qquad \qquad \enspace 3. $\widetilde{\lambda}_b^t \leftarrow  \argmin\limits_{\lambda\succeq 0} L_p( \lambda\mid\,   \widetilde{\Lambda}_b^t, A, t, \beta^t/t)$ for $L_p(\cdot)$ defined in~\eqref{eq:prelim:poiss-log-likelihood-penalized}\\
%			\qquad \enspace Output: $\{\widetilde{\lambda}_{b}^t\}_{k=1}^{B}$\\
%		\end{tabbing}
%	\end{algo}
	
	\begin{center}
		\begin{minipage}{0.87\textwidth}
			\begin{algorithm}[H]
				\vspace{0.1cm}
				\KwData{sinogram $Y^t$, $\mathcal{M}$}
				\vspace{0.2cm}
				\KwIn{$B$ -- number of samples, $\theta^t$, $A$, $\beta^t$, $\varphi(\lambda)$}
				\vspace{0.2cm}
				\For{ $b=1$ \KwTo $B$ }{
					\vspace{0.2cm}
					Draw $\widetilde{\Lambda}^t_{\mathcal{M}} = (\widetilde{\Lambda}^t_{\mathcal{M}, 1}, \dots, \widetilde{\Lambda}^t_{\mathcal{M}, d})$ from $P_{\mathcal{M}}(\widetilde{\Lambda}^t_{\mathcal{M}} \mid Y^t, t)$\;
					\label{eq:alg:npl-posterior-sampling:binned:mixing-param}
					\vspace{0.2cm}
					Draw $\widetilde{\Lambda}_{b,i}^t \sim \Gamma(Y_i^t + \theta^t \widetilde{\Lambda}^t_{\mathcal{M}, i}, (\theta^t + t)^{-1})$ independently for each $i$\;
					\label{eq:alg:npl-posterior-sampling:binned:post-intens}
					\vspace{0.2cm}
					Compute $\widetilde{\lambda}_b^t = \argmin\limits_{\lambda\succeq 0} L_p( \lambda\mid\,   \widetilde{\Lambda}_b^t, A, t, \beta^t/t)$ for $L_p(\cdot)$ defined in~\eqref{eq:prelim:poiss-log-likelihood-penalized}\;
					%\label{eq:alg:npl-posterior-sampling:binned:argmin}
				}
				\vspace{0.2cm}
				\KwOut{$\{\widetilde{\lambda}^t_b\}_{b=1}^B$}
				\caption{Binned NPL for ET with multimodal data}
				\label{alg:npl-posterior-sampling:binned}
			\end{algorithm}
		\end{minipage}
	\end{center}
	\begin{remark}
		\label{rem:npl-posterior-sampling:normalization}
		In steps~1,~2 intensities $\widetilde{\Lambda}_{b,i}^t$ are sampled from the binned MGP posterior in  \eqref{eq:new-algo:npl-poisson:mgp-posterior-short}.
		In step~3 we have used the fact that binned version of $\mathbb{L}_p(\cdot)$ from \eqref{eq:new-algo:npl-poisson:kullback-functional} coincides with $L_p(\cdot)$ from \eqref{eq:prelim:poiss-log-likelihood-penalized}.
		In addition, from formula \eqref{eq:prelim:poiss-log-likelihood-penalized} it follows that 
		\begin{equation}
		L_p(\lambda \mid t\widetilde{\Lambda}_b^t, A, t, \beta^t) = t 
		L_p(\lambda \mid \widetilde{\Lambda}_b^t, A, 1, \beta^t/t) + R,
		\end{equation}
		where $R$ is a function which is independent of $\lambda$. Therefore, minimization in  step~3 is directly applied to $L_p(\lambda \mid \widetilde{\Lambda}_b^t, A, 1, \beta^t/t)$ instead of $L_p(\lambda \mid t\widetilde{\Lambda}_b^t, A, t, \beta^t)$.	
	 	If the complexity of sampling in step~1 is controlled by our choice of $P_{\mathcal{M}}(\cdot)$, step~3 is inevitable, hence, it must be numerically feasible via some scalable optimization algorithm.
		 This is the case for us in view of the well-known in ET the Generalized Expectation-Maximization(GEM)-type algorithm from~\citet{fessler1995sagepet} which  is specially designed for Poisson-type log-likelihood $L_p(\cdot)$, where $\varphi(\cdot)$ must be a convex pairwise difference penalty, for example, as one in our numerical experiment (see Supplementary Material~\ref{eq:prelim:numerical:log-cosh-prior-def}).
	\end{remark}

	\subsection{Final algorithm}
	\label{subsect:final-algo}
		First, we explain the intuition behind sampling from $P_{\mathcal{M}}(\widetilde{\Lambda}_{\mathcal{M}}^t \, | \, Y^t, t)$ in step~1 in Algorithm~\ref{alg:npl-posterior-sampling:binned}, then, we present the formal and complete procedure.
		
		Using $\mathcal{M}$ (see Figure~\ref{fig:mri-ct-comparison}(c)) we construct a model of type \eqref{eq:poisson-model-pet} for which we assume that the isotope's concentration is constant in each segment and has uniform (improper) prior distribution on $\R_+$. If $\lambda_{\mathcal{M}}\in \R^{p_{\mathcal{M}}}_+$ be the corresponding random vector ($p_{\mathcal{M}}$ being the number of segments), then a sample from the prior $P_{\mathcal{M}}(\cdot)$ is defined as $\Lambda_{\mathcal{M}} = A_{\mathcal{M}}\lambda_{\mathcal{M}}$, where $A_{\mathcal{M}} \in \mathrm{Mat}(d, p_{\mathcal{M}})$ is the design for segment-like model of ET computed directly from $A$ (see formulas \eqref{eq:wbb-mri:npl-presentation:reduced-design-elem}, \eqref{eq:wbb-mri:npl-presentation:concat-design-def}). The point is that $p_{\mathcal{M}} \ll p$, so $A_{\mathcal{M}}$ is of moderate size (hence, can be stored in memory), is also injective and  well-conditioned. Posterior $P(\widetilde{\Lambda}^t_{\mathcal{M}}\,  \mid\, Y^t, t)$ is defined via classical Bayes' formula for model $P(Y^t \mid A_{\mathcal{M}}, \lambda_{\mathcal{M}}, t) = \mathrm{Po}(t\Lambda_{\mathcal{M}})$ and the aforementioned prior on~$\Lambda_{\mathcal{M}}$. 
		Formal constructions of $P_{\mathcal{M}}(\cdot)$, $P_{\mathcal{M}}(\cdot \mid Y^t, t)$ are given in  Supplementary Material, Section~\ref{app:new-algo:wbb-algorithm-with-mri}. In practice, for the sake of simplicity we sample from $P_{\mathcal{M}}(\widetilde{\Lambda}_{\mathcal{M}}^t \, | \, Y^t, t)$ using the weighted log-likelihood bootstrap (WLB) adapted for ET.\\
		
%		\begin{algo}
%			\label{alg:wbb-pet-bootstrap:mri:posterior-mixing-param}
%			Approximate sampling from $P_{\mathcal{M}}(\widetilde{\Lambda}^t_{\mathcal{M}} \mid Y^t, t)$ (via WLB)
%			\vspace*{-2pt}
%			\begin{tabbing}
%				\qquad \enspace Input: list-mode $Z^t$ or sinogram $Y^t$,\\ \qquad \enspace $A_{\mathcal{M}}\in \mathrm{Mat}(d, p_{\mathcal{M}})$ (computed from \eqref{eq:wbb-mri:npl-presentation:reduced-design-elem}, \eqref{eq:wbb-mri:npl-presentation:concat-design-def})\\ 
%				%\qquad \enspace For $k=1$ to $B$ do \\
%				\qquad \enspace 1. $\widetilde{\Lambda}^{t}_i \leftarrow \Gamma(Y_i^t, t^{-1})$ independently for each $i\in \{1, \dots, d\}$\\
%				\qquad \enspace 2. $\widetilde{\lambda}^{t}_{\mathcal{M}} \leftarrow \argmin\limits_{\lambda_{\mathcal{M}}\succeq 0}L(\lambda_{\mathcal{M}} \mid \widetilde{\Lambda}^{t}, A_{\mathcal{M}}, 1)$, $L(\cdot)$ being defined in \eqref{eq:prelim:poiss-log-likelihood-model}\\
%				\qquad \enspace 3. $\widetilde{\Lambda}_{\mathcal{M}}^t \leftarrow  A_{\mathcal{M}}\widetilde{\lambda}_{\mathcal{M}}^{t}$\\
%				\qquad \enspace Output: $\widetilde{\Lambda}^t_{\mathcal{M}}$ is sampled approximately from $P_{\mathcal{M}}(\widetilde{\Lambda}_{\mathcal{M}}^t \mid Y^t, t)$
%			\end{tabbing}
%		\end{algo}
		
		\begin{center}
		\begin{minipage}{0.87\textwidth}
			\begin{algorithm}[H]
				\KwData{sinogram $Y^t$, $\mathcal{M}$}
				\vspace{0.2cm}
				\KwIn{$A_\mathcal{M}\in \mathrm{Mat}(d, p_{\mathcal{M}})$ from \eqref{eq:wbb-mri:npl-presentation:reduced-design-elem}, \eqref{eq:wbb-mri:npl-presentation:concat-design-def} (well-conditioned)}
				%\For{ $m=1$ \KwTo $\mid\mathcal{M}\mid$ }{
				\vspace{0.2cm}
				Draw $\widetilde{\Lambda}^{t}_i \, {\sim} \, \Gamma(Y_i^t, t^{-1})$ independently for each $i\in \{1, \dots, d\}$\;
				\vspace{0.3cm}
				\label{eq:alg:wbb-pet-boostrap:mri:mixing-param-perturbation}
				Compute $\widetilde{\lambda}^{t}_{\mathcal{M}} = \argmin\limits_{\lambda_{\mathcal{M}}\succeq 0}L(\lambda_{\mathcal{M}} \mid \widetilde{\Lambda}^{t}, A_{\mathcal{M}}, 1)$, $L(\cdot)$ being defined in \eqref{eq:prelim:poiss-log-likelihood-model}\;
				\label{eq:alg:wbb-pet-bootstrap:mri:mixing-param-minimizer}
				
				Compute $\widetilde{\Lambda}_{\mathcal{M}}^t =  A_{\mathcal{M}}\widetilde{\lambda}_{\mathcal{M}}^{t}$\;
%				\vspace{0.2cm}
%				}
				\KwOut{$\widetilde{\Lambda}^t_{\mathcal{M}}$ is sampled approximately from $P_{\mathcal{M}}(\widetilde{\Lambda}_{\mathcal{M}}^t \mid Y^t, t)$}
				\caption{Approximate sampling from $P_{\mathcal{M}}(\widetilde{\Lambda}^t_{\mathcal{M}} \mid Y^t, t)$ (via WLB)}
				\label{alg:wbb-pet-bootstrap:mri:posterior-mixing-param}
			\end{algorithm}
		\end{minipage}
		\end{center}
	
	\begin{remark}		
		Since we assume that $A_{\mathcal{M}}$ is well-conditioned, minimizer $\widetilde{\lambda}^{t}_{\mathcal{M}}$ in step~2 of Algorithm~\ref{alg:wbb-pet-bootstrap:mri:posterior-mixing-param} can be efficiently  computed via the classical EM-algorithm from  \citet{shepp1982mlem}.\\
	\end{remark}

%	\begin{algo}
%		\label{alg:npl-posterior-sampling:mri:binned}
%		Binned NPL for ET with MRI data
%		\vspace*{-2pt}
%		\begin{tabbing}
%			\qquad \enspace Input: list-mode $Z^t$ or sinogram $Y^t$,  $\mathcal{M}$, $\theta^t$, \\
%			\qquad \qquad \enspace $A$, $\beta^t$,  $\varphi(\lambda)$, 
%			$B$ -- number of samples\\ 
%			\qquad \enspace For $k=1$ to $B$ do \\
%			\qquad \qquad \enspace 1. $\widetilde{\Lambda}^t_{\mathcal{M}} = (\widetilde{\Lambda}^t_{\mathcal{M}, 1}, \dots, \widetilde{\Lambda}^t_{\mathcal{M}, d}) \leftarrow P_{\mathcal{M}}(\widetilde{\Lambda}^t_{\mathcal{M}} \mid Y^t, t)$ via Algorithm~\ref{alg:wbb-pet-bootstrap:mri:posterior-mixing-param}\\
%			\qquad \qquad \enspace 2. $\widetilde{\Lambda}_{b,i}^t \leftarrow \Gamma(Y_i^t + \theta^t \widetilde{\Lambda}^t_{\mathcal{M}, i}, (\theta^t + t)^{-1})$ independently for each $i$\\
%			\qquad \qquad \enspace 3. $\widetilde{\lambda}_b^t \leftarrow  \argmin\limits_{\lambda\succeq 0} L_p( \lambda\mid\,   \widetilde{\Lambda}_b^t, A, t, \beta^t/t)$ for $L_p(\cdot)$ defined in~\eqref{eq:prelim:poiss-log-likelihood-penalized}\\
%			\qquad \enspace Output: $\{\widetilde{\lambda}_{b}^t\}_{k=1}^{B}$
%		\end{tabbing}
%	\end{algo}
	
	\begin{center}
		\begin{minipage}{0.87\textwidth}
			\begin{algorithm}[H]
				\vspace{0.1cm}
				\KwData{sinogram $Y^t$, $\mathcal{M}$}
				\vspace{0.2cm}
				\KwIn{$B$ -- number of samples, $\theta^t$,  $A_{\mathcal{M}}$, $A$, $\beta^t$, $\varphi(\lambda)$}
				\vspace{0.2cm}
				\For{ $b=1$ \KwTo $B$ }{
					\vspace{0.2cm}
					Draw $\widetilde{\Lambda}^t_{\mathcal{M}} = (\widetilde{\Lambda}^t_{\mathcal{M}, 1}, \dots, \widetilde{\Lambda}^t_{\mathcal{M}, d})$ from $P_{\mathcal{M}}(\widetilde{\Lambda}^t_{\mathcal{M}} \mid Y^t, t)$ via Algorithm~\ref{alg:wbb-pet-bootstrap:mri:posterior-mixing-param}\;
					\label{eq:alg:npl-posterior-sampling:binned:mixing-param:final}
					\vspace{0.2cm}
					Draw $\widetilde{\Lambda}_{b,i}^t \sim \Gamma(Y_i^t + \theta^t \widetilde{\Lambda}^t_{\mathcal{M}, i}, (\theta^t + t)^{-1})$ independently for each $i$\;
					\label{eq:alg:npl-posterior-sampling:binned:post-intens:final}
					\vspace{0.2cm}
					Compute $\widetilde{\lambda}_b^t = \argmin\limits_{\lambda\succeq 0} L_p( \lambda\mid\,   \widetilde{\Lambda}_b^t, A, 1, \beta^t/t)$ for $L_p(\cdot)$ defined in \eqref{eq:prelim:poiss-log-likelihood-penalized} 
					\nonl using the GEM-type algorithm from \citet{fessler1995sagepet}\;
					\label{eq:alg:npl-posterior-sampling:binned:argmin:final}
					
				}
				\vspace{0.2cm}
				\KwOut{$\{\widetilde{\lambda}^t_b\}_{b=1}^B$}
				\caption{Binned NPL for ET with MRI data}
				\label{alg:npl-posterior-sampling:mri:binned}
			\end{algorithm}
		\end{minipage}
	\end{center}
	
	\begin{remark}\label{rem:parameter-theta-interpretation}
		Parameter $\theta^t$ in Algorithm~\ref{alg:npl-posterior-sampling:mri:binned} has the following  physical meaning: it is exactly the rate of creation of ``pseudo-photons'' in the poisson model constructed from MRI data. More precisely, by choosing $\theta^t = \rho t, \, \rho \geq 0$ in step~2 we sum up sinograms $Y^t$ and $t\widetilde{\Lambda}^t_{\mathcal{M}}$ in  proportions $1 / (1 + \rho)$ and $\rho / (1+\rho)$, respectively.
		For $\theta^t = 0$ we see Algorithm~\ref{alg:npl-posterior-sampling:mri:binned} as a version of WLB from \citet{newton1994wbb} being adapted for the ET context; see also \citet{lyddon2018npl}, \citet{fong2019scalable}, \citet{pompe2021introducing} for connections between classical WLB and NPL.
	\end{remark}

	%Note that all steps in the above algorithm are incredibly easy to implement, since all samplings involved are from classical (gamma-)distributions and optimization problems can be solved using well known iterative algorithms from \citet{fessler1994em}, \citet{shepp1982mlem} which are already implemented in form of very fast efficient and scalable libraries. 
	Numerical tests of Algorithm~\ref{alg:npl-posterior-sampling:mri:binned} are given in the Supplementary Material, Section~\ref{app:gem}.

\section{Asymptotic analysis of the new algorithm}
\label{sect:new-thero-results}
	
	Statistical model \eqref{eq:poisson-model-pet} is non-regular because the domain for parameter $\lambda$ is not open, contains boundary $\partial \R^p_+ = \{\lambda\in \R^p_+ : \exists j \text{ s.t. } \lambda_j = 0\}$ and, in general, $\lambda_*\in \partial\R^p_+$. This model was investigated in the small noise limit (i.e., when $t\rightarrow +\infty$) in pure Bayesian framework in \citet{bochkina2014} for large class of priors for the well-specified case (i.e., $Y^t\sim P^t_{A,\lambda_*}$ for some $\lambda_* \in \R^p_+$) and for design $A$ of the full rank though also ill-conditioned. It was shown that the posterior is consistent at $\lambda_*$, the asymptotic distribution is centered around the MLE estimate for the quadratic approximation of $L(\lambda \mid Y^t, A, t)$ and the non-regularity results in splitting of the posterior in three modes: multivariate exponential (for coordinates which are related to pixels intersected by LORs with zero photon intensities) contracting to zeros with the fastest rate (scaled with $t$), 
	Gaussian (for pixels where $\lambda_{*, j} > 0$) and half-Gaussian (for pixels with $\lambda_{*,j} = 0$ and pixels being intersected only by LORs with positive intensities) contracting with standard rate (scaled with~$\sqrt{t})$.
	
	Our results for consistency and conditional distribution are similar to ones from \citet{bochkina2014}, however, there are several major and minor differences. Asymptotic consistency at $\lambda_*$ and a very similar splitting are also present in NPL, with the asymptotic distribution being  tight around a strongly consistent estimator $\widehat{\lambda}^t_{sc}$ satisfying additional properties in observation space. The assumptions we put on $\widehat{\lambda}_{sc}^t$ for conditional tightness seem very natural and we discuss them thoroughly in the text. 
	Interestingly, the splitting of the posterior into different modes depends not on $\lambda_*$ (as it was in \citet{bochkina2014}) but again on $\widehat{\lambda}_{sc}^t$ because of which yet we fail to demonstrate the asymptotic normality since it requires additional results on behavior of strongly consistent estimators with constraints on the domain. Intuitively, the asymptotic distribution should be similar to the frequentist distribution of MAP estimates from \citet{bochkina2014}: atom at zero for the exponential part, Gaussian -- for the Gaussian part, and sum of atom at zero and half-Gaussian for the half-Gaussian part (see \citet{geyer1994asymptotics}). We address this investigation for future and conjecture that classical MLE or penalized MLE (i.e., MAP) from \citet{bochkina2014} are the right candidates for $\widehat{\lambda}_{sc}^t$. 
	
	A minor remark would be that, in pure Bayesian framework there is only one free parameter that is controlled by a specialist -- the prior distribution, whereas in Algorithm~\ref{alg:npl-posterior-sampling:mri:binned} we have several free parameters: $\theta^t,  \, \beta^t$, $A_{\mathcal{M}}$. Therefore, our theoretical results also contain restrictions on the above parameters. At the end, we address the problem of model misspecification for the generalized Poisson model with wrong design which arises twice our setting: first, in Algorithm~\ref{alg:wbb-pet-bootstrap:mri:posterior-mixing-param} when sampling $\widetilde{\Lambda}_{\mathcal{M}}^t$ (because we use $Y^t$ with incorrect design $A_{\mathcal{M}}$) and, second, when assume that model \eqref{eq:poisson-model-pet} is wrong, in general.

	\subsection{Convergence for conditional probabilities.}
	\label{subsect:theory:notations}
	%In our theoretical considerations there are two levels of randomness: first one, which is closer to a practitioner, contains conditional distributions of $\widetilde{\lambda}_b^t$, $\widetilde{\Lambda}^t_{\mathcal{M}}$ given data $Y^t$, and second where the aforementioned distributions are random themselves due to randomness in $Y^t$. Theoretical validation of proposed algorithms consists in proving that conditional distributions in Algorithms~\ref{alg:npl-posterior-sampling:wbb-no-mri},~\ref{alg:npl-posterior-sampling:mri:binned} will concentrate near the ``true'' point~$\lambda_*$ when $t\rightarrow +\infty$ almost for any trajectory $Y^t$, $t\in (0, +\infty)$. At the same time it is also important to characterize the rate of this concentration, which equivalent to characterization of the asymptotic conditional distribution of  $\widetilde{\lambda}_b^t$ or, at least, its conditional tightness. \\
	
	Let $(\Omega, \mathcal{F}, P)$ be the common probability space on which process $Y^t$, $t\in (0,+\infty)$ and MGP prior in \eqref{eq:new-algo:npl-poisson:mgp-short} are defined (see Supplementary Material, Section~\ref{app:prob-space} for details). Let 
	\begin{equation}
	\mathcal{F}^t = \sigma(Y^\tau, \, \tau\in (0, t)) \subset \mathcal{F},
	\end{equation}
	where $\sigma(\cdot)$ denotes the sigma-algebra generated by a family of random variables.
	
	\begin{definition}
		We say that $U^t$ converges in conditional probability to $U$ almost surely $Y^t$, $t\in (0, +\infty)$ if for every $\varepsilon > 0$ the following holds:
		\begin{equation}
		\label{eq:convergence-cond-prob-as-def}
		P(\|U^t - U\| > \varepsilon\,  \mid \, \mathcal{F}^t) \rightarrow 0 \text{ when }
		t\rightarrow +\infty, \text{ a.s. } Y^t, t\in (0, +\infty).
		\end{equation}
		This type of convergence will be denoted as follows:
		\begin{equation}
		U^t \xrightarrow{c.p.} U \text{ when } t\rightarrow +\infty, \text{ a.s. } Y^t, t\in (0, +\infty).
		\end{equation}
		\qed
	\end{definition}
	In our proofs for $U^t \xrightarrow{c.p.} 0$ we also write
	\begin{equation}
	U^t = o_{cp}(1).
	\end{equation}

	\begin{definition}
		We say that $U^t$ is conditionally tight almost surely $Y^t$, $t\in (0, +\infty)$ if for any $\varepsilon > 0$ and almost any trajectory $Y^t$, $t\in (0, +\infty)$ there exists $M = M(\varepsilon, \{Y^t\}_{t\in (0, +\infty)})$ such that 
		\begin{equation}\label{eq:convergence-cond-tightness-def}
		\sup_{t\in (0, +\infty)} P(\|U^t\| > M \, \mid \, \mathcal{F}^t) < \varepsilon.
		\end{equation}
	\end{definition}
	
	In short, in the definition above almost surely $Y^t$, $t\in (0, +\infty)$ means that statements in  \eqref{eq:convergence-cond-prob-as-def},~\eqref{eq:convergence-cond-tightness-def} hold for almost every trajectory $Y^t$, $t\in (0,+\infty)$.

	\subsection{Consistency}
	\label{subsect:theory:consistency}
	
	\begin{assumption}\label{assump:theory:consistency:well-spec}
		Model \eqref{eq:poisson-model-pet} is well-specified, that is
		\begin{equation}\label{eq:theory:assumption-well-specified-lstar}
		Y^t \sim P^t_{A, \lambda_*}, \text{ for some } \lambda_* \in \R^p_+ \text{ and all } t\in (0, +\infty),
		\end{equation}
		where $A$ satisfies \eqref{eq:design-matrix-positivity-restr-1}-\eqref{eq:design-matrix-non-empt-ker}, $P^t_{A,\lambda}$ is defined in \eqref{eq:prelim:poiss-prob-model}.
	\end{assumption}
	
	\begin{theorem}
		\label{thm:wbb-algorithm-consistency}
		Let Assumption~\ref{assump:theory:consistency:well-spec} and conditions \eqref{eq:prelim:penalty-cond-convex}, \eqref{eq:prelim:penalty-cond-strict-conv} for $\varphi(\lambda)$ be satisfied. Let also   
		 $\beta^t$, $\theta^t$ be such that 
		\begin{align}
		\label{eq:thm-wbb-algorithm-consistency-params}
		\beta^t/t \rightarrow 0, \, \theta^t/t \rightarrow 0 \, \text{ when } t\rightarrow +\infty.
		\end{align}
	 	Then, 
		\begin{equation}\label{eq:thm-wbb-algorithm-consistency-fmla}
		\widetilde{\lambda}^t_b - \lambda_* \xrightarrow{c.p.} w_{A,\lambda_*}(0) \text{ when } t\rightarrow +\infty, 
		\text{ a.s. } Y^t, \, t\in (0, +\infty),
		\end{equation}
		where $\widetilde{\lambda}^t_b$ is sampled in Algorithm~\ref{alg:npl-posterior-sampling:mri:binned}, 
		$w_{A,\lambda}(\cdot)$ is defined in \eqref{eq:proofs:lemma-existence-minimizer-ker-a}.
		
	\end{theorem}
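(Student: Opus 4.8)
The plan is to decouple the two asymptotic mechanisms in Algorithm~\ref{alg:npl-posterior-sampling:mri:binned}: the concentration of the resampled intensities $\widetilde{\Lambda}^t_b$ around the true intensity $\Lambda_*=A\lambda_*$, and the continuity of the penalized argmin map that turns $\widetilde{\Lambda}^t_b$ into $\widetilde{\lambda}^t_b$. Concretely, I would first establish the auxiliary convergence
\[
\widetilde{\Lambda}^t_b \xrightarrow{c.p.} \Lambda_* = A\lambda_* \quad \text{when } t\to+\infty,\ \text{a.s. } Y^t,\ t\in(0,+\infty),
\]
and then prove the deterministic statement that the penalized argmin map
\[
(\Lambda,\varepsilon)\ \longmapsto\ \argmin_{\lambda\succeq0}\Big[\textstyle\sum_{i=1}^d\big(-\Lambda_i\log(a_i^T\lambda)+a_i^T\lambda\big)+\varepsilon\,\varphi(\lambda)\Big]
\]
converges, as $\Lambda\to\Lambda_*$ and $\varepsilon\to0^+$, to $\lambda_*+w_{A,\lambda_*}(0)$. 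Combining the two through a continuous-mapping argument for conditional convergence then yields \eqref{eq:thm-wbb-algorithm-consistency-fmla}, since $\varepsilon=\beta^t/t\to0$ by \eqref{eq:thm-wbb-algorithm-consistency-params}.

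For the first step I would exploit the explicit gamma law used in the second draw of Algorithm~\ref{alg:npl-posterior-sampling:mri:binned}. Conditionally on $\mathcal{F}^t$ and on the intermediate draw $\widetilde{\Lambda}^t_{\mathcal{M}}$, the coordinate $\widetilde{\Lambda}^t_{b,i}$ has mean $(Y_i^t+\theta^t\widetilde{\Lambda}^t_{\mathcal{M},i})/(\theta^t+t)$ and variance $(Y_i^t+\theta^t\widetilde{\Lambda}^t_{\mathcal{M},i})/(\theta^t+t)^2$. Under Assumption~\ref{assump:theory:consistency:well-spec} the strong law gives $Y_i^t/t\to\Lambda^*_i$ on a full-measure trajectory set, and $\widetilde{\Lambda}^t_{\mathcal{M}}$ is conditionally tight because $A_{\mathcal{M}}$ is injective and well-conditioned, so the weighted-likelihood-bootstrap estimator of Algorithm~\ref{alg:wbb-pet-bootstrap:mri:posterior-mixing-param} concentrates around a finite KL-projection. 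Since $\theta^t/t\to0$, both the bias contribution $(\theta^t/t)\widetilde{\Lambda}^t_{\mathcal{M},i}$ and the conditional variance vanish, and a conditional Chebyshev bound (first integrating over the step-2 gamma noise, then over the step-1 draw) delivers the claimed conditional-probability convergence.

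The analytic heart is the second step. Writing $\lambda=\lambda_*+u+w$ with $u\in\mathrm{Span}(A^T)$ and $w\in\ker A$, the data-fidelity sum depends on $\lambda$ only through $A\lambda=\Lambda_*+Au$, whereas for every admissible $u$ the minimization over $w$ is untouched by the data. Hence, by the strict-convexity assumption \eqref{eq:prelim:penalty-cond-strict-conv}, for each $\varepsilon>0$ the inner minimizer in $w$ is exactly $w_{A,\lambda_*}(u)$ from \eqref{eq:proofs:lemma-existence-minimizer-ker-a}, independently of $\varepsilon$. I would then show that the reduced problem in $u$ is governed by the data-fit term: by well-specification $\Lambda_*\in R_+(A)$ and strict convexity of $s\mapsto-\Lambda^*_i\log s+s$ on the indices $I_1(\Lambda_*)$, its minimizer $u^{(t)}$ satisfies $A\widetilde{\lambda}^t_b=\Lambda_*+Au^{(t)}\to\Lambda_*$, so $u^{(t)}\to0$ by injectivity of $A$ on $\mathrm{Span}(A^T)$, the penalty entering only as an $O(\varepsilon)$ perturbation. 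Continuity of $w_{A,\lambda_*}(\cdot)$ from Lemma~\ref{lem:consistency:lem-kernel-continuity} then gives $w^{(t)}=w_{A,\lambda_*}(u^{(t)})\to w_{A,\lambda_*}(0)$, i.e. $\widetilde{\lambda}^t_b\to\lambda_*+w_{A,\lambda_*}(0)$.

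I expect the main obstacle to be precisely the interplay between the degeneracy of the limiting objective along $\ker A$ and the vanishing penalty: the limit functional $L_0(\lambda)=\sum_i(-\Lambda^*_i\log(a_i^T\lambda)+a_i^T\lambda)$ is constant on $(\lambda_*+\ker A)\cap\R^p_+$ and, because the coordinates in $I_0(\Lambda_*)$ contribute only the linear term $a_i^T\lambda$, it is not strictly convex even in all image directions, so uniqueness of the limit minimizer cannot be read off $L_0$ alone and must be supplied entirely by the $\varphi$-selection of Lemma~\ref{lem:consistency:lem-kernel-continuity} (which also tacitly uses $\beta^t>0$). Making this selection robust requires a conditional tightness bound ruling out escape of $\widetilde{\lambda}^t_b$ along the kernel or toward the boundary, after which a subsequence argument combined with uniqueness of $\lambda_*+w_{A,\lambda_*}(0)$ upgrades convergence of every subsequence to convergence of the whole family; a secondary technical point is to transport this deterministic continuity through the two-layer conditioning (data trajectory and algorithmic randomness) so as to obtain convergence in conditional probability rather than merely along fixed deterministic sequences.
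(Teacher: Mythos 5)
Your overall architecture is the same as the paper's: first establish $\widetilde{\Lambda}^t_b \xrightarrow{c.p.} \Lambda^*$, then analyze the argmin by splitting it into an image part ($u^{(t)}\to 0$ in $\mathrm{Span}(A^T)$) and a kernel part handled by the $\varepsilon$-independent inner minimization $w_{A,\lambda_*}(u)$ together with the continuity in Lemma~\ref{lem:consistency:lem-kernel-continuity} — this last step is literally the paper's \eqref{eq:consistency:proof:kernel-part-problem}--\eqref{eq:consistency:proof:kernel-part-convergence}. However, your execution has two genuine gaps. First, you justify the needed control of $\widetilde{\Lambda}^t_{\mathcal{M}}$ by injectivity and good conditioning of $A_{\mathcal{M}}$ and by concentration of the WLB draw around a finite KL-projection. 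Neither is available under the hypotheses of Theorem~\ref{thm:wbb-algorithm-consistency}: Assumption~\ref{assump:theory:distribution:inj-am} and the non-expansiveness condition are only introduced for the tightness result, and the concentration statement for $\widetilde{\lambda}^t_{\mathcal{M}}$ (Theorem~\ref{thm:asympt-distr:well-spec:concentr-mixing-param}) is proved only under those extra assumptions. The paper's Lemma~\ref{lem:consistency:proof:algorithm} avoids this entirely: the optimality (KKT) conditions in step~2 of Algorithm~\ref{alg:wbb-pet-bootstrap:mri:posterior-mixing-param} give the mass identity $\sum_i \widetilde{\Lambda}^t_{\mathcal{M},i} = \sum_i \widetilde{\Lambda}^t_i$ (formula \eqref{eq:proof:algorithm-consistency-lem:em-prop}), whence $E[\widetilde{\Lambda}^t_{\mathcal{M},i}\mid Y^t,t] \le \sum_i Y^t_i/t$ (formula \eqref{eq:proof:algorithm-consistency-lem:upper-bound-photons}); this crude bound, positivity, and $\theta^t/t\to 0$ are all that the Markov/Chebyshev step needs. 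You must either substitute this argument or add hypotheses, which would weaken the theorem.

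Second, your ``continuity of the argmin map at $(\Lambda^*,0^+)$'' cannot be run by comparing the objective with its value at $u=0$ (equivalently on $(\lambda_*+\ker A)\cap\R^p_+$): whenever $\theta^t>0$ the draws $\widetilde{\Lambda}^t_{b,i}$ for $i\in I_0(\Lambda^*)$ are a.s. strictly positive (gamma with strictly positive shape), so the terms $-\widetilde{\Lambda}^t_{b,i}\log(a_i^T\lambda)$ make the objective identically $+\infty$ on the whole fiber $\{A\lambda=\Lambda^*\}$ at every finite $t$. Hence the limit point you want to converge to has infinite objective value along the sequence, and the naive ``strict convexity on $I_1$ plus injectivity on $\mathrm{Span}(A^T)$'' argument for $u^{(t)}\to 0$ does not close; note also that on $I_0$ indices there is no strict convexity at all, and it is the positivity constraint that forces $\Lambda_i\to 0$ there. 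The paper resolves both points in Lemma~\ref{lem:consistency:strong-convexity-vicinity}: part (i) derives a quadratic lower bound on a cylinder around $\lambda_*$ by discarding the $I_0$ logarithms (they are nonnegative once $\Lambda_i\le 1$) and exploiting $\Lambda_i=\delta a_i^Tu \ge \delta^2 (a_i^Tu)^2$, and part (ii) constructs an explicit feasible comparison point $\widetilde{\lambda}^t=\lambda_* + \sum_{i\in I_0(\Lambda^*)}\widetilde{\Lambda}^t_{b,i}a_i/\|a_i\|^2$ at which the objective is finite and tends to the minimal value in conditional probability. Some construction of this type is indispensable; your closing remark about ruling out ``escape toward the boundary'' gestures at the right concern but does not identify this specific obstruction, which is where the real work of the proof lies.
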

	
	Conditional distribution of $\widetilde{\lambda}_{b}^t$  asymptotically concentrates at $\lambda_*$ in the subspace $\mathrm{Span}(A^T)$, where parameter $\lambda$ is identifiable through design $A$ and also regarding the positivity constraints. On the other hand, projection of $\lambda_*$ onto $\mathrm{ker}A$ is not identifiable in model \eqref{eq:poisson-model-pet} and it is defined solely by  penalty~$\varphi(\lambda)$ and positivity constraints at $\lambda_*$.

	There is also an extension of the above result for any generic bootstrap type procedure 
	provided that perturbation of $Y^t$ asymptotically is not too excessive.
	
	\begin{theorem}
		\label{thm:wbb-generic-consistency}
		Let conditions of Theorem~\ref{thm:wbb-algorithm-consistency} be satisfied  but Assumption~\ref{assump:theory:consistency:well-spec}. Assume also that 
		\begin{align}
		\begin{split}\label{eq:thm-wbb-generic-consistency-cond-mean}
		&\widetilde{\Lambda}_{b,i}^t \xrightarrow{c.p.} \Lambda_i^* = a_i^T\lambda_*, \,  i = 1, \dots,  \, d, \text{ for some }\lambda_* \in \R^p_+ \\
		&\text{when }t\rightarrow +\infty, \text{ a.s. } Y^t, \, t\in (0,+\infty).
		\end{split}
		\end{align}
		Then, formula \eqref{eq:thm-wbb-algorithm-consistency-fmla} remains valid.
	\end{theorem}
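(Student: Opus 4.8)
The plan is to exploit that Theorem~\ref{thm:wbb-algorithm-consistency} is itself proved in two stages, and that Assumption~\ref{assump:theory:consistency:well-spec} enters only in the first of them. Under well-specification one has $Y_i^t/t \to \Lambda_i^*$ almost surely, and combining this with $\theta^t/t \to 0$ and the concentration of the Gamma draw in step~2 of Algorithm~\ref{alg:npl-posterior-sampling:mri:binned} yields the conditional convergence $\widetilde{\Lambda}_{b,i}^t \xrightarrow{c.p.} \Lambda_i^*$. Everything past this point --- the passage from the perturbed intensities to the minimizer $\widetilde{\lambda}_b^t$ in step~3 --- depends on the data only through $\widetilde{\Lambda}_b^t$ and never again invokes the Poisson law of $Y^t$. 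Since condition~\eqref{eq:thm-wbb-generic-consistency-cond-mean} is precisely the output of that first stage, it suffices to reproduce the second stage, which I now outline.

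Fix a trajectory in the almost-sure event of~\eqref{eq:thm-wbb-generic-consistency-cond-mean} and write the step-3 objective of Algorithm~\ref{alg:npl-posterior-sampling:mri:binned}, namely the binned $L_p(\lambda \mid \widetilde{\Lambda}_b^t, A, 1, \beta^t/t)$ from~\eqref{eq:prelim:poiss-log-likelihood-penalized}, as
\begin{equation*}
\Psi_t(\lambda) = \sum_{i=1}^{d}\bigl[-\widetilde{\Lambda}_{b,i}^t \log(a_i^T\lambda) + a_i^T\lambda\bigr] + \frac{\beta^t}{t}\,\varphi(\lambda), \quad \lambda \succeq 0.
\end{equation*}
As $\widetilde{\Lambda}_{b,i}^t \to \Lambda_i^*$ and $\beta^t/t \to 0$, this converges locally uniformly away from the zeros of the $a_i^T\lambda$ to
\begin{equation*}
\Psi_\infty(\lambda) = \sum_{i=1}^{d}\bigl[-\Lambda_i^* \log(a_i^T\lambda) + a_i^T\lambda\bigr],
\end{equation*}
whose value depends on $\lambda$ only through $A\lambda$ and which is minimized over the image cone exactly when $a_i^T\lambda = \Lambda_i^*$ for $i\in I_1(\Lambda^*)$ and $a_i^T\lambda = 0$ for $i\in I_0(\Lambda^*)$ in the sense of~\eqref{eq:ind-lors-pos-zeros}; that is, on the affine slice $(\lambda_* + \ker A)\cap\R^p_+$. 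I would then run an argmin-consistency argument in two scales. First, $\Psi_\infty$ is coercive as a function of $\Lambda = A\lambda$ on the image cone (each summand is bounded below and grows linearly in $\Lambda_i \ge 0$), which controls the $\mathrm{Span}(A^T)$-component; this confines $\widetilde{\lambda}_b^t$ to a fixed compact with conditional probability tending to one and, by standard $M$-estimation on the identifiable subspace, gives $A\widetilde{\lambda}_b^t \xrightarrow{c.p.} A\lambda_*$, equivalently the $\mathrm{Span}(A^T)$-part $u^t$ of $\widetilde{\lambda}_b^t-\lambda_*$ satisfies $u^t = o_{cp}(1)$. Second, writing $\widetilde{\lambda}_b^t = \lambda_* + u^t + w^t$ with $w^t\in\ker A$, the likelihood term is flat along $\ker A$, so on the near-minimizing slice the penalty forces $w^t$ close to $\argmin_{w}\varphi(\lambda_*+u^t+w) = w_{A,\lambda_*}(u^t)$; continuity of this selection map, supplied by Lemma~\ref{lem:consistency:lem-kernel-continuity}, then yields $w^t \xrightarrow{c.p.} w_{A,\lambda_*}(0)$. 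Summing the two pieces gives~\eqref{eq:thm-wbb-algorithm-consistency-fmla}.

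The main obstacle is the non-regularity that motivates Lemma~\ref{lem:consistency:lem-kernel-continuity}: the leading likelihood term is genuinely degenerate along $\ker A$, so the $\ker A$-component of the minimizer must be pinned down by the penalty even though its coefficient $\beta^t/t$ vanishes. The heuristic --- a flat direction is decided by an arbitrarily small but nonzero force, with the positivity constraint supplying compactness of the feasible slice and strict convexity~\eqref{eq:prelim:penalty-cond-strict-conv} supplying uniqueness --- is clean, but the two scales are coupled through $\lambda\succeq 0$, so the decoupling of $u^t$ and $w^t$ is not exact; it is precisely the continuity in Lemma~\ref{lem:consistency:lem-kernel-continuity} that makes the limit robust to the residual $u^t = o_{cp}(1)$. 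A secondary technical point is the boundary behavior on $I_0(\Lambda^*)$: there $\widetilde{\Lambda}_{b,i}^t$ is small but strictly positive, so $-\widetilde{\Lambda}_{b,i}^t\log(a_i^T\lambda)$ can blow up, and one must check that this drives $a_i^T\widetilde{\lambda}_b^t$ to a vanishing value consistent with $a_i^T\lambda_* = 0$ rather than destabilizing the minimizer, which is what secures the coercivity bound and the convergence $A\widetilde{\lambda}_b^t \to A\lambda_*$ on all of $\{1,\dots,d\}$.
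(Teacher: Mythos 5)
Your route coincides with the paper's own proof in its architecture: the same decomposition $\widetilde{\lambda}_b^t = \lambda_* + u^t + w^t$ into a $\mathrm{Span}(A^T)$-part and a $\ker A$-part, and the same treatment of the kernel part --- given $u^t$, the kernel component of the minimizer is exactly (not just approximately) $w_{A,\lambda_*}(u^t)$ because the likelihood term is constant on the slice $\lambda_* + u^t + \ker A$, and then $u^t = o_{cp}(1)$ plus the continuity of $w_{A,\lambda_*}(\cdot)$ from Lemma~\ref{lem:consistency:lem-kernel-continuity} and the Continuous Mapping Theorem give $w^t \xrightarrow{c.p.} w_{A,\lambda_*}(0)$. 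That half of your argument is complete and matches the paper in substance.

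The gap is in the first scale, and it is not the ``secondary technical point'' you call it: the appeal to standard $M$-estimation needs (locally) uniform convergence of the objective in a neighborhood of the limiting argmin, and that is exactly what fails whenever $I_0(\Lambda^*)\neq\emptyset$. For $i\in I_0(\Lambda^*)$ the prelimit terms $-\widetilde{\Lambda}_{b,i}^t\log(a_i^T\lambda)$ blow up as $a_i^T\lambda\rightarrow 0$, while every limiting minimizer satisfies $a_i^T\lambda = 0$; so the region where your locally uniform convergence holds excludes precisely the set you must localize onto, and no off-the-shelf argmin-consistency theorem applies. The paper closes this with Lemma~\ref{lem:consistency:strong-convexity-vicinity} applied to the normalized objective $\mathcal{L}^t$ of \eqref{eq:proof:consistency-generic:normalized-likl-nice}: part (i) is a one-sided lower bound $\inf_{C_{A,\delta}(\lambda_*)}\mathcal{L}^t \geq C\delta^2 + o_{cp}(1)$ on a cylinder around $\lambda_*$, where the $I_0$ terms are harmless since $-\widetilde{\Lambda}_{b,i}^t\log\Lambda_i \geq 0$ once $\Lambda_i \leq 1$; part (ii) supplies the device missing from your sketch, an explicit comparison point
\begin{equation*}
\widetilde{\lambda}^t = \lambda_* + \sum_{i\in I_0(\Lambda^*)} \widetilde{\Lambda}_{b,i}^t \dfrac{a_i}{\|a_i\|^2},
\end{equation*}
shifted into the interior along the offending rays so that $a_i^T\widetilde{\lambda}^t \geq \widetilde{\Lambda}_{b,i}^t$ (using $a_i^Ta_{i'}\geq 0$), whence $-\widetilde{\Lambda}_{b,i}^t\log(a_i^T\widetilde{\lambda}^t) \leq -\widetilde{\Lambda}_{b,i}^t\log\widetilde{\Lambda}_{b,i}^t \xrightarrow{c.p.} 0$, giving $\mathcal{L}^t(\widetilde{\lambda}^t)\xrightarrow{c.p.}0$ with $\widetilde{\lambda}^t\xrightarrow{c.p.}\lambda_*$. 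Convexity of $\mathcal{L}^t$ then converts this pair of bounds into the localization $\Pi_{A^T}(\widetilde{\lambda}_b^t-\lambda_*)\xrightarrow{c.p.}0$, which is the conclusion you wanted from $M$-estimation. (A minor further point: your claim that bounded $A\lambda$ confines $\widetilde{\lambda}_b^t$ to a compact uses that $\{\lambda\succeq 0 : A\lambda \text{ bounded}\}$ is bounded, which is the paper's Lemma~\ref{lem:proofs:compactness-domain-penalty}.) So the skeleton is right and the kernel-part argument is correct; what is genuinely missing is the interior-shift comparison point that makes the first-scale argument survive the logarithmic singularity at the target.
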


	\subsection{Tightness and the asymptotic distribution}  
	\label{subsect:theory:asymp-distribution}
	
	\begin{assumption}\label{assump:theory:distribution:inj-am}
		$A_{\mathcal{M}}\in \mathrm{Mat}(d, p_{\mathcal{M}})$ is injective.
	\end{assumption}
	
	\begin{assumption}[non-expansiveness condition]
		\label{assump:theory:distribution:non-expansive}
		Let $\Lambda^*\in \R^d_+$, $A_{\mathcal{M}}\in \mathrm{Mat}(d, p_{\mathcal{M}})$, $A_{\mathcal{M}}$ has only positive entries and satisfies the property in \eqref{eq:design-matrix-positivity-restr-2}.
		Consider set $\lambda_{\mathcal{M}, *}$ which is defined by the formula:
		\begin{align}\label{eq:assump:theory:asymp-distr:non-exp-cond}
		\begin{split}
		&\lambda_{\mathcal{M}, *} = \argmin_{\lambda_{\mathcal{M}} \succeq 0}
		L(\lambda_{\mathcal{M}} \mid \Lambda^*, A_{\mathcal{M}}, 1),
		\end{split}
		\end{align}
		where $L(\lambda_{\mathcal{M}} \mid \Lambda^*, A_{\mathcal{M}}, 1)$ is defined in \eqref{eq:prelim:poiss-prob-model}.
		There is at least one point in $\lambda_{\mathcal{M}, *}$ for which the following holds:
		\begin{equation}\label{eq:assump:theory:asymp-distr:non-exp-cond:ind-cond}
		I_0(\Lambda^*_{\mathcal{M}}) = I_0(\Lambda^*), \,  \Lambda^{*}_{\mathcal{M}} = A_{\mathcal{M}}\lambda_{\mathcal{M}, *},
		\end{equation}
		where $I_0(\cdot)$ is defined in \eqref{eq:ind-lors-pos-zeros}.
	\end{assumption}
	
	The proposition below states that Assumption~\ref{assump:theory:distribution:non-expansive} is always meaningful and not restrictive at~all. 
	
	\begin{proposition}
		\label{prop:existence-minimizers}
		Let $A_{\mathcal{M}} \in \mathrm{Mat}(d, p_{\mathcal{M}})$, $A_{\mathcal{M}}$ has only positive entries and the property in \eqref{eq:design-matrix-positivity-restr-2} holds. Then, for any $\Lambda^* \in \R^d_+$ set of minimizers $\lambda_{\mathcal{M},*}$ defined in \eqref{eq:assump:theory:asymp-distr:non-exp-cond} is always non-empty and constitutes an affine subset of $(p_{\mathcal{M}}-1)$-dimensional simplex $\Delta^p_{A_{\mathcal{M}}}(\Lambda^*)$ defined by the formula 
		\begin{equation}\label{eq:prop:existence-restritiveness-minimizers:symplex}
			\Delta^{p_{\mathcal{M}}}_{A_{\mathcal{M}}}(\Lambda^*) = \{ \lambda_{\mathcal{M}}\in \R^p_+ \mid \sum\limits_{j=1}^{p_{\mathcal{M}}} A_{\mathcal{M}, j}\lambda_{\mathcal{M}, j} = \sum\limits_{i=1}^{d}\Lambda_i^*\geq 0\}, \, A_{\mathcal{M},j} = \sum\limits_{i=1}^{d}a_{\mathcal{M}, ij} > 0.
		\end{equation}
		Moreover, it always holds that
		\begin{equation}\label{prop:assumptions-nonrestr:inclusion}
		I_1(\Lambda^*) \subset I_1(\Lambda^*_{\mathcal{M}}) \text{ or equivalently } I_0(\Lambda^*_{\mathcal{M}}) \subset I_0(\Lambda^*), 
		\end{equation}
		where $\Lambda^{*}_{\mathcal{M}} = A_{\mathcal{M}}\lambda_{\mathcal{M}, *}$.
	\end{proposition}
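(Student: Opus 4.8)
The plan is to exploit the two structural hypotheses on $A_{\mathcal{M}}$ separately: strict positivity of all entries $a_{\mathcal{M},ij}>0$ will yield the inclusion \eqref{prop:assumptions-nonrestr:inclusion} almost for free, while positivity of the column sums $A_{\mathcal{M},j}>0$ (coming from \eqref{eq:design-matrix-positivity-restr-2}) will drive both existence and the simplex constraint. Throughout I write $\Lambda_{\mathcal{M}}=A_{\mathcal{M}}\lambda_{\mathcal{M}}$ and regard the objective $f(\lambda_{\mathcal{M}})=L(\lambda_{\mathcal{M}}\mid\Lambda^*,A_{\mathcal{M}},1)=\sum_{i=1}^d(-\Lambda_i^*\log\Lambda_{\mathcal{M},i}+\Lambda_{\mathcal{M},i})$ as a function on $\R^{p_{\mathcal{M}}}_+$; see \eqref{eq:prelim:poiss-log-likelihood-model}. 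Since $-\log$ is convex and $\lambda_{\mathcal{M}}\mapsto\Lambda_{\mathcal{M},i}$ is linear with nonnegative weights, $f$ is convex, so the minimizer set is automatically convex. The work is to show it is nonempty, flat, and contained in the stated simplex.

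First I would dispatch the inclusion. Because every $a_{\mathcal{M},ij}>0$, for any $\lambda_{\mathcal{M}}\succeq0$ with $\lambda_{\mathcal{M}}\neq0$ one has $\Lambda_{\mathcal{M},i}=\sum_j a_{\mathcal{M},ij}\lambda_{\mathcal{M},j}>0$ for every $i$, hence $I_0(\Lambda_{\mathcal{M}})=\emptyset$. Thus for any nonzero minimizer $I_0(\Lambda^*_{\mathcal{M}})=\emptyset\subseteq I_0(\Lambda^*)$, which is \eqref{prop:assumptions-nonrestr:inclusion}; the degenerate case $\lambda_{\mathcal{M},*}=0$ arises only when $\Lambda^*=0$ (treated below), where $I_0(\Lambda^*_{\mathcal{M}})=\{1,\dots,d\}=I_0(\Lambda^*)$ and the inclusion still holds. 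Next, for existence I would establish coercivity: using $\sum_i\Lambda_{\mathcal{M},i}=\sum_j A_{\mathcal{M},j}\lambda_{\mathcal{M},j}\geq(\min_j A_{\mathcal{M},j})\,\|\lambda_{\mathcal{M}}\|_1$ with $\min_j A_{\mathcal{M},j}>0$, the linear part of $f$ grows at least linearly in $\|\lambda_{\mathcal{M}}\|$, while each $-\Lambda_i^*\log\Lambda_{\mathcal{M},i}$ grows at most logarithmically, so $f\to+\infty$ as $\|\lambda_{\mathcal{M}}\|\to\infty$ in the cone. Together with lower semicontinuity of $f$ (it is continuous where $\Lambda_{\mathcal{M}}\succ0$ and equals $+\infty$ only at $\lambda_{\mathcal{M}}=0$ when $I_1(\Lambda^*)\neq\emptyset$), compactness of sublevel sets gives a minimizer; when $\Lambda^*=0$ the objective is $\sum_j A_{\mathcal{M},j}\lambda_{\mathcal{M},j}$, uniquely minimized at $0$.

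For membership in $\Delta^{p_{\mathcal{M}}}_{A_{\mathcal{M}}}(\Lambda^*)$ I would avoid full KKT and use a radial argument. Fix a minimizer $\lambda_{\mathcal{M},*}\neq0$, so $\Lambda^*_{\mathcal{M}}\succ0$; since the ray $s\mapsto s\lambda_{\mathcal{M},*}$ stays feasible for $s>0$, the scalar map $\phi(s)=f(s\lambda_{\mathcal{M},*})$ is minimized at $s=1$, and $\phi'(1)=\sum_i(\Lambda^*_{\mathcal{M},i}-\Lambda_i^*)=0$ gives $\sum_j A_{\mathcal{M},j}\lambda_{\mathcal{M},*,j}=\sum_i\Lambda_i^*$, i.e.\ membership in the simplex \eqref{eq:prop:existence-restritiveness-minimizers:symplex}. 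Finally, for flatness I would split coordinates by $I_0(\Lambda^*)$ and $I_1(\Lambda^*)$. On $I_1$ the summand $-\Lambda_i^*\log\Lambda_{\mathcal{M},i}$ is strictly convex, so applying Jensen's inequality to the midpoint of two minimizers forces $\Lambda^*_{\mathcal{M},i}$ to equal a common value $\hat\mu_i$ across all minimizers for each $i\in I_1$; on $I_0$ the objective contributes only the linear term $\sum_{i\in I_0}\Lambda_{\mathcal{M},i}$, whose value is then pinned by the already-established total mass $\sum_i\Lambda_{\mathcal{M},i}=\sum_i\Lambda_i^*$. Hence $\lambda_{\mathcal{M},*}$ is exactly the set of $\lambda_{\mathcal{M}}\succeq0$ satisfying the linear equalities $(A_{\mathcal{M}}\lambda_{\mathcal{M}})_i=\hat\mu_i$ for $i\in I_1$ together with the total-mass equality, i.e.\ the intersection of the nonnegative cone with an affine subspace, which is precisely an affine slice of $\Delta^{p_{\mathcal{M}}}_{A_{\mathcal{M}}}(\Lambda^*)$.

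The main obstacle I anticipate is the $I_0$ block. Because the objective is only convex (not strictly) on those coordinates, one cannot pin the individual $\Lambda_{\mathcal{M},i}$ there, and care is needed to verify the converse direction, namely that every $\lambda_{\mathcal{M}}\succeq0$ meeting the linear constraints is genuinely optimal: this follows since on $I_1$ such a point matches the per-coordinate minimal values $\hat\mu_i$, while on $I_0$ the residual $\sum_{i\in I_0}(A_{\mathcal{M}}\lambda_{\mathcal{M}})_i=\sum_i\Lambda_i^*-\sum_{i\in I_1}\hat\mu_i$ is a fixed constant, so $f$ takes the common minimal value. The remaining bookkeeping is the degenerate corner $\Lambda^*=0$, where the simplex collapses to $\{0\}$ and all three claims hold trivially.
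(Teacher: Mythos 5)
Your proof is correct, and it takes a genuinely different route from the paper's. The paper proves the proposition via the Karush--Kuhn--Tucker conditions: it multiplies the stationarity equation by $\lambda_{\mathcal{M},*,j}$, sums over $j$, and uses complementary slackness to obtain the simplex identity; existence then follows by adjoining this (necessary) constraint and minimizing the convex functional over the resulting convex compact set; the inclusion \eqref{prop:assumptions-nonrestr:inclusion} is deduced from finiteness of the optimal value, which forces $\Lambda^*_{\mathcal{M},i}>0$ whenever $\Lambda_i^*>0$. You instead obtain existence directly from coercivity (driven by $A_{\mathcal{M},j}>0$) plus lower semicontinuity, derive the simplex membership from the radial identity $\phi'(1)=0$ along the feasible ray $s\mapsto s\lambda_{\mathcal{M},*}$ (a multiplier-free substitute for the paper's KKT computation), and get the inclusion from strict positivity of the entries of $A_{\mathcal{M}}$, which even yields the stronger conclusion $I_0(\Lambda^*_{\mathcal{M}})=\emptyset$ for any nonzero minimizer, with the corner $\Lambda^*=0$ handled separately. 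As for what each approach buys: the paper's KKT route is not wasted work, since the multiplier $\mu_{\mathcal{M},*}$ and the gradient identity it produces are reused essentially verbatim in the proof of Theorem~\ref{thm:asympt-distr:well-spec:identifiability-cond}, whereas your argument is more elementary and self-contained; moreover, yours is the only one of the two that explicitly verifies the claimed \emph{affine} structure of the minimizer set --- pinning the intensities $\hat\mu_i$ on $I_1(\Lambda^*)$ by strict convexity, noting the $I_0(\Lambda^*)$ contribution is fixed by the total-mass identity, and checking the converse that every point of the resulting affine slice attains the minimum --- a step the paper's proof leaves implicit, recording only nonemptiness, the simplex inclusion, and convexity of the problem.
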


	The non-expansiveness condition is essential for us when we sample $\widetilde{\Lambda}_{\mathcal{M}}^t$ in Algorithm~\ref{alg:wbb-pet-bootstrap:mri:posterior-mixing-param} because we know that model $P_{A_{\mathcal{M}}, \lambda_{\mathcal{M}}}^t$ is strongly misspecified when we fit data $Y^t$ in it. The aim here is still to have a unique and stable KL-minimizer $\lambda_{\mathcal{M},*}$ so that identifiability holds for $\lambda_{\mathcal{M},*}$ and the prior effect of $\mathcal{M}$ on $\widetilde{\lambda}_b^t$ 
	is not spread ambiguously among different (but equivalent in terms of observations) combinations of tracer in segments of $\mathcal{M}$ (see Figure~\ref{fig:mri-ct-comparison} (c)). This is provided by the theorem below.

	\begin{theorem}[identifiability in the prior model]
		\label{thm:asympt-distr:well-spec:identifiability-cond}
		Let Assumptions~\ref{assump:theory:distribution:inj-am}-\ref{assump:theory:distribution:non-expansive} be satisfied. Then, $\lambda_{\mathcal{M}, *}$  defined in \eqref{eq:assump:theory:asymp-distr:non-exp-cond} is unique and 
		the following formula holds:
		\begin{align}\nonumber
		L(\lambda_{\mathcal{M}} \mid \Lambda^*, A_{\mathcal{M}}, 1) - 
		L(\lambda_{\mathcal{M}, *} \mid \Lambda^*, A_{\mathcal{M}}, 1) &= 
		\mu_{\mathcal{M},*}^T\lambda_{\mathcal{M}} + 
		\dfrac{1}{2}\sum\limits_{i\in I_1(\Lambda^*)}
		\Lambda_i^* \dfrac{(\Lambda_{\mathcal{M},i} - \Lambda_{\mathcal{M},i}^*)^2}{(\Lambda^*_{\mathcal{M},i})^2}\\
		\label{eq:asympt-distr:well-spec:identifiability-cond:approx}
		&+ o(\|\Pi_{A^T_{\mathcal{M},I_1(\Lambda^*)}}(\lambda_{\mathcal{M}} - \lambda_{\mathcal{M}, *})\|^2),
		\end{align}
		where $\Pi_{A^T_{\mathcal{M},I_1(\Lambda^*)}}$ denotes the orthogonal projector onto $\mathrm{Span}(A^T_{\mathcal{M},I_1(\Lambda^*)})$, 
		\begin{align}
		\label{eq:asympt-distr:well-spec:identifiability-cond:grad-formulas}
		\begin{split}
		&\mu_{\mathcal{M},*} = \sum\limits_{i\in I_1(\Lambda^*)}-\Lambda^*_i \dfrac{a_{\mathcal{M}, i}}{\Lambda_{\mathcal{M},i}^*} + \sum\limits_{i=1}^{d} a_{\mathcal{M}, i}, \\
		&\mu_{\mathcal{M},*} \succeq 0, \, \mu_{\mathcal{M}, *, j}\lambda_{\mathcal{M},*,j} = 0 \text{ for all }
		j\in \{1, \dots, p_{\mathcal{M}}\}.
		\end{split}
		\end{align}
		In particular, the function $L(\lambda_{\mathcal{M}} \mid \Lambda^*, A, 1)$ is locally strongly convex at $\lambda_{\mathcal{M}, *}$, that is, there exists an open ball $B_* = B(\lambda_{\mathcal{M}, *}, \delta_*)$, $\delta_{*} = \delta_{*}(A_{\mathcal{M}}, \Lambda_*) > 0$ and constant $C_* = C_*(A_{\mathcal{M}}, \Lambda_*) > 0$  such that 
		\begin{align}\label{eq:thm:asympt-distr:identif-cond:strong-convexity}
		L(\lambda_{\mathcal{M}} \mid \Lambda^*, A_{\mathcal{M}}, 1) - L(\lambda_{\mathcal{M}, *} \mid \Lambda^*, A_{\mathcal{M}}, 1) \geq C_*\|\lambda_{\mathcal{M}} - \lambda_{\mathcal{M},*}\|^2 \text{ for any } \lambda \in B_{*}\cap \R^{p_{\mathcal{M}}}_+.
		\end{align}
	\end{theorem}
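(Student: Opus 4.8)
The plan is to read all four assertions off the first- and second-order behaviour of the convex program $\lambda_{\mathcal{M}} \mapsto L(\lambda_{\mathcal{M}} \mid \Lambda^*, A_{\mathcal{M}}, 1) = \sum_{i=1}^d \bigl(-\Lambda_i^* \log(a_{\mathcal{M},i}^T\lambda_{\mathcal{M}}) + a_{\mathcal{M},i}^T\lambda_{\mathcal{M}}\bigr)$ over the cone $\R^{p_{\mathcal{M}}}_+$. First I would isolate the nonlinear part: writing $g_i(x) = -\Lambda_i^* \log x + x$, the objective splits as $\sum_{i \in I_1(\Lambda^*)} g_i(\Lambda_{\mathcal{M},i}) + \sum_{i \in I_0(\Lambda^*)} \Lambda_{\mathcal{M},i}$, so only the rows $i \in I_1(\Lambda^*)$ carry curvature. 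Differentiating, the gradient at any minimizer $\lambda_{\mathcal{M},*}$ (with $\Lambda_{\mathcal{M}}^* = A_{\mathcal{M}}\lambda_{\mathcal{M},*}$) is exactly $\mu_{\mathcal{M},*} = -\sum_{i \in I_1(\Lambda^*)} \Lambda_i^* a_{\mathcal{M},i}/\Lambda_{\mathcal{M},i}^* + \sum_{i=1}^d a_{\mathcal{M},i}$, and the Karush--Kuhn--Tucker conditions for minimization over $\R^{p_{\mathcal{M}}}_+$ give precisely $\mu_{\mathcal{M},*} \succeq 0$ together with the complementary slackness $\mu_{\mathcal{M},*,j}\lambda_{\mathcal{M},*,j} = 0$, which is the content of \eqref{eq:asympt-distr:well-spec:identifiability-cond:grad-formulas}.

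The crux, and the step I expect to be the main obstacle, is positive-definiteness of the Hessian, because a priori only the rows indexed by $I_1(\Lambda^*)$ enter the quadratic form and discarding rows could destroy the full rank supplied by injectivity. This is exactly where Assumptions~\ref{assump:theory:distribution:inj-am}--\ref{assump:theory:distribution:non-expansive} are consumed. Since $A_{\mathcal{M}}$ has strictly positive entries, any nonzero $\lambda_{\mathcal{M}} \succeq 0$ yields $A_{\mathcal{M}}\lambda_{\mathcal{M}} \succ 0$; and as $\lambda_{\mathcal{M},*} \ne 0$ whenever $\Lambda^* \ne 0$ (the only nondegenerate case; otherwise some $g_{i}(0) = +\infty$), we get $I_0(\Lambda_{\mathcal{M}}^*) = \emptyset$, whereupon the non-expansiveness condition \eqref{eq:assump:theory:asymp-distr:non-exp-cond:ind-cond} forces $I_0(\Lambda^*) = \emptyset$, i.e.\ $I_1(\Lambda^*) = \{1,\dots,d\}$. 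Consequently the Hessian at $\lambda_{\mathcal{M},*}$, namely $H = \sum_{i \in I_1(\Lambda^*)} \Lambda_i^* a_{\mathcal{M},i}a_{\mathcal{M},i}^T/(\Lambda_{\mathcal{M},i}^*)^2$, is a positively weighted Gram matrix of the \emph{full} row system $\{a_{\mathcal{M},i}\}_{i=1}^d$, which spans $\R^{p_{\mathcal{M}}}$ by Assumption~\ref{assump:theory:distribution:inj-am}; hence $H \succ 0$. The identical computation gives $H(\lambda_{\mathcal{M}}) \succ 0$ throughout the interior, so $L$ is strictly convex on $\R^{p_{\mathcal{M}}}_+$ and the affine minimizer set of Proposition~\ref{prop:existence-minimizers} collapses to the single point $\lambda_{\mathcal{M},*}$, which is the asserted uniqueness.

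For the expansion \eqref{eq:asympt-distr:well-spec:identifiability-cond:approx} I would Taylor-expand each $g_i$, $i \in I_1(\Lambda^*)$, about $\Lambda_{\mathcal{M},i}^* > 0$: the constant terms cancel against $L(\lambda_{\mathcal{M},*} \mid \Lambda^*, A_{\mathcal{M}}, 1)$, the first-order terms (pooled with the linear $I_0$-part) sum to $\mu_{\mathcal{M},*}^T(\lambda_{\mathcal{M}} - \lambda_{\mathcal{M},*})$, which equals $\mu_{\mathcal{M},*}^T\lambda_{\mathcal{M}}$ by complementary slackness, and the second-order terms, using $\Lambda_{\mathcal{M},i} - \Lambda_{\mathcal{M},i}^* = a_{\mathcal{M},i}^T(\lambda_{\mathcal{M}} - \lambda_{\mathcal{M},*})$ and $g_i''(\Lambda_{\mathcal{M},i}^*) = \Lambda_i^*/(\Lambda_{\mathcal{M},i}^*)^2$, reproduce the stated quadratic form. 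Because each increment $\Lambda_{\mathcal{M},i} - \Lambda_{\mathcal{M},i}^* = a_{\mathcal{M},i}^T\Pi_{A^T_{\mathcal{M},I_1(\Lambda^*)}}(\lambda_{\mathcal{M}} - \lambda_{\mathcal{M},*})$ depends only on the projection onto $\mathrm{Span}(A^T_{\mathcal{M},I_1(\Lambda^*)})$, the pooled remainder is $o(\|\Pi_{A^T_{\mathcal{M},I_1(\Lambda^*)}}(\lambda_{\mathcal{M}} - \lambda_{\mathcal{M},*})\|^2)$, exactly as claimed. Finally, the local strong convexity \eqref{eq:thm:asympt-distr:identif-cond:strong-convexity} drops out of the same expansion: choosing $C_*>0$ with $H \succeq 2C_* I$, the quadratic term dominates $2C_*\|\lambda_{\mathcal{M}} - \lambda_{\mathcal{M},*}\|^2$, the linear term $\mu_{\mathcal{M},*}^T\lambda_{\mathcal{M}} \ge 0$ since $\mu_{\mathcal{M},*}, \lambda_{\mathcal{M}} \succeq 0$, and the $o(\cdot)$ remainder is absorbed by shrinking the ball $B_*$ so that it is bounded by $C_*\|\lambda_{\mathcal{M}} - \lambda_{\mathcal{M},*}\|^2$, leaving the net lower bound $C_*\|\lambda_{\mathcal{M}} - \lambda_{\mathcal{M},*}\|^2$ on $B_* \cap \R^{p_{\mathcal{M}}}_+$.
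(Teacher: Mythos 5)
Your skeleton (KKT conditions give \eqref{eq:asympt-distr:well-spec:identifiability-cond:grad-formulas}; termwise Taylor expansion of $g_i(x)=-\Lambda_i^*\log x + x$ about $\Lambda^*_{\mathcal{M},i}$ gives \eqref{eq:asympt-distr:well-spec:identifiability-cond:approx}) agrees with the paper, but the step you yourself call the crux is where your argument fails. You dispose of the rank deficiency by reading ``$A_{\mathcal{M}}$ has only positive entries'' as \emph{strictly} positive, so that any nonzero $\lambda_{\mathcal{M},*}\succeq 0$ gives $\Lambda^*_{\mathcal{M}}\succ 0$, whence $I_0(\Lambda^*_{\mathcal{M}})=\emptyset$ and, by \eqref{eq:assump:theory:asymp-distr:non-exp-cond:ind-cond}, $I_0(\Lambda^*)=\emptyset$. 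That reading cannot be the intended one: it would make Assumption~\ref{assump:theory:distribution:non-expansive} satisfiable only when $I_0(\Lambda^*)=\emptyset$, rendering the $I_0/I_1$ splitting in the theorem's statement pointless, making part (i) of Theorem~\ref{thm:asympt-distr-main} (whose subspace $\mathcal{V}$ in \eqref{eq:asymp-distr:subspace-v} is built from $I_0(\Lambda^*)$ under these very assumptions) vacuous, and divorcing the theorem from the counterexample of Theorem~\ref{thm:misspecification:main-example}, whose design has zero entries. In the paper's setting ``positive'' is the same nonnegativity as in \eqref{eq:design-matrix-positivity-restr-1}; indeed $A_{\mathcal{M}}$ is built from $A$ by summing entries over segments (formula \eqref{eq:wbb-mri:npl-presentation:reduced-design-elem}), so it has a zero entry whenever a LOR misses a segment. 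Then $I_0(\Lambda^*)$ may be nonempty, your conclusion $I_1(\Lambda^*)=\{1,\dots,d\}$ is false in general, $H=\sum_{i\in I_1(\Lambda^*)}\Lambda_i^*a_{\mathcal{M},i}a_{\mathcal{M},i}^T/(\Lambda^*_{\mathcal{M},i})^2$ is only a partial Gram matrix which injectivity of the full $A_{\mathcal{M}}$ does not make positive definite, and strict convexity of $L$ genuinely fails; dropping the linear term, as you do in the last step, then loses exactly the coercivity you need.

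The missing idea is how the paper recovers curvature in the directions invisible to the $I_1(\Lambda^*)$ rows: it does not discard the linear term $\mu_{\mathcal{M},*}^T\lambda_{\mathcal{M}}$ but converts it into a quadratic one, via $u^T\mu_{\mathcal{M},*}\geq (u^T\mu_{\mathcal{M},*})^2$ for feasible $u$ with $\|u\|\leq\|\mu_{\mathcal{M},*}\|^{-1}$, leading to the augmented form $C_{\mathcal{M},*}=\mu_{\mathcal{M},*}\mu_{\mathcal{M},*}^T+\tfrac12\sum_{i\in I_1(\Lambda^*)}\Lambda_i^*a_{\mathcal{M},i}a_{\mathcal{M},i}^T/(\Lambda^*_{\mathcal{M},i})^2$. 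Positivity of $u^TC_{\mathcal{M},*}u$ over feasible directions $u\neq 0$ (Lemma~\ref{lem:misspecified-inj-qform}) is precisely where both hypotheses act: if $u^TC_{\mathcal{M},*}u=0$, then $u\perp a_{\mathcal{M},i}$ for $i\in I_1(\Lambda^*)$ and $0=u^T\mu_{\mathcal{M},*}=\sum_{i\in I_0(\Lambda^*)}u^Ta_{\mathcal{M},i}$; non-expansiveness gives $\Lambda^*_{\mathcal{M},i}=0$ on $I_0(\Lambda^*)$, so feasibility ($A_{\mathcal{M}}(\lambda_{\mathcal{M},*}+u)\succeq 0$) turns the vanishing sum into $u^Ta_{\mathcal{M},i}=0$ for every $i\in I_0(\Lambda^*)$ as well, and only then injectivity of the \emph{full} $A_{\mathcal{M}}$ forces $u=0$. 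A further cone-scaling lemma (Lemma~\ref{lem:proofs:positive-cone-geom-lemma}) is needed to pass from directions of a fixed norm to all small feasible increments, and the same mechanism also delivers uniqueness without any appeal to strict convexity. None of this appears in your proposal, so in the intended generality (some $\Lambda^*_i=0$) both the uniqueness claim and the passage from \eqref{eq:asympt-distr:well-spec:identifiability-cond:approx} to \eqref{eq:thm:asympt-distr:identif-cond:strong-convexity} remain unproven.
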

	
	Result of Theorem~\ref{thm:asympt-distr:well-spec:identifiability-cond} is a  positive answer to the identification problem when model \eqref{eq:poisson-model-pet} is misspecified in the sense of wrong design.  Here, the non-expansiveness condition is essential and counterexamples are possible if it is removed. One such example is constructed in the proof of Theorem \ref{thm:misspecification:main-example} in , Subsection~\ref{subsect:theory:misspecification}. 
	
	Now we can turn to our main result on the tightness of the posterior.

	%\begin{theorem}[concentration rate for the mixing parameter]	
	%	\label{thm:asympt-distr:well-spec:concentr-mixing-param}
	%	Let Assumptions~\ref{assump:theory:consistency:well-spec}-\ref{assump:theory:distribution:non-expansive} be satisfied.
	%	Let $\widetilde{\lambda}_{\mathcal{M}}^t$ be sampled as in  Algorithm~\ref{alg:wbb-pet-bootstrap:mri:posterior-mixing-param} and $r(t) = o(\sqrt{t / \log \log t})$. Then,	
	%	\begin{align}\label{eq:thm:mixing-param:posterior-lambda-concentration}
	%	r(t)(\widetilde{\lambda}^t_{\mathcal{M}} - \lambda_{\mathcal{M}, *})
	%	\xrightarrow{c.p.} 0 
	%	\text{ when } t\rightarrow +\infty, \, \text{ a.s. } Y^t, t\in (0, +\infty),
	%	\end{align}
	%	where $\lambda_{\mathcal{M},*}$ is from  Theorem~\ref{thm:asympt-distr:well-spec:identifiability-cond}.
	%	In particular, for parameter $\widetilde{\Lambda}^t_{\mathcal{M}} = A_{\mathcal{M}}\widetilde{\lambda}^t_{\mathcal{M}}$ from  Algorithm~\ref{alg:wbb-pet-bootstrap:mri:posterior-mixing-param} formula \eqref{eq:thm:mixing-param:posterior-lambda-concentration} implies that 
	%	\begin{align}\label{eq:thm:mixing-param:posterior-concentration}
	%	r(t)(\widetilde{\Lambda}^t_{\mathcal{M}} - \Lambda_{\mathcal{M}}^*) \xrightarrow{c.p.} 0 
	%	\text{ when } t\rightarrow +\infty, \, \text{ a.s. } Y^t, t\in (0, +\infty), 
	%	\end{align}
	%	where $\Lambda^*_{\mathcal{M}} = A_\mathcal{M}\lambda_{\mathcal{M},*}$.
	%\end{theorem}

	Let $\{e_j\}_{j=1}^{p}$ be the standard basis in $\R^p$ and define the following spaces:
	\begin{align}
	\label{eq:asymp-distr:subspace-v}
	&\mathcal{V} = \mathrm{Span}\{e_j \, \mid \, \exists\, i\in I_0(\Lambda^*) \text{ s.t. } a_{ij} > 0\}, \\
	\label{eq:asymp-distr:subspace-u}
	&\mathcal{U} =  \mathcal{V}^\perp\cap \mathrm{Span}\{A^T_{I_1(\Lambda^*)}\}, \\
	\label{eq:asymp-distr:subspace-w}
	&\mathcal{W} = (\mathcal{V} \oplus \mathcal{U})^\perp \cap \ker A.
	\end{align}
	%Subspace $\mathcal{V}$ is the span of pixels which correspond to  LORs with zero intensities for the true value of the parameter. Note that positivity constraint on the parameter of interest gives us additional information that all such pixels must have zero tracer uptake inside and one could expect the fastest estimation rate for $\lambda$ on $\mathcal{V}$. Subspace $\mathcal{U}$ corresponds to pixels intersected by LORs with positive intensities and which signal can be recovered, in general, using design matrix~$A$ (after having removed pixels from $\mathcal{V}$ where the estimation rates are faster). Finally, subspace $\mathcal{W}$ corresponds to parts of $\lambda$ which cannot be reconstructed neither by positivity constraints nor using design $A$ (note that $\mathcal{W}\subset \ker A$) and which, in turn, are defined completely by the regularization penalty  $\varphi(\cdot)$ at $\lambda_*$ and by~$\ker A$.\\

	Let 
	\begin{align}\label{eq:asymp-distr:projectors}
	\Pi_{\mathcal{V}}, \Pi_{\mathcal{V}}, \Pi_{\mathcal{W}} \text{ be the  orthogonal projectors on $\mathcal{V}, \mathcal{V}, \mathcal{W}$, respectively}.
	\end{align}

	\begin{theorem}[tightness of the asymptotic distribution]\label{thm:asympt-distr-main}
		Let assumptions~\ref{assump:theory:consistency:well-spec}-\ref{assump:theory:distribution:non-expansive} be satisfied and assume also that 
		\begin{align}\label{eq:thm:asympt-distr-main:varphi-assump}
		&\varphi \text{ satisfies } \eqref{eq:prelim:penalty-cond-convex}, \eqref{eq:prelim:penalty-cond-strict-conv} \text{ and $\varphi$ is locally Lipschitz continous}.
		\end{align}
		Let $\widetilde{\lambda}_{b}^t$  be defined as in Algorithm~\ref{alg:npl-posterior-sampling:mri:binned} and $\theta^t = o(\sqrt{t / \log\log t})$, $\beta^t = o(\sqrt{t})$ and assume that there exists a strongly consistent estimator  $\widehat{\lambda}_{sc}^t$ of $\lambda_*$ on $\mathcal{V}\oplus\mathcal{U}$ (i.e., $\Pi_{\mathcal{U}\oplus \mathcal{V}} \widehat{\lambda}_{sc}^t \xrightarrow{a.s.} \Pi_{\mathcal{U}\oplus \mathcal{V}}\lambda_*$) such that 
		\begin{align}
		\label{eq:thm:asympt-distr-main:strongly-consist-estim-positivity}
		&\widehat{\lambda}_{sc}^t  \succeq 0,\\
		\label{eq:thm:asympt-distr-main:strongly-consist-estim-i1}
		&\limsup\limits_{t\rightarrow +\infty}
		\left|
		\sum\limits_{i\in I_1(\Lambda^*)}\sqrt{t} \dfrac{Y_i^t/t - \widehat{\Lambda}_{sc,i}^t}{ \widehat{\Lambda}_{sc,i}^t} a_i \right| < +\infty \text{ a.s. } Y^t, \, t\in (0, +\infty),\\
		\label{eq:thm:asympt-distr-main:strongly-consist-estim-i0}
		&t\widehat{\Lambda}^{t}_{sc,i} \rightarrow 0 \text{ when }t\rightarrow +\infty, \text{ a.s. } Y^t, \, t\in (0, +\infty) \text{ for } i\in I_0(\Lambda^*),
		\end{align}
		where $\widehat{\Lambda}_{sc}^t = A\widehat{\lambda}_{sc}^t$.
		Then,
		\begin{itemize}
			\item[i)] 
			\begin{align}\label{eq:thm:asympt-distr-main:projection-v}
			t \Pi_{\mathcal{V}}(\widetilde{\lambda}_{b}^t - \widehat{\lambda}^t_{sc}) \xrightarrow{c.p.}0
			\text{ when }t\rightarrow +\infty, \text{ a.s. } Y^t, \, t\in (0, +\infty).
			\end{align}
			\item[ii)] Vector $\sqrt{t} \Pi_{\mathcal{U}}(\widetilde{\lambda}_{b}^t - \widehat{\lambda}_{sc}^t)$ is conditionally tight a.s. $Y^t$, $t\in (0, +\infty)$.

			%for any Borel set $A\subset B(\mathcal{U})$ the following formula holds:
			%\begin{align}\label{eq:thm:asympt-distr-main:projection-u}
			%	P(\sqrt{t}\cdot \Pi_{\mathcal{U}}(\widetilde{\lambda}_b^t - \widehat{\lambda}^t_{sc}) \in \mathcal{A} \, \mid \, Y^t, t) - P(u^t \in \mathcal{A} \, \mid \, Y^t, t) \rightarrow 0 \text{ a.s. } Y^t, t\in (0, +\infty),
			%\end{align}
			%where 
			%\begin{align}
			%	\label{eq:thm:asympt-distr-main:minimization-problem}
			%	\begin{split}
			%	u^t(\xi) &= \argmin_{\substack{u:(1-\Pi_{\mathcal{V}})\widehat{\lambda}^t_{sc} + \frac{u}{\sqrt{t}}+ w\succeq 0, \\ u\in \mathcal{U}, \, w\in \mathcal{W}}} 
			%	-u^T (\widehat{A}^t_{I_1(\Lambda^*)})^T(\widehat{D}^t_{I_1(\Lambda^*)})^{-1/2}\xi + \frac{1}{2}u^T \widehat{F}^t_{I_1(\Lambda^*)}u, \\ 
			%	\xi&\in \R^{\# I_1(\Lambda^*)}, 
			%	\end{split}\\
			%	\label{eq:thm:asympt-distr-main:xi-normal-def}
			%	\xi &\sim \mathcal{N}\left(0, \mathrm{Id}\right), \,
			%	\mathrm{Id} \text{ -- identity matrix of size } \#I_1(\Lambda^*)\times \#I_1(\Lambda^*), \\
			%	\label{eq:thm:asympt-distr-main:fisher}
			%	\widehat{D}^t_{I_1(\Lambda^*)} &= \mathrm{diag}(\widehat{\Lambda}^t_{sc, I_1(\Lambda^*)}), \, 
			%	\widehat{F}^t_{I_1(\Lambda^*)} = \sum\limits_{i\in I_1(\Lambda^*)}\dfrac{a_ia_i^T}{\widehat{\Lambda}_{sc,i}^t} = (\widehat{A}^t_{I_1(\Lambda^*)})^T %(\widehat{D}^t_{I_1(\Lambda^*)})^{-1}\widehat{A}^t_{I_1(\Lambda^*)}.
			%\end{align}
		\end{itemize} 
	\end{theorem}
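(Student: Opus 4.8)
The plan is to carry out a local analysis of the constrained minimizer $\widetilde{\lambda}_b^t$ produced in step~\ref{eq:alg:npl-posterior-sampling:binned:argmin:final} of Algorithm~\ref{alg:npl-posterior-sampling:mri:binned}, centered at the strongly consistent estimator $\widehat{\lambda}_{sc}^t$ and resolved along the three geometric scales encoded by $\mathcal{V}$, $\mathcal{U}$ and $\mathcal{W}=(\mathcal{V}\oplus\mathcal{U})^\perp\cap\ker A$. By Theorem~\ref{thm:wbb-algorithm-consistency} the sample concentrates conditionally at $\lambda_*+w_{A,\lambda_*}(0)$, so with conditional probability tending to one I may restrict to a shrinking neighborhood and argue from the KKT conditions of the convex objective $L_p(\lambda\mid\widetilde{\Lambda}_b^t,A,1,\beta^t/t)$. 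With $\Lambda_i=a_i^T\lambda$, stationarity reads $\sum_i a_i\bigl(1-\widetilde{\Lambda}_{b,i}^t/\Lambda_i\bigr)+(\beta^t/t)\nabla\varphi(\lambda)=\mu$, with $\mu\succeq 0$ and complementary slackness $\mu_j\widetilde{\lambda}_{b,j}^t=0$; the whole argument amounts to tracking the residuals $\widetilde{\Lambda}_{b,i}^t-a_i^T\widehat{\lambda}_{sc}^t$ separately over $I_0(\Lambda^*)$ and $I_1(\Lambda^*)$ and feeding them back through this identity.

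For part~i) I would use positivity to convert observation-space control into control on $\mathcal{V}$. For $i\in I_0(\Lambda^*)$ one has $Y_i^t=0$ almost surely, and the non-expansiveness condition (Assumption~\ref{assump:theory:distribution:non-expansive}, through Theorem~\ref{thm:asympt-distr:well-spec:identifiability-cond}) forces $\Lambda^*_{\mathcal{M},i}=0$, hence $\widetilde{\Lambda}^t_{\mathcal{M},i}\to 0$ with conditionally tight rescaling $\sqrt t\,\widetilde{\Lambda}^t_{\mathcal{M},i}$. Consequently the Gamma draw in step~\ref{eq:alg:npl-posterior-sampling:binned:post-intens:final} has conditional mean $\theta^t\widetilde{\Lambda}^t_{\mathcal{M},i}/(\theta^t+t)$ and conditional variance of comparable order, and the rate restriction $\theta^t=o(\sqrt{t/\log\log t})$ is exactly what makes $t\,\widetilde{\Lambda}_{b,i}^t\xrightarrow{c.p.}0$ for $i\in I_0(\Lambda^*)$, the $\log\log t$ entering through the iterated-logarithm control of the Poisson fluctuations along almost every trajectory $Y^t$. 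The minimizer drives the fitted intensities $a_i^T\widetilde{\lambda}_b^t$, $i\in I_0$, down to this data scale, and since $\widetilde{\lambda}_b^t\succeq 0$ and $a_{ij}>0$ one has the elementary bound $0\le\widetilde{\lambda}_{b,j}^t\le a_i^T\widetilde{\lambda}_b^t/a_{ij}$ for every pixel $j$ generating $\mathcal{V}$. Combined with $t\widehat{\Lambda}_{sc,i}^t\to 0$ from \eqref{eq:thm:asympt-distr-main:strongly-consist-estim-i0} and $\widehat{\lambda}_{sc}^t\succeq 0$, this yields $t\,\Pi_{\mathcal{V}}(\widetilde{\lambda}_b^t-\widehat{\lambda}_{sc}^t)\xrightarrow{c.p.}0$.

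For part~ii) I would run the standard convexity/argmax scheme at the regular scale on $\mathcal{U}\subset\mathrm{Span}(A^T_{I_1(\Lambda^*)})$. Reparametrizing $u=\sqrt t\,\Pi_{\mathcal{U}}(\lambda-\widehat{\lambda}_{sc}^t)$, a second-order expansion of $L(\lambda\mid\widetilde{\Lambda}_b^t,A,1)$ over $I_1(\Lambda^*)$, where $\widehat{\Lambda}_{sc,i}^t\to\Lambda_i^*>0$ keeps the logarithm smooth, gives a Hessian converging to $\Pi_{\mathcal{U}}\bigl(\sum_{i\in I_1(\Lambda^*)}a_ia_i^T/\Lambda_i^*\bigr)\Pi_{\mathcal{U}}$, positive definite on $\mathcal{U}$ by the very definition \eqref{eq:asymp-distr:subspace-u}, together with a linear term $s_t=\sqrt t\,\Pi_{\mathcal{U}}\sum_{i\in I_1(\Lambda^*)}a_i(\widehat{\Lambda}_{sc,i}^t-\widetilde{\Lambda}_{b,i}^t)/\widehat{\Lambda}_{sc,i}^t$. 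I would split $s_t$ into $\sqrt t\sum_{i\in I_1}a_i(\widehat{\Lambda}_{sc,i}^t-Y_i^t/t)/\widehat{\Lambda}_{sc,i}^t$, which is bounded along almost every trajectory by hypothesis \eqref{eq:thm:asympt-distr-main:strongly-consist-estim-i1}, plus the centered Gamma fluctuation $\sqrt t\sum_{i\in I_1}a_i(Y_i^t/t-\widetilde{\Lambda}_{b,i}^t)/\widehat{\Lambda}_{sc,i}^t$, whose conditional variance at this scale is $t(Y_i^t+\theta^t\widetilde{\Lambda}^t_{\mathcal{M},i})/(\theta^t+t)^2=O(1)$ and hence conditionally tight; the penalty contributes $\sqrt t(\beta^t/t)\nabla\varphi=O(\beta^t/\sqrt t)=o(1)$ by local Lipschitzness of $\varphi$ and $\beta^t=o(\sqrt t)$. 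A conditionally tight score against a coercive quadratic form then delivers conditional tightness of $\sqrt t\,\Pi_{\mathcal{U}}(\widetilde{\lambda}_b^t-\widehat{\lambda}_{sc}^t)$.

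The main obstacle I anticipate is making the three-scale decomposition genuinely decoupled: since the minimization is global and constrained, one must show that the $t$-rate collapse on $\mathcal{V}$, the $\sqrt t$-rate fluctuation on $\mathcal{U}$, and the penalty-determined displacement on $\mathcal{W}=\ker A$ interact only through asymptotically negligible cross-terms, and that the active-constraint set (the support of $\mu$) stabilizes so the KKT system may be restricted to the correct faces. Controlling these interactions uniformly along almost every trajectory $Y^t$, $t\in(0,+\infty)$, rather than merely in distribution, is precisely what forces the iterated-logarithm bookkeeping and the sharp growth restrictions $\theta^t=o(\sqrt{t/\log\log t})$ and $\beta^t=o(\sqrt t)$.
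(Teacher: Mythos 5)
Your skeleton matches the paper's strategy (center at $\widehat{\lambda}_{sc}^t$, three-scale split along $\mathcal{V}$, $\mathcal{U}$, $\mathcal{W}$, collapse on $\mathcal{V}$ via non-expansiveness and the mixing-parameter concentration, quadratic-plus-tight-score argument on $\mathcal{U}$), but there is a genuine gap at the step that constitutes the bulk of the paper's proof: transferring control from the perturbed intensities $\widetilde{\Lambda}_b^t$ (and from the quadratic surrogate) to the actual constrained minimizer of the full penalized, non-quadratic, coupled criterion. In part~(i) your pivotal sentence --- ``the minimizer drives the fitted intensities $a_i^T\widetilde{\lambda}_b^t$, $i\in I_0(\Lambda^*)$, down to this data scale'' --- is asserted, not derived. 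It does \emph{not} follow from $t\widetilde{\Lambda}_{b,i}^t\xrightarrow{c.p.}0$ alone: the minimizer can in principle place mass on $\mathcal{V}$-pixels to improve the fit of the $I_1(\Lambda^*)$ terms or to lower the penalty, and quantifying this trade-off is exactly what the paper's argmin-approximation machinery does. Concretely, the paper introduces the quadratic process $B^t$ and its minimizer $\widetilde{\lambda}_{b,app}^t$, shows via KKT (Lemma~\ref{lem:positivity-bt}) that $\Pi_{\mathcal{V}}\widetilde{\lambda}_{b,app}^t=0$ with conditional probability tending to one, and then proves the exact minimizer is within $o_{cp}(1/t)$ on $\mathcal{V}$ and $o_{cp}(1/\sqrt t)$ on $\mathcal{U}$ of the surrogate minimizer by a Pollard-type convexity argument on the cylinders $C^t_{A,\delta}$ (Lemma~\ref{lem:thm:asymp-disr:main-approximation-lemma}), which in turn needs the recentered auxiliary point $\widetilde{\lambda}_{app}^t$ of \eqref{eq:lambda-app-recentered} to keep the $I_0$-logarithms finite, and Lemma~\ref{lem:approximate-approximation} together with Theorem~\ref{thm:asympt-distr:well-spec:concentr-mixing-param} for the data-scale collapse you correctly sketched.

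The same missing approximation undermines part~(ii): your second-order expansion implicitly assumes the full minimizer tracks the quadratic one at scale $1/\sqrt t$ on $\mathcal{U}$, which is again Lemma~\ref{lem:thm:asymp-disr:main-approximation-lemma} plus Lemma~\ref{lem:asymp-distr:quadratic-uv-approximation}, not a given. Two further technical points you elide are handled by dedicated lemmas in the paper. First, your penalty bound invokes $\nabla\varphi$, but $\varphi$ is only assumed convex and locally Lipschitz; more importantly, along $\mathcal{W}\subset\ker A$ the data terms are flat, the displacement there is $O(1)$ (it is determined by $\varphi$ and the constraints, cf.\ $w_{A,\lambda_*}(0)$), and one must show this $O(1)$ motion does not contaminate the $\mathcal{U}$- and $\mathcal{V}$-components through the infimum over the cylinder; the paper does this via Lipschitz continuity of inf-projections (Lemma~\ref{lem:thm:asympt-distr:proof:inf-varphi-diff}, based on Wets' theorem). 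Second, tightness of the constrained quadratic minimizer given a tight score is straightforward when $u=0$ is feasible, but the paper's dual/KKT analysis (Lemma~\ref{lem:asymp-distr:quadr-minimizer-mapping-boundeness}, Lemma~\ref{lem:asympt-distr:grad-stoch-asymp-normality}) is what makes the bound uniform enough to combine with the almost-sure trajectory statements; your Chebyshev computation for the Gamma fluctuation is fine, but it only covers the score, not the constrained argmin map. Your closing paragraph correctly names the decoupling/active-set stabilization as the main obstacle, but naming it is not resolving it: as written, the proposal proves the theorem only modulo its hardest step.
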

	
	Statement in (i) claims that in pixels which are interested by LORs with zero intensities (i.e. $\Lambda_i^* = 0$) the posterior distribution contracts to zero with faster rate than for the ones intersected by LORs with positive intensities. Indeed, pixels in subspace $\mathcal{V}$ are strongly forced to be zeros by the positivity constraints (i.e., if $\Lambda_i^* = 0$ and $\lambda_*, \, a_i \in \R^p_+$, then necessarily $\lambda_{*,j} = 0$ where $a_{ij} > 0$).	
	Statement in (ii) claims that, in general, the posterior concentrates around $\widehat{\lambda}_{sc}^t$ in subspace $\mathcal{U}$ with standard scaling rate $\sqrt{t}$. This is not surprising since $\mathcal{U}$ does not contain projections on $\mathcal{V}$, so the positivity constraints do not give here extra information to achieve the faster contraction rate. Finally, requiring the non-expansiveness condition for the prior  (i.e., Assumption~\ref{assump:theory:distribution:non-expansive}) may seem surprising at first sight. The intuition behind is that it protects our sampler from creation of ``too many'' pseudo-photons in LORs where intensity is zero (i.e., $\Lambda_i^* = 0$ implies $Y^t_i \equiv 0$ for the well-specified model) and significantly simplifies the theoretical analysis.

	%Similar splitting for Bayesian posteriors was already investigated in \citet{bochkina2014}, where it was established that $t\cdot \Pi_{\mathcal{V}}\lambda \mid Y^t, t$ asymptotically has exponential distribution. In view of this, the result in \eqref{eq:thm:asympt-distr-main:projection-v} is not surprising since samples $\widetilde{\lambda}_b^t$ are essentially MAP-estimates (for perturbed data) and the maximizer of exponential distribution is exactly zero. 
	
	For $\widehat{\lambda}_{sc}^t$ we propose to take the penalized  MLE-estimate which is defined by the formula:
	
	\begin{align}\label{eq:asymp-distr:map-estimate-def}
	\widehat{\lambda}_{pMLE}^t = \argmin_{\lambda \succeq 0} L_p(\lambda \mid Y^t, A, t, \beta^t), 
	\end{align}
	where $L_p(\cdot)$ is defined in \eqref{eq:prelim:poiss-log-likelihood-penalized}. %Recall that $\widehat{\lambda}_{pMLE}^t$ can be efficiently computed using the GEM-type algorithm from \citet{fessler1995sagepet}, which we also in Algorithm~\ref{alg:npl-posterior-sampling:mri:binned}.\\

	\begin{conj}\label{conj:asympt-distr-mle-pen-estimator}
		Let assumptions of Theorem~\ref{thm:asympt-distr-main} be satisfied and   $\widehat{\lambda}_{sc}^t = \widehat{\lambda}_{pMLE}^t$, where the latter is defined by formula \eqref{eq:asymp-distr:map-estimate-def}. Then,  $\widehat{\lambda}_{sc}^t$ is a strongly consistent estimator of $\lambda_*$ on $\mathcal{V}\oplus \mathcal{V}$ and formulas
		\eqref{eq:thm:asympt-distr-main:strongly-consist-estim-positivity}-\eqref{eq:thm:asympt-distr-main:strongly-consist-estim-i0} hold.
	\end{conj}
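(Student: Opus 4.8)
The plan is to verify the four requirements for $\widehat{\lambda}^t_{sc}=\widehat{\lambda}^t_{pMLE}$ one by one, treating \eqref{eq:asymp-distr:map-estimate-def} as a constrained M-estimation problem and replacing the in-probability statements used for the bootstrap samples by almost-sure statements driven by the strong law and the law of the iterated logarithm (LIL) for the Poisson counts $Y^t_i$.

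\emph{Strong consistency and positivity.} Property \eqref{eq:thm:asympt-distr-main:strongly-consist-estim-positivity} is immediate, since $\widehat{\lambda}^t_{pMLE}$ is by definition a minimiser over $\lambda\succeq 0$. For strong consistency I would reuse the proof of Theorem~\ref{thm:wbb-generic-consistency} almost verbatim, observing that $\widehat{\lambda}^t_{pMLE}$ minimises $L_p(\lambda\mid Y^t, A, t,\beta^t)$, which up to a $\lambda$-independent constant equals $t\,L_p(\lambda\mid Y^t/t, A, 1,\beta^t/t)$. The single analytic input of that proof, namely $\widetilde{\Lambda}^t_{b,i}\to\Lambda^*_i$, is now furnished by $Y^t_i/t\to\Lambda^*_i$, which holds almost surely by the strong law of large numbers for the Poisson process rather than in conditional probability; the flat direction $\ker A$ is handled exactly as before through \eqref{eq:prelim:penalty-cond-strict-conv} and Lemma~\ref{lem:consistency:lem-kernel-continuity}. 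This yields $A\widehat{\lambda}^t_{pMLE}\to A\lambda_*$ a.s., hence $\Pi_{\mathcal{U}}\widehat{\lambda}^t_{pMLE}\to\Pi_{\mathcal{U}}\lambda_*$ a.s. because $\mathcal{U}\subset\mathrm{Span}(A^T)$; consistency on $\mathcal{V}$ will fall out of the analysis of \eqref{eq:thm:asympt-distr-main:strongly-consist-estim-i0} below.

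\emph{The zero-intensity block \eqref{eq:thm:asympt-distr-main:strongly-consist-estim-i0}.} Here I would exploit that for $i\in I_0(\Lambda^*)$ one has $\Lambda^*_i=0$ and therefore $Y^t_i\equiv 0$, so LOR $i$ contributes only the linear term $t\Lambda_i=t\,a_i^T\lambda$ to $L$, with vanishing gradient from the log term. Writing the Karush--Kuhn--Tucker conditions $\nabla L(\widehat{\lambda}^t)+\beta^t\nabla\varphi(\widehat{\lambda}^t)=\mu^t$, $\mu^t\succeq 0$, $\mu^t_j\widehat{\lambda}^t_j=0$, and noting that $\sum_{i\in I_0}a_i$ is supported on the coordinate subspace $\mathcal{V}$ (by definition \eqref{eq:asymp-distr:subspace-v}, $a_{ij}=0$ for $i\in I_0$, $j\notin\mathcal{V}$), the gradient carries a push $t\sum_{i\in I_0}a_i$ of order $t$ on every $\mathcal{V}$-coordinate, whereas the remaining forces $\nabla_{I_1}L$ and $\beta^t\nabla\varphi$ are only of order $\sqrt{t\log\log t}$ (LIL) and $\beta^t=o(\sqrt{t})$, respectively, since $\varphi$ is locally Lipschitz. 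Consequently the multipliers $\mu^t_j$, $j\in\mathcal{V}$, are strictly positive for all large $t$, the positivity constraint is active, and $\widehat{\lambda}^t_{\mathcal{V}}=0$ exactly; this gives both $\widehat{\Lambda}^t_i=0$ for $i\in I_0(\Lambda^*)$ (hence \eqref{eq:thm:asympt-distr-main:strongly-consist-estim-i0}, trivially) and the missing consistency $\Pi_{\mathcal{V}}\widehat{\lambda}^t_{pMLE}=0=\Pi_{\mathcal{V}}\lambda_*$.

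\emph{The positive-intensity block \eqref{eq:thm:asympt-distr-main:strongly-consist-estim-i1} and the main obstacle.} A short computation shows that the vector in \eqref{eq:thm:asympt-distr-main:strongly-consist-estim-i1} equals $-t^{-1/2}\nabla_{I_1}L(\widehat{\lambda}^t)$, so by the stationarity relation above it reduces to $-t^{-1/2}\mu^t+\beta^t t^{-1/2}\nabla\varphi$ on the coordinates outside $\mathcal{V}$. On strictly interior coordinates ($\widehat{\lambda}^t_j>0$) one has $\mu^t_j=0$ and the term is $o(1)$; the whole condition therefore rests on controlling the multipliers $\mu^t_j$ at the \emph{half-Gaussian} pixels, i.e. those $j\notin\mathcal{V}$ with $\lambda_{*,j}=0$ where the constraint is active. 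One needs $\mu^t_j=O(\sqrt{t})$ almost surely, which is exactly the $\sqrt{t}$-rate for the constrained estimator at the boundary. This is the crux and, I expect, the genuine obstruction: in probability the residual score is $O_p(\sqrt{t})$ by the central limit theorem, but the almost-sure fluctuations of $\nabla_{I_1}L$ are of order $\sqrt{t\log\log t}$ by the LIL, so the naive bound only gives $\mu^t_j/\sqrt{t}=O(\sqrt{\log\log t})$, which diverges. Closing this $\sqrt{\log\log t}$ gap requires a genuinely almost-sure analysis of the penalised MLE against an active positivity constraint -- precisely the results on the behaviour of strongly consistent estimators with constraints on the domain that are not yet available -- which is why the statement is left as a conjecture.
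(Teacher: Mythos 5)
There is no proof in the paper to compare yours against: the statement is precisely Conjecture~\ref{conj:asympt-distr-mle-pen-estimator}, which the authors explicitly leave open (``Formal investigation of Conjecture~\ref{conj:asympt-distr-mle-pen-estimator} \dots will be given in future'', ``a completely new open problem''). So what can be judged is whether the partial arguments you give are sound and whether the obstruction you isolate is the right one. Your overall conclusion --- that the easy requirements can be verified but the full statement cannot be established with current techniques --- agrees with the paper's own assessment of the status of this conjecture.

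The pieces you do argue are correct, and they go beyond the heuristics the paper offers (the Kolmogorov 0--1 law remark, the diagonal-design case, and the Bernstein--von Mises intuition imported from Bochkina and Green). Positivity \eqref{eq:thm:asympt-distr-main:strongly-consist-estim-positivity} is immediate. Re-running the proof of Theorem~\ref{thm:wbb-generic-consistency} with $Y^t_i/t \to \Lambda^*_i$ almost surely (strong law for the Poisson process) in place of the conditional-probability convergence of $\widetilde{\Lambda}^t_{b,i}$ does give $\Pi_{A^T}(\widehat{\lambda}^t_{pMLE}-\lambda_*)\to 0$ a.s., hence consistency on $\mathcal{U}\subset\mathrm{Span}(A^T)$. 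Your KKT argument on the $\mathcal{V}$-block is also right, and needs less than you invoke: for $j$ with $a_{ij}>0$ for some $i\in I_0(\Lambda^*)$, the coordinate (sub)gradient equals $t\sum_{i\in I_0(\Lambda^*)}a_{ij} + \sum_{i\in I_1(\Lambda^*)}a_{ij}\,(t\widehat{\Lambda}^t_i - Y^t_i)/\widehat{\Lambda}^t_i + \beta^t\partial_j\varphi$, and the middle term is already $o(t)$ by consistency alone (no law-of-the-iterated-logarithm rate is required), so the multiplier is eventually strictly positive, the constraint is active, $\Pi_{\mathcal{V}}\widehat{\lambda}^t_{pMLE}=0=\Pi_{\mathcal{V}}\lambda_*$, and \eqref{eq:thm:asympt-distr-main:strongly-consist-estim-i0} holds in the strong form $t\widehat{\Lambda}^t_{sc,i}=0$.

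The obstruction you place at the half-Gaussian pixels is the genuine one, but it is worse than ``a result not yet available'': it can be upgraded to an explicit configuration in which \eqref{eq:thm:asympt-distr-main:strongly-consist-estim-i1} fails almost surely, so the conjecture as stated appears to need reformulation rather than a cleverer proof. Take $p=3$, $d=2$, with design columns $(1,0)^T$, $(1/4,1/4)^T$, $(1/4,1/4)^T$ (so \eqref{eq:design-matrix-positivity-restr-1}--\eqref{eq:design-matrix-non-empt-ker} hold, with $\ker A = \mathrm{Span}\{(0,1,-1)\}$) and $\lambda_* = (0,2,2)$, so that $\Lambda^*=(1,1)$, $I_0(\Lambda^*)=\emptyset$, and pixel $1$ is half-Gaussian. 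Solving the penalized KKT system explicitly, the constraint $\widehat{\lambda}_1\geq 0$ becomes active whenever $Y^t_1 - Y^t_2 \lesssim -\beta^t$, in which case $\widehat{\Lambda}^t_1 = \widehat{\Lambda}^t_2 \approx (Y^t_1+Y^t_2)/(2t)$ and the first component of the vector in \eqref{eq:thm:asympt-distr-main:strongly-consist-estim-i1} equals $(Y^t_1-Y^t_2)\,(1+o(1))/(2\sqrt{t}\,\widehat{\Lambda}^t_1)$. By the law of the iterated logarithm for the difference of the two Poisson processes, $Y^t_2-Y^t_1$ reaches magnitude of order $\sqrt{t\log\log t}$ at arbitrarily large times, and $\beta^t=o(\sqrt{t})$ cannot prevent these activations; hence the limsup in \eqref{eq:thm:asympt-distr-main:strongly-consist-estim-i1} is $+\infty$ a.s. So the $\sqrt{\log\log t}$ gap you describe is not closable in general: either \eqref{eq:thm:asympt-distr-main:strongly-consist-estim-i1} must be weakened (to an in-probability bound, or by admitting a $\sqrt{\log\log t}$ factor whose effect would then have to be traced through Theorem~\ref{thm:asympt-distr-main}), or the conjecture must be restricted to configurations where no half-Gaussian constraint is asymptotically active. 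Your diagnosis is exactly right; your framing of it is, if anything, too generous to the conjecture.
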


	The requirement for existence of a strongly consistent estimator for weighted bootstrap is not new and already appears in \citet{ng2020random}. However, in that  case the sampling is performed via unconstrained optimization of quadratic functionals though with $\ell_1$-penalties for which existence of such estimators is trivial by taking the standard OLS estimator or LASSO estimator; see the discussion after Theorem~3.3 in \citet{ng2020random}. According to Kolmogorov's 0-1 Law the statements in \eqref{eq:thm:asympt-distr-main:strongly-consist-estim-i1},~\eqref{eq:thm:asympt-distr-main:strongly-consist-estim-i0} either hold with probability one (i.e., almost surely $Y^t$, $t\in (0, +\infty)$) or zero, and the case of zero probability would mean a very exotic and unexpected behavior of the constrained MLE estimate for such model because conditions \eqref{eq:thm:asympt-distr-main:strongly-consist-estim-positivity}-\eqref{eq:thm:asympt-distr-main:strongly-consist-estim-i0} are trivially satisfied, for example, if $A$ is diagonal. Finally, the asymptotic structure of Bayesian posterior from \citet{bochkina2014} gives a strong intuition that the conjecture above should hold: the asymptotic posterior projected on $\mathcal{V}$ has exponential distribution $\mathrm{Exp}(-c_A  t \Pi_{\mathcal{V}}\lambda)$ and $\sqrt{t}\Pi_{\mathcal{U}}\lambda$ is normal with mean equals $\sqrt{t}(A_{I_1(\Lambda^*)}^TD_{\Lambda^*}^{-1}A_{I_1(\Lambda^*)})^{-1}A_{I_1(\Lambda^*)}^TD_{\Lambda^*}^{-1}[Y^t/t]$ being also restricted to positivity cone (hence, half-Gaussian), therefore the corresponding MAP estimate asymptotically fits conditions \eqref{eq:thm:asympt-distr-main:strongly-consist-estim-positivity}-\eqref{eq:thm:asympt-distr-main:strongly-consist-estim-i0} being atom at zero for the exponential part and mean for the Gaussian one (up to higher order terms).
	Formal investigation of Conjecture~\ref{conj:asympt-distr-mle-pen-estimator} and of possible $\widehat{\lambda}_{sc}^t$'s are outside of the scope of this work and will be given in future. To our knowledge this is a completely new open problem and such result is necessary for further investigation of bootstrap procedures for the model of ET. 
	
	%Indeed, the minimized functional in \eqref{eq:thm:asympt-distr-main:minimization-problem} is quadratic and without positivity constraints $u^t(\xi)$ has purely normal distribution and hence $\sqrt{t}\cdot \Pi_{\mathcal{U}}(\widetilde{\lambda}_b^t - \widehat{\lambda}^t_{sc})$ also. In turn, positivity constraints induce a cutoff of the normal distribution on the positive cone and a renormalization, so in pixels with positive tracer uptake distribution is asymptotically Gaussian and in ones with zeros -- half-Gaussian. Such behavior is also similar to the one observed in \citet{bochkina2014}.

	\subsection{Misspecification in design and identifiability}
	\label{subsect:theory:misspecification}
	Assumption~\ref{assump:theory:consistency:well-spec} in 
	Subsection~\ref{subsect:theory:consistency} reflects our belief that model \eqref{eq:poisson-model-pet} is correct. At the same time, for any practitioner in ET it is known that such   model is by far approximate: the tracer inside the human body surely does not respect locally constant behavior in each pixel on which our discretized model is based, also, in practice, matrix $A$ is known only approximately, with non-negligible errors, since it contains patient's attenuation map which is reconstructed via a separate MRI or CT scan; see e.g.,~\citet{stute2013practical}. There also are many other practical issues  which are not included in \eqref{eq:poisson-model-pet} such as non-stationarity of the process due to kinetics for the tracer, scattered photons, electronic noise in detectors, errors from multiple events etc.; see e.g., \citet{levin1995mcmc}, \citet{rahmim2009fourdim}. 
	
	Assuming temporal stationarity of the process%\footnote{Temporal stationarity of $P^t$ is equivalent to neglecting the kinetic evolution of the tracer which is always assumed in classical ET scenario; see e.g., \citet{shepp1982mlem}. } 
	we consider the following scenario for~ET:
	\begin{align}\label{eq:misspeicifcation:assump-data-1}
	&Y^t \sim P^t, \, Y^t \in (\mathbb{N}_0)^d,\\
	\label{eq:misspeicifcation:assump-data-2}
	&\mathbb{E}_{P^t}[Y^t] = \mathrm{var}_{P^t}[Y] = t \Lambda^* 
	\text{ for some }
	\Lambda^* = (\Lambda^*_1, \dots, \Lambda^*_d)\in \R^d_+.
	\end{align}
	Formulas \eqref{eq:misspeicifcation:assump-data-1}, \eqref{eq:misspeicifcation:assump-data-2} reflect our belief that $Y^t$ has Poisson-type behavior at least for its two first moments which is not far from truth in practice \citet{sitek2015limitations}.
	Most importantly, we do not assume that $\Lambda^* \in R_+(A)$. 
	
	The main question now is the identifiability of $\lambda$ which translated via  \eqref{eq:prelim:poiss-log-likelihood-model}, \eqref{eq:misspeicifcation:assump-data-1},   \eqref{eq:misspeicifcation:assump-data-2} to the problem of uniqueness in the following minimization problem:
	\begin{align}\label{eq:misspecification:kl-projector-prob}
	\begin{split}
	\lambda_{A,*}(P^t) = \arg\min_{\lambda \succeq 0} \mathcal{KL}(P^t, P_{A, \lambda}^t) 
	=\arg\min_{\lambda \succeq 0} L(\lambda \mid \Lambda^*, A, 1),
	\end{split}
	\end{align}
	where $P_{A, \lambda}^t$ is defined in \eqref{eq:prelim:poiss-prob-model}. 
	
	\begin{theorem}\label{thm:misspecification:main-example}
		There exist $\Lambda^* = (\Lambda_1^*, \dots, \Lambda_d^{*})\in \R^{d}_+, \, \Lambda^* \neq 0$, $A \in \mathrm{Mat}(d, p)$ which has only nonnegative entries, it is stochastic column-wise and injective such that
		solutions of the optimization problem \eqref{eq:misspecification:kl-projector-prob} constitute a non-empty affine subset of positive dimension of the $(p-1)$-simplex 
		$
		\Delta_p(\Lambda^*) = \left\{\lambda\in \R^p_+ : \sum_{j=1}^{p}\lambda_j = \sum_{i=1}^{d} \Lambda_i^*\right\}.
		$
	\end{theorem}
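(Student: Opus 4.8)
The plan is to prove the statement by exhibiting one explicit counterexample and checking directly that the KL-projection \eqref{eq:misspecification:kl-projector-prob} has a one-parameter family of solutions. I would take $p=2$, $d=3$,
\[
A = \begin{pmatrix} 1/2 & 1/2 \\ 1/2 & 0 \\ 0 & 1/2 \end{pmatrix}, \qquad \Lambda^* = (1,0,0)^T \in \R^3_+,
\]
so that $I_1(\Lambda^*) = \{1\}$ and $I_0(\Lambda^*) = \{2,3\}$. First I would record the elementary verifications: $A$ has only nonnegative entries, each of its two columns sums to $1$ (so $A$ is column-stochastic), and its two columns are linearly independent, hence $A$ is injective. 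One also checks $\Lambda^* \notin R_+(A)$ (forcing $a_2^T\lambda = a_3^T\lambda = 0$ gives $\lambda = 0$, incompatible with $a_1^T\lambda = 1$), so the model is genuinely misspecified.

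I would then compute the objective. By \eqref{eq:prelim:poiss-log-likelihood-model} at $t=1$ with $Y^t$ replaced by $\Lambda^*$, the two $I_0$-terms have $\Lambda_i^*=0$ and contribute only linearly, so
\[
L(\lambda \mid \Lambda^*, A, 1) = \sum_{i=1}^3 \big[-\Lambda_i^* \log(a_i^T\lambda) + a_i^T\lambda\big] = -\log\!\Big(\tfrac{\lambda_1+\lambda_2}{2}\Big) + (\lambda_1+\lambda_2).
\]
Thus the objective collapses to $g(s) = -\log(s/2) + s$ with $s := \lambda_1+\lambda_2$. Since $g$ is strictly convex with $g(s)\to+\infty$ as $s\to 0^+$ and as $s\to+\infty$, and has the unique critical point $s_*=1$, the set of minimizers over $\lambda\succeq 0$ is the whole segment $\{\lambda\succeq 0 : \lambda_1+\lambda_2 = 1\}$ — a positive-dimensional affine subset of $\Delta_2(\Lambda^*) = \{\lambda\succeq 0 : \lambda_1+\lambda_2 = \Lambda_1^* = 1\}$, as required. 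If a proper, rather than full, subset is preferred, the same mechanism can be padded to $p=3$ with one extra identifiable pixel.

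I would accompany the example with the structural reason that makes this failure unavoidable and that guides the construction. Column-stochasticity is the identity $\sum_{i=1}^d a_i = \mathbf{1}_p$, so for any direction $v$ the linear $I_0$-part of $L$ changes by $\sum_{i\in I_0} a_i^T v = \mathbf{1}_p^T v - \sum_{i\in I_1} a_i^T v$. Hence along any $v \in \ker A_{I_1(\Lambda^*)}$ with $\mathbf{1}_p^T v = 0$ the logarithmic $I_1$-part and the linear part are both invariant, and $L$ is exactly constant; in the example $v=(1,-1)$ does the job. The main obstacle — and the only genuine content — is reconciling this degeneracy with the three matrix constraints simultaneously: I need $A_{I_1(\Lambda^*)}$ (the rows with positive target intensity) to be rank-deficient so that $\ker A_{I_1}\neq\{0\}$, while the full matrix $A$ stays injective, which forces the $I_0$-rows to restore the missing rank without destroying the flat direction, and I must keep the corresponding segment inside $\R^p_+$. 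A naive axis-aligned attempt fails precisely here, since nonnegativity then pushes the flat direction out of the cone; the matrix above is engineered so that the zero-sum direction $v=(1,-1)$ remains feasible.

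Finally, I would note that this is exactly the counterexample promised after Theorem~\ref{thm:asympt-distr:well-spec:identifiability-cond}: the non-expansiveness condition (Assumption~\ref{assump:theory:distribution:non-expansive}) fails for this $A$, because at every minimizer $\Lambda_{\mathcal{M}}^* = A\lambda$ one has $I_0(\Lambda_{\mathcal{M}}^*) \subsetneq I_0(\Lambda^*) = \{2,3\}$ (no feasible minimizer can zero out both $a_2^T\lambda$ and $a_3^T\lambda$), so \eqref{eq:assump:theory:asymp-distr:non-exp-cond:ind-cond} is violated. This is precisely the identifiability failure that Theorem~\ref{thm:asympt-distr:well-spec:identifiability-cond} rules out under that assumption, confirming that the non-expansiveness condition there is essential.
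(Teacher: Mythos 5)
Your proposal is correct and takes essentially the same route as the paper: the paper's own proof also proceeds by exhibiting an explicit column-stochastic, injective design (a column-normalized Radon matrix on a $2\times 2$ image, $p=4$, $d=6$) together with $\Lambda^*$ supported on a single LOR covering two pixels, so that the objective in \eqref{eq:misspecification:kl-projector-prob} collapses to $-\log\bigl(c(\lambda_1+\lambda_2)\bigr)+\sum_j\lambda_j$ and the minimizers form the positive-dimensional set $\{\lambda_3=\lambda_4=0,\ \lambda_1+\lambda_2=1\}$. Your $p=2$, $d=3$ matrix is a minimal instance of exactly this mechanism, and your verifications (column-stochasticity, injectivity, the reduction of $L$ to $g(s)=-\log(s/2)+s$, and the observation that the non-expansiveness condition of Assumption~\ref{assump:theory:distribution:non-expansive} fails at every minimizer) are all accurate.
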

	
	\begin{proof}
		We construct $\Lambda^*$ and $A$ for $p=4, \, d=6$. 
		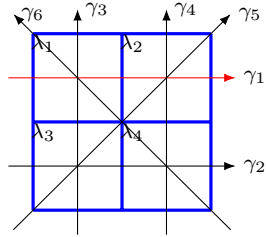
\begin{figure}[H]
			\centering
				\begin{tikzpicture}[scale=0.13]
				
				\begin{scope}
				%\draw[fill, blue!20] (0,9) rectangle (9,18);
				%\draw[fill, blue!20] (9,9) rectangle (18,18);
				\draw[very thick, step=9.0, blue] (0,0) grid (18,18);
				\node at (1, 8.0) {$\lambda_3$};
				\node at (1, 17.0) {$\lambda_1$};
				\node at (10, 8.0) {$\lambda_4$};
				\node at (10, 17.0) {$\lambda_2$};
				
				\draw [-{Latex[length=1.5mm]}] (-2.5, 4.5) -- (20.5, 4.5) node[right] {$\gamma_2$};
				\draw [-{Latex[length=1.5mm]}, red] (-2.5, 13.5) -- (20.5, 13.5) node[right, black] {$\gamma_1$};
				
				\draw [-{Latex[length=1.5mm]}] (4.5, -2.5) -- (4.5, 20.5) node[right] {$\gamma_3$};
				\draw [-{Latex[length=1.5mm]}] (13.5, -2.5) -- (13.5, 20.5) node[right] {$\gamma_4$};
				
				\draw [-{Latex[length=1.5mm]}] (-2, -2) -- (20, 20) node[right] {$\gamma_5$};
				\draw [-{Latex[length=1.5mm]}] (20, -2) -- (-2, 20) node[right] {$\gamma_6$};
				
				\end{scope}	
				\end{tikzpicture}
			\caption{$\mathcal{I}$}
			\label{fig:misspecification:image}
		\end{figure}
		Let $\mathcal{I}$ be the image consisting of four square pixels each with side length equal to $1$ as shown in Figure~\ref{fig:misspecification:image}, i.e.,  $\lambda = (\lambda_1, \dots, \lambda_4)\in \R^4_+$. Let $\Gamma = \{\gamma_1, \dots, \gamma_6\}$ be the family of rays as it is shown in the Figure~\ref{fig:misspecification:image} and matrix $A'$ corresponds to the the classical Radon transform on $\mathcal{I}$, i.e., $a'_{ij}$ is the length of intersection of ray $\gamma_i\in \Gamma$ with pixel $j$
		\begin{align*}
		A' &= \begin{pmatrix}
		1 & 1 & 0 & 0\\
		0 & 0 & 1 & 1\\
		1 & 0 & 1 & 0\\
		0 & 1 & 0 & 1\\
		0 & \sqrt{2} & \sqrt{2} & 0\\
		\sqrt{2} & 0 & 0 & \sqrt{2}
		\end{pmatrix}, \, \det(A'^TA') = 128 \neq 0.
		\end{align*}
		Let $A$ be a normalization of $A'$ with respect to columns such that $A$ is stochastic column-wise , i.e., $a_{ij} = a'_{ij} / (\sum_i a'_{ij})$. Such normalization obviously does not break the injectivity of~$A'$. Let $\Lambda^* = (1, 0, 0, 0, 0, 0)$. Then, the formula in \eqref{eq:misspecification:kl-projector-prob} has the following form
		\begin{equation}\label{eq:thm:misspecification:proof:opt-problem}
		\lambda_{A, *} = \arg\min_{\lambda \succeq 0} -\log\left(\frac{\lambda_1 + \lambda_2}{2 + \sqrt{2}}\right) + \lambda_1 + \lambda_2 + \lambda_3 + \lambda_4.
		\end{equation}
		Note that in \eqref{eq:thm:misspecification:proof:opt-problem} we have used the fact that $\sum_{i}a_{ij}=1$ for all $j\in \{1, \dots, 4\}$. It is obvious that the set of minimizers in \eqref{eq:thm:misspecification:proof:opt-problem} is an affine set of the following form:
		\begin{equation}\label{eq:thm:misspecification:proof:minimizers-family}
		\lambda_{A,*3} = \lambda_{A,*4} = 0, \, \lambda_{A,*1} + \lambda_{A,*2} = 1
		\end{equation}
		which gives the desired non-uniqueness. Theorem is proved.	
		\end{proof}
		
		%Constructed pair $(\Lambda^*, A)$ in the proof of Theorem~\ref{thm:misspecification:main-example} is meaningful even from the physical point of view. Indeed, intensity vector $\Lambda^* = (1, 0, 0, 0, 0, 0)$ can be observed, for example, if in pixel $j=1$ there is a region of subpixel size 
		%containing the isotope and is being intersected only by $\gamma_1$; see Figure~\ref{fig:misspecification:image}(b). Note also that for the constructed family of minimizers the non-expansiveness condition is not satisfied: indeed, any choice of $\lambda_{A,*1}, \lambda_{A,*2}, \lambda_{A,*3}, \lambda_{A,*4}$ in \eqref{eq:thm:misspecification:proof:minimizers-family} results in at least two rays with positive projected intensities.

	At the same time,  Theorem~\ref{thm:asympt-distr:well-spec:identifiability-cond} provides identifiability and stability (via strong local convexity) for  $\lambda_{A,*}$ under the non-expansiveness condition and injectivity of $A$. The latter assumption can be also relaxed by simply restating the claim of Theorem~\ref{thm:asympt-distr:well-spec:identifiability-cond} with analogs \eqref{eq:asympt-distr:well-spec:identifiability-cond:approx}-\eqref{eq:thm:asympt-distr:identif-cond:strong-convexity} to hold but only in the subspace~$\mathrm{Span}(A^T)$.

	\section{Discussion}
	\label{sect:conclusion}

	To build the nonparametric posterior learning for the model of ET we have  used conjugacy between Poisson and Gamma processes which is analogous to the one in  \citet{lyddon2018npl}, \citet{fong2019scalable} between Dirichlet and Multinomial processes. 
	This explains why our main calibration parameter~$\rho$ ($\theta^t = t\rho $; see Remark~\ref{rem:parameter-theta-interpretation}) has physical interpretation as amount of pseudo-data (pseudo-photons for ET) generated from the posterior process. Possible future improvement of the method is to relax the independence of increments of the Gamma process in the prior and consider processes with correlations, for example, Gamma-weighted Polyà tree  priors. Such correlations can be used to smooth sinogram $Y^t$ (i.e., to project it approximately on the stable part of $\mathrm{Span}(A^T)$) using the MRI-based model and, in addition, remove completely regularizer $\varphi$ from the model. Note that in Algorithm~\ref{alg:npl-posterior-sampling:mri:binned} regularization of high frequencies is achieved via control of $\varphi$ and only low frequencies are regularized by $\mathcal{M}$. Our preliminary results show that new approach improves the resolution while retaining the interpretability of the calibration parameters. This is definitely a next goal for future work.
	
	From the theoretical side a very needed step is to demonstrate Conjecture~\ref{conj:asympt-distr-mle-pen-estimator}, which is also necessary for theoretical analysis of more complicated prior models discussed above. Work in this direction may also target studies of the first order asymptotics of the posterior (i.e., Edgeworth's expansions) which will be given elsewhere.
	
	Our numerical tests on synthetic data in the Supplementary Material show good coverage of the true signal even for large values of $\rho$ (empirical rule of thumb says that $\rho=1$ is satisfactory), so new tests on real patient data are needed in future.

	\section*{Supplementary material}
	Supplementary material includes the proof of Lemma~\ref{lem:consistency:lem-kernel-continuity}, numerical experiments for the Gibbs sampler in Section~\ref{sect:motivating-example-mcmc} and for Algorithm~\ref{alg:npl-posterior-sampling:mri:binned} (provided with links to the source code), proofs of all theoretical results in Section~\ref{sect:new-thero-results},  a remark on the intuition behind the non-expansiveness condition (Assumption~\ref{assump:theory:distribution:non-expansive}) and an additional remark on the choice of centering term in Theorem~\ref{thm:asympt-distr-main}.
	
	\section*{Acknowledgments}
	We are grateful to Zacharie Naulet from Université d'Orsay for many valuable comments on statistical side of the paper. We are also grateful to our colleagues from Service Hospitalier Frédéric Joliot (SHFJ) -- Marina Filipovi\'{c}, Claude Comtat and Simon Stute for many practical insights on the topic of PET-MRI reconstructions.
	This work is partly supported by the ‘MMIPROB’ project funded by ITMO Cancer (France).

	\numberwithin{equation}{section} 
	\numberwithin{theorem}{section} 
%	\numberwithin{algorithm}{section} 
	\numberwithin{figure}{section} 
	\numberwithin{definition}{section} 
	\numberwithin{remark}{section} 
%	\jname{Biometrika}
%	%% The year, volume, and number are determined on publication
%	\jyear{2021}
%	\jvol{0}
%	\jnum{0}
%	\title{Nonparametric posterior learning 
%		for emission tomography with multimodal data}
%	\author{Fedor Goncharov, \'{E}ric Barat \and Thomas Dautremer}
%	\affil{Université Paris-Saclay, CEA, List, F-91120, Palaiseau, France \\ \email{fedor.goncharov@cea.fr} \email{eric.barat@cea.fr} \email{thomas.dautremer@cea.fr}}
%	
%	%\author{\'{E}ric Barat}
%	%\affil{Université Paris-Saclay, CEA, List, F-91120, Palaiseau, France \\ \email{eric.barat@cea.fr}}
%	
%	%\author{Thomas Dautremer}
%	%\affil{Université Paris-Saclay, CEA, List, F-91120, Palaiseau, France \\ \email{thomas.dautremer@cea.fr}}
%	
%	\maketitle
	
	\begin{appendix}
		\section{Construction of the common probability space.}
		\label{app:prob-space}
		Let $(\Omega', \mathcal{F}', P')$ be the probability space on which the stationary spatio-temporel Poisson point process $Z^t$ is defined ($Z^t$ has values in $Z \times (0, +\infty)$;  recall that $Z$ is the space of LORs). Sinogram data $Y^t$ is obtained from binning $Z^t$ to detector elements (see Section~\ref{subsect:new-algo:binning}), therefore process $Y^t$ is a well-defined random variable on $(\Omega', \mathcal{F}', P')$.
		Measure-theoretic construction of $Z^t$ and $(\Omega', \mathcal{F}', P')$ can be found, for example, in \citet{daley2007pointproc2}, Section 9.2, Example 9.2(b). 
		
		Algorithms~\ref{alg:wbb-pet-bootstrap:mri:posterior-mixing-param},~\ref{alg:npl-posterior-sampling:mri:binned} rely on perturbed intensities $\widetilde{\Lambda}^t_{\mathcal{M}}$ and $\widetilde{\Lambda}_{b}^t$ for which we show that they can be expressed as functions of random weighting of the list-mode data $$G^t = \{\delta_{(k,i)} : (k,i) \text{ -- $k^{\text{th}}$ photon was detected at detector $i$}\},$$ where $\delta_{(k,i)} \in \{0,1\}$. Indeed, from step~1 in Algorithm~\ref{alg:wbb-pet-bootstrap:mri:posterior-mixing-param} we can see that  $\widetilde{\Lambda}^t_{\mathcal{M}}$ is a function of $\widetilde{\Lambda}^t$ for which the following representation holds
		\begin{align}\label{eq:appendix:common-prob:int-lit-repr}
		&\widetilde{\Lambda}^t_i = t^{-1}\sum_{k=1}^{N^t}\delta_{(k, i)}\widetilde{w}_{k}, \, i\in \{1, \dots, d\},\\
		&\{\widetilde{w}_k\}_{k=1}^{N^t} \iid \Gamma(1,1),
		\end{align}
		where $N^t$ is the total number of photons.

		For $\widetilde{\Lambda}_b^t$ in step~2 of Algorithm~\ref{alg:npl-posterior-sampling:mri:binned} we have the following representation:
		\begin{align}
		\label{eq:appendix:common-prob:int-lbit-repr}
		&\widetilde{\Lambda}^t_{b,i} = (\theta^t + t)^{-1}\left(\sum_{k=1}^{N^t}\delta_{(k, i)}w_{k} + w_p\theta^t \Lambda_{\mathcal{M},i}^t\right), \, i\in \{1, \dots, d\},\\
		\label{eq:appendix:common-prob:weights-lbit-repr}
		&\{w_k\}_{k=1}^{N^t}, w_p \iid \Gamma(1,1).
		\end{align}
		\par From formulas \eqref{eq:appendix:common-prob:int-lit-repr}-\eqref{eq:appendix:common-prob:weights-lbit-repr} one can see that perturbations $\widetilde{\Lambda}^t_{\mathcal{M}}$ and $\widetilde{\Lambda}_{b}^t$ depend on data $Y^t$ and on infinite family of random mutually independent weights $(\{(w_k, \tilde{w}_k)\}_{k=1}^{\infty}, w_p)$ which are also independent of $Y^t$. Therefore, the common probability space can be defined as follows:
		\begin{equation}
		(\Omega', \mathcal{F}', P') = (\Omega' \times \Omega_w \times \Omega_{\widetilde{w}} \times \Omega_{w_p}, \mathcal{F}' \times \mathcal{F}_{w} \times \mathcal{F}_{\widetilde{w}} \times \mathcal{F}_{w_p}, P' \times P_{w} \times P_{\widetilde{w}} \times P_{w_p}),
		\end{equation}
		where $(\Omega_{w}, \mathcal{F}_w, P_w)$, $(\Omega_{\widetilde{w}}, \mathcal{F}_{\widetilde{w}}, P_{\widetilde{w}})$, $(\Omega_{w_p}, \mathcal{F}_{w_p}, P_{w_p})$ are the probability spaces for infinite sequences of i.i.d r.v.s $\{w_k\}_{k=1}^{\infty}$, $\{\widetilde{w}_k\}_{k=1}^{\infty}$, $w_{k} \sim \Gamma(1,1)$, $\widetilde{w}_k\sim \Gamma(1,1)$ and for $w_p\sim \Gamma(1,1)$, respectively. This construction originates to \citet{newton1994wbb}; similar ones have been recently used in \citet{ng2020random}.

		\section{Limit theorems for stationary Poisson processes.}
		\label{app:limit-thms}
		Let 
		\begin{equation}\label{eq:appendix:poisson-proc-def}
		Y^t \sim \mathrm{Po}(\Lambda t), \, \Lambda > 0, \, t\in [0, +\infty).
		\end{equation}
		%Poisson process $Y^t$ is a renewal process and can be represented as follows:
		%\begin{align}
		%&\{\tau_i\}_{i=1}^{\infty}, \, \tau_i \stackrel{iid}{\sim} \Gamma(\Lambda, 1),
		%\, S_{n} = \sum_{i=1}^{n}\tau_i, \\
		%&Y^t = \min\{n : S_n > t\}, \, t\in [0,+\infty).
		%\end{align}
		
		The following result is a composition of theorems~9.3,~4.1 and 7.5 (pp. 306, 350, 417, respectively) from \citet{gut2013probability}.
		\begin{theorem}\label{thm:appendix:limits-poisson}
			Let $\{Y^t\}, \, t\in (0, +\infty)$ be the Poisson process defined in~\eqref{eq:appendix:poisson-proc-def}. Then,
			\begin{itemize}
				\item[i)] 
				\begin{equation}\label{eq:appendix:stlln}
				\dfrac{Y^t}{t} \xrightarrow{a.s.} \Lambda \text{ as }t\rightarrow +\infty.
				\end{equation}
				\item[ii)]
				\begin{equation}\label{eq:appendix:clt}
				\dfrac{Y^t - \Lambda t}{\sqrt{\Lambda t}} \xrightarrow{d} \mathcal{N}(0,1) \text{ as } t\rightarrow +\infty.
				\end{equation}
				\item[iii)]
				\begin{equation}\label{eq:appendix:lil}
				\liminf_{t\rightarrow +\infty} (\limsup_{t\rightarrow +\infty})\dfrac{Y^t-\Lambda t}{\sqrt{\Lambda t\log\log t}} = -\sqrt{2} \, (\sqrt{2}) \text{ a.s.},
				\end{equation}
			\end{itemize}
			where $\xrightarrow{a.s.}$, $\xrightarrow{d}$ denote the convergence almost surely and in distribution, respectively, a.s. denotes that  statement holds  for almost any trajectory $Y^t$, $t\in (0, +\infty)$.
		\end{theorem}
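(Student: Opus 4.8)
The plan is to reduce all three statements to the corresponding classical limit theorems for sums of i.i.d.\ random variables and then bridge from integer to continuous time. By the temporal stationarity and independence of increments of the Poisson process, for integer $n$ one has $Y^n = \sum_{k=1}^{n} X_k$, where the increments $X_k = Y^k - Y^{k-1} \iid \mathrm{Po}(\Lambda)$ satisfy $\mathbb{E}[X_k]=\Lambda$ and $\mathrm{var}(X_k)=\Lambda<\infty$. Applying the Kolmogorov strong law, the Lindeberg--L\'evy central limit theorem, and the Hartman--Wintner law of the iterated logarithm to $\{X_k\}$ (these are exactly the cited results from \citet{gut2013probability}) immediately yields (i)--(iii) along the integer subsequence $t=n$. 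The only remaining work is to promote each statement from integer $t$ to arbitrary real $t$.

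For (i), I would extend to all $t$ by a monotonicity squeeze: since $Y^t$ is non-decreasing, for $n=\lfloor t\rfloor$ one has $\frac{Y^n}{n+1}\le \frac{Y^t}{t}\le \frac{Y^{n+1}}{n}$, and both outer terms tend to $\Lambda$ almost surely by the integer SLLN, forcing $Y^t/t\to\Lambda$. For (ii), the cleanest route bypasses the bridging step entirely and works directly with the explicit characteristic function $\mathbb{E}[e^{isY^t}]=\exp(\Lambda t(e^{is}-1))$. Expanding $e^{is/\sqrt{\Lambda t}}-1 = is/\sqrt{\Lambda t}-s^2/(2\Lambda t)+O(t^{-3/2})$, the characteristic function of $(Y^t-\Lambda t)/\sqrt{\Lambda t}$ equals $\exp(-is\sqrt{\Lambda t})\exp(\Lambda t(e^{is/\sqrt{\Lambda t}}-1))$; the centering cancels the leading linear term and leaves $\exp(-s^2/2+o(1))\to e^{-s^2/2}$, so L\'evy's continuity theorem delivers the CLT.

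The delicate part, and the one I expect to be the main obstacle, is extending the LIL in (iii) to continuous time, since the normalization $\sqrt{\Lambda t\log\log t}$ is too fine for a crude squeeze to suffice. Here I would write, for $t\in[n,n+1)$, the decomposition $Y^t-\Lambda t = (Y^n-\Lambda n)+(Y^t-Y^n)-\Lambda(t-n)$ and control the oscillation term $0\le Y^t-Y^n\le X_{n+1}$ by showing $\max_{k\le n}X_k = o(\sqrt{n\log\log n})$ almost surely. This follows from the Poisson tail bound together with Borel--Cantelli, since $P(X_k>c\log k/\log\log k)$ is summable for a suitable constant $c$. Because $\sqrt{\Lambda t\log\log t}\sim\sqrt{\Lambda n\log\log n}$ uniformly for $t\in[n,n+1)$, the within-interval fluctuation is negligible against the LIL normalization, so the $\limsup$ and $\liminf$ over real $t$ coincide with those already established over integers, yielding the constants $\sqrt 2$ and $-\sqrt 2$.
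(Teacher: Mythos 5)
Your proposal is correct, but it is worth noting that the paper does not actually prove this statement: it disposes of it in one line, declaring it a ``composition'' of three theorems from \citet{gut2013probability} (an SLLN, a CLT, and an LIL, the textbook versions of which already cover the continuous-time/renewal setting). Your route supplies exactly the content that citation hides: you reduce to the classical i.i.d.\ theorems along integer times via the increments $X_k = Y^k - Y^{k-1}\sim\mathrm{Po}(\Lambda)$, and then do the integer-to-continuous-time bridging yourself. The three bridging devices you chose are all sound and well matched to the precision each statement needs: the monotonicity squeeze $\frac{Y^n}{n+1}\le\frac{Y^t}{t}\le\frac{Y^{n+1}}{n}$ suffices for (i); for (ii) you sidestep bridging altogether with the exact characteristic function $\exp(\Lambda t(e^{is}-1))$, whose expansion after centering and scaling gives $\exp(-s^2/2+O(t^{-1/2}))$ and hence the CLT by L\'evy continuity; and for (iii), where a crude squeeze would indeed be too lossy, your decomposition $Y^t-\Lambda t=(Y^n-\Lambda n)+(Y^t-Y^n)-\Lambda(t-n)$ with the oscillation controlled by $\max_{k\le n}X_k=o(\sqrt{n\log\log n})$ a.s.\ is right: the Poisson Chernoff bound gives $P\bigl(X_k>c\log k/\log\log k\bigr)\lesssim k^{-c(1-o(1))}$, summable for $c>1$, so Borel--Cantelli makes the within-interval fluctuation negligible against the LIL normalization and the real-$t$ $\limsup$/$\liminf$ coincide with the integer ones. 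In short, the paper's approach buys brevity at the cost of opacity (the reader must trust that Gut's cited theorems apply to the continuous-time process); yours buys a self-contained, elementary argument whose only imports are the Kolmogorov SLLN, the Lindeberg--L\'evy CLT, and the Hartman--Wintner LIL for i.i.d.\ sequences.
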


		\section{Binned NPL for emission tomographies with MRI data}
		\label{app:new-algo:wbb-algorithm-with-mri}
		
		First, we construct $P_{\mathcal{M}}$, then we proceed with construction of $P_{\mathcal{M}}(\widetilde{\Lambda}^t_\mathcal{M} \mid Y^t, t)$.
		
		\begin{enumerate}
			\item Recall that $\mathcal{M} = \{M_1, \dots, M_r\}$ are the segmented MRI images (see also Section~\ref{sect:prelim}), $p_k$ denotes the number of  disjoint segments in image $M_k\in \mathcal{M}$. Each segment is a subset of $\{1, \dots, p\}$, collection of segments in image $M_k$ is denoted by $S(M_k)\subset 2^{p}$.
			\item For each image $k\in \{1, \dots, r\}$ and segment $s\in S(M_k)$, we generate $\lambda_s^k\sim \Gamma(1, \infty)$ (uniform (improper) distribution on $\R_+$).
			\item  Compute random projections 
			\begin{equation}\label{eq:wbb-mri:npl-presentation:np-prior-data}
			\Lambda_{\mathcal{M}, i} = \sum_{k=1}^r\sum_{s=1}^{p_k}a_{is}^k\lambda_s^k,
			\text{ for each } i \in \{1, \dots, d\}.
			\end{equation} 
			where 
			\begin{equation}\label{eq:wbb-mri:npl-presentation:reduced-design-elem}
			a_{is}^{k} = \sum_{j=1}^{p}a_{ij}\mathds{1}\{\text{pixel }j \text{ belongs to segment }s\in S(M_k)\}, \, k\in \{1, \dots, r\}.
			\end{equation}
		\end{enumerate}
		
		Note that $\Lambda_{\mathcal{M}, i}$ in \eqref{eq:wbb-mri:npl-presentation:np-prior-data} is defined through the sum of projections over all images in~$\mathcal{M}$. This can be seen as concatenating $r$ models with segmentations :
		\begin{align}\label{eq:wbb-mri:npl-presentation:concat-design-def}
		&A_{\mathcal{M}} = (A_1, \dots, A_r)\in \mathrm{Mat}\left( d, p_{\mathcal{M}}\right), \, A_k = (a_{ij}^k)\in \mathrm{Mat}(d, p_k), \, p_{\mathcal{M}} = \sum_{k=1}^r p_k, \\
		\label{eq:wbb-mri:npl-presentation:concat-parameter-def}
		&\lambda_{\mathcal{M}} = (\lambda^1_1, \dots, \lambda^1_{p_1}, \dots, \lambda^r_{1}, \dots, \lambda^r_{p_r}),
		\end{align}
		Using notations from \eqref{eq:wbb-mri:npl-presentation:concat-design-def}, \eqref{eq:wbb-mri:npl-presentation:concat-parameter-def}, formula \eqref{eq:wbb-mri:npl-presentation:np-prior-data} can be rewritten as follows:
		\begin{align}\label{eq:wbb-mri:npl-presentation:concat-project-redef}
		\Lambda_{\mathcal{M}} = A_{\mathcal{M}}\lambda_{\mathcal{M}}, \, 
		\Lambda_{\mathcal{M}} = (\Lambda_{\mathcal{M}, 1}, \dots, \Lambda_{\mathcal{M}, d}).
		\end{align}
		For design matrix $A_\mathcal{M}$ we assume that it is injective and well-conditioned, that is 
		\begin{align}\label{eq:wbb-mri:npl-presentation:concat-inj-cond}
		\ker A_{\mathcal{M}} = \{0\}, \, \mathrm{cond}(A_{\mathcal{M}}) < c_{\mathcal{M}},
		\end{align}
		where $c_\mathcal{M}$ is some moderate constant.
		The latter assumption reflects the idea that images in $\mathcal{M}$ consist of low number of large segments. In practice, condition \eqref{eq:wbb-mri:npl-presentation:concat-inj-cond} can be checked via the singular values of $A_{\mathcal{M}}^TA_{\mathcal{M}}$ which, in turn, can be computed due to apriori moderate size of $A_{\mathcal{M}}$.
		In~principle, due to moderate size of $A_{\mathcal{M}}$ and good conditioning it is possible to use  MCMC-approach to sample from $P_{\mathcal{M}}(\widetilde{\Lambda}^t_{\mathcal{M}} \mid Y^t, t)$, however, in order to keep the overall implementation as simple as possible we turn to WLB from \cite{newton1994wbb} for approximate posterior sampling.

		%Posterior $P_{\mathcal{M}}(\widetilde{\Lambda}^t_{\mathcal{M}} \mid Y^t, t)$ is defined through Bayes' formula for prior $P_{\mathcal{M}}$ and model $P(Y^t \mid A_{\mathcal{M}}, \lambda_\mathcal{M}, t)$ in \eqref{eq:prelim:poiss-prob-model} with design $A_{\mathcal{M}}$ and parameter $\lambda_{\mathcal{M}}$ from \eqref{eq:wbb-mri:npl-presentation:concat-design-def}, \eqref{eq:wbb-mri:npl-presentation:concat-parameter-def}, respectively.

		\section{Remark on recent bootstrap algorithms for ET}
		
		A very recent and similar to ours sampling algorithm was proposed in \citet{filipovic2021reconstruction} provided with a very extensive experiment both on synthetic and real PET-MRI data.
		The algorithm there is also of boostrap-type, based on optimization of a randomized functional (the KL-distance) and in fact, it coincides up to minor details with Algorithm~\ref{alg:npl-posterior-sampling:mri:binned} for $\theta^t \equiv 0$ (i.e., without MRI). Instead, data $\mathcal{M}$ are used there to construct very special penalty $\varphi(\lambda) = \varphi_{\mathcal{M}}(\lambda)$ of Bowsher type (see Subsection~\ref{subsect:multimodal}). 
		This penalty satisfes the assumptions in \eqref{eq:prelim:penalty-cond-convex}, \eqref{eq:prelim:penalty-cond-strict-conv}, so  our theorems~\ref{thm:wbb-generic-consistency},~\ref{thm:asympt-distr-main}  serve as a theoretical foundation also for the algorithms presented there.
		A nice practical feature of Algorithm~\ref{alg:npl-posterior-sampling:mri:binned} is that $\theta^t$ has clear physical interpretation of the effect of MRI data on samples (see Remark~\ref{rem:parameter-theta-interpretation}), whereas large number of parameters in Bowsher-type penalties have no such easy interpretations making the problem of their calibration cumbersome for practice.
		
		The aforementioned minor differences between algorithms consist in the way data $Y^t$ (in \citet{filipovic2021reconstruction}) or intensities $\Lambda_i$ (in our work) are stochastically perturbed. From the first look this seems to be only a technical question, however, we think that it is not. From the above derivation of Algorithm~\ref{alg:npl-posterior-sampling:mri:binned} one can see that initially uncertainty propagates via the KL-projection in \eqref{eq:new-algo:npl-poisson:opt-kullback-projection} and not concerning at all the problem of limited data. Moreover, we retrieve version of WLB of  \citet{newton1994wbb} adapted for ET as a particular case of Algorithm~\ref{alg:npl-posterior-sampling:mri:binned} when choosing the scale parameter $\theta^t = 0$ in the nonparametric prior in \eqref{eq:new-algo:npl-poisson:mgp-intro} (each photon corresponds to multiplicative perturbation of the data term by $w\sim \Gamma(1,1)$).
		This is fully coherent with the derivation of NPL in \citet{lyddon2018npl} and nonparametric posterior bootstrap with MDP-prior in \citet{fong2019scalable}, where the classical WLB algorithm from \citet{newton1994wbb} is retrieved back as a particular case when choosing the concentration parameter $\alpha = 0$ ($c=0$ in \citet{fong2019scalable}) in the nonparametric Dirichlet process prior. On the other hand, the derivation in \citet{filipovic2021reconstruction} strongly relies on model with finite data and it is claimed that the resulting algorithm is also a version of WLB from \citet{newton1994wbb}, however, in this case for us is not clear which randomized functional  stands behind this procedure.

		\section{Practical interpreptation of slow mixing in MCMC}
		\label{app:mixing-example}
		In practice produced samples by the Markov chain are used to compute credible intervals for weighted means in certain subregions of reconstructed images. Let $h\in \R^p$ be a weighting mask which corresponds to subregion $\Omega\subset \{1, \dots, p\}$. For example, if  $h_j = \frac{1}{\# \Omega}$ for pixel $j\in \Omega$ and $h_j = 0$ otherwise, then $h^T\lambda$ gives the average tracer concentration in subregion $\Omega$. 
		Let $N$ be the number of generated samples which we denote by $\{\lambda_k^t\}_{k=1}^N$. Then, the posterior mean of  $h^T\lambda$ can be approximated by the following expression: 
		\begin{align}
		\widehat{f}_{h,N}^t = \dfrac{1}{N}\sum_{k=1}^{N}h^T\lambda_k^t,
		\end{align}
		
		The variance of estimator $\widehat{f}^t_{h,N}$ can be approximated as follows:
		\begin{align}
		\begin{split}
		\mathrm{var}(\widehat{f}^t_{h,N} \mid Y^t, t) &= \dfrac{1}{N^2}
		\sum\limits_{k=1}^{N}\sum\limits_{s=1}^{N}\mathrm{cov}(h(\lambda_k^t), h(\lambda_s^t) \mid Y^t, t)\\
		& \asymp \dfrac{\sigma^2}{N}(1 + 2\sum\limits_{k=1}^{\infty}\rho^t_k(h)),
		\end{split}
		\end{align}
		where
		\begin{align}
		\rho_k^t(h) &= \mathrm{corr}(h^T\lambda_1^t, h^T\lambda_{k+1}^t \mid Y^t, t), 
		\, \sigma^2 = \mathrm{var}(h^T\lambda).
		\end{align}
		
		In \citet{liu1994covariance} it was shown, in particular, that $\rho^t_k(h)\asymp (\gamma^t(h))^k$, so from this and the above formula we get the following expression for the variance of $\widehat{f}^t_{h,N}$ (modulo a universal multiplicative factor): 
		\begin{align}\label{eq:example:empirical-variance-vs-posterior}
		\mathrm{var}(\widehat{f}_{h,N} \mid Y^t, t) \asymp \dfrac{\sigma^2}{N}
		\left(\dfrac{1 + \gamma^t(h)}{1-\gamma^t(h)}\right) 
		\approx \dfrac{\sigma^2}{N} \left(
		\dfrac{1 + \gamma(h)}{1-\gamma(h)}
		\right), 
		\end{align}
		where $\gamma^t(h)$, $\gamma(h)$ are defined in \eqref{eq:example:corr-value},  \eqref{eq:example:asymptotic-fraction-missinfo-def}, respectively.
		The rule of thumb in \citet{green1991global} tells to choose $N$ such that empirical variance of $\widehat{f}_{h,N}$ does not exceed $1\%$ of $\sigma^2$, which is then translated to the following rule:
		\begin{equation}
		\dfrac{\mathrm{var}(\widehat{f}_{h,N} \mid Y^t, t)}{\sigma^2} < 0.01
		\Rightarrow N \gtrsim 100 \times \left(\dfrac{1 + \gamma(h)}{1-\gamma(h)}\right) \rightarrow +\infty
		\text{ for } h=h_m, \, m\gg 1.
		\end{equation}
		Therefore, to estimate reliably the average signal using mask $h\in \R^p$, one needs almost infinite number of samples if $h$ contains  a high-frequency component in terms of basis~$\{h_k\}_{k=1}^p$.
		
		\section{Numerical experiment for the Gibbs-type sampler in ET}
		\label{app:numerical-mixing-example}
		
		\begin{minipage}{0.51\textwidth}
			\begin{align*}
			\begin{split}
			&\lambda_* \text{ -- image of size } 64 \times 64 \text{ (see Figure~\ref{fig:example:example-true-point})},\\
			&A \text{ -- Radon transform matrix of size 4096 $\times$ 4096},\\
			&\text{prior } \pi_j = \Gamma(1,1),\\
			&\text{time } t=10^2, 10^{10} \text{ ($\sim$ photons per LOR)},\\
			&\text{initial point: }\lambda_*, \\
			&\text{burn-in samples: 1000}, \\
			&\text{number of samples for the output: 2000}
			\end{split}
			\end{align*}
		\end{minipage}
		\begin{minipage}[center]{0.39\textwidth}
			\begin{figure}[H]
				\includegraphics[width=57mm, height=45mm, trim= 10 1 1 1 ]{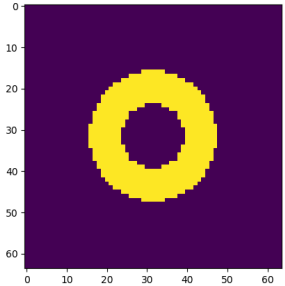}
				\caption{true distribution $\lambda_*$}
				\label{fig:example:example-true-point}
			\end{figure}
		\end{minipage}\\
		According to \eqref{eq:example:positivity-true-point-assump} we choose $\lambda_* \succ 0$, where $\lambda_*|_{\text{circle}} = 2$, $\lambda_*|_{\text{background}} = 1$ (see Figure~\ref{fig:example:example-true-point}), where radius of the inner circle $r_{\text{in}} = 0.25$ and of the outer $r_{\text{out}} = 0.5$, the image corresponds to domain $[-1, 1]^2$. Design $A$ is constructed using our implementation of Siddon's algorithm  (\cite{siddon1985fastcalculation}) for parallel beam geometry with 64 projections and 64 parallel lines per projection. Source code in Python of the experiment can be found at https://gitlab.com/eric.barat/npl-pet.
		
		\section{GEM-type algorithm derivation}
		\label{sect:gem}
		We mainly follow \citet{wang15ot} for the derivation of the minimization algorithm based on optimization transfer. Our aim is to build a majoring surrogate of $L_p(\lambda\mid\,\widetilde \Lambda_b^t, A,1, \beta^t / t)$. Using the fact that $L_p(\lambda\mid\,\widetilde \Lambda_b^t, A,1,\beta^t/t)=L(\lambda\mid\,\widetilde \Lambda_b^t, A,1)+\dfrac{\beta^t}{t} \varphi(\lambda)$, we proceed by finding a surrogate for each of both terms in the right hand-side.%$L(\lambda\mid\,\widetilde \Lambda_b^t, A,1)$ and another one for $\varphi(\lambda)$.
		
		\subsection{GEM-type algorithm}
		\label{subsect:gem}
		The attractiveness of Algorithm~\ref{alg:npl-posterior-sampling:mri:binned} relies on having an efficient procedure for minimizing $L_p(\lambda\mid\,\widetilde{\Lambda}_b^t, A, 1, \beta^t/t)$ and $L(\lambda \mid \widetilde{\Lambda}^{t}, A_{\mathcal{M}}, 1)$. For integer-valued data $Y^t\in \mathbb{N}_0^d$ the $L_p(\lambda \mid Y^t, t)$ coincides with the penalized negative log-likelihood for Poisson-type sample and in this situation, provided penalty $\varphi(\lambda)$ satisfies elementary conditions (convex, $C^2$ -- smooth), fast monotonic GEM algorithms  \citet{fessler1995sagepet}, \citet{wang15ot} can be used.
		
		In our setting intensities $\widetilde{\Lambda}^t$, $\widetilde \Lambda_b^t$ are not integer-valued anymore, hence the GEM derivation machinery must be re-verified. We claim that the same so-called ``GEM-type'' iterative algorithms can be derived outside the context of a Poisson model and missing data. First, notice that EM belongs to the class of optimization transfer algorithms \citet{lange00surrogate} also denoted as MM (Majoration Minimization). In this context, the $E$-step is interpreted as the construction of a majorizing surrogate for the objective function, $M$-step corresponds to its consequent minimization (negative log-likelihood). Using the convexity argument from  \citet{dePierro93algem} we construct the same majoring surrogate for $L(\lambda \mid\,\widetilde \Lambda_b^{t}, A, 1)$ as in \citet{fessler1995sagepet} in a completely algebraic way but now for arbitrary nonnegative term~$\widetilde{\Lambda}^t_b$. Further extension to $L_p(\lambda \mid\,\widetilde \Lambda_b^{t}, A, 1, \beta^t/t)$ is straightforward by considering a separate surrogate for $\varphi(\lambda)$. %Details are given in Appendix~\ref{app:gem}.
		
		An immediate and substantial consequence for practitioners is that all celebrated GEM algorithms for MLE and MAP reconstructions can be used in the bootstrap context by simply replacing Poisson data term by $\widetilde \Lambda_b^{t}$.

		\label{app:gem}
		\subsection{Majoring surrogate of $L(\lambda \mid\,\widetilde \Lambda_b^{t}, A, 1)$}
		In \citet{dePierro93algem} authors propose a purely algebraic derivation of the surrogate outside the context of latent variables and evidence lower bound (ELBO) computation.
		
		Let $f_i(x)\triangleq x-\widetilde \Lambda_{b,i}^{t}\log(x)$,  $\lambda_j^{(r)}\succeq 0$, $j=1,\ldots,p$, be the $r^\text{th}$ iterate of the optimization algorithm minimizing $L(\lambda \mid \widetilde{\Lambda}_b^t, A, 1)$, and denote also $\Lambda_i^{(r)} = a_i^T \lambda^{(r)}$.
		
		Consider the formula
		\begin{align*}
		% \begin{aligned}
		L(\lambda \mid\,\widetilde \Lambda_b^{t}, A, 1)&=\sum_{i=1}^{d}f_i(\Lambda_i)\\
		&=\sum_{i=1}^{d}f_i\left(\sum_{j=1}^p a_{ij}\lambda_j\right) \\
		&=\sum_{i=1}^{d}f_i\left(\sum_{j=1}^p \left[\frac{a_{ij}\lambda_{j}^{(r)}}{\Lambda_i^{(r)} }\right]\left[\frac{\lambda_j}{\lambda_{j}^{(r)}}\Lambda_i^{(r)} \right] \right) \\
		% \end{aligned}
		\end{align*}
		Since $f_i$ is convex for $\widetilde \Lambda_{b,i}^{t} \ge 0$ and using the fact that $\sum_{j=1}^p \frac{a_{ij}\lambda_{j}^{(r)}}{\Lambda_i^{(r)} }=1$ together with the Jensen's inequality we obtain
		$$
		L(\lambda \mid\,\widetilde \Lambda_b^{t}, A, 1)\leq Q(\lambda,\lambda^{(r)})
		$$
		where 
		$$
		Q(\lambda,\lambda^{(r)}) = \sum_{i=1}^{d}\sum_{j=1}^p \left[\frac{a_{ij}\lambda_{j}^{(r)}}{\Lambda_i^{(r)} }\right]f_i\left(\frac{\lambda_j}{\lambda_{j}^{(r)}}\Lambda_i^{(r)} \right)
		$$
		% \begin{align*}
		% % \begin{aligned}
		%   L(\lambda \mid\,\widetilde \Lambda_b^{t}, A, 1)&\leq\underbrace{\sum_{i=1}^{d}\sum_{j=1}^p \left[\frac{a_{ij}\lambda_{j}^{(r)}}{\Lambda_i^{(r)} }\right]f_i\left(\frac{\lambda_j}{\lambda_{j}^{(r)}}\Lambda_i^{(r)} \right) }_{\tilde{Q}(\lambda,\lambda^{(r)}) }
		% \end{align*}  
		Note also that $Q(\lambda^{(r)},\lambda^{(r)})=L(\lambda^{(r)} \mid\,\widetilde \Lambda_b^{t}, A, 1)$. Using the definition of $f_i$ we find that 
		\begin{align*}
		Q(\lambda,\lambda^{(r)}) &=\sum_{i=1}^{d}\sum_{j=1}^p\left[a_{ij}\lambda_j-\frac{a_{ij}\lambda_{j}^{(r)}}{\Lambda_i^{(r)} }\widetilde \Lambda_{b,i}^{t} \log\left(\frac{\lambda_j}{\lambda_{j}^{(r)}}\Lambda_i^{(r)} \right)\right]\\
		%  &= \sum_{j=1}^p\left[A_j \lambda_j -\left(\sum_{i=1}^{d}\frac{a_{ij}\lambda_{j}^{(r)}}{\Lambda_i^{(r)} }\widetilde \Lambda_{b,i}^{t}\right)\log\lambda_j\right]+cst.\\
		&= \sum_{j=1}^p A_j\left[ \lambda_j -\left(\frac{\lambda_{j}^{(r)}}{A_j} \sum_{i=1}^{d}\frac{a_{ij}\widetilde \Lambda_{b,i}^{t}}{\Lambda_i^{(r)}}\right)\log\lambda_j\right]+const.\\
		%   &=Q_L(\lambda,\lambda^{(r)})+ R,
		\end{align*}
		where $R$ denotes terms independent of $\lambda$.
		
		Function $Q(\lambda, \lambda^{(r)})$ can be rewritten as follows:
		\begin{equation}\label{app:surrogate_likelihood}
		Q(\lambda,\lambda^{(r)}) \triangleq \sum_{j=1}^p A_j\left( \lambda_j -\lambda_{j, L}^{(r+1)}\log\lambda_j\right)
		\end{equation}
		with
		\begin{equation}\label{eq:ljem}
		\lambda_{j}^{(r+1), L} \triangleq \frac{\lambda_{j}^{(r)}}{A_j} \sum_{i=1}^{d}\frac{a_{ij}\widetilde \Lambda_{b,i}^{t}}{\Lambda_i^{(r)} }
		\end{equation}

		\subsection{Majoring surrogate for $\varphi(\lambda)$}
		Let 
		\begin{equation*}%\label{eq:potential-function}
		\varphi(\lambda)= \sum_{j=1}^p\sum_{k\in\mathcal N_j} w_{jk} \,\psi(\lambda_j-\lambda_{k})
		\end{equation*}
		with $w_{jk}>0$, $w_{kj}=w_{jk}$ are the weights and $\mathcal N_j$ is the neighborhood of pixel $j$. 
		
		From \citet{erdogan99surrogate}, any potential function $\psi$ satisfying the conditions %\ref{enu:penalty_conditions}
		\begin{enumerate}[i.]\label{enu:penalty_conditions}
			\item $\psi$ is symmetric.
			\item  $\psi$ is continuous and differentiable everywhere.
			\item $\psi$ is convex.
			\item $\omega_{\psi}(u)\triangleq \frac{1}{u}\frac{\mathrm d \psi(u)}{\mathrm d u}$ is non-increasing for $u\geqslant 0$. 
			\item $\lim_{u\rightarrow 0} \omega_{\psi}(u)$ is finite and positive.
		\end{enumerate}
		can be majorized by a parabolic curve.
		
		With these requirements satisfied, $\varphi(\lambda)$ is majorized by a separable quadratic penalty given below (see \citet{wang15ot} and references therein):
		
		\begin{equation*}
		\varphi(\lambda) \leq Q_\varphi(\lambda;\lambda^{(r)})
		\end{equation*} 
		where
		\begin{align}
		&Q_\varphi(\lambda;\lambda^{(r)})=\frac{1}{2}\sum_{j=1}^{p}p_{j,\varphi}^{(r+1)}(\lambda_j-\lambda_{j,\varphi}^{(r+1)})^{2}, \\
		\label{eq:pj}
		&p_{j,\varphi}^{(r+1)}=4\sum_{k\in \mathcal{N}_{j}}w_{jk}\,\omega_{\psi}(\lambda_j^{(r)}-\lambda_k^{(r)}), \\
		\label{eq:lbdjpen}
		&\lambda_{j,\varphi}^{(r+1)}=\frac{2}{p_{j,\varphi}^{(r+1)}}\sum_{k\in \mathcal{N}_{j}}w_{jk}\,\omega_{\psi}(\lambda_j^{(r)}-\lambda_k^{(r)})(\lambda_{j}^{(r)}+\lambda_{k}^{(r)}).
		\end{align}

		\subsection{Global surrogate minimization}
		At iteration $(r+1)$, solving the Karush-Kuhn-Tucker condition for minimizing the combined surrogate, we get
		$$
		\lambda^{(r+1)} = \argmin_{\lambda\succeq 0} \,Q_L(\lambda,\lambda^{(r)})+\frac{\beta^t}{t} Q_\varphi(\lambda,\lambda^{(r)})
		$$
		which gives a unique analytical solution
		\begin{equation}\label{eq:lj}
		\lambda_{j}^{(r+1)}=\frac{2\lambda_{j,L}^{(r+1)} }{\sqrt{(b_{j}^{(r+1)})^{2}+4\beta_j^{(r+1)}\lambda_{j,L}^{(r+1)}}+b_j^{(r+1)}}
		\end{equation} 
		with $\beta_j^{(r+1)}=\frac{\beta^t}{t\,A_j} p_{j,\varphi}^{(r+1)}$ and $b_j^{(r+1)}=1-\beta_j^{(n+1) } \lambda_{j,\varphi}^{(r+1)}$.\\
		
		\noindent The GEM-type algorithm is summarized in Algorithm \ref{alg:gem}.
			\begin{center}
				\begin{minipage}{0.87\textwidth}
					\begin{algorithm}[H]
						\KwData{intensities $\widetilde \Lambda_b^t$;}
						\KwIn{Initial image $\lambda^{(0)}$, number max. of iterations $R$, projector $A$, regularization parameter $\beta^t$, penalty $\varphi(\lambda)$}
						\For{ $r=0$ \KwTo $R-1$ }{
							\For{ $j=1$ \KwTo $p$ }
							{\text{compute $\lambda_{j,L}^{(r+1)}$ using formula \eqref{eq:ljem}}\;
								\text{compute $\lambda_{j,\varphi}^{(r+1)}$ using formula \eqref{eq:lbdjpen}}\;
								\text{compute $\lambda_{j}^{(r+1)}$ using formula \eqref{eq:lj}}\;
							}
						}
						\KwOut{$\lambda^{(R)}$}
						% 	\KwResult{For large $N$ distribution of $\{\lambda_k\}_{k=1}^N$ approximates posterior $p_\pi(\lambda \mid Y^t, A, t)$ with prior in \eqref{eq:example:pixel-gamma-prior} (see e.g., \citet{kaipio2006statistical})}
						\caption{$\argmin\limits_{\lambda\succeq 0}L_p(\lambda\mid\,\widetilde \Lambda_b^t, A,1,\frac{\beta^t}{t})$ by optimization transfer}
						\label{alg:gem}
						
					\end{algorithm} 
				\end{minipage}
			\end{center}
%		\begin{algo}
%			\label{alg:gem}
%			$\argmin\limits_{\lambda\succeq 0}L_p(\lambda\mid\,\widetilde \Lambda_b^t, A,1,\frac{\beta^t}{t})$ by optimization transfer
%			\vspace*{-2pt}
%			\begin{tabbing}
%				\qquad \enspace Input: intensities $\widetilde \Lambda_b^t$, Initial image $\lambda^{(0)}$, number max. of iterations $R$,\\
%				\qquad \qquad \enspace projector $A$, regularization parameter $\beta^t$, penalty $\varphi(\lambda)$\\ 
%				\qquad \enspace For $r=0$ to $R-1$ do \\
%				\qquad \qquad \enspace For $j=1$ to $p$ do \\
%				\qquad \qquad \qquad \enspace 1. compute $\lambda_{j,L}^{(r+1)}$ using formula \eqref{eq:ljem}\\
%				\qquad \qquad \qquad \enspace 2. compute $\lambda_{j,\varphi}^{(r+1)}$ using formula \eqref{eq:lbdjpen}\\
%				\qquad \qquad \qquad \enspace 3. compute $\lambda_{j}^{(r+1)}$ using formula \eqref{eq:lj}\\
%				\qquad \enspace Output: $\lambda^{(R)}$
%			\end{tabbing}
%		\end{algo}
		
		\begin{remark}
			By setting $\frac{\beta^t}{t} \rightarrow 0$ in \eqref{eq:lj}, we immediately check that $\lambda^{(r+1)} \rightarrow \lambda_L^{(r+1)}$. 
		\end{remark}
		\begin{remark}
			Parameter $\widetilde{\lambda}^{t}_{\mathcal{M}}$ in Algorithm \ref{alg:wbb-pet-bootstrap:mri:posterior-mixing-param} is easily obtained by iterating formula \eqref{eq:ljem} with projector $A_\mathcal M$ and random intensities $\widetilde{\Lambda}^{t}$
			\begin{equation}
			\lambda_{\mathcal M,s}^{(r+1)} = \frac{\lambda_{\mathcal M,s}^{(r)}}{A_s^{\mathcal M}} \sum_{i=1}^{d}\frac{a_{is}^{\mathcal M} \Lambda_{i}^{t}}{\Lambda_{\mathcal M,i}^{(r)}}
			\end{equation}
		\end{remark}
		% \begin{remark}
		% Solution of $\lambda^{t}_{\mathcal{M}} = \argmin\limits_{\lambda\succeq 0}L(\lambda \mid \Lambda^{t}, A_{\mathcal{M}}, 1)$ in Algorithm \ref{alg:wbb-pet-bootstrap:mri:posterior-mixing-param} is easily obtained by iterating formula \eqref{eq:ljem} with projector $A_\mathcal M$ and random intensities $\Lambda^{t}$.
		% \end{remark}
		
		\section{Numerical experiment for the NPL in ET}
		\label{subsect:numerical:design}
		
		Source code in Python of the experiment can be found at https://gitlab.com/eric.barat/npl-pet
		\subsection{Penalty $\varphi$}
		\label{subsect:numerical:log-cosh-prior}
		For our numerical tests in Section~\ref{subsect:numerical:design} we choose the well-known in PET imaging log cosh penalty \citet{green1990bayesian} coupled with $\ell_2$ convex pairwise difference penalty:
		\begin{equation}\label{eq:prelim:numerical:log-cosh-prior-def}
		\varphi(\lambda) =  \sum_{j=1}^p \sum_{j'\in \mathcal N_j} w_{jj'}\left((1-\nu)\zeta  
		\log\cosh \left(
		\frac{\lambda_j-\lambda_{j'}}{\zeta}
		\right)+\frac{\nu}{2} \left(\lambda_j-\lambda_{j'}\right)^2\right),
		\end{equation}
		where $w_{jj'}>0$, $w_{j'j}=w_{jj'}$ and $\mathcal N_j$ the neighborhood of pixel $j$. In practice, on a square image we consider a 8-adjacent pixels neighborhood with $w_{jj'}=1$ for horizontal/vertical neighbors and $w_{jj'}=\frac{\sqrt{2}}{2}$ for diagonal ones.
		
		Parameter $\zeta$ is chosen to be fixed. Penalty of form    \eqref{eq:prelim:numerical:log-cosh-prior-def} is attractive since it bridges together Gaussian prior for pairwise interactions ($\zeta \rightarrow +\infty$), and for $\nu = 0, \, \zeta = 0$, it corresponds to pairwise $\ell^1$-penalty (Laplace prior). It is easy to check that $\varphi(\lambda)$ in \eqref{eq:prelim:numerical:log-cosh-prior-def} is strictly convex except the only direction given by vector $e = \{c(1, \dots, 1), \, c\in \R\}$. From formula \eqref{eq:design-matrix-positivity-restr-3} it follows that $e\not \in \ker A$, therefore conditions \eqref{eq:prelim:penalty-cond-convex},  \eqref{eq:prelim:penalty-cond-strict-conv} are automatically satisfied. 
		
		\subsection{Design}
		
		We illustrate Algorithm~\ref{alg:npl-posterior-sampling:mri:binned} on synthetic PET data based on a realistic phantom from the BrainWeb database \citet{vunckx2011pet}. Typical activity concentrations have been assigned to annotated tissues (gray matter, white matter, skin, etc.) and we delineated a tumor lesion area, not present in the initial phantom with an uptake of 50\% compared to the gray matter activity; see Figure~\ref{fig:phantom}(a). The anatomical MRI (T1) phantom (Figure~\ref{fig:phantom}(b)) does not contain any information relative to the lesion. %Nonparametric bayesian over-segmentation of side images is amenable with DP-Potts \citet{Xu2016potts} or ddCRP \citet{blei2011distance}, \citet{Ghosh2011ddcrp} MCMC algorithms. 
		For segmentation of MRI-images we used ddCRP \citet{blei2011distance} with a concentration parameter fixed to $10^{-5}$ leading to a few hundreds of random segments for a 2D brain slice. %Though several random segmentations might be considered, for seek of simplicity, we selected a single sample among few ones which maximized the corresponding MRI log-likelihood. 
		\begin{figure}[H]
			\centering
			\begin{tabular}{cc}
				\includegraphics[height=40mm]{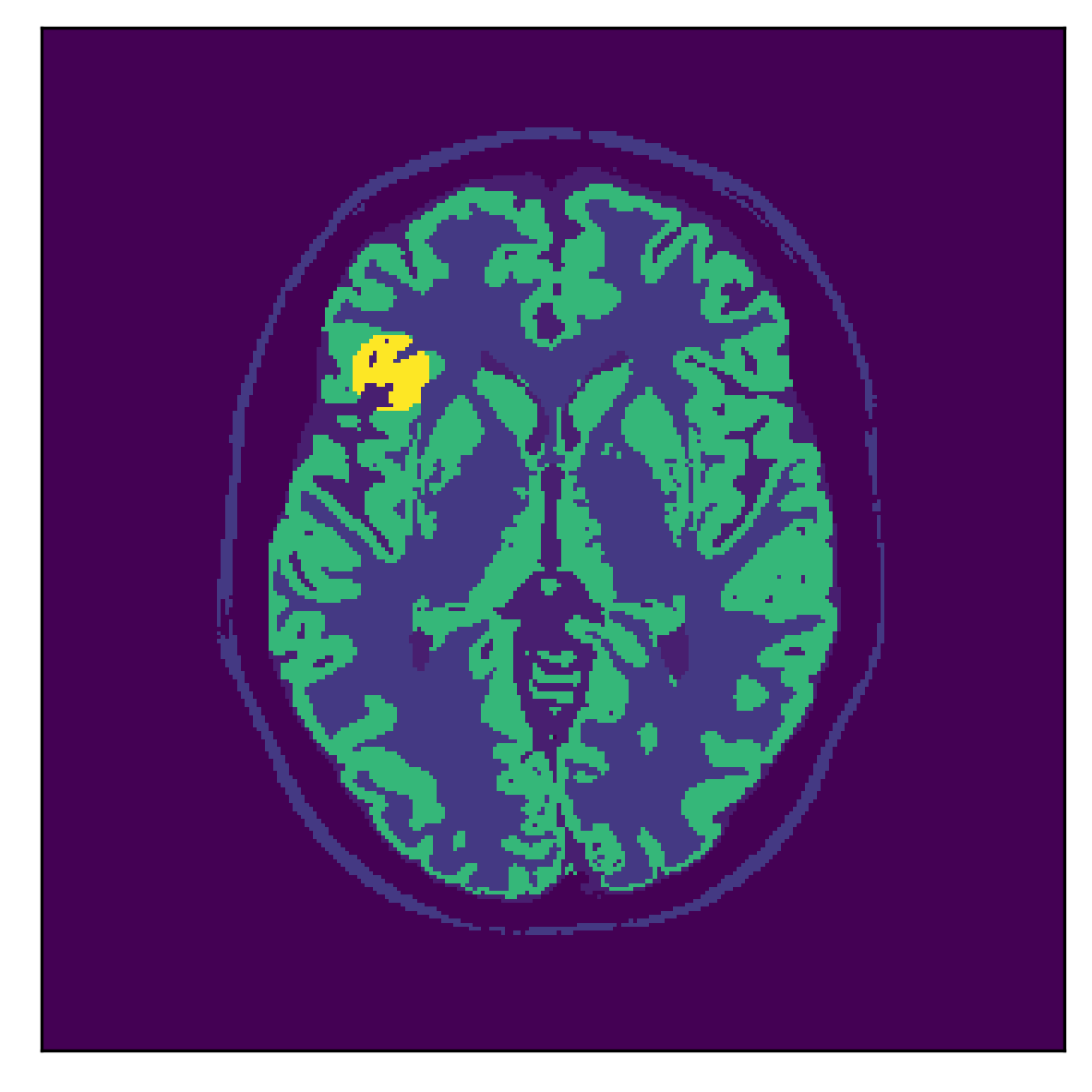} & \includegraphics[height=40mm]{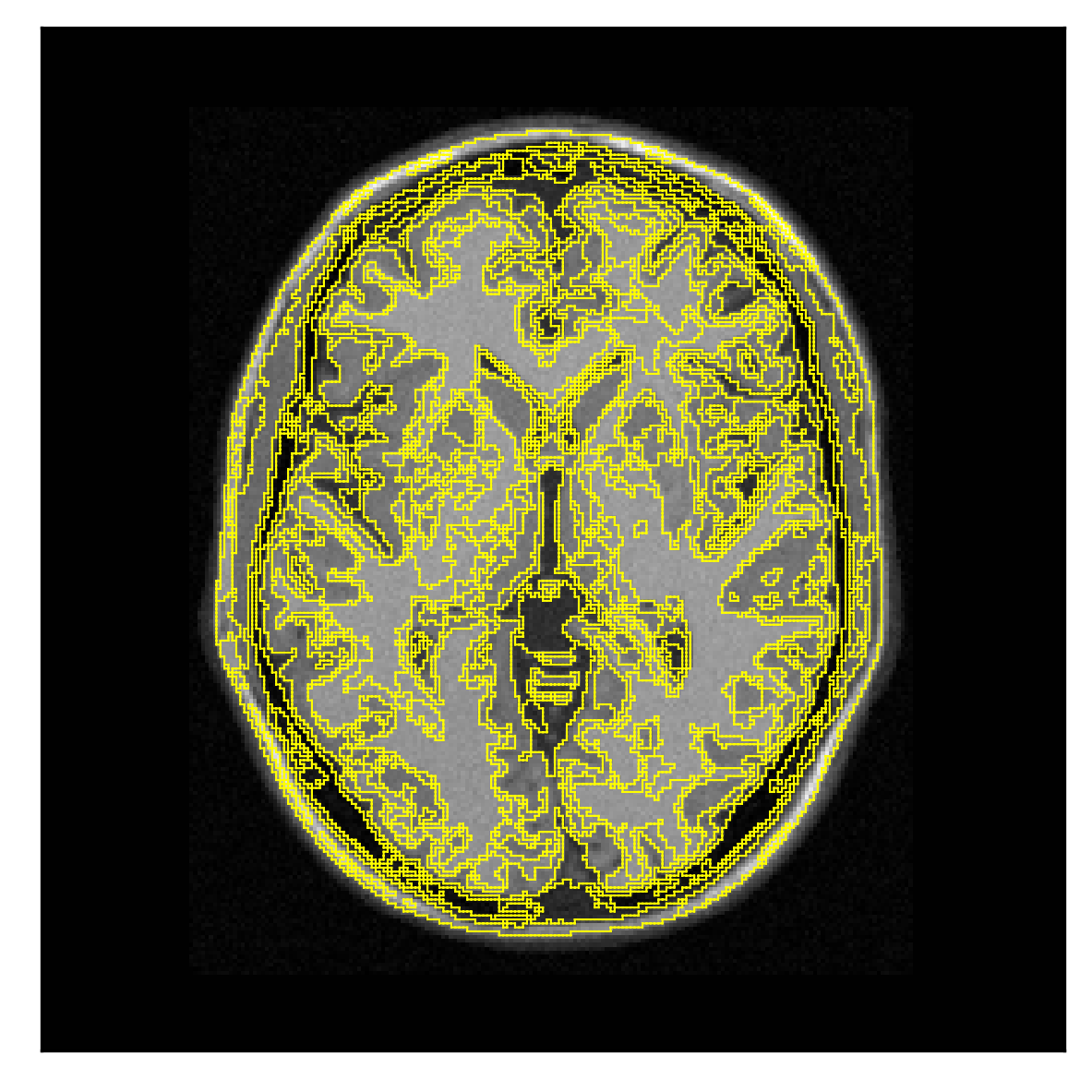}\\
				(a) $\lambda_*$ & (b) $M_1\in \mathcal{M}$
			\end{tabular}
			\caption{emission map with lesion hot spot at (a), segmented MRI at (b)}
			\label{fig:phantom}
		\end{figure}
		
		The reconstruction grid was taken $256\times 256$ pixels, i.e., $p = 2^{16}$, being  identical to the phantom's one. The  observation space consists of LORs derived from a ring of 512 detectors spaced uniformly on a circle. Design $A$ was generated using the Siddon's algorithm \citet{siddon1985fastcalculation} and $A_\mathcal M$ was  computed from $A$ and segmented image $M_1\in \mathcal M$ using formulas \eqref{eq:wbb-mri:npl-presentation:reduced-design-elem}, \eqref{eq:wbb-mri:npl-presentation:concat-design-def}.  The intensity $\lambda_*$ was set so that $\sum_{j=1}^p \lambda_{*j} = 5\cdot 10^5$ and for the experiment with mild $t$ time was set to $t_1=1$; for large $t$ (when asymptotic approximation is better) we set $t_2 = 100$. Sinograms for $t_1, t_2$ were generated via formula \eqref{eq:poisson-model-pet}. 
		
		Non-injectivity of $A$ results in the fact that $\lambda_*$ cannot be reconstructed in principle even from the noiseless sinogram $A\lambda_*$. Result of Theorem~\ref{thm:wbb-algorithm-consistency} in Subsection~\ref{subsect:theory:consistency} says that the optimal achievable reconstruction (i.e., in presence of infinite amount of data) using the KL-criterion with penalty $\varphi$ is the following one
		\begin{equation}
		\lambda_{*opt} = \lambda_* + w_{A,\lambda_*}(0),
		\end{equation}
		where $w_{A,\lambda_*}(\cdot)$ is defined in \eqref{eq:proofs:lemma-existence-minimizer-ker-a}; see Figures~\ref{fig:pseudo-phantom} (a), (b) below. Intuitively, $\ker A$ contains only high frequencies, therefore $\lambda_*$ coincides with $\lambda_{*opt}$ up to the smallest features on the image (e.g., up to boundaries).
		
		\begin{figure}[H]
			\centering
			\begin{tabular}{cc}
				\includegraphics[height=40mm]{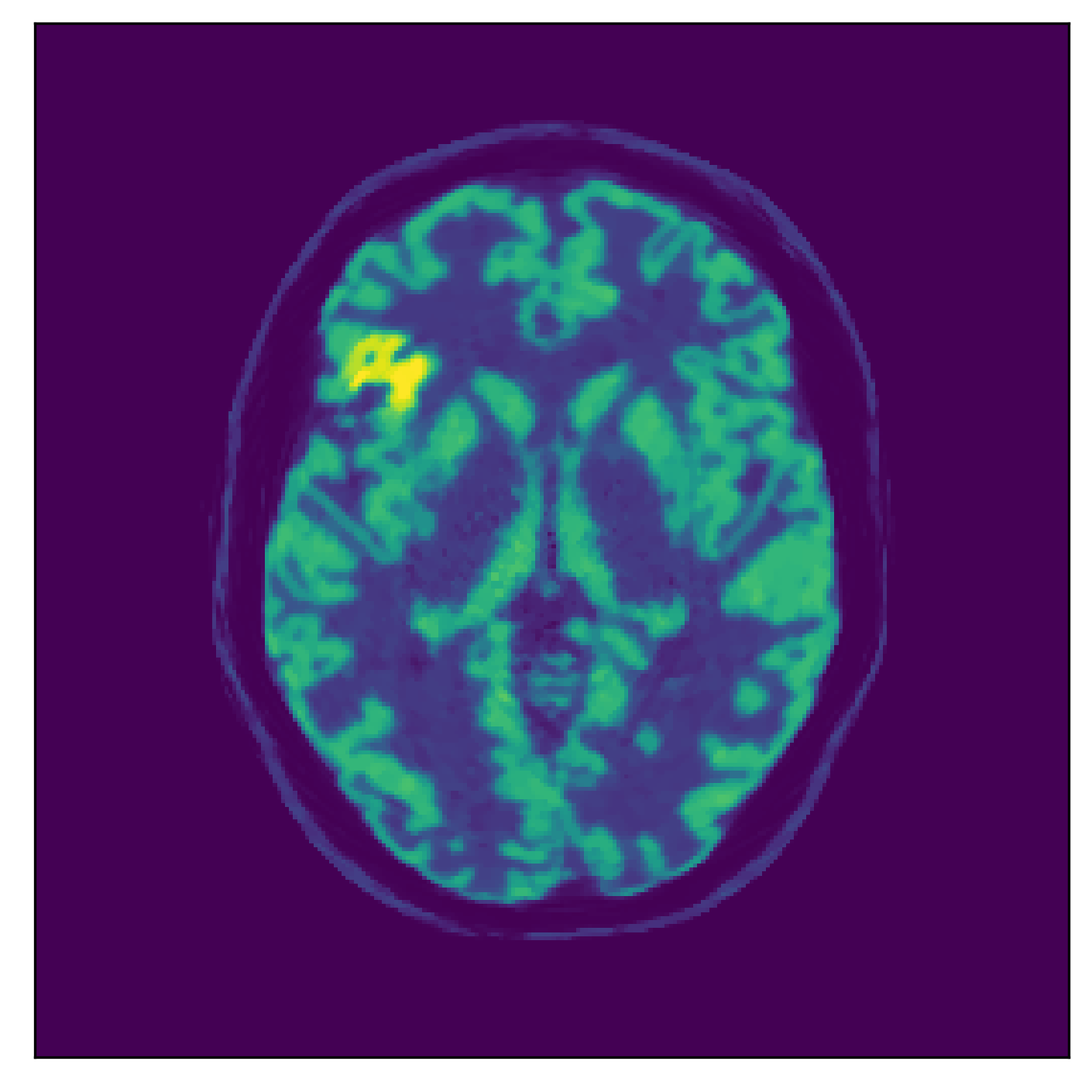} & \includegraphics[height=40mm]{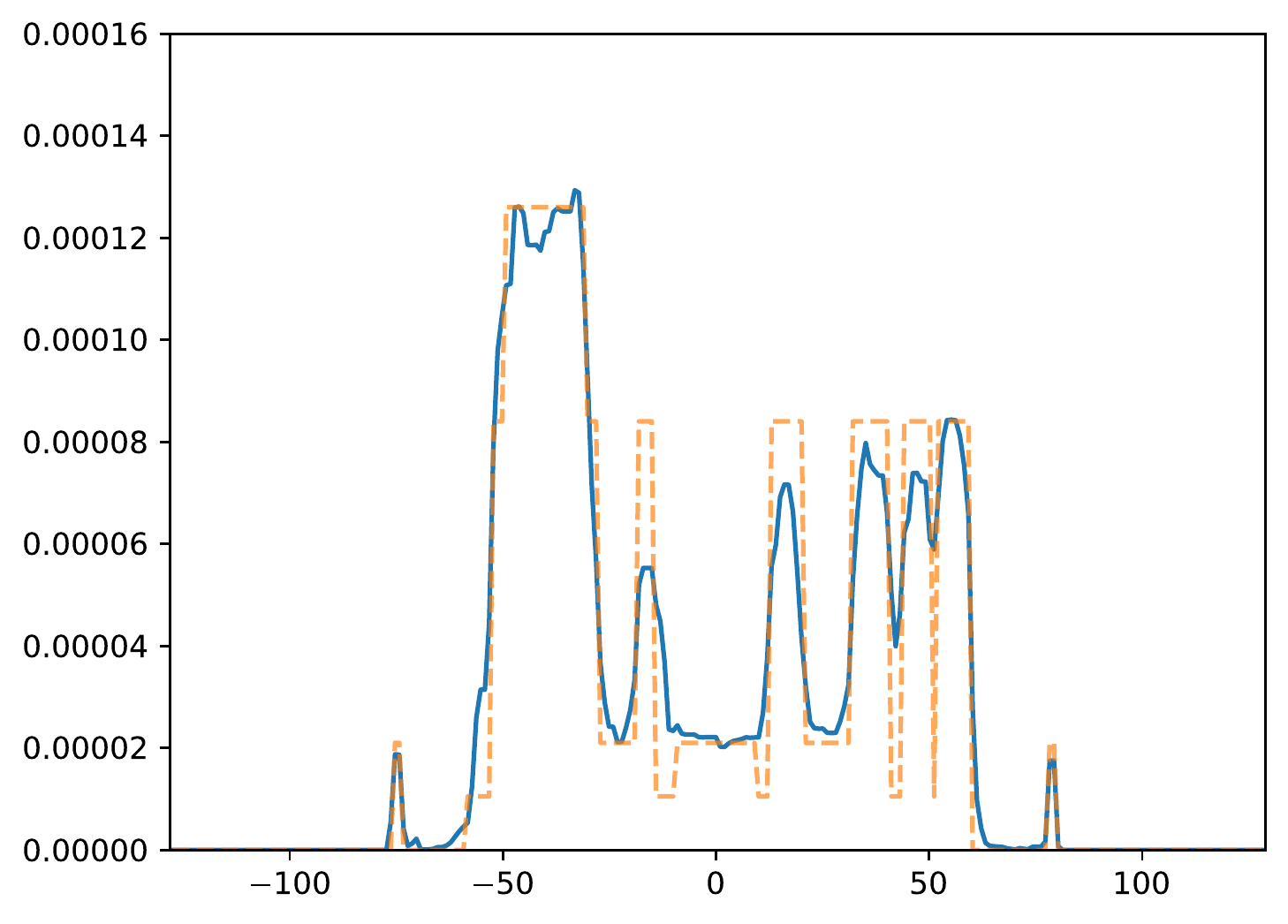}\\
				(a) $\lambda_{*opt}$ & (b) profile $\lambda_*$ vs. $\lambda_{*opt}$
			\end{tabular}
			\caption{$\lambda_{*opt}$ at (a), profile through the lesion in $\lambda_{*opt}$ (blue) vs. $\lambda_*$ (dotted orange) at~(b).}
			\label{fig:pseudo-phantom}
		\end{figure}
		
		In what follows empirical credible intervals are tested to cover $\lambda_{*opt}$ but not $\lambda_*$. In practice we computed $\lambda_{*opt}$ as a solution of the following minimization problem
		\begin{align}\label{eq:num-experiment:lambda-opt-def-practical}
		\lambda_{*opt} = \argmin_{\lambda\succeq 0} L_p(\lambda \mid A\lambda_*, A, 1, \beta_{min}),
		\end{align}
		where $\beta_{min}$ was chosen subjectively such that $\lambda_{*opt}$ does not contain visible numerical artifacts related to the implementation of Siddon's projector. As a result we choose $\beta_{min} = 10^{-3}$. The used minimization algorithm in \eqref{eq:num-experiment:lambda-opt-def-practical} was described in Appendix~\ref{subsect:gem}.	For $\varphi(\lambda)$ we use the function from \eqref{eq:prelim:numerical:log-cosh-prior-def}, where parameters are chosen 
		as follows: $\zeta = 0.05$, $\nu = 0.15$, $\beta^t = 2\times 10^{-3}$.
		For $t_1=1$, we present results for $\rho = \theta^t/t \in \{0, 0.25, 0.5, 1,2\}$ (see Remark \ref{rem:parameter-theta-interpretation}). For $t_2 = 100$ we choose only one value $\rho = 0.05$. For each combination of $(t,\rho)$, Algorithm \ref{alg:npl-posterior-sampling:mri:binned} was generating $B=1000$ bootstrap draws from which further statistics were computed (empirical mean, standard deviation, etc.).

		Finally, the misspecification in the nonparametric prior is mainly due to the fact that the lesion is not reflected in $\mathcal M$ and, more generally, to the mismatch between the actual emission map $\lambda_*$ and the segmentation in $\mathcal{M}$. In this sense our numerical test is the worst-case scenario of using the MRI data in ET.
		
		\subsection{Results}
		\newcommand\xs{2.75cm}
		\begin{figure}[H]
			\centering
			\vspace{-0.5cm}
			\begin{tabular}{lcccc}
				& NPL-mean & NPL-std $\times 3$ & Profile & Coverage\\
				%\hline
				% \vspace*{.2cm}
				
				\raisebox{1.3cm}{$\rho=0$} & \includegraphics[height=\xs]{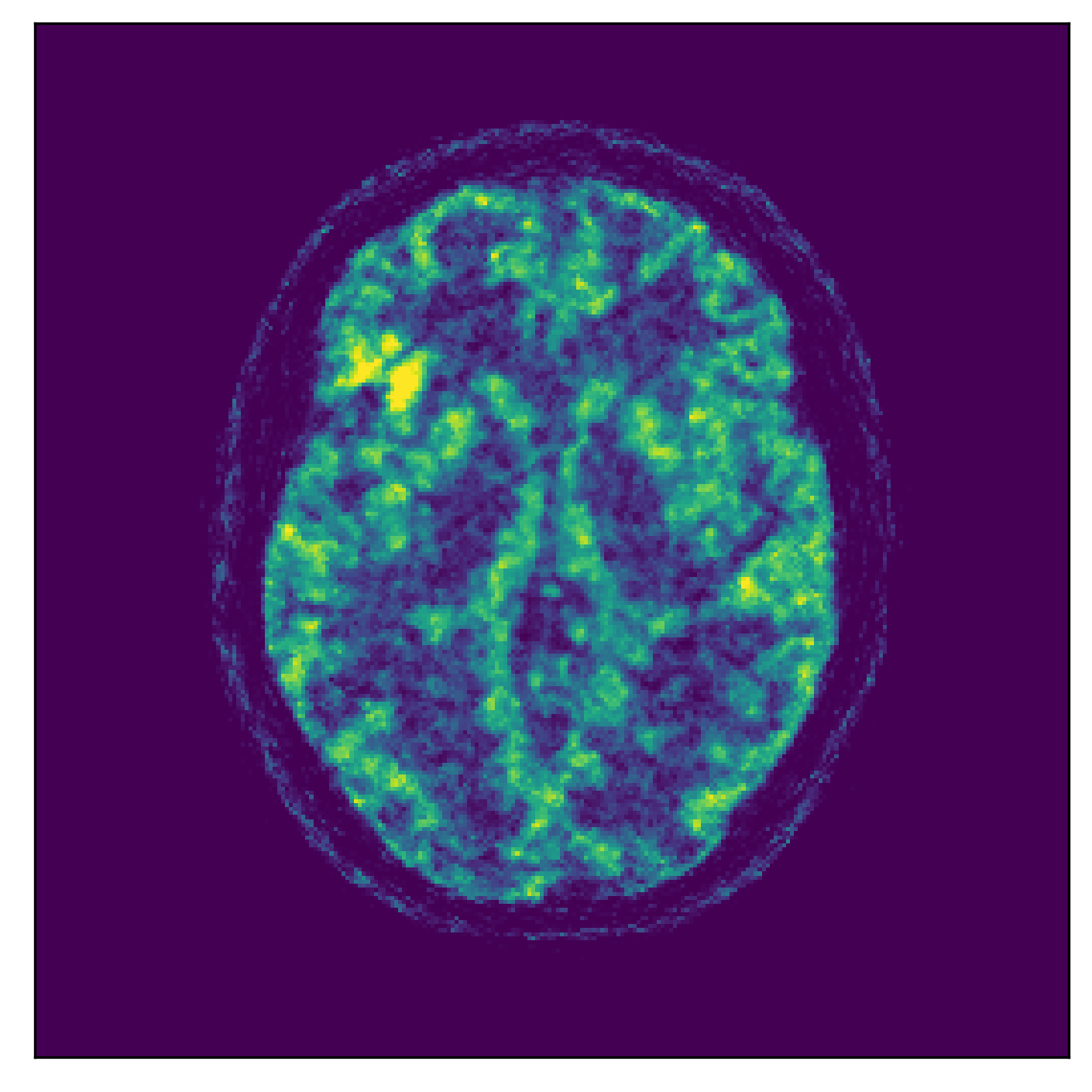} & \includegraphics[height=\xs]{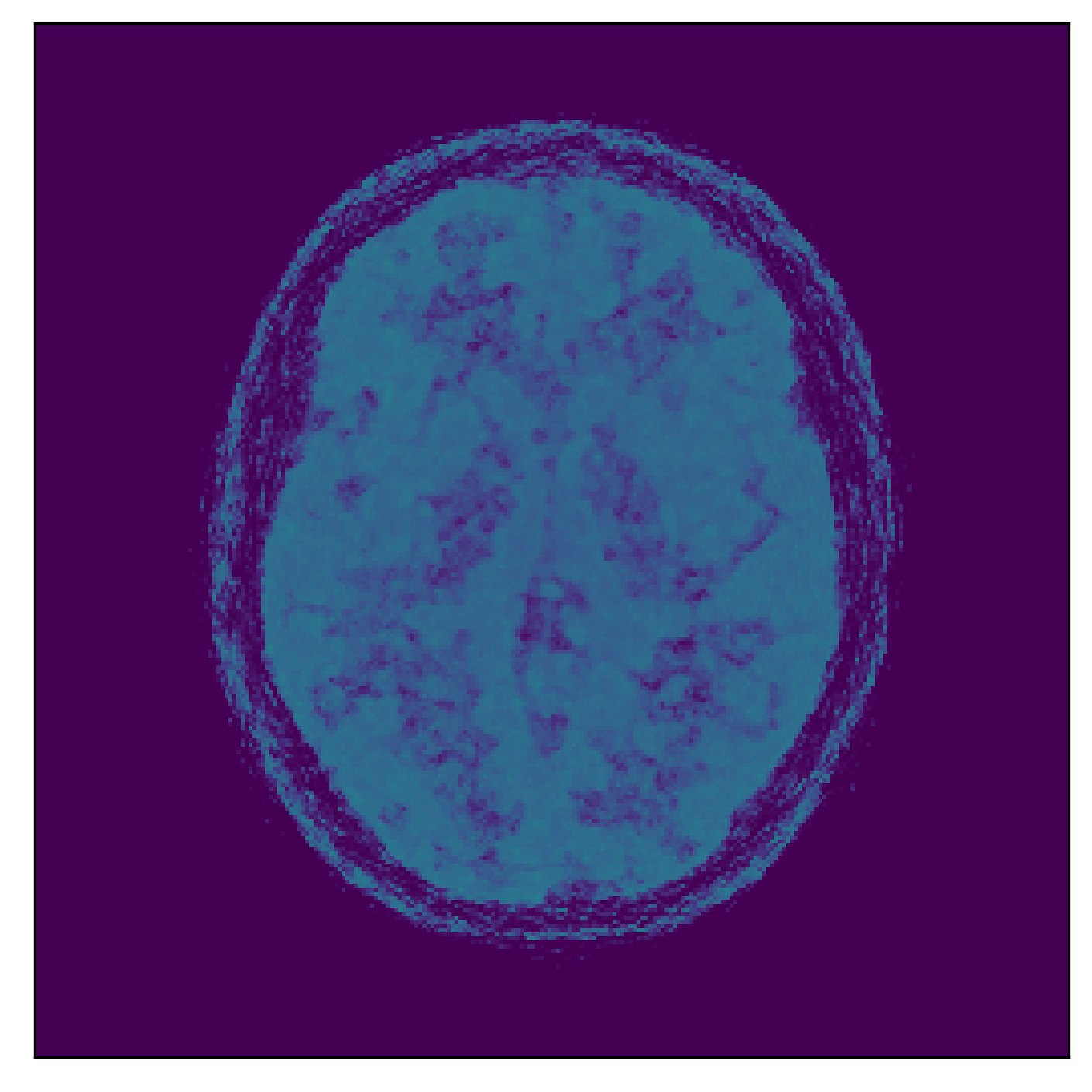} & \includegraphics[height=\xs]{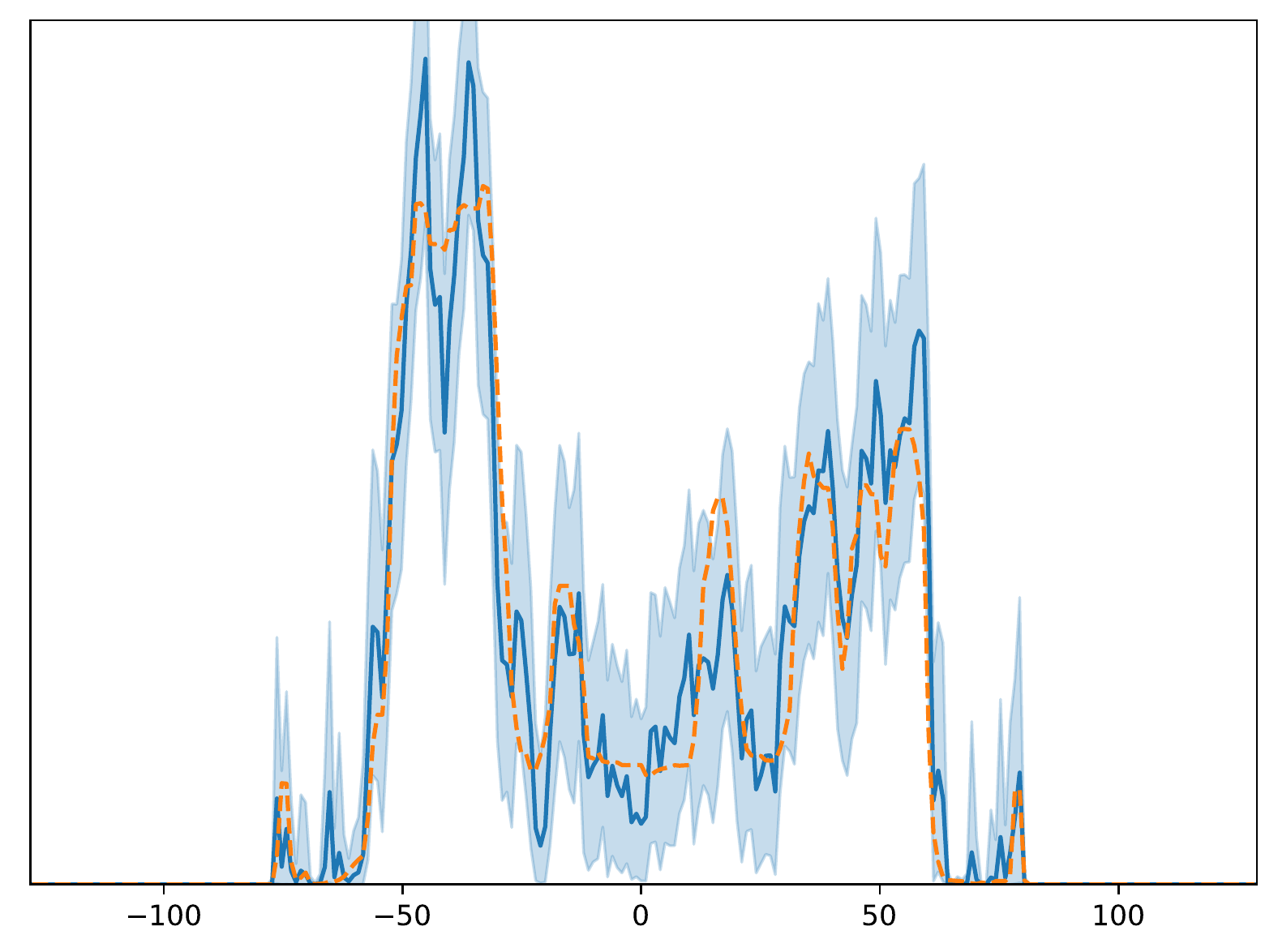} & \includegraphics[height=\xs]{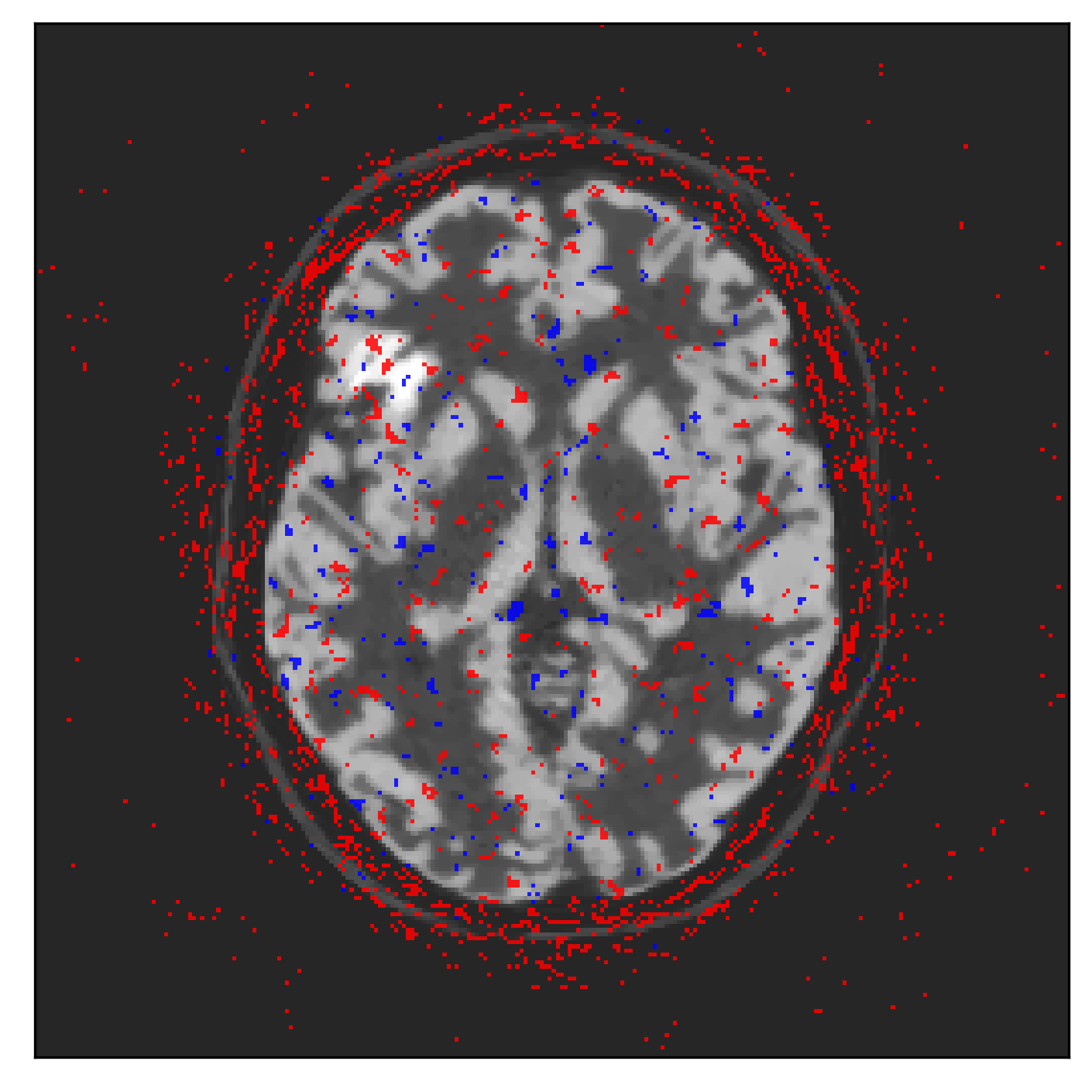}\\
				\raisebox{1.3cm}{$\rho=\frac 1 4$} & \includegraphics[height=\xs]{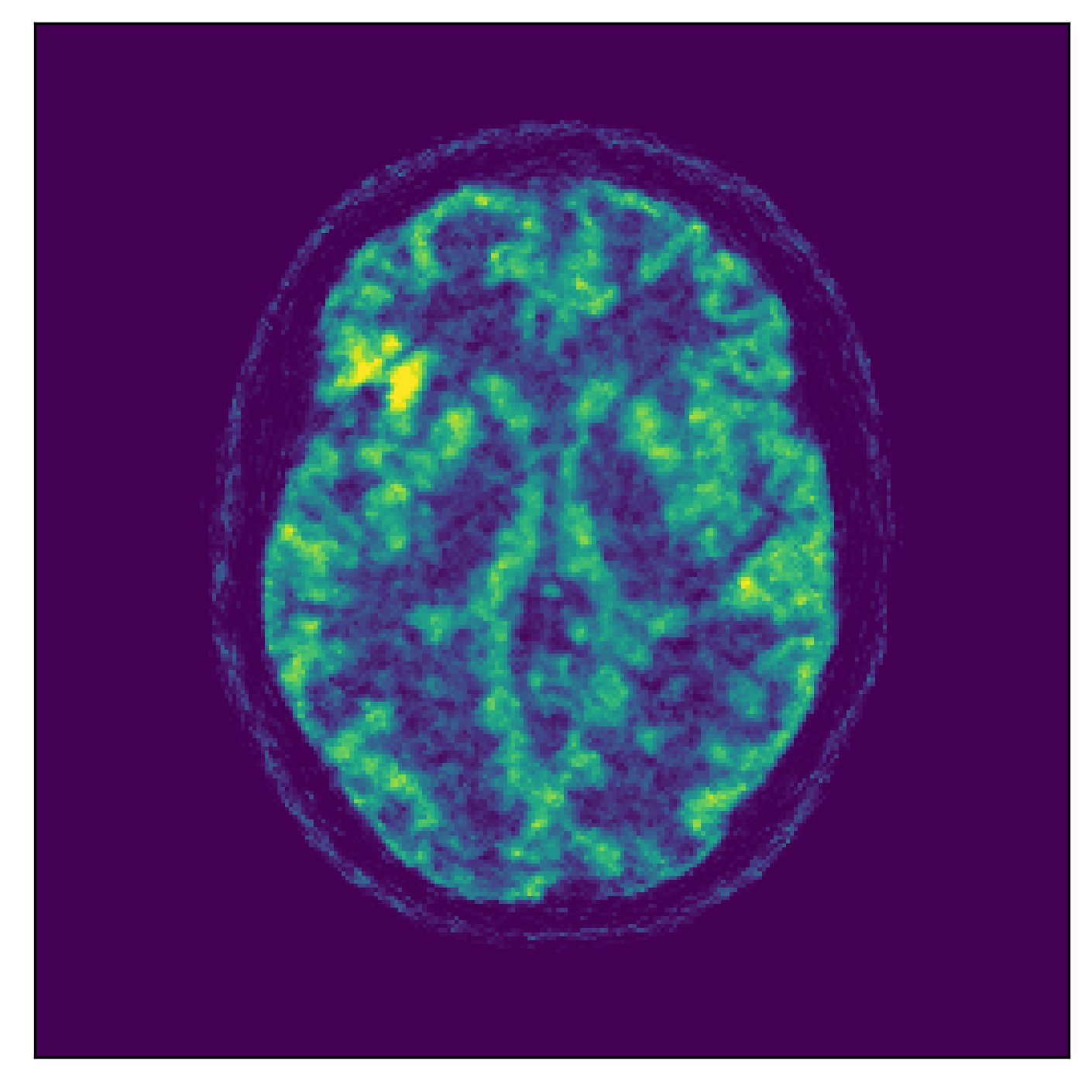} & \includegraphics[height=\xs]{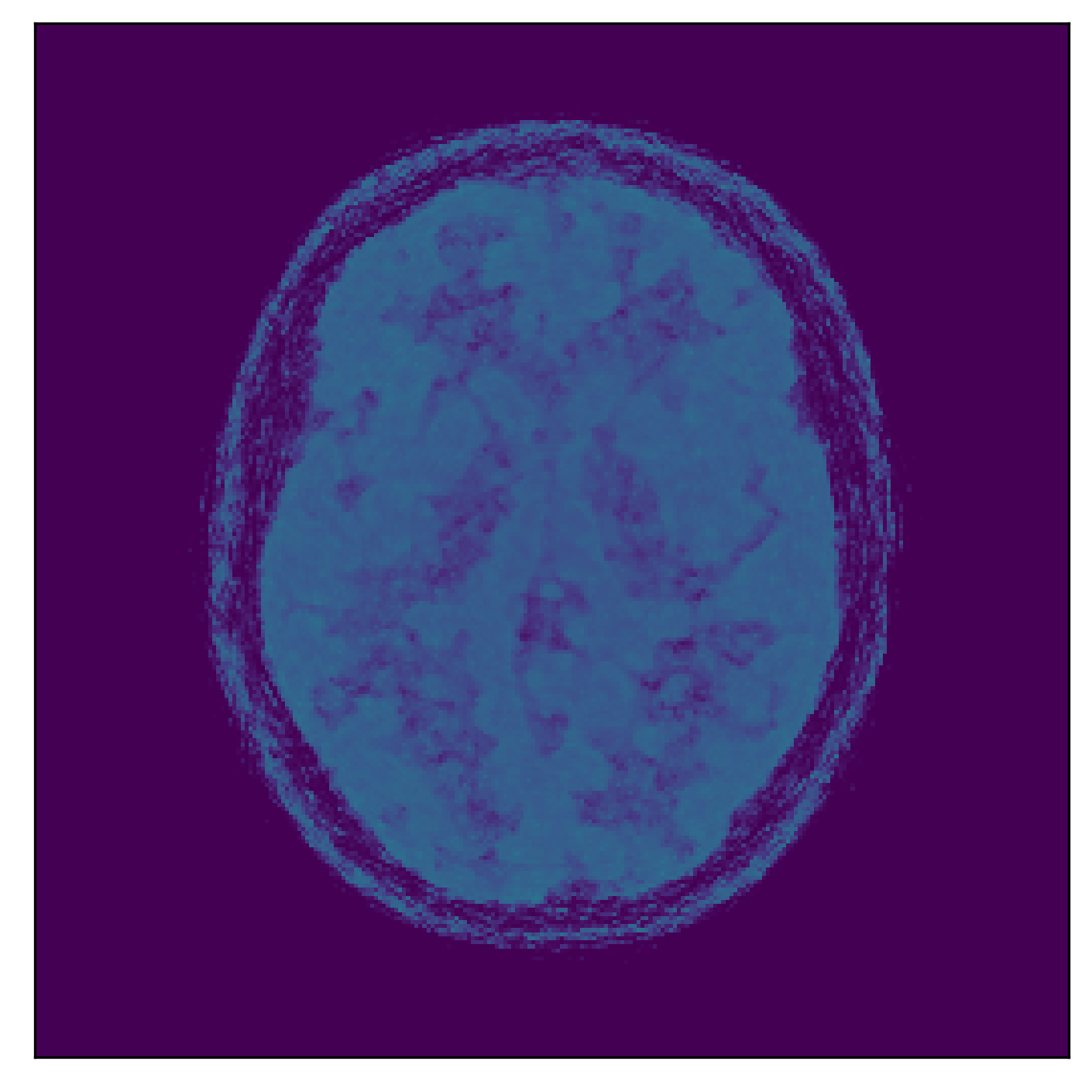} & \includegraphics[height=\xs]{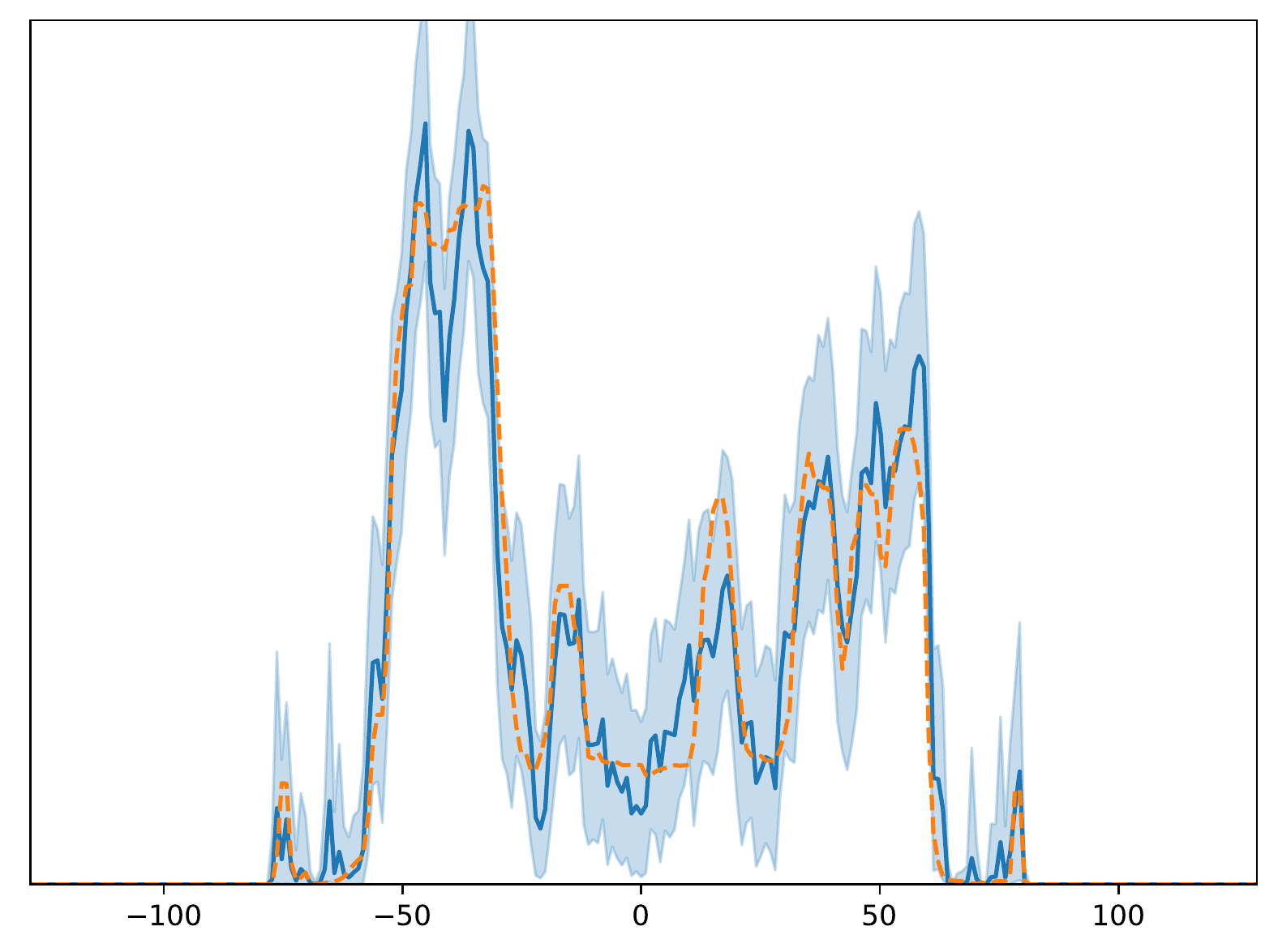} & \includegraphics[height=\xs]{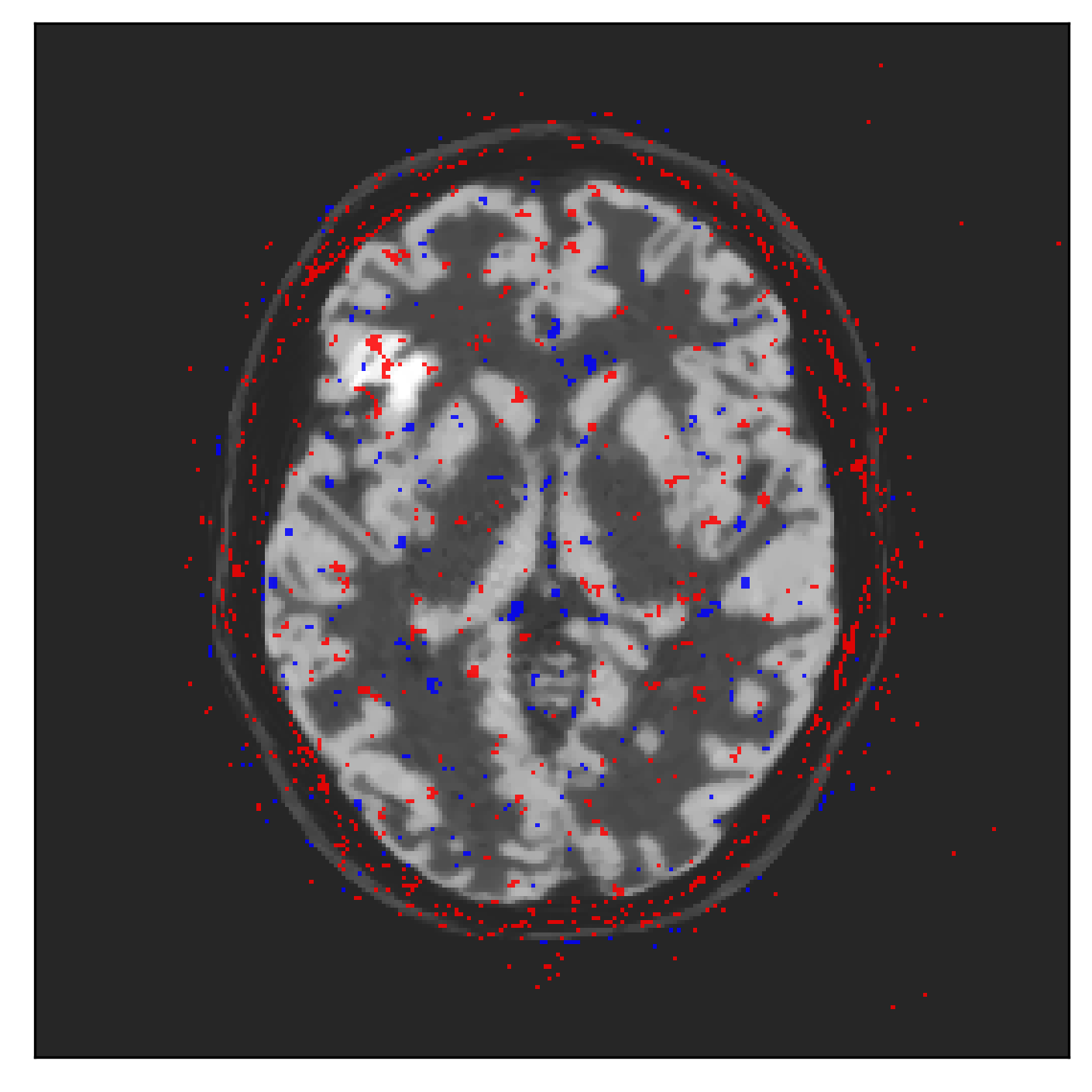}\\
				& (a) & (b) & (c) & (d)\\
			\end{tabular}
			%\caption{NPL posterior mean (a), $3\times$ the posterior standard deviation with same color scale as mean (b), posterior 95\% band on an horizontal profile through the lesion; blue line -- posterior mean, orange dotted -- $\lambda_{*opt}$ (c),  pixel-wise coverage success: grayscale -- $\lambda_{*opt}$ is inside the credible band, red, blue colors -- $\lambda_{*opt}$ is above or below, respectively~(d).}
			\label{fig:NPL_results1}
		\end{figure}

		\begin{figure}[H]
			\centering
			\vspace{-0.5cm}
			\begin{tabular}{lcccc}
				& NPL-mean & NPL-std $\times 3$ & Profile & Coverage\\
				\raisebox{1.3cm}{$\rho=\frac 1 2$} & \includegraphics[height=\xs]{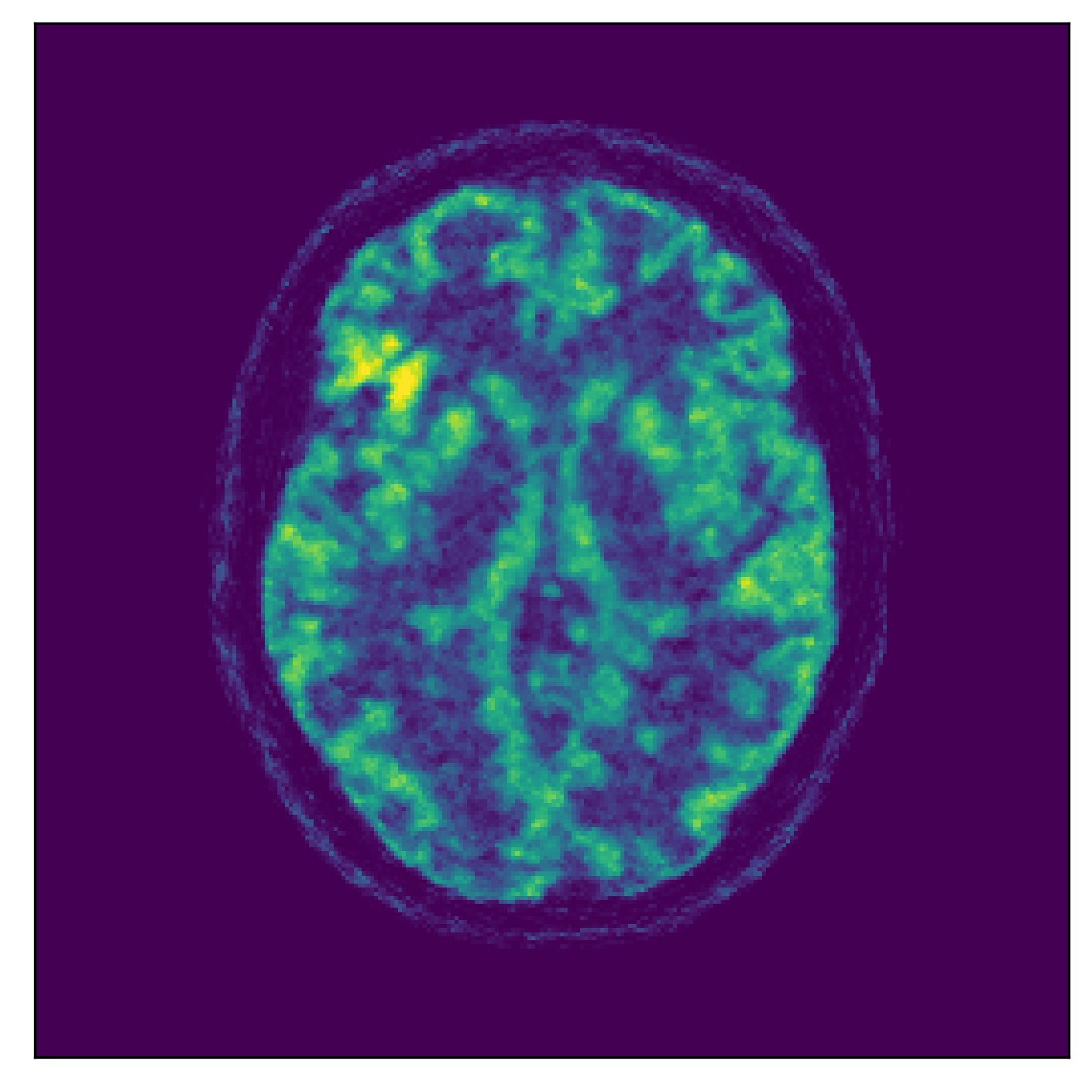} & \includegraphics[height=\xs]{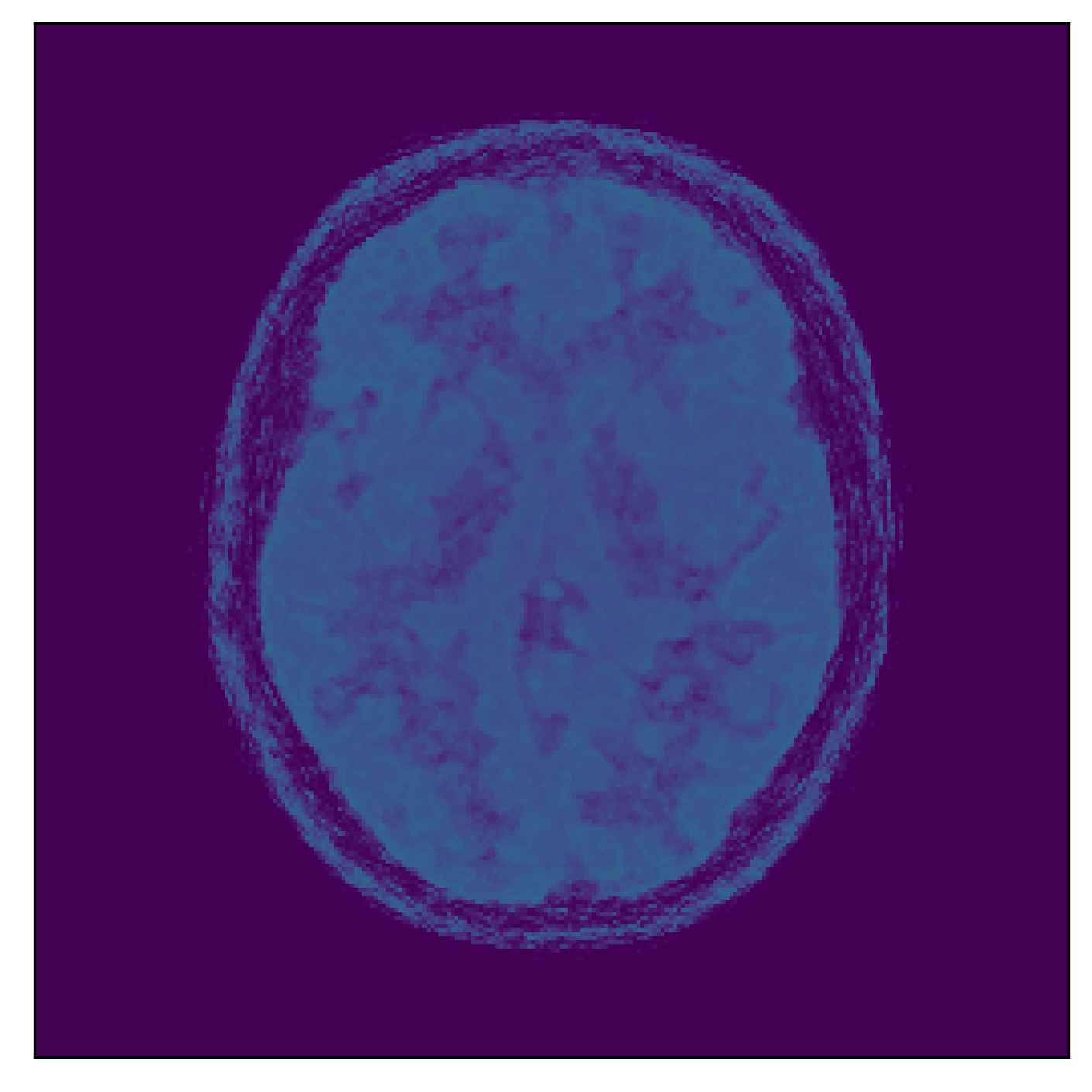} & \includegraphics[height=\xs]{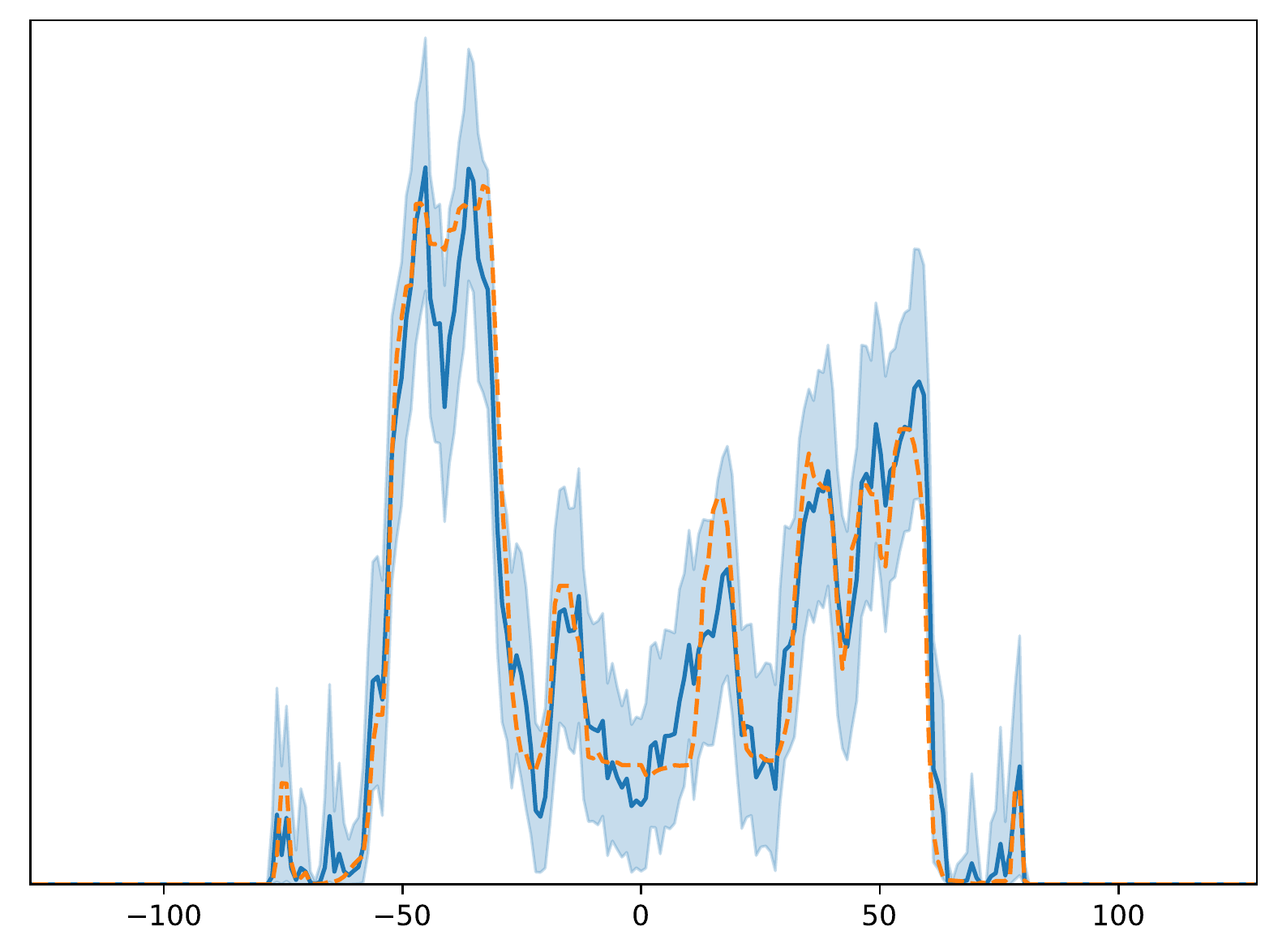} & \includegraphics[height=\xs]{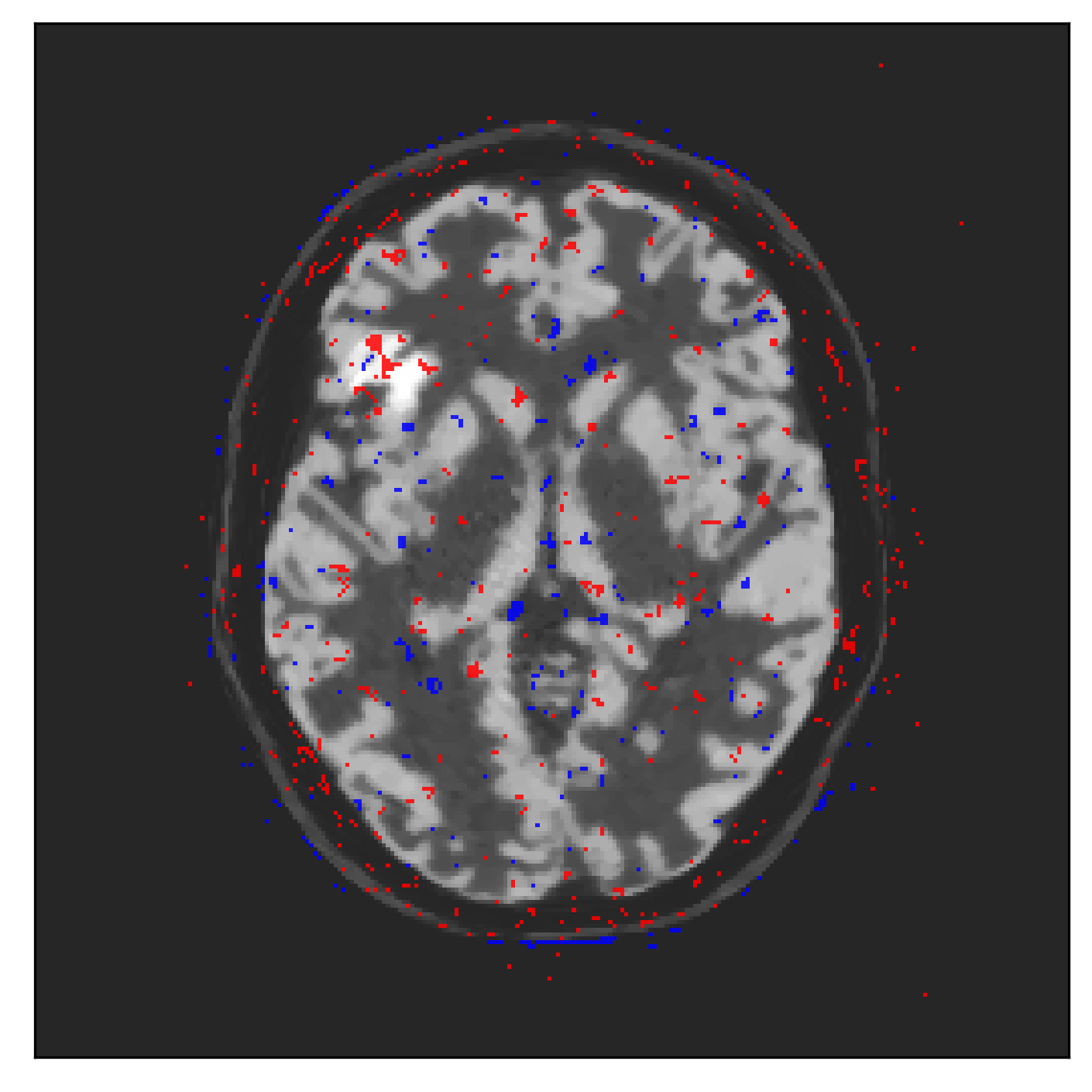}\\
				\raisebox{1.3cm}{$\rho=1$} & \includegraphics[height=\xs]{npl-mean-rho050.png} & \includegraphics[height=\xs]{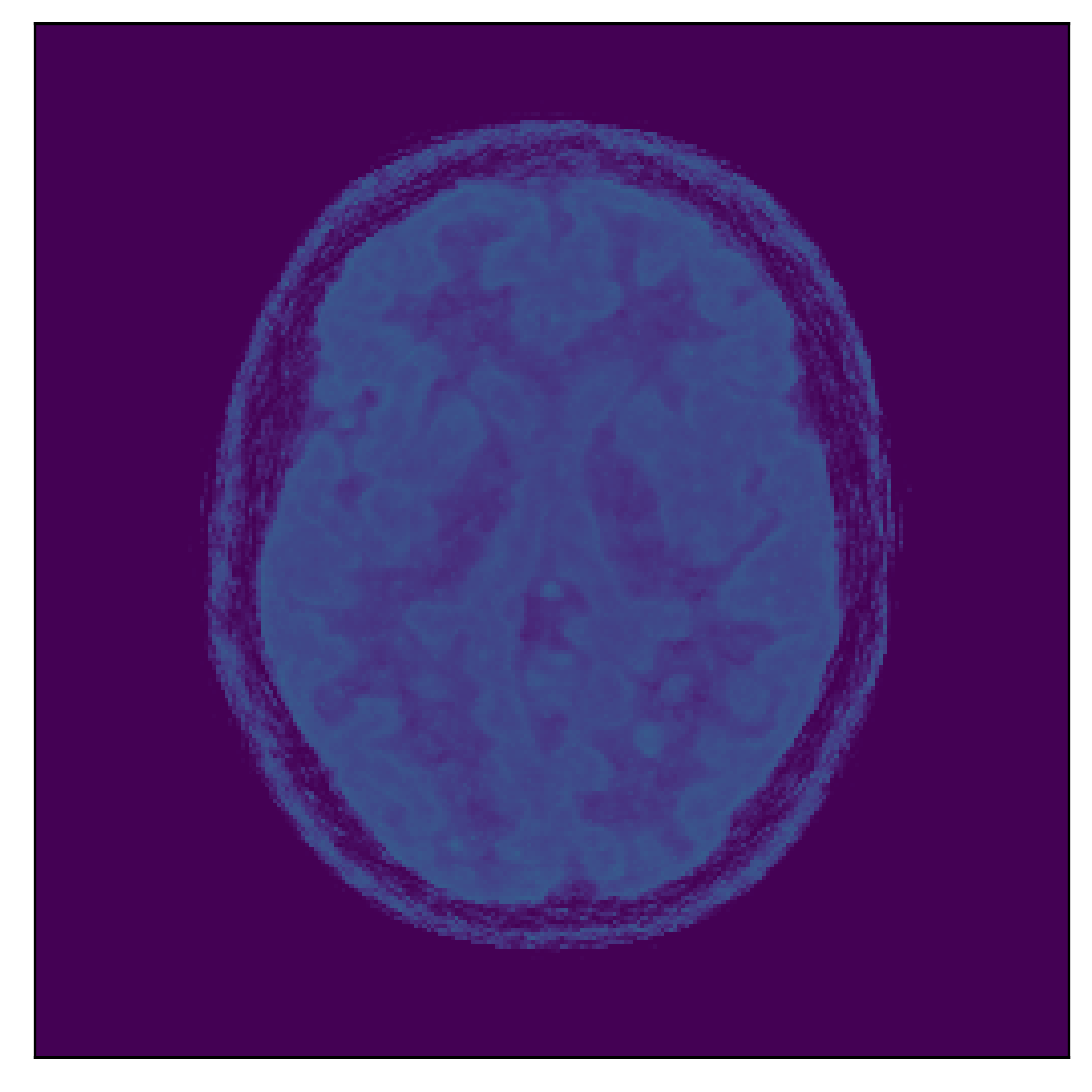} & \includegraphics[height=\xs]{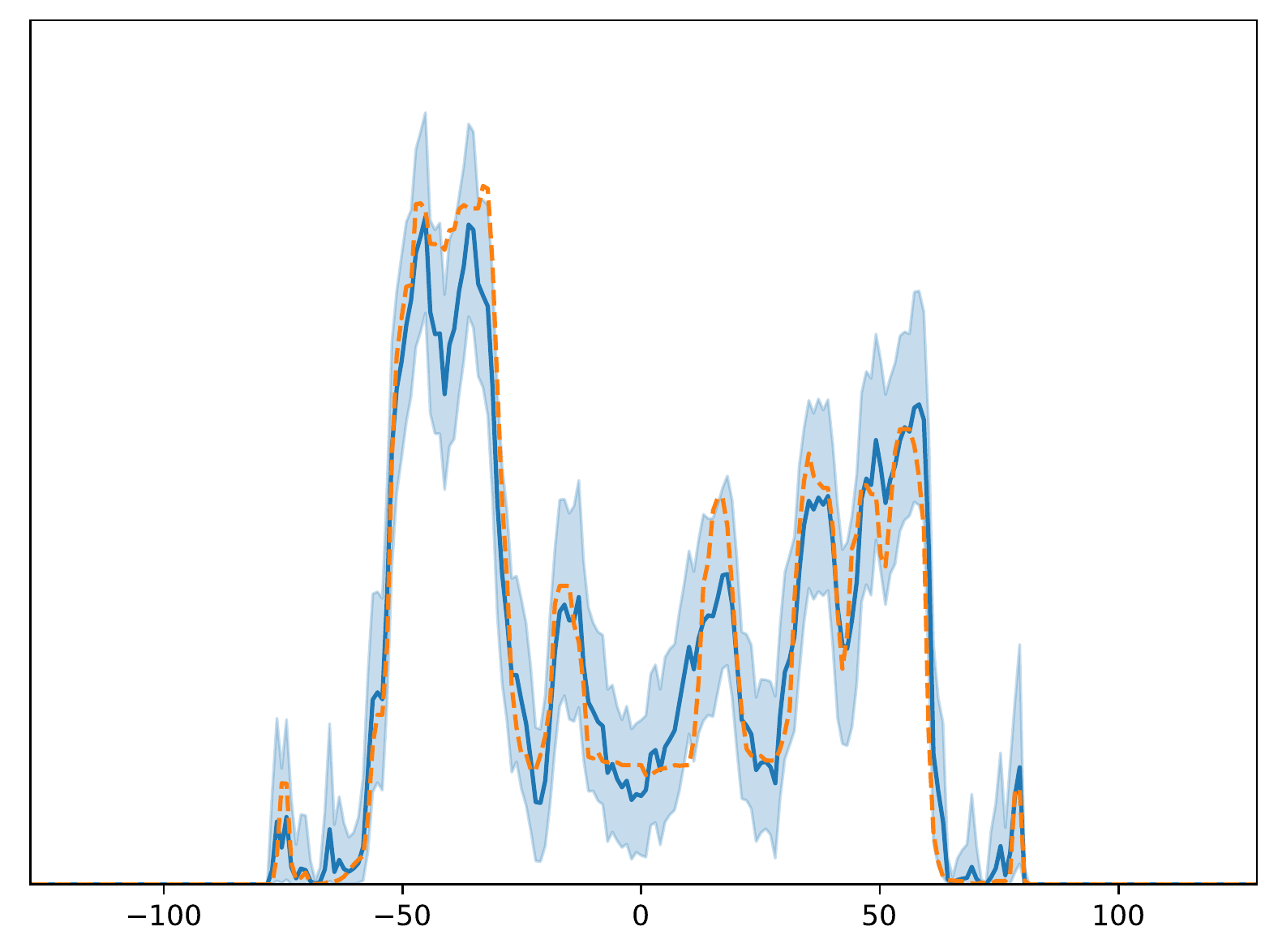} & \includegraphics[height=\xs]{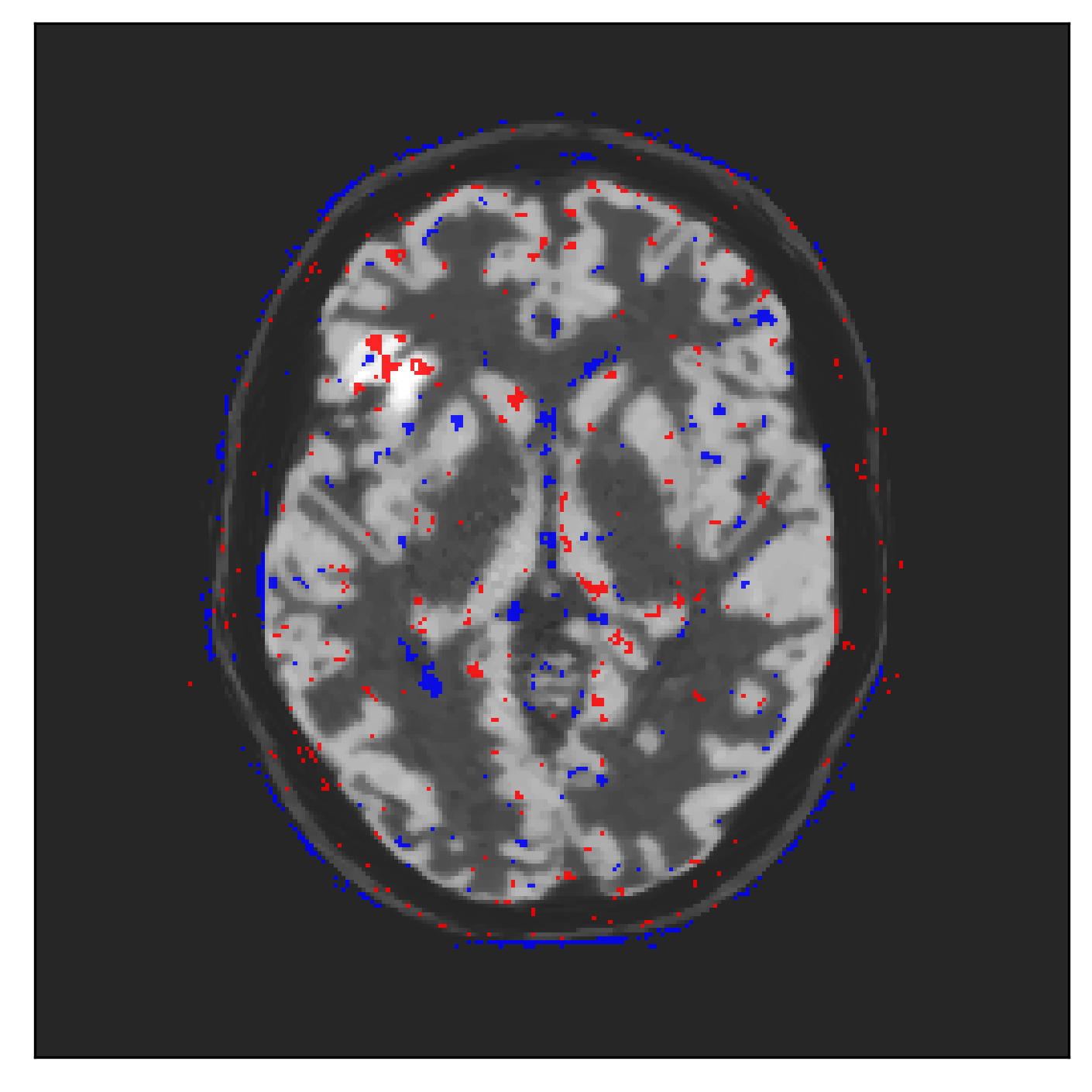}\\
				\raisebox{1.3cm}{$\rho=2$} & \includegraphics[height=\xs]{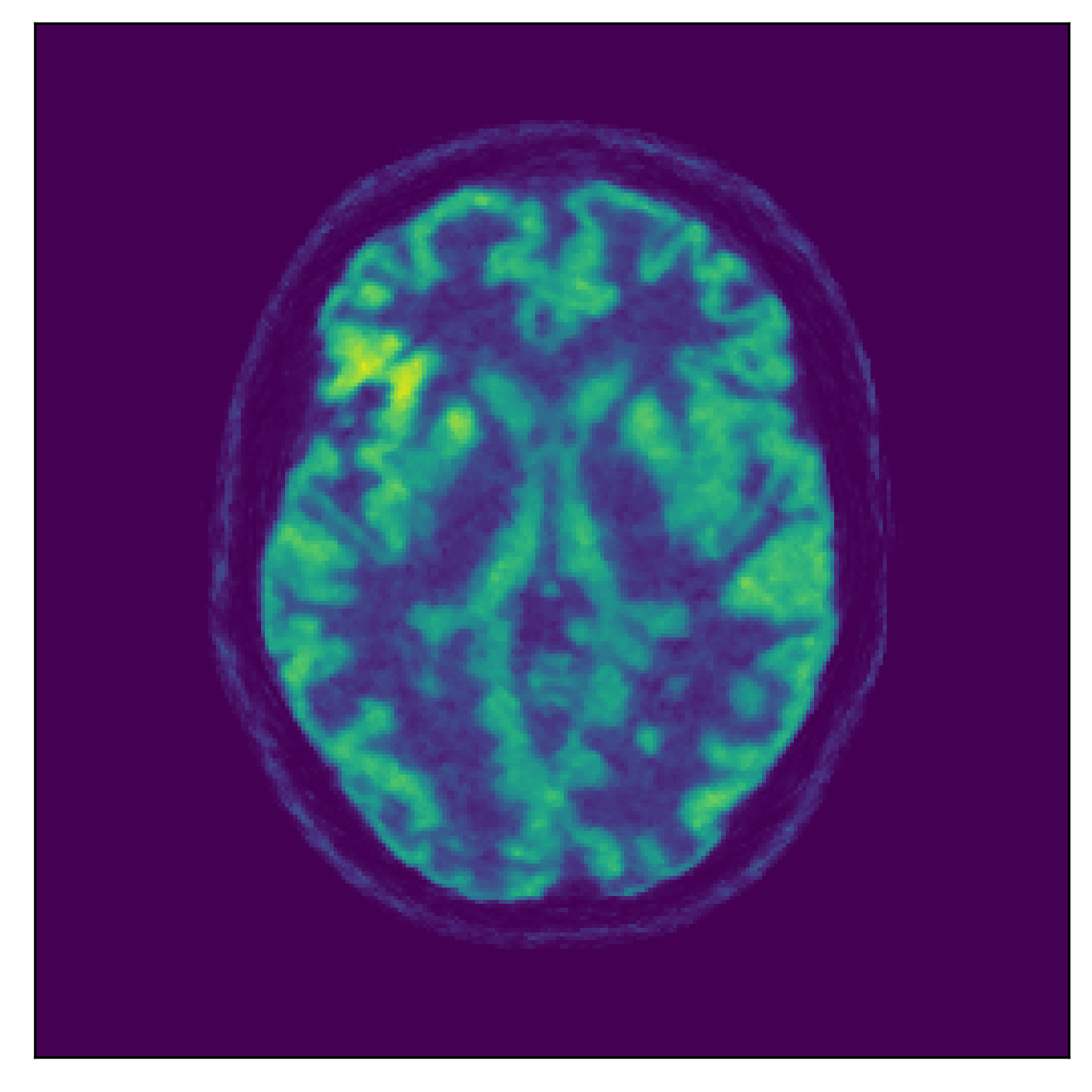} & \includegraphics[height=\xs]{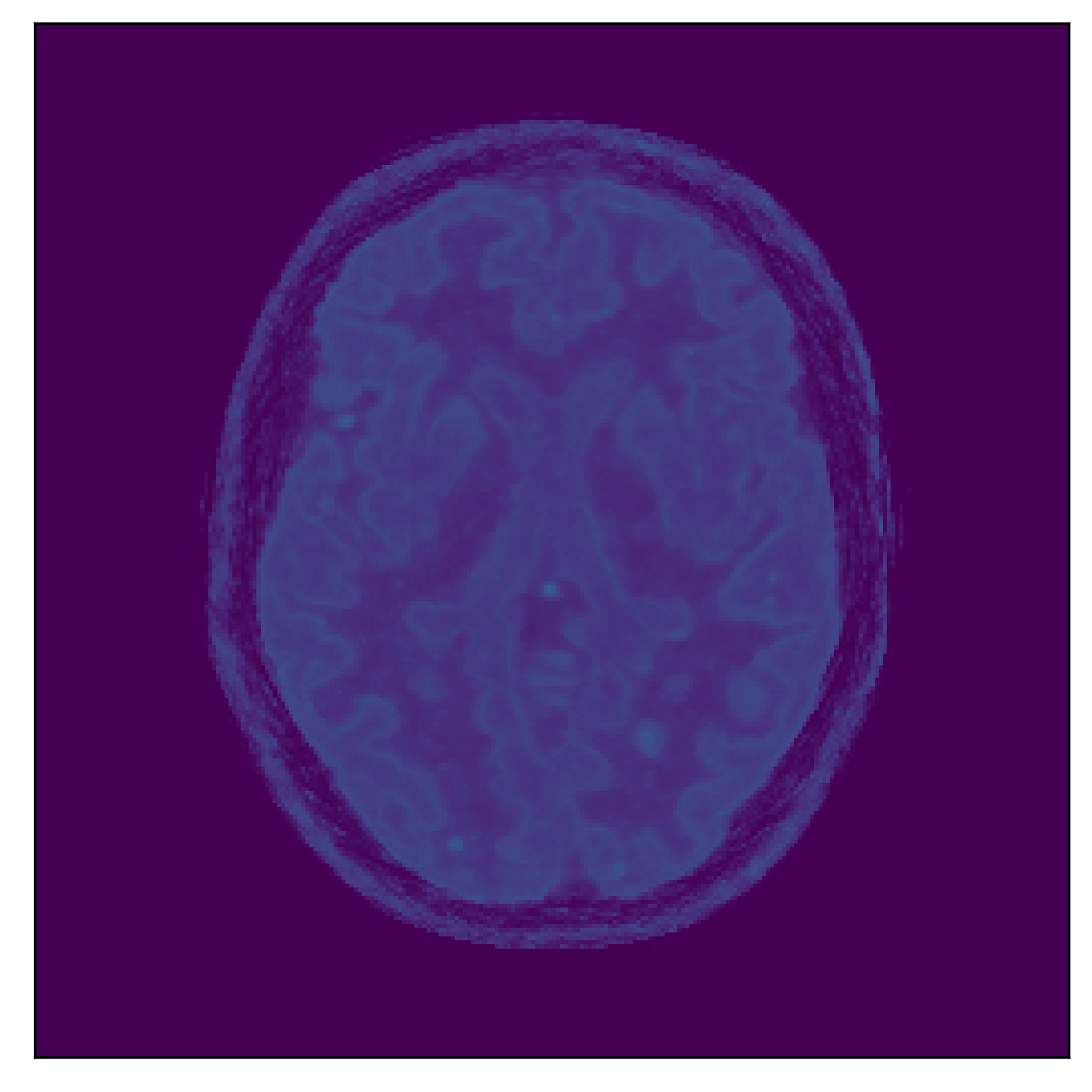} & \includegraphics[height=\xs]{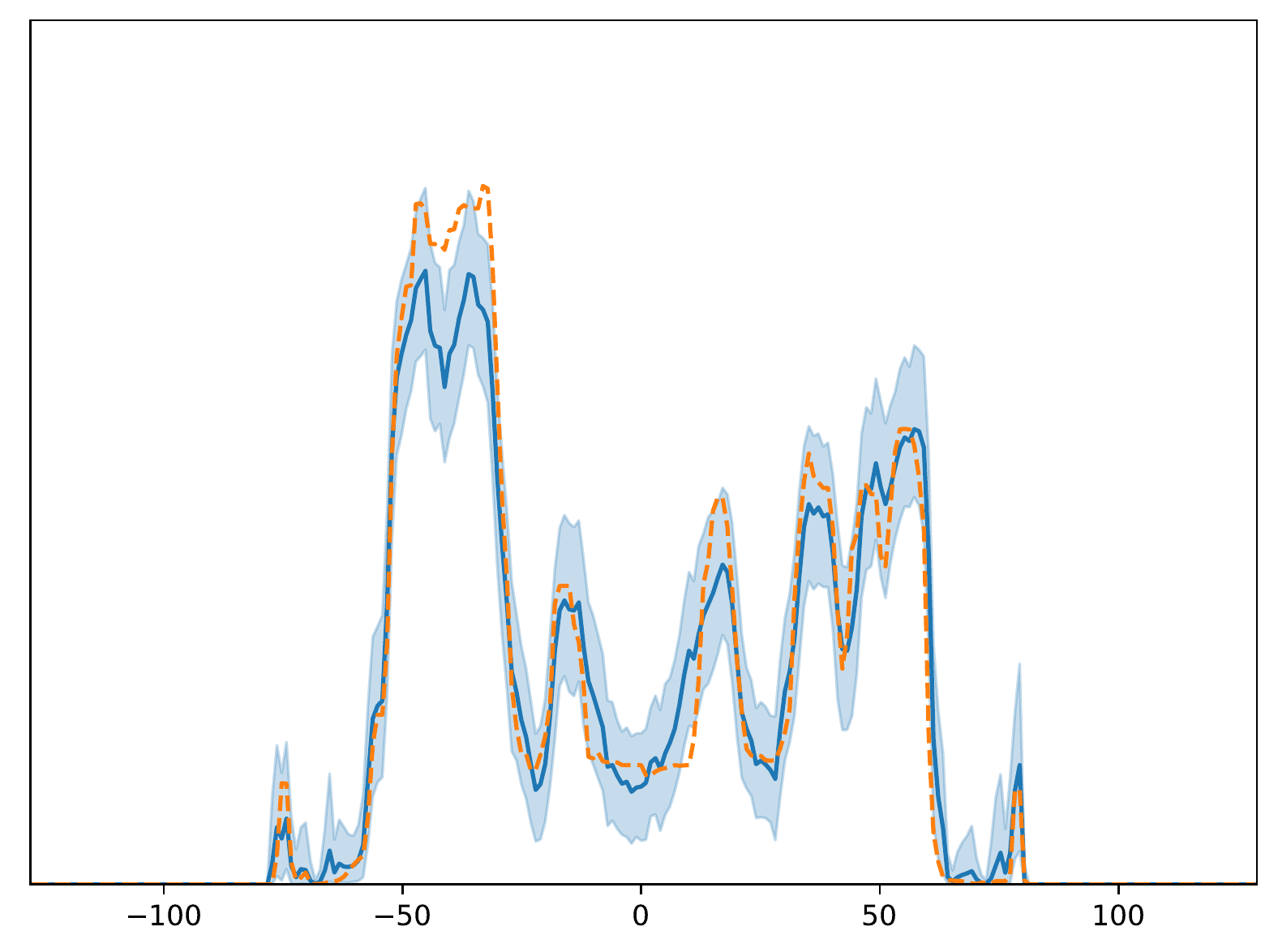} & \includegraphics[height=\xs]{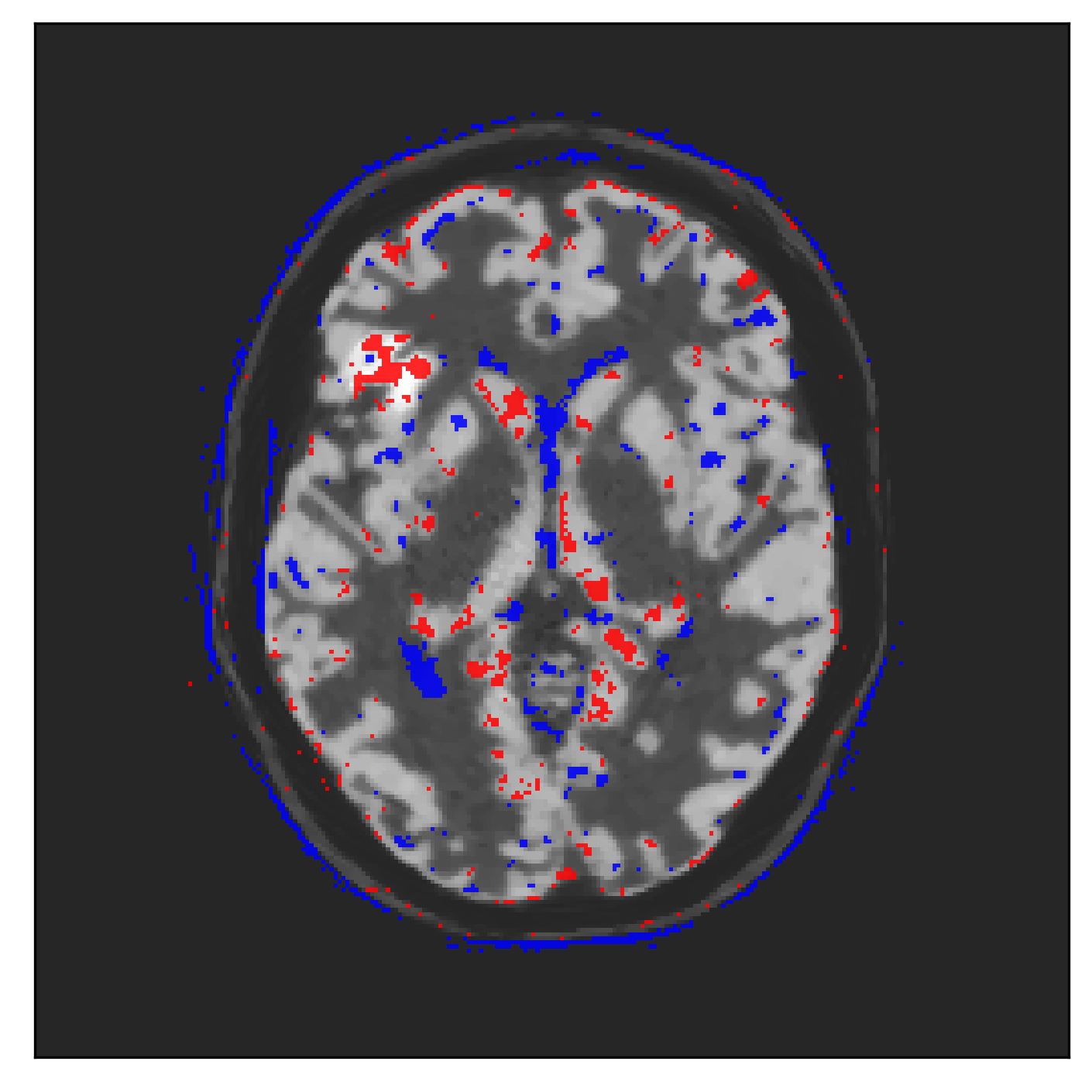}\\
				& (a) & (b) & (c) & (d)\\
			\end{tabular}
			\caption{NPL posterior mean (a), $3\times$ the posterior standard deviation with same color scale as mean (b), posterior 95\% band on an horizontal profile through the lesion; blue line -- posterior mean, orange dotted -- $\lambda_{*opt}$ (c),  pixel-wise coverage success: grayscale -- $\lambda_{*opt}$ is inside the credible band, red, blue colors -- $\lambda_{*opt}$ is above or below, respectively~(d).}
			\label{fig:NPL_results2}
		\end{figure}

		\vspace{-0.5cm}
		\begin{figure}[H]
			\centering
			\begin{tabular}{lcccc}
				& NPL-mean & NPL-std $\times 3$ & Profile & Coverage\\
				\raisebox{1.3cm}{\hphantom{$\rho=\frac 1 2$}}
				& \includegraphics[height=\xs]{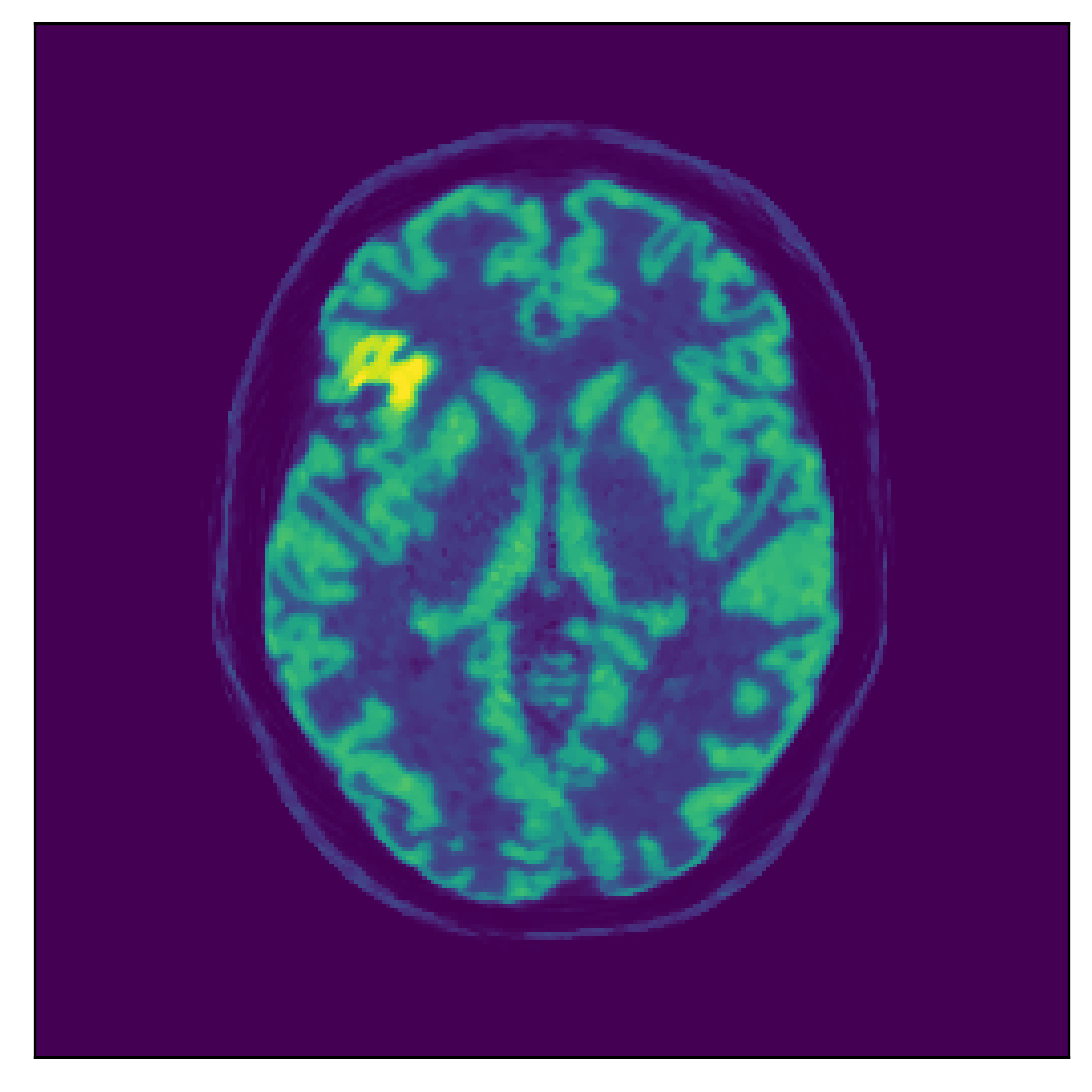} & \includegraphics[height=\xs]{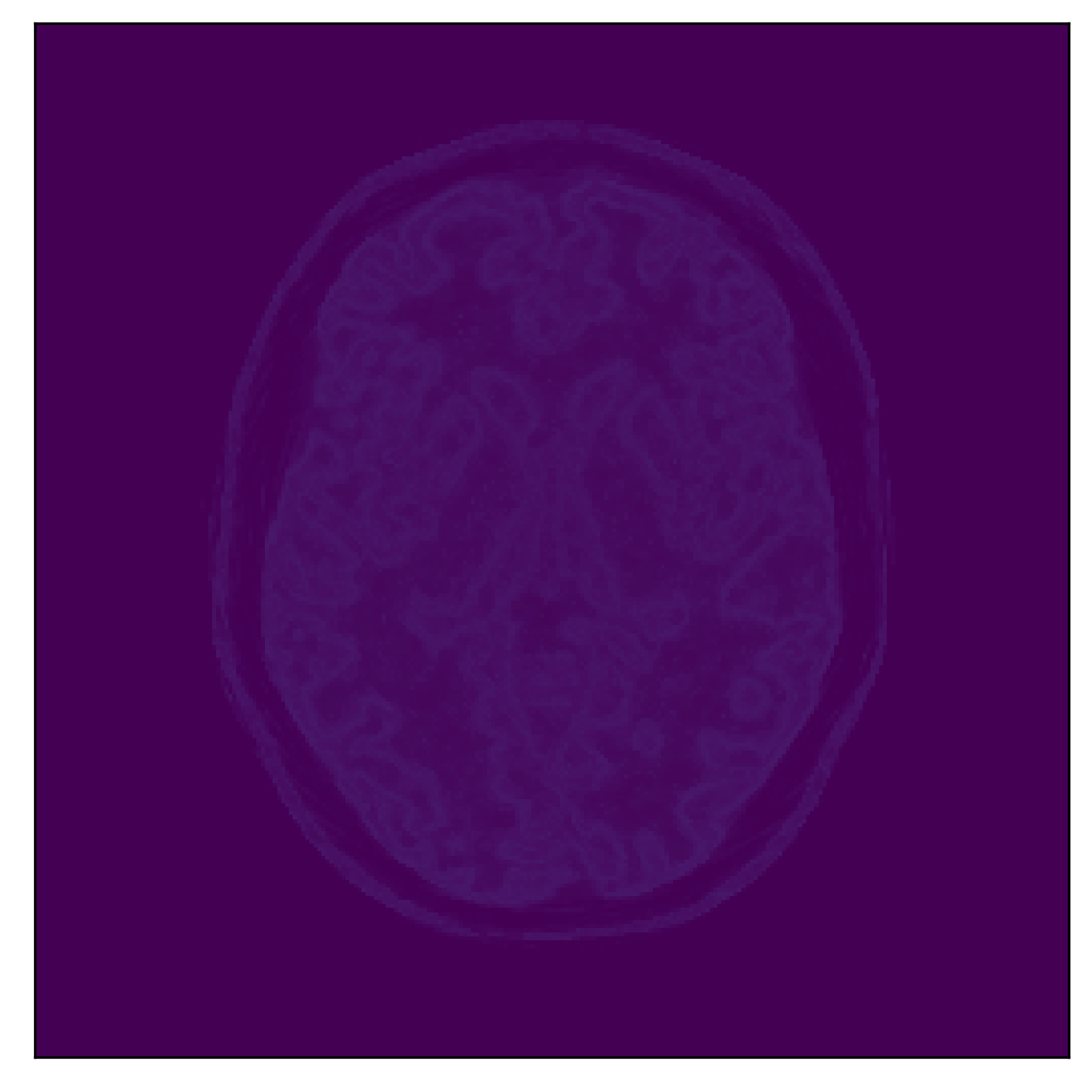} & \includegraphics[height=\xs]{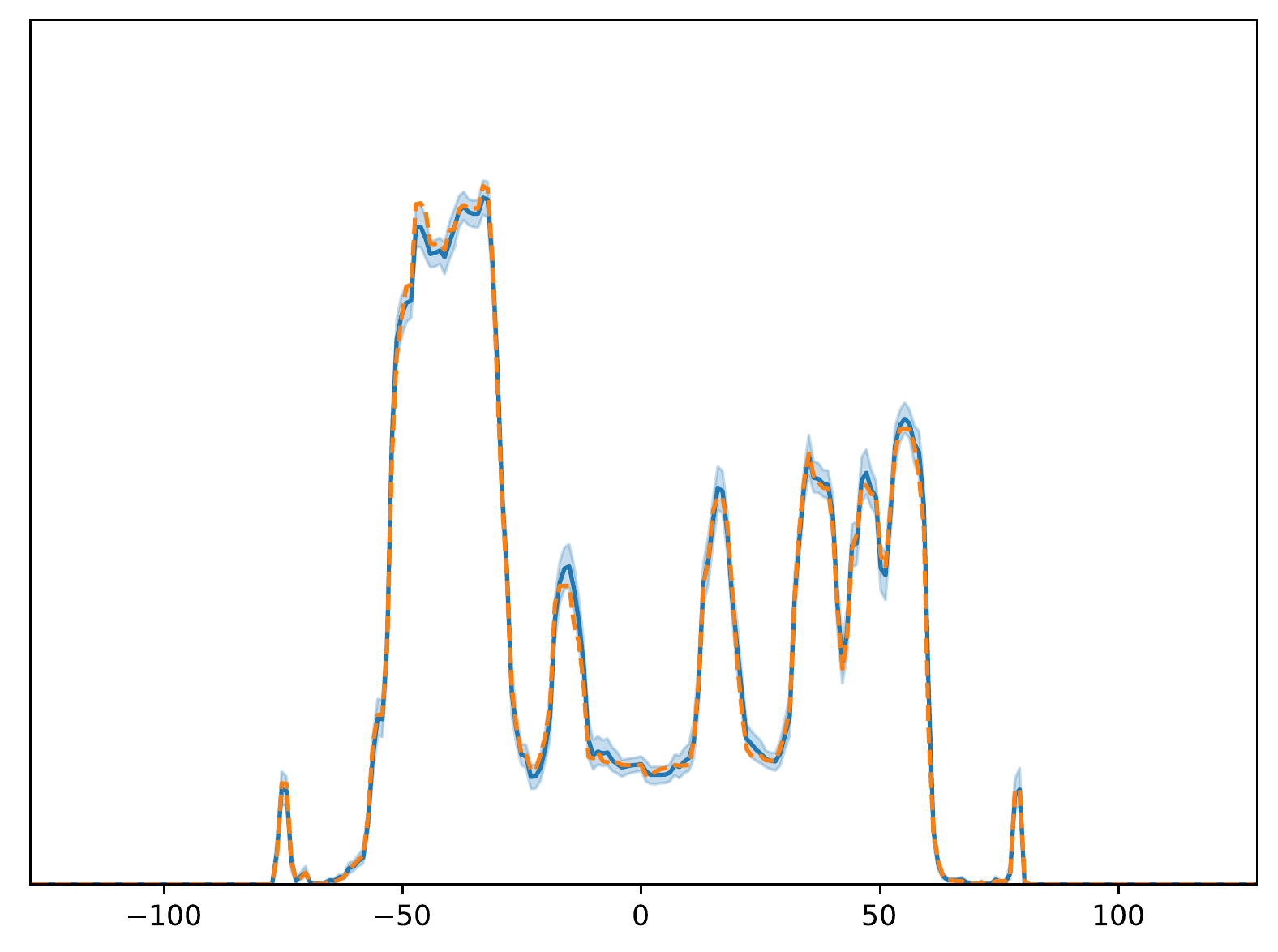} & \includegraphics[height=\xs]{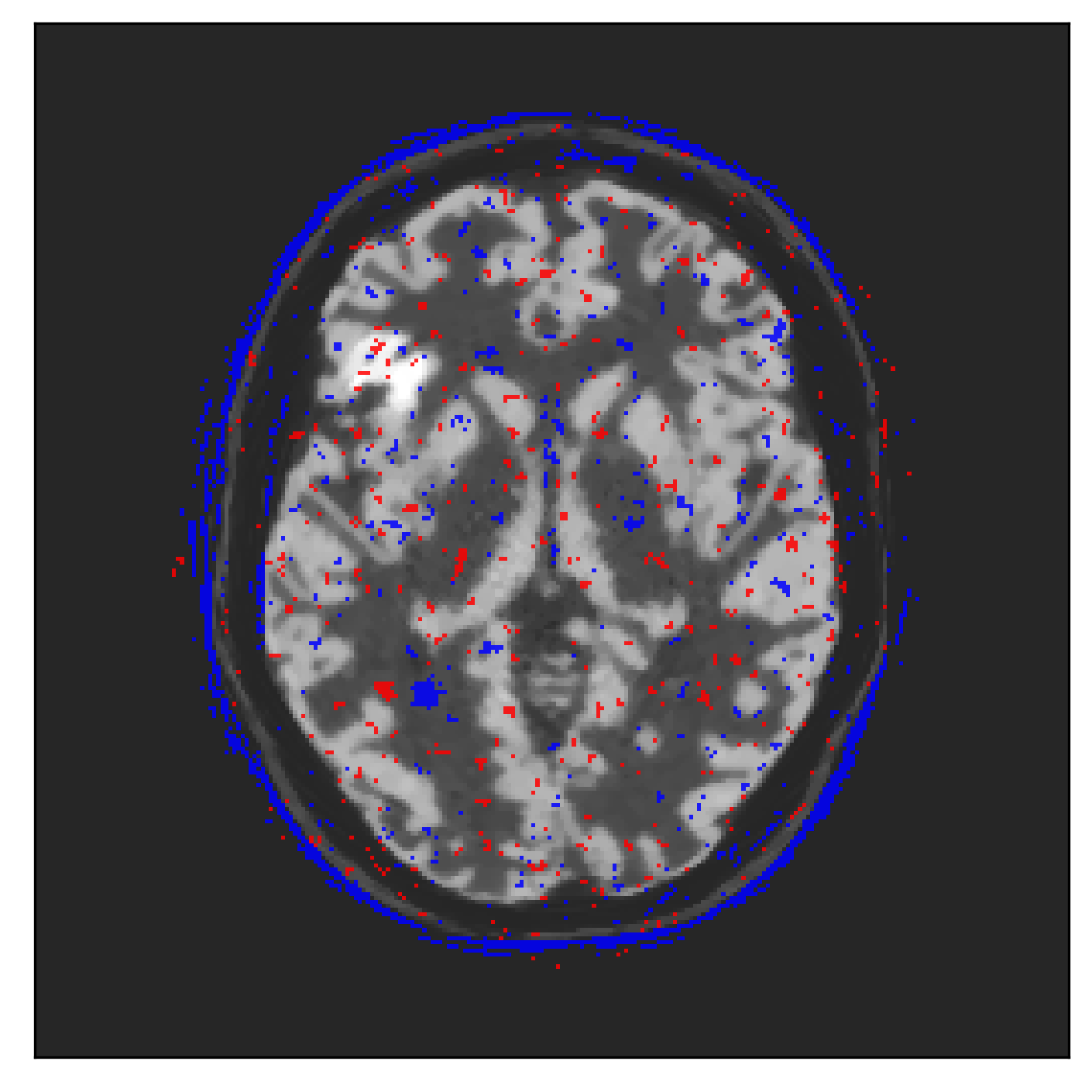}\\
				& (a) & (b) & (c) & (d)\\
			\end{tabular}
			\caption{NPL with $\rho=0.05$ for $t=100$}
			\label{fig:NPL_t=100}
		\end{figure}
		
		As expected, higher $\rho$ reinforce the effect of MRI on  reconstructions and posterior variance decreases with $\rho$ growing ($\mathrm{var}(\widetilde{\lambda}_b^t \, \mid \, Y^t, t) \sim (\rho)^{-1}$). As a rule of thumb, it seems reasonable not to exceed $\rho=1$ since the weight of pseudo-data from the misspecified model in the prior should not exceed the weight of observed data; see Remark~\ref{rem:parameter-theta-interpretation}.
		We also checked visually that NPL posterior mean with $\rho=0$ (no MRI) is indistinguishable from the MAP reconstruction with the same penalty tuning (see Section~\ref{app:NPLvsMAP}). This supports the claim in  Theorem~\ref{thm:asympt-distr-main} that the asymptotic distribution is concentrated not around $\lambda_*$ but a strongly consistent estimator for which we conjecture to coincide asymptotically with the MAP estimate.
		
		In Figure~\ref{fig:NPL_results2}(c) the coverage of $\lambda_{*opt}$ by pixel-wise 95\% credible bands is large almost for all pixels and all values of $\rho$ though the bias in the lesion dominates when $\rho > 1$. This can be explained by the choice of MRI images which do not contain at all of $\lambda_*$ in the lesion area; see Figures~\ref{fig:phantom} (a), (b).
		Visually it seems that $\rho=1$ is optimal for bias variance trade-off, however, this rule of thumb is reasonable only for moderate value of $t$ (hence, low number of counts in $Y^t$) and not in the asymptotic regime when $t\rightarrow +\infty$.  To highlight the latter we also consider the asymptotic behavior of NPL reconstruction by taking $t=100$ for the  regularization parameter $\beta^t/t = 10^{-3}$ (same as for $\beta_{min}$  in \eqref{eq:num-experiment:lambda-opt-def-practical}) and $\rho=0.05$ (see Figure~\ref{fig:NPL_t=100}). The point is that the  case of $t=100$ corresponds to almost noiseless data, so $\beta^t/t$ can be chosen in the ``optimal way''. Pixel-wise posterior bands capture most of the true signal  (see Figure~\ref{fig:NPL_t=100}(c)), except the blue region at the boundary of the cranium (see Figure~\ref{fig:NPL_t=100}(d)). This can be explained by the property of the GEM-algorithm (see Section~\ref{app:gem}) where the constructed parabolic majorizing surrogates which approximate poorly zero values at pixels.

		\section{Visual comparison between the NPL mean without MRI and the MAP reconstructions}
		\label{app:NPLvsMAP}
		\newcommand\xsa{3.45cm}
		\begin{figure}[H]
			\centering
			\begin{tabular}{lccc}
				\raisebox{1.6cm}{$t_1$} &
				\includegraphics[height=\xsa]{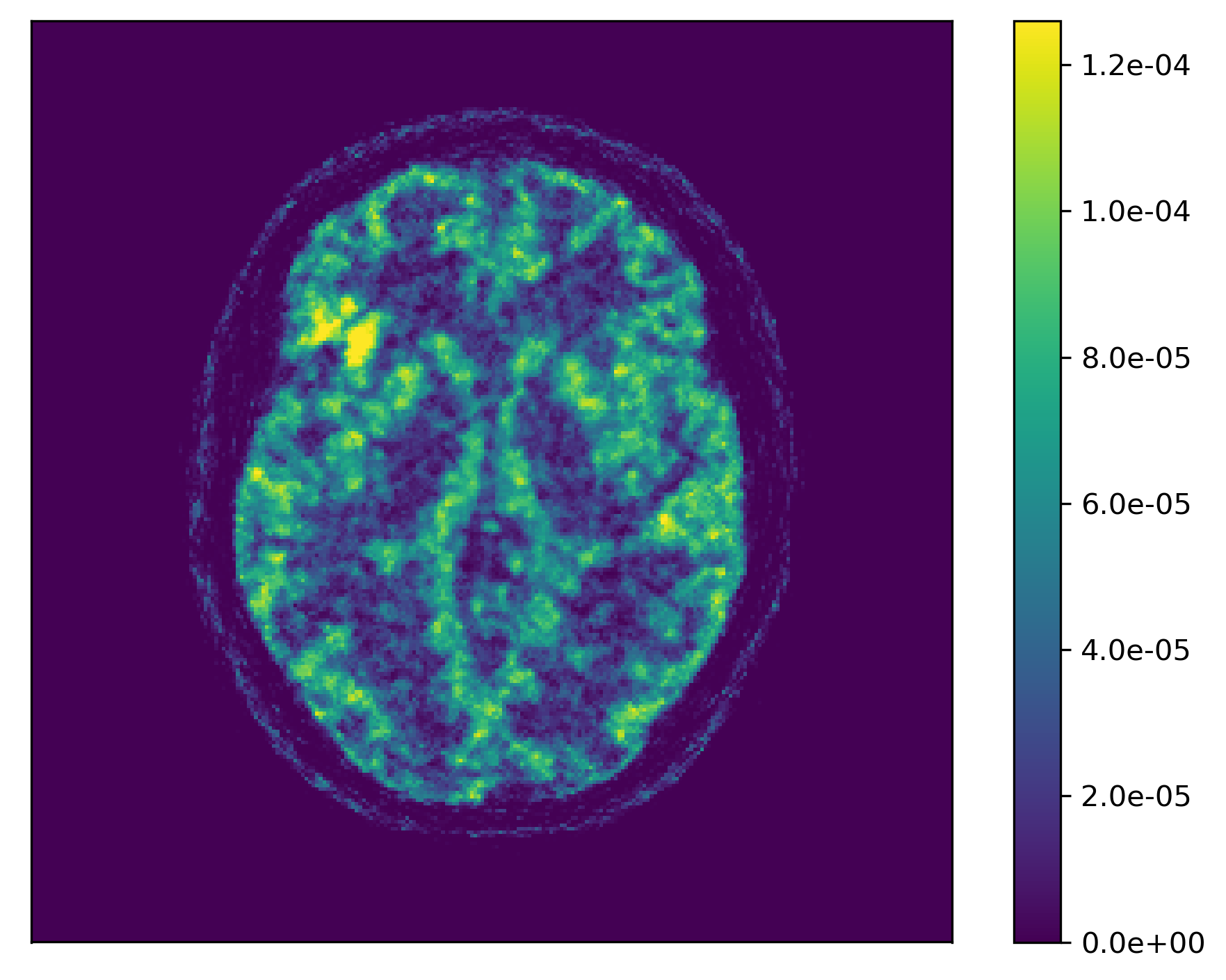} &
				\includegraphics[height=\xsa]{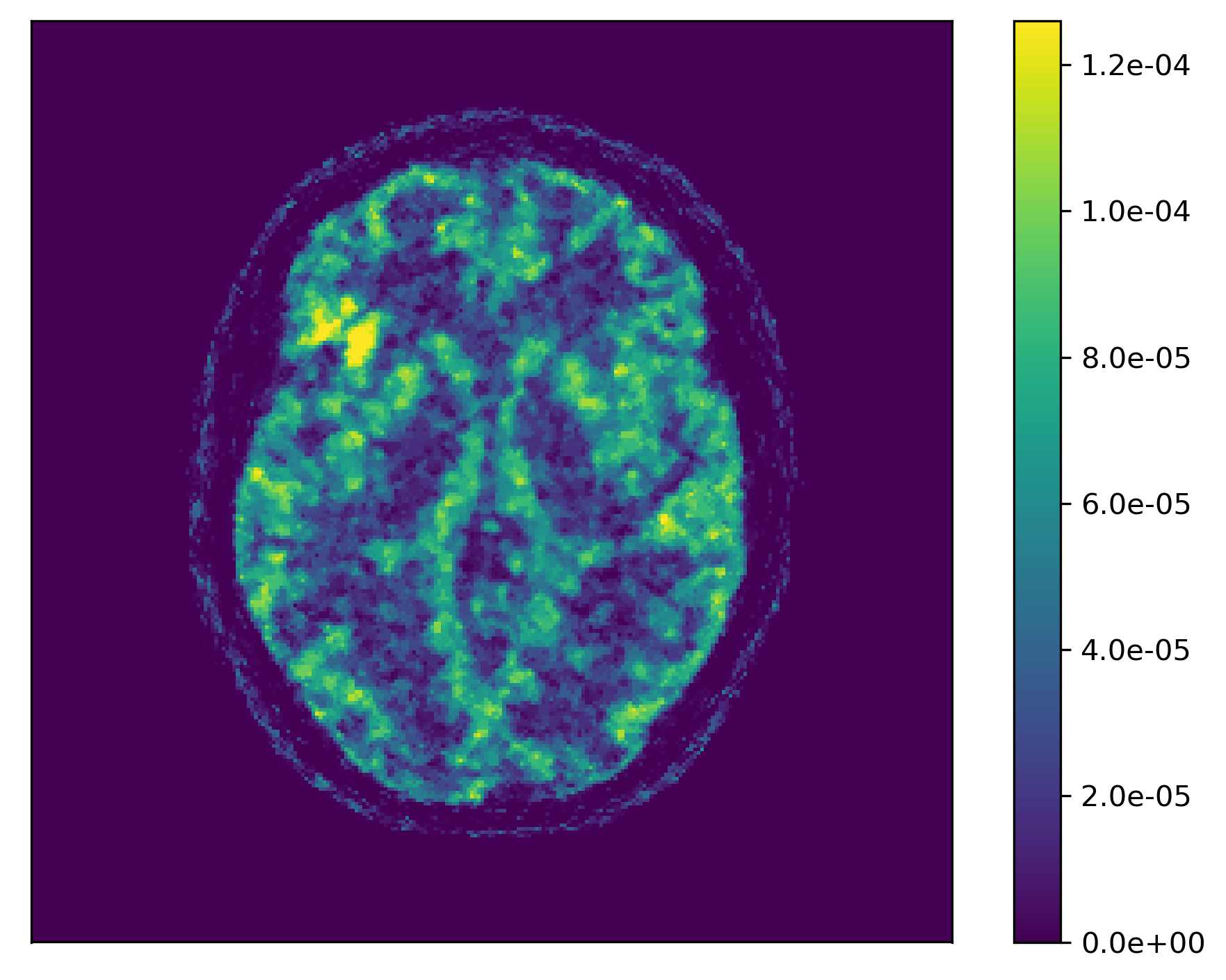} &
				\includegraphics[height=\xsa]{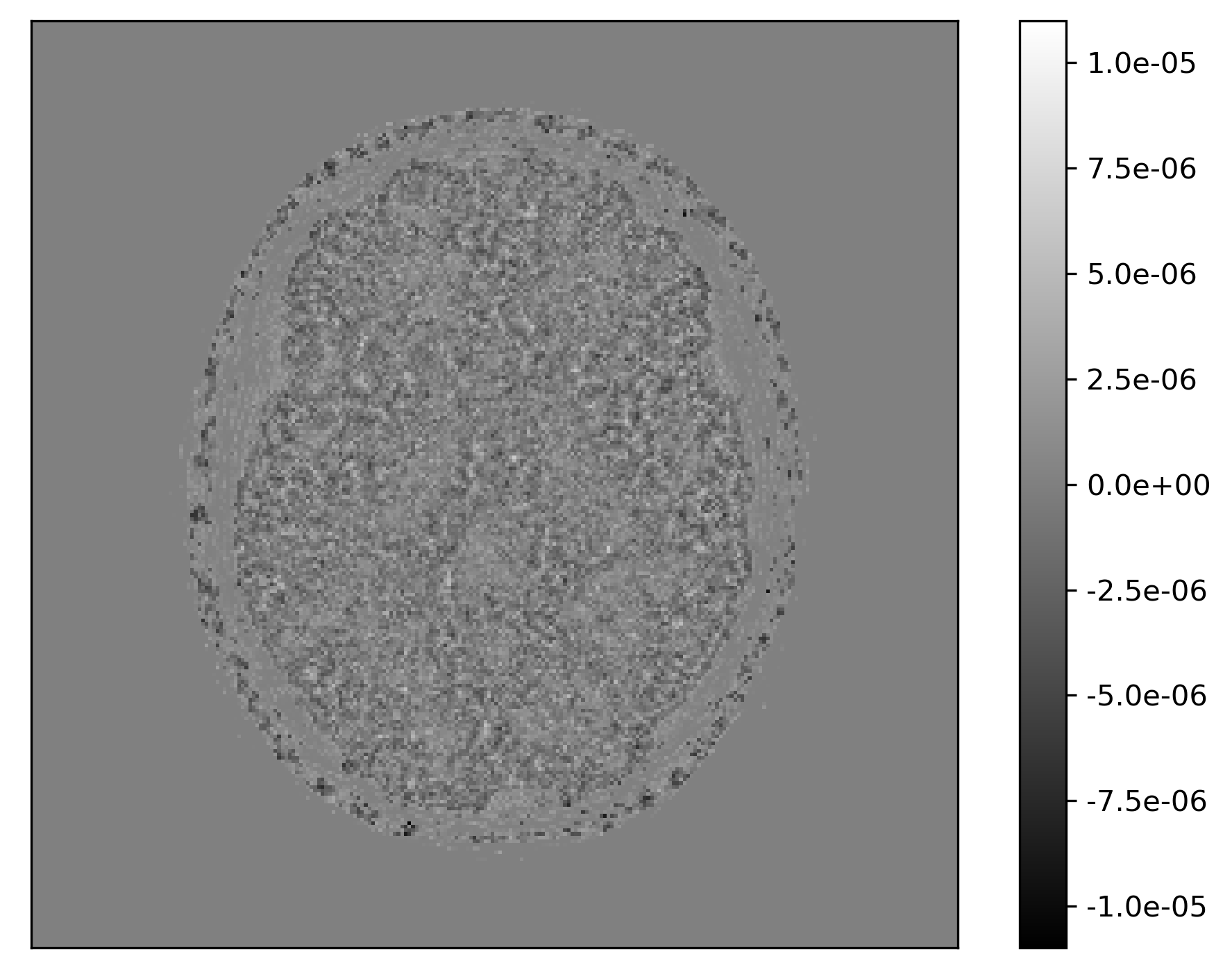} \\
				\raisebox{1.6cm}{$t_2$} &
				\includegraphics[height=\xsa]{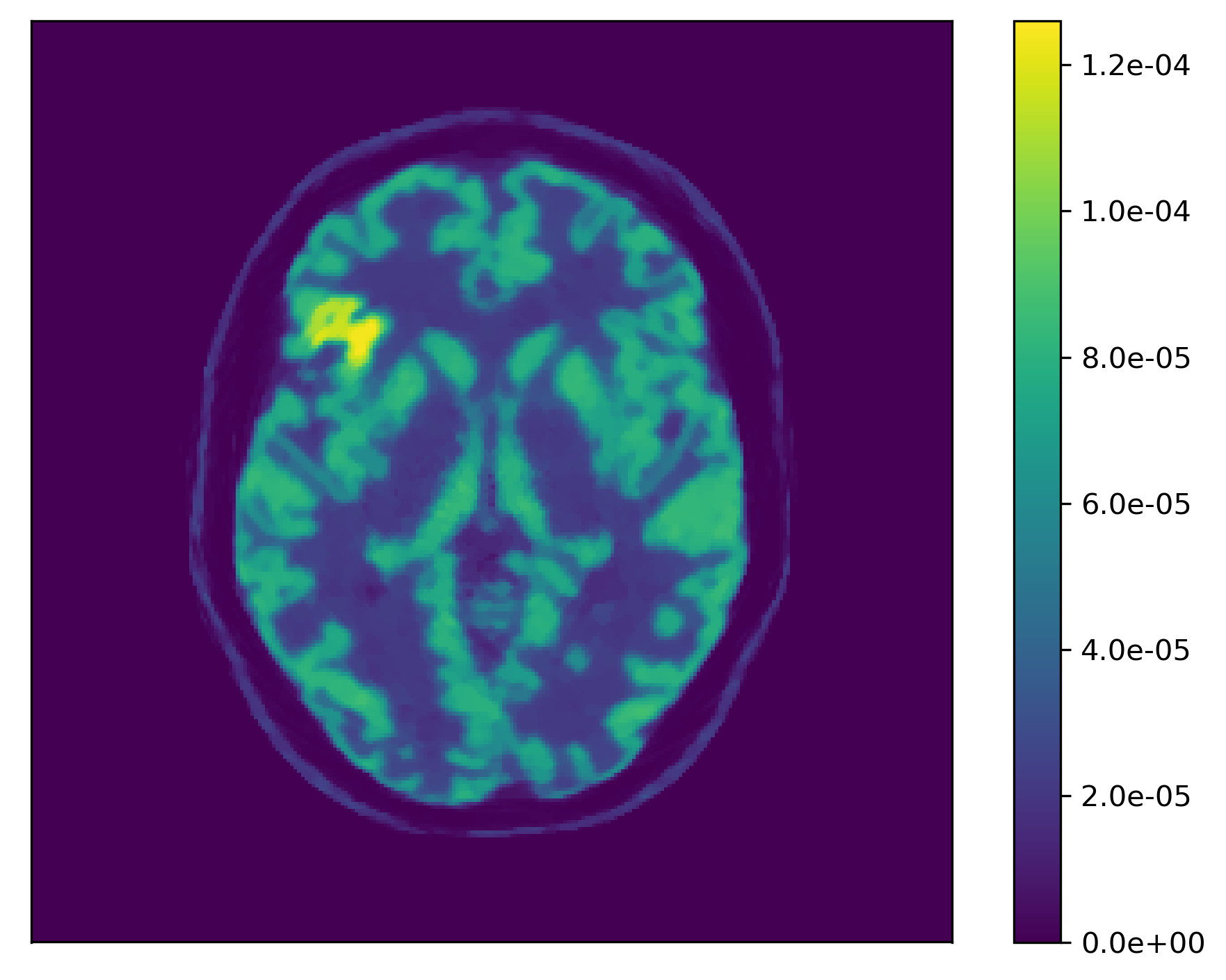} &
				\includegraphics[height=\xsa]{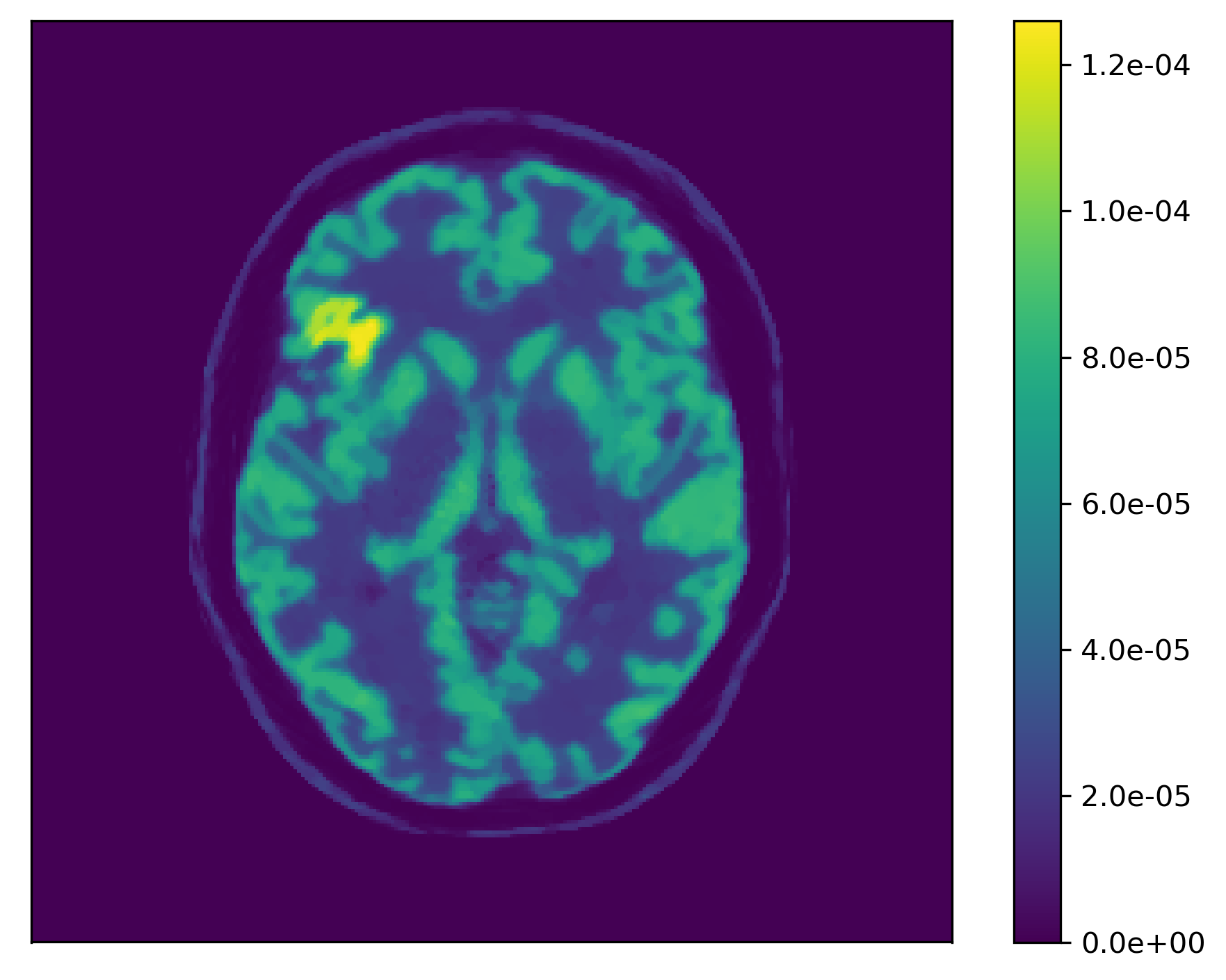} &
				\includegraphics[height=\xsa]{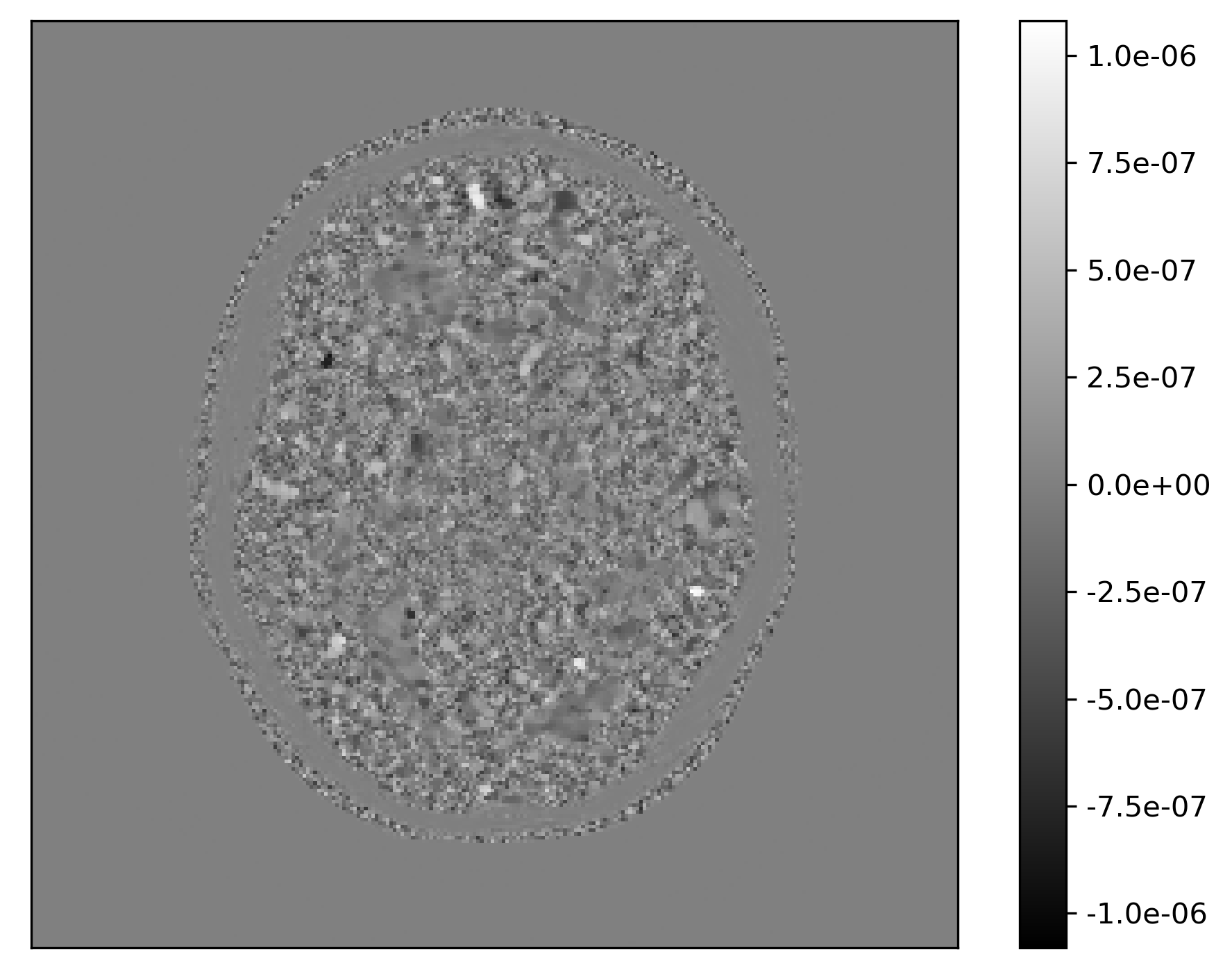} \\
				& NPL mean ($\rho=0$) & MAP & Difference
			\end{tabular}
			\caption{NPL mean without MRI ($\rho=0$) compared to MAP reconstruction for $t_1=1$ and $t_2=100$; $\zeta = 0.05$, $\nu = 0.15$, $\beta^t = 2\times 10^{-3}$; $B = 8192$ draws for NPL}
			\label{fig:npl_vs_map}
		\end{figure}
		In Figure~\ref{fig:npl_vs_map}, we contrast the empirical mean of NPL without MRI and the MAP estimate with same penalty tuning. For increasing  $t$ absolute differences between both images tend to zero (see scales in Figure~\ref{fig:npl_vs_map}) which is coherent with the result of Theorem~\ref{thm:asympt-distr-main} and also supports Conjecture~\ref{conj:asympt-distr-mle-pen-estimator} that MAP is the strongly consistent estimator for which \eqref{eq:thm:asympt-distr-main:strongly-consist-estim-positivity}-\eqref{eq:thm:asympt-distr-main:strongly-consist-estim-i0} hold. From the two simulations for $t_1 = 1$ and $t_2 = 100$ one may observe that the empirical  contraction rate of absolute differences is of order $t^{-1/2}$. This can be explained by the fact that for regular models with $n$ i.i.d observations (recall that model in \eqref{eq:poisson-model-pet} is regular for pixels intersected by LORs from $I_1(\Lambda^*)$), the next term beyond the normal approximation in the first order Edgeworth's expansion of the posterior decays with rate $n^{-1/2}$ (see \citet{pompe2021introducing}) which is equivalent to $t^{-1/2}$ in our case.
		
		\section{Remark on centering term of the posterior}
		\label{app:remark-centering}
		
		\begin{definition}
			We say that $U^t$ converges in conditional distribution to $V$ almost surely $Y^t$, $t\in (0, +\infty)$ if for every Borel set $A\in B(\R^n)$ the following holds:
			\begin{equation}\label{eq:convergence-cond-distr-as-def}
			P(U^t \in A \, \mid \, \mathcal{F}^t) \rightarrow P(V\in A) \text{ when }
			t\rightarrow +\infty, \text{ a.s. } Y^t, t\in (0, +\infty).
			\end{equation}
			This type of convergence will be denoted as follows:
			\begin{equation}
			U^t \xrightarrow{c.d.} U \text{ when } t\rightarrow +\infty, \text{ a.s. } Y^t, t\in (0, +\infty).
			\end{equation}
		\end{definition}
		
		Centering the distribution of $\widetilde{\lambda}_{b}^t$ at the true parameter $\lambda_*$ in (ii) does not allow to achieve conditional tightness almost surely $Y^t$, $t\in (0, +\infty)$ which we briefly explain below.

		As a part of the proof of Theorem~\ref{thm:asympt-distr-main} (see lemmas~\ref{lem:thm:asymp-disr:main-approximation-lemma},~\ref{lem:asymp-distr:quadratic-uv-approximation}) we show that 
		\begin{align}\label{eq:rem:asymp-distr:quadr-approximation}
		\Pi_{\mathcal{U}}(\widetilde{\lambda}_b^t - \widehat{\lambda}_{sc}^t) - u^t(\widetilde{\xi}^t) \xrightarrow{c.p.} 0
		\text{ when } t\rightarrow +\infty, \text{ a.s. } Y^t, t\in (0, +\infty),
		\end{align}
		where
		\begin{align}
		&\widetilde{\xi}^t = (\dots, \sqrt{t}\dfrac{\widetilde{\Lambda}_{b,i}^t - \widehat{\Lambda}_{sc,i}^t}{\sqrt{\widehat{\Lambda}_{sc,i}^t}}, \dots), \, i\in I_1(\Lambda^*), \\
		\begin{split}
		\label{eq:rem:asymp-distr:splitting-candidate-def}
		&u^t(\xi) = \argmin_{\substack{u:(1-\Pi_{\mathcal{V}})\widehat{\lambda}^t_{sc} + \frac{u}{\sqrt{t}}+ w\succeq 0, \\ u\in \mathcal{U}, \, w\in \mathcal{W}}} 
		-u^T (A_{I_1(\Lambda^*)})^T(\widehat{D}^t_{I_1(\Lambda^*)})^{-1/2}\xi + \frac{1}{2}u^T \widehat{F}^t_{I_1(\Lambda^*)}u,\\  
		\end{split}\\
		\label{eq:rem:asymp-distr:splitting-candidate-fisher-def}
		&\widehat{D}^t_{I_1(\Lambda^*)} = \mathrm{diag}(\dots, \widehat{\Lambda}^t_{sc, i}, \dots), \, i\in I_1(\Lambda^*), \\ 
		&\widehat{F}^t_{I_1(\Lambda^*)} = \sum\limits_{i\in I_1(\Lambda^*)}\dfrac{a_ia_i^T}{\widehat{\Lambda}_{sc,i}^t} = (A_{I_1(\Lambda^*)})^T (\widehat{D}^t_{I_1(\Lambda^*)})^{-1}A_{I_1(\Lambda^*)}.
		\end{align}
		That is the conditional tightness (and also the asymptotic distribution) of $\Pi_{\mathcal{U}}(\widetilde{\lambda}_b^t - \widehat{\lambda}_{sc}^t)$ asymptotically coincides with the one of $u^t(\widetilde{\xi}^t)$ being the minimizer of a quadratic function on a polyhedral set depending on $\widehat{\lambda}_{sc}^t$. In the proof we show that conditional tightness of $u^t(\widetilde{\xi}^t)$ is implied by tightness of $\widetilde{\xi}^t$ (this is especially obvious if the constraints in \eqref{eq:rem:asymp-distr:splitting-candidate-def} are not active for large $t$, e.g.,  when $\lambda_* \succ 0$) and that under the assumptions of the theorem it holds that 
		\begin{align}\label{eq:rem:asymp-distr:scale-part-normality}
		\begin{split}
		&(\dots, \sqrt{t}\dfrac{\widetilde{\Lambda}_{b,i}^t - \frac{Y_i^t}{t}}{\sqrt{\widehat{\Lambda}_{sc,i}^t}}, \dots) \xrightarrow{c.d.} \mathcal{N}(0,I) \text{ when } t\rightarrow +\infty, 
		\text{ a.s. } Y^t, t\in (0, +\infty), \\
		&I \text{ -- identity matrix of size } {\# I_1(\Lambda^*)} \times {\# I_1(\Lambda^*)}.
		\end{split}
		\end{align}
		From \eqref{eq:rem:asymp-distr:splitting-candidate-def}-\eqref{eq:rem:asymp-distr:scale-part-normality} and the Prohorov's theorem on tightness of weakly convergent sequences or r.v.s, 
		the asymptotic behavior (tightness, distribution) of $u^t(\widetilde{\xi}^t)$ is essentially depends on the term 
		$(\dots, \sqrt{t}\dfrac{\widehat{\Lambda}_{sc,i}^t - \frac{Y_i^t}{t}}{\sqrt{\widehat{\Lambda}_{sc,i}^t}}, \dots)$, $i\in I_1(\Lambda^*)$. For tightness this term needs to be asymptotically bounded for almost any trajectory $Y^t$, $t\in (0, +\infty)$, which is exactly asked in~\eqref{eq:thm:asympt-distr-main:strongly-consist-estim-i1} (in a slightly weakened form).

		Now, if we center $\widetilde{\lambda}_b^t$ on $\lambda_*$ one finds that $\widehat{\lambda}^t_{sc}$ must be replaced everywhere with $\lambda_*$ in formulas \eqref{eq:rem:asymp-distr:splitting-candidate-def}-\eqref{eq:rem:asymp-distr:scale-part-normality} and, most importantly, the latter term is now  equals  	
		$(\dots, \dfrac{Y_i^t - t\Lambda_i^*}{\sqrt{t\Lambda_i^*}}, \dots)$
		being asymptotically standard normal (see Section~\ref{app:limit-thms} in Appendix).
		Therefore, the mean of the asymptotic distribution of $\sqrt{t} \Pi_{\mathcal{U}}(\widetilde{\lambda}_b^t - \lambda_*)$ depends on the trajectory of $(Y^t_i-t\Lambda^*_i) / \sqrt{t\Lambda^*_i}$, $i\in I_1(\Lambda^*)$, which is almost surely unbounded infinitely often on $t\in (0, +\infty)$ in view of the Law of Iterated Logarithm for~$Y^t$ (see formula \eqref{eq:appendix:lil} in Appendix). So the tightness for $\sqrt{t} \Pi_{\mathcal{U}}(\widetilde{\lambda}_b^t - \lambda_*)$ almost surely for any trajectory $Y^t$, $t\in (0, +\infty)$ is impossible.
		A very similar behavior for centering of the posterior distribution for weighted bootstrap was also observed in Theorem~3.3 from~\citet{ng2020random}.

		\section{MRI data and the mask condition}
		\label{app:sect:MRI-mask-cond}
		
		Below we consider a geometrical interpretation of the non-expansiveness condition based on representation of designs $A$, $A_{\mathcal{M}}$ as weighted Radon transforms over the space of discrete images. We show that failure of this  condition implies presence of a segment in $M\in \mathcal{M}$ which is badly aligned with respect to the convex hull of the tracer support. To avoid such situations in practice, we propose to preprocess MRI images before using them in the context of ET which is explained in the end of this section.
		
		For simplicity, let $k=1$, i.e., MRI data consists of one segmented image $\mathcal{M} = \{M\}$,  and let  
		\begin{align}
		\label{eq:non-expansive:mask:proj-radon-1}
		\Gamma &= \{\gamma_i\}_{i=1}^{d} \text{ be the set of rays available in the acquisition geometry}.
		\end{align}
		
		Assume that $A = (a_{ij})$ is a discretized version of some weighted Radon transform  on set of rays $\Gamma$ with positive weight $W$. That is
		\begin{align}
		\label{eq:non-expansive:mask:proj-radon-2}
		a_{ij} &= \int_{\gamma_i} W(x, \gamma_i)\, \mathds{1}_{j}(x)\, dx, \, \gamma_i \in \Gamma, \, j\in \{1, \dots, p\}, \\
		\label{eq:non-expansive:mask:proj-radon-w}
		W &= W(x,\gamma), \, (x,\gamma) \in \R^2 \times T\Sp^1, \, 0 < c \leq W \leq C,
		\end{align}
		where $dx$ denotes the standard Lebesgue measure on ray $\gamma_i$,  $\mathds{1}_{j}(x)$ is the indicator function of pixel $j$ on the image. Weight $W(x,\gamma)$ is some known sufficiently regular function of spatial coordinates and oriented rays in $\R^2$ which are parameterized by $T\Sp^1$ (tangent bundle of the unit sphere, see e.g., \citet{natterer2001mathematics}). Projectors defined by the formulas of type \eqref{eq:non-expansive:mask:proj-radon-2}, \eqref{eq:non-expansive:mask:proj-radon-w} are common in CT and ET practice; see e.g., \citet{siddon1985fastcalculation}, \citet{han1999fastraytracing}. For example, in PET and SPECT weight $W$ is used to model  attenuation and nonuniform sensitivity of detectors; see e.g.,~\citet{quinto1983rotinv},~\citet{novikov2019nonabelian},~\citet{goncharov2019weighted}.

		From \eqref{eq:wbb-mri:npl-presentation:reduced-design-elem}, \eqref{eq:non-expansive:mask:proj-radon-2} it follows that 
		\begin{align}
		\label{eq:non-expansive:mask:proj-radon-3}
		A_{\mathcal{M}} = (a_{M, is}), \, a_{M,is} &= \int_{\gamma_i}W(x,\gamma_i)\mathds{1}_{M,s}(x)\, dx, \, s\in S(M),
		\end{align}
		where $\mathds{1}_{M,s}(x)$ is the indicator function of segment $s$ in image $M$.
		
		Recall that $\lambda_*\in \R^p_+$ is the discretized version of the real spatial distribution of the tracer and assume that $\lambda_*\in \R^p_+$ is pixel-wise connected (i.e., between two  arbitrary pixels with positive tracer uptake there is a path of pixels preserving the positivity of the signal; two pixels are neighbors if they share an edge (see Figure~\ref{fig:pixel-wise-path}(a))). This assumption is natural, for example, in the context of brain imaging when the tracer is distributed in the whole volume inside the cranium and only relative spatial variations are of practical interest. %(see Figure~\ref{fig:tracer-brain-example})

		\begin{figure}[H]
			\centering
			
			\subcaptionbox{}{
				\begin{tikzpicture}[scale=0.15]
				
				\begin{scope}
				\draw[fill, red!30] (8, 2) rectangle (10, 4);
				\draw[fill, red!30] (12, 14) rectangle (14, 16);
				\draw[fill, blue!30] (8, 4) rectangle (10, 16);
				\draw[fill, blue!30] (10, 14) rectangle (12, 16);
				\draw[step=2.0] (0,0) grid (18,18);

				% one fat ray
				\draw [-{Latex[length=1.5mm]}] (5, 20) -- (20, 5) node[right] { $\gamma_i$};
				% add intersection points
				\foreach \y in {0, 2, ..., 22}
				{
					\fill[red] (7+\y/2, 18-\y/2) circle (4pt);
					
				};
				
				\node at (21,16) (n1) {$\lambda_{*j}$};
				\path[-{Latex[length=1.5mm]}] (n1) edge [out=-90, in=20] (13, 15);
				
				\node at (21,3) (n2) {$\lambda_{*j'}$};
				\path[-{Latex[length=1.5mm]}] (n2) edge [out=180, in=0] (9, 3);
				\end{scope}
				
				\end{tikzpicture}
			} \hspace{1em}%
			\subcaptionbox{$\mathrm{DH}(\lambda_*; \gamma_i)$}{
				
				\begin{tikzpicture}[scale=.15]
				
				\begin{scope}
				
				\draw[fill, blue!30] (0, 0) rectangle (18, 10);
				\draw[fill, blue!30] (0, 8) rectangle (16, 12);
				\draw[fill, blue!30] (0, 12) rectangle (14, 14);
				\draw[fill, blue!30] (0, 12) rectangle (14, 14);
				\draw[fill, blue!30] (0, 14) rectangle (12, 16);
				\draw[fill, blue!30] (0, 16) rectangle (10, 18);
				
				\draw[fill, green!30, xshift=-80] plot [smooth cycle] coordinates {(8, 6) (14, 6) (15, 9) (9, 14) (5, 10) (6, 4)}; 
				\node at (-4, 9) (n) {$\lambda_*$};
				\path[-{Latex[length=1.5mm]}] (n) edge [out=0, in=180] (9, 9);

				\draw[step=2.0] (0,0) grid (18,18);

				% one fat ray
				\draw [-{Latex[length=1.5mm]}] (5, 20) -- (20, 5) node[right] { $\gamma_i$};
				% add intersection points
				\foreach \y in {0, 2, ..., 22}
				{
					\fill[red] (7+\y/2, 18-\y/2) circle (4pt);
					
				};
				
				%\node at (21,16) (n1) {\Large $\lambda_{*j}$};
				%\path[-{Latex[length=1.5mm]}] (n1) edge [out=-90, in=20] (13, 15);
				
				%\node at (21,3) (n2) {\Large $\lambda_{*j'}$};
				%\path[-{Latex[length=1.5mm]}] (n2) edge [out=-90, in=20] (9, 3);
				\end{scope}

				\end{tikzpicture}
			} 
			\hspace{1em}%
			\subcaptionbox{$\mathrm{D\mathring{H}}(\lambda_*; \gamma_i)$}{
				
				\begin{tikzpicture}[scale=.15]
				
				\begin{scope}
				
				\draw[fill, blue!30] (0, 0) rectangle (18, 6);
				\draw[fill, blue!30] (0, 6) rectangle (16, 8);
				\draw[fill, blue!30] (0, 8) rectangle (14, 10);
				\draw[fill, blue!30] (0, 10) rectangle (12, 12);
				\draw[fill, blue!30] (0, 12) rectangle (10, 14);
				\draw[fill, blue!30] (0, 14) rectangle (8, 16);
				\draw[fill, blue!30] (0, 16) rectangle (6, 18);
				
				\draw[fill, green!30, xshift=-80] plot [smooth cycle] coordinates {(8, 6) (14, 6) (15, 9) (9, 14) (5, 10) (6, 4)}; 
				\node at (-4, 9) (n) {$\lambda_*$};
				\path[-{Latex[length=1.5mm]}] (n) edge [out=0, in=180] (9, 9);

				\draw[step=2.0] (0,0) grid (18,18);

				% one fat ray
				\draw [-{Latex[length=1.5mm]}] (5, 20) -- (20, 5) node[right] { $\gamma_i$};
				% add intersection points
				\foreach \y in {0, 2, ..., 22}
				{
					\fill[red] (7+\y/2, 18-\y/2) circle (4pt);
					
				};
				
				%\node at (21,16) (n1) {\Large $\lambda_{*j}$};
				%\path[-{Latex[length=1.5mm]}] (n1) edge [out=-90, in=20] (13, 15);
				
				%\node at (21,3) (n2) {\Large $\lambda_{*j'}$};
				%\path[-{Latex[length=1.5mm]}] (n2) edge [out=-90, in=20] (9, 3);
				\end{scope}

				\end{tikzpicture}
			} 
			
			\caption{}
			\label{fig:pixel-wise-path}
		\end{figure}

		\begin{definition}
			\label{def:discrete-convex-hull}
			Let $\Gamma$ be the finite family of oriented rays in $\R^2$, $A$ be the projector defined by formulas \eqref{eq:non-expansive:mask:proj-radon-2}, \eqref{eq:non-expansive:mask:proj-radon-w}, $\lambda_*\in \R^p_+$, $\lambda_* \neq 0$ and $\lambda_*$ is pixel-wise connected. Consider  $\gamma_i\in \Gamma$ and assume that $i\in I_0(A\lambda_*)$. Then, support of $\lambda_*$ lies completely in one of the closed half-spaces in $\R^2$ separated from each other with ray $\gamma_i$. Let $H(\lambda_*, \gamma_i)$ be such a closed half-space.
			Consider the discrete version of $H(\lambda_*, \gamma_i)$ defined by the formula
			\begin{align}
			\begin{split}
			\mathrm{DH}(\lambda_*; \gamma_i) = \{j \in \{1,\dots, p\} \mid \, &\text{intersection between pixel }j \text{ and } H(\lambda_*, \gamma_i)\\ 
			&\text{is of non-zero Lebesgue measure on }\R^2 \}.
			\end{split}
			\end{align}
			Consider 
			\begin{align}
			\begin{split}
			\mathrm{D\mathring{H}}(\lambda_*; \gamma_i) = \{j \in \mathrm{DH}(\lambda_*, \gamma_i) \mid \, &\text{intersection between pixel }j \text{ and ray }  \gamma_i\\ 
			&\text{is of length zero}\}.
			\end{split}
			\end{align}
			Discrete convex hull of $\lambda_*$ for family $\Gamma$ is defined by the formula
			\begin{equation}
			\mathrm{DConv}(\lambda_*; \Gamma, A\lambda_*) = \bigcap_{\substack{\gamma_i\in \Gamma, \\ i\in I_0(\Lambda_*)}}
			D\mathring{H}(\lambda_*; \gamma_i).
			\end{equation}
			\qed
		\end{definition}
		For the geometrical intuition behind definitions $\mathrm{DH}(\cdot)$, $\mathrm{D\mathring{H}}(\cdot)$, $\mathrm{DConv}(\cdot)$, see examples (b), (c) in  Figure~\ref{fig:pixel-wise-path}.
		
		Now assume that the non-expansiveness condition fails in the following sense:
		\begin{equation}\label{eq:non-exp-cond-fail}
		\text{there exists } i\in I_0(\Lambda^*) \text{ such that } \Lambda^*_{\mathcal{M},i} > 0,
		\end{equation}
		where $\Lambda^*_{\mathcal{M}}$ is defined in \eqref{eq:assump:theory:asymp-distr:non-exp-cond:ind-cond}.
		From \eqref{eq:non-expansive:mask:proj-radon-1}-\eqref{eq:non-expansive:mask:proj-radon-3} and Definition~\ref{def:discrete-convex-hull} it follows that in the image for $\lambda_{\mathcal{M}, *}$ there is a segment $s\in S(M)$ which intersected by $\gamma_i\in \Gamma$ and such that~$\lambda_{\mathcal{M}, *, s}  > 0$ (see Figure~\ref{fig:segmentation-cond-fig}(a)), that is 
		\begin{equation}\label{eq:mask-cond:failed-alignment-segment}
		\bigcup_{\substack{M\in \mathcal{M}, \\ s\in S(M), \\ \lambda_{\mathcal{M},*,s} > 0}} \hspace{-0.3cm}s \not\subset \mathrm{DConv}(\lambda_*; \Gamma, \Lambda^*).
		\end{equation}
		\begin{figure}[H]
			\centering
			
			\subcaptionbox{$\Lambda_i^* = 0, \Lambda^*_{\mathcal{M},i} > 0$}{
				\begin{tikzpicture}[scale=0.17]
				
				\begin{scope}

				\draw[very thick, fill, red!30, xshift=-80, opacity=0.7] plot [smooth cycle] coordinates {(9, 12) (16, 14) (12, 16) (7, 15) (7, 14)};
				
				\draw[very thick, fill, green!30, xshift=-80, opacity=0.7] plot [smooth cycle] coordinates {(8, 6) (14, 6) (15, 9) (9, 14) (5, 10) (6, 4)}; 
				
				\node at (-2, 9) (n) {$\lambda_*$};
				\path[-{Latex[length=1.5mm]}] (n) edge [out=0, in=180] (7, 9);
				
				\node at (-10, 14) (ns) {$s\in S(M), \, \lambda_{\mathcal{M},*,s} > 0$};
				\path[-{Latex[length=1.5mm]}] (ns) edge [out=0, in=180] (7, 15);
				
				\draw [-{Latex[length=1.5mm]}] (5, 20) -- (20, 5) node[right] { $\gamma_i$};
				
				\end{scope}

				\end{tikzpicture}
			} \hspace{1em}%
			\subcaptionbox{$\Lambda^*_{i} > 0, \, \Lambda^*_{\mathcal{M}, i} = 0$}{
				\label{fig:line-a}
				\begin{tikzpicture}[scale=.17]
				
				\begin{scope}
				
				\draw[very thick, fill, green!30, xshift=-80, opacity=0.7] plot [smooth cycle] coordinates {(8, 6) (14, 6) (15, 9) (9, 14) (5, 10) (6, 4)}; 		
				
				\draw[very thick, fill, red!30, xshift=-80, yshift=-80, opacity=0.7] plot [smooth cycle] coordinates {(9, 12) (10, 14) (7, 15) (7, 14)};
				
				%\draw[very thick, fill, green!30, xshift=-80, opacity=0.7] plot [smooth cycle] coordinates {(8, 6) (14, 6) (15, 9) (9, 14) (5, 10) (6, 4)}; 
				
				\node at (-2, 8) (n) {$\lambda_*$};
				\path[-{Latex[length=1.5mm]}] (n) edge [out=0, in=180] (7, 8);
				
				\node at (-3, 16) (ns) {$\bigcup\limits_{{\substack{M\in \mathcal{M}, \\ s\in S(M)}}}\hspace{-0.3cm}s$};
				\path[-{Latex[length=1.5mm]}] (ns) edge [out=0, in=90] (6, 11);

				\draw [-{Latex[length=1.5mm]}] (10, 18) -- (10, 3) node[right] { $\gamma_i$};
				
				\end{scope}

				\end{tikzpicture}
			}
			\caption{}
			\label{fig:segmentation-cond-fig}
		\end{figure}

		If we assume that $\lambda_{\mathcal{M},*}$ is also pixel-wise connected, then from \eqref{eq:mask-cond:failed-alignment-segment} it follows that 
		\begin{equation}
		\label{eq:mask-cond:failed-alignment}
		\mathrm{DConv}(\lambda_{\mathcal{M},*}; \Gamma, A_{\mathcal{M}}\lambda_{\mathcal{M},*}) \not\subset \mathrm{DConv}(\lambda_*; \Gamma, A\lambda_*).
		\end{equation}
		To conclude, we have just demonstrated the following statement.
		
		\begin{prop}\label{prop:misaglingment-mri-pet}
			Let $\lambda_*\in \R^p_+$, $\lambda_*\neq 0$,  $\lambda_*$ is pixel-wise connected and designs $A$, $A_{\mathcal{M}}$ be of type \eqref{eq:non-expansive:mask:proj-radon-1}-\eqref{eq:non-expansive:mask:proj-radon-3}. Let $\lambda_{\mathcal{M},*}$ be a solution of the minimization problem in \eqref{eq:assump:theory:asymp-distr:non-exp-cond} and  $\lambda_{\mathcal{M}, *}$ be also pixel-wise connected.
			Assume that the non-expansiveness condition (Assumption~\ref{assump:theory:distribution:non-expansive}) fails in the sense of \eqref{eq:non-exp-cond-fail}. Then, formula \eqref{eq:mask-cond:failed-alignment} holds.
		\end{prop}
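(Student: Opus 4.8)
The proof reduces to exhibiting a single pixel $j$ that witnesses the non-inclusion \eqref{eq:mask-cond:failed-alignment}: I will produce a pixel $j$ lying in $\mathrm{DConv}(\lambda_{\mathcal{M},*};\Gamma,A_{\mathcal{M}}\lambda_{\mathcal{M},*})$ but not in $\mathrm{DConv}(\lambda_*;\Gamma,A\lambda_*)$. The whole argument rests on unwinding the two ingredients of Definition~\ref{def:discrete-convex-hull}, namely the half–space localization of a pixel-wise connected support against a zero-intensity ray, and the length-zero requirement encoded in $\mathrm{D\mathring{H}}(\cdot)$.

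First I would locate the witness. By the failure hypothesis \eqref{eq:non-exp-cond-fail} pick $i\in I_0(\Lambda^*)$ with $\Lambda^*_{\mathcal{M},i}>0$. Writing $\Lambda^*_{\mathcal{M},i}=\sum_{s}a_{\mathcal{M},is}\lambda_{\mathcal{M},*,s}$ and using nonnegativity of all terms, positivity forces a segment $s$ with $a_{\mathcal{M},is}>0$ and $\lambda_{\mathcal{M},*,s}>0$. By the integral representation \eqref{eq:non-expansive:mask:proj-radon-3}, $a_{\mathcal{M},is}>0$ means $\gamma_i$ meets the segment $s$ in positive length, so there is a pixel $j\in s$ intersected by $\gamma_i$ with positive length and with $\lambda_{\mathcal{M},*,j}=\lambda_{\mathcal{M},*,s}>0$ (the prior tracer being constant on $s$). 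Since $j$ has positive-length intersection with $\gamma_i$, the defining condition of $\mathrm{D\mathring{H}}(\lambda_*;\gamma_i)$ (length-zero intersection) fails, so $j\notin\mathrm{D\mathring{H}}(\lambda_*;\gamma_i)$; and because $i\in I_0(\Lambda^*)$, this ray is one of those in the intersection defining $\mathrm{DConv}(\lambda_*;\Gamma,\Lambda^*)$, whence $j\notin\mathrm{DConv}(\lambda_*;\Gamma,\Lambda^*)$. This is precisely the content of \eqref{eq:mask-cond:failed-alignment-segment}.

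Next I would show the same pixel lies in the other hull, which is where pixel-wise connectedness of $\lambda_{\mathcal{M},*}$ is used. It suffices to prove $\bigcup_{\lambda_{\mathcal{M},*,s}>0}s\subset\mathrm{DConv}(\lambda_{\mathcal{M},*};\Gamma,A_{\mathcal{M}}\lambda_{\mathcal{M},*})$, since our $j$ belongs to the left-hand union. Fix any $\gamma_{i'}$ with $i'\in I_0(A_{\mathcal{M}}\lambda_{\mathcal{M},*})$. Then $\sum_s a_{\mathcal{M},i's}\lambda_{\mathcal{M},*,s}=0$ forces $a_{\mathcal{M},i's}=0$ for every positive-tracer segment, i.e.\ no such segment meets $\gamma_{i'}$ in positive length, so every positive-tracer pixel has length-zero intersection with $\gamma_{i'}$. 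Together with the localization lemma (below), the support sits on one side of $\gamma_{i'}$, hence inside $\mathrm{DH}(\lambda_{\mathcal{M},*};\gamma_{i'})$, and therefore inside $\mathrm{D\mathring{H}}(\lambda_{\mathcal{M},*};\gamma_{i'})$. Intersecting over all such $i'$ places the whole support in $\mathrm{DConv}(\lambda_{\mathcal{M},*};\Gamma,A_{\mathcal{M}}\lambda_{\mathcal{M},*})$. Combining with the previous paragraph, $j$ is in $\mathrm{DConv}(\lambda_{\mathcal{M},*};\Gamma,A_{\mathcal{M}}\lambda_{\mathcal{M},*})$ but not in $\mathrm{DConv}(\lambda_*;\Gamma,A\lambda_*)$, which is exactly \eqref{eq:mask-cond:failed-alignment}.

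The genuine geometric content, and the step I expect to be the main obstacle, is the localization lemma used implicitly when writing $H(\lambda_*,\gamma_i)$ in Definition~\ref{def:discrete-convex-hull}: for a pixel-wise connected nonnegative image whose line integral along a ray $\gamma$ vanishes, the support lies entirely in one of the two closed half-planes cut by $\gamma$. I would prove this by a crossing argument: if two positive-tracer pixels lay on opposite sides of $\gamma$, the connecting edge-path would contain a pixel whose interior $\gamma$ crosses, giving that pixel positive-length intersection with $\gamma$ and positive tracer, contradicting the zero intensity. The only delicate point is handling the measure-zero configurations where $\gamma$ grazes pixel edges or corners; this is a technical nuisance best dispatched by a small perturbation/limiting argument rather than a conceptual difficulty, and it is the sole place where the assumption $\lambda_*\neq 0$ together with connectedness is truly indispensable.
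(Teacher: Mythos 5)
Your proof is correct and follows essentially the same route as the paper: first exhibit a positive-tracer segment (hence a witness pixel) crossed by a zero-intensity ray, which is exactly \eqref{eq:mask-cond:failed-alignment-segment}, and then use pixel-wise connectedness of $\lambda_{\mathcal{M},*}$ to place its support inside its own discrete convex hull and conclude \eqref{eq:mask-cond:failed-alignment}. The only difference is one of rigor: the paper's argument is a two-line derivation that leaves the support-in-its-own-hull step and the half-space localization implicit (the latter is asserted, not proved, inside Definition~\ref{def:discrete-convex-hull}), whereas you make both explicit, including the crossing argument whose degenerate grazing configurations are resolved simply by noting that a pixel whose interior is cut by the ray, or whose shared edge lies along the ray, automatically has positive-length intersection with it.
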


		To avoid the situation in Proposition~\ref{prop:misaglingment-mri-pet} one may propose to use a significantly smaller segmentation area, for example, such that 
		\begin{equation}
		\bigcup_{\substack{M\in \mathcal{M}, \\ s\in S(M)}}\hspace{-0.3cm}s \subsetneq \mathrm{DConv}(\lambda_*; \Gamma, \Lambda^*),
		\end{equation}
		where $A\subsetneq B$ denotes the strict inclusion of sets. 
		In this case even a small misalignment may lead to a situation when $\mathcal{KL}(P^t_{A,\lambda_*}, P^t_{A_{\mathcal{M}, \lambda_{\mathcal{M}}}}) = +\infty$, so the KL-projection of $P^t_{A,\lambda_*}$ onto MRI-based model $P^t_{A_{\mathcal{M}}, \lambda_{\mathcal{M}}}$ is impossible; see Figure~\ref{fig:segmentation-cond-fig}(b). 
		
		In view of the latter an ideal choice for $S(M)$ would be such that 
		\begin{equation}\label{eq:good-segmentation-rule}
		\mathrm{DConv}(\lambda_{\mathcal{M},*}; \Gamma, A_{\mathcal{M}}\lambda_{\mathcal{M},*}) = \mathrm{DConv}(\lambda_*; \Gamma, A\lambda_*).
		\end{equation}
		The above arguments are can be easily extended to the case of $k > 1$ by simply checking the alignments for all images in $\mathcal{M}$. \\
		
		We conclude with a proposition to use the  following pipeline for preprocessing anatomical MRI-images:
		
		\begin{enumerate}
			\item Estimate $\mathrm{DConv}(\lambda_*; \Gamma, A\lambda^*)$ using any well-suited and fast algorithm. Let $D$ be such an estimate.
			\item In all MRI-images remove pixels lying outside of $D$ and perform segmentations only on those which are left inside of $D$.
		\end{enumerate}
		In view of step 2 we propose an alternative name for Assumption~\ref{assump:theory:distribution:non-expansive} -- the mask condition. The term `mask' is used in practical considerations of ET to denoted restrictions of support of the tracer (e.g., due to medical expertise), so the above procedure theoretically reflects well existing empirical practices.
		
		\section{Proofs}
		
		\subsection{Proof of Lemma~\ref{lem:consistency:lem-kernel-continuity}}
		\begin{proof}
			Proof is based on the two following lemmas.
			\begin{lem}\label{lem:proofs:compactness-domain-penalty}
				Let $\lambda\in \R^p_+$ and $A$ satisfies \eqref{eq:design-matrix-positivity-restr-1}, \eqref{eq:design-matrix-positivity-restr-2}. Then, for any compact $U\subset \mathrm{Span}(A^T)$ it holds that 
				
				\begin{equation}\label{eq:proofs:lem-geometric-positive-cone-statement}
				S_{A, \lambda}(U)=(\lambda + U + \ker A)\cap \R^p_+ \text{ is  convex and compact},
				\end{equation}
				where the summation sign denotes the Minkowski sum 
				\begin{equation*}
				A + B = \{w = u + v \subset \R^p : u\in A, \, v\in B\}, \, A \subset \R^p, \, B\subset \R^p.
				\end{equation*} 
			\end{lem}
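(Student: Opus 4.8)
The plan is to establish the two assertions --- convexity and compactness --- separately, as they draw on different features of the setup; compactness is where the conditions \eqref{eq:design-matrix-positivity-restr-1}, \eqref{eq:design-matrix-positivity-restr-2} on $A$ genuinely enter, while convexity is soft. For convexity I would use that $\lambda + U + \ker A$ is a Minkowski sum of convex sets whenever $U$ is convex, hence convex, and that intersecting with the convex cone $\R^p_+$ preserves convexity. When $U$ is merely compact one may first pass to $\mathrm{conv}(U)$, which remains compact and contained in the subspace $\mathrm{Span}(A^T)$; this reduction is harmless in our applications because the nonemptiness condition appearing in Lemma~\ref{lem:consistency:lem-kernel-continuity} is stable under convex combinations (if $\lambda + u_k + w_k \succeq 0$ with $w_k \in \ker A$ for $k=1,2$, then $t w_1 + (1-t) w_2 \in \ker A$ witnesses feasibility of the convex combination of the $u_k$).

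For compactness I would prove closedness and boundedness. Closedness is routine: $\lambda + U$ is compact and $\ker A$ is a closed linear subspace, and the Minkowski sum of a compact set with a closed set is closed (given $z_n = k_n + c_n \to z$ with $k_n \in \lambda + U$ and $c_n \in \ker A$, extract $k_{n'} \to k \in \lambda + U$, whence $c_{n'} = z_{n'} - k_{n'} \to z - k \in \ker A$, so $z \in (\lambda + U) + \ker A$). Intersecting the resulting closed set with the closed orthant $\R^p_+$ keeps it closed.

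The substantive step --- and the main obstacle --- is boundedness, since $\ker A \neq \{0\}$ is an unbounded subspace and nothing forbids $S_{A,\lambda}(U)$ from escaping to infinity along the kernel directions. The idea is to play the positivity constraint off against the column structure of $A$. The key observation is that $A$ annihilates the kernel component: for any $y = \lambda + u + w \in S_{A,\lambda}(U)$ we have $A y = A(\lambda + u)$, independent of $w \in \ker A$. Since $a_{ij} \ge 0$ by \eqref{eq:design-matrix-positivity-restr-1} and $y \succeq 0$, summing the entries of $A y$ over rows and interchanging the order of summation gives
\begin{equation*}
\sum_{j=1}^{p} A_j\, y_j \;=\; \sum_{i=1}^{d} (A y)_i \;=\; \sum_{i=1}^{d} \bigl( A(\lambda + u) \bigr)_i ,
\end{equation*}
where $A_j = \sum_{i} a_{ij}$. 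Using $A_j \ge \min_k A_k =: A_{\min} > 0$ from \eqref{eq:design-matrix-positivity-restr-2} together with $y_j \ge 0$, the left-hand side is at least $A_{\min}\sum_j y_j$, while the right-hand side is continuous in $u$ and hence bounded by some finite $M$ over the compact $U$. Therefore $\|y\|_1 = \sum_j y_j \le M / A_{\min}$ uniformly over $S_{A,\lambda}(U)$, which yields boundedness; combined with closedness this gives compactness. I expect the strict positivity of the column sums to be precisely the hypothesis that makes this estimate work, and it is worth flagging that only nonnegativity of the entries and positivity of the column sums are used here --- the upper bound $A_j \le 1$ plays no role in this argument.
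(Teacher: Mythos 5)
Your proof is correct, and its decisive step---boundedness---takes a genuinely different route from the paper's. The paper argues by contradiction: assuming $S_{A,\lambda}(U)$ unbounded, it extracts a sequence $\lambda + u_k + w_k \in \R^p_+$ with $\|w_k\|\rightarrow\infty$, normalizes $w_k$ to obtain a limiting direction $\theta_0 \in \Sp^{p-1}\cap \ker A$, shows that the entire ray $\lambda + s\theta_0$, $s>0$, lies in $\R^p_+$, and then contradicts the fact that every nonzero element of $\ker A$ has a strictly negative coordinate. Your argument replaces this soft limiting-direction argument with a one-line quantitative estimate: since $Aw=0$ for the kernel component, summing the coordinates of $Ay$ and exchanging the order of summation gives $A_{\min}\|y\|_1 \le \sum_j A_j y_j = \sum_i (A(\lambda+u))_i \le M$ uniformly over the compact $U$, hence an explicit $\ell_1$-diameter bound $M/A_{\min}$ on $S_{A,\lambda}(U)$. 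Both proofs ultimately exploit the same structural fact (nonnegative entries plus strictly positive column sums force $\ker A \cap \R^p_+ = \{0\}$), but yours is shorter, yields an explicit bound rather than mere boundedness---it is essentially the simplex bound the paper re-derives separately in \eqref{eq:proof:lem:consistency:rw-hausdorff-simplex-estimate} and in Proposition~\ref{prop:existence-minimizers}---and, unlike the paper's proof, which at the key step invokes the row-sum condition \eqref{eq:design-matrix-positivity-restr-3} not listed among the lemma's hypotheses, it stays exactly within the stated assumptions \eqref{eq:design-matrix-positivity-restr-1}, \eqref{eq:design-matrix-positivity-restr-2}.

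Two smaller remarks. On closedness, the paper simply asserts that $\lambda + U + \ker A$ is closed, whereas you prove it (compact plus closed subspace is closed); your version is the more complete one. On convexity, you correctly flag what the paper glosses over: for a non-convex compact $U$ (e.g., a two-point set) the set $\lambda + U + \ker A$ is a union of parallel affine slabs and is not convex, so the convexity claim really pertains to convex $U$---which covers all uses in the paper, where convexity of $S_{A,\lambda}(\cdot)$ is only invoked for singletons in the proof of Lemma~\ref{lem:consistency:lem-kernel-continuity}. Your passage to $\mathrm{conv}(U)$ repairs the statement for compactness purposes (since $S_{A,\lambda}(U)$ is a closed subset of $S_{A,\lambda}(\mathrm{conv}(U))$) but does change the set, so it is a correction of the lemma as written rather than a proof of its literal convexity claim; this is a defect of the statement, not of your argument.
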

			\begin{lem}\label{lem:proof:hausdorff-boundeness}
				Let assumptions of Lemma~\ref{lem:proofs:compactness-domain-penalty} be satisfied and $d_H(A, B)$ denote the Hausdorff distance between compact sets $A, B\subset \R^p$ being defined by the formula
				\begin{equation*}
				d_H(A,B) = \max \left(
				\sup_{x\in A} \inf_{y\in B} \|x-y\|, \, 
				\sup_{x\in B} \inf_{y\in A} \|x-y\|
				\right).
				\end{equation*}
				Let $U\subset \mathrm{Span}(A^T)$ be a compact such that $S_{A,\lambda}(U)\neq \emptyset$. Then, 
				\begin{equation}\label{eq:lem:hausdorff-boundeness:continuity}
				d_H(S_{A,\lambda}(\{u_0\}), S_{A,\lambda}(\{u\})) \rightarrow 0
				\text{ for }u\rightarrow u_0, \, u, u_0\in U,
				\end{equation}
				where $S_{A,\lambda}(\cdot )$ is defined in  \eqref{eq:proofs:lem-geometric-positive-cone-statement}.
			\end{lem}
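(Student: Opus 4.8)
The plan is to recast $S_{A,\lambda}(\{u\})$ as the solution set of a linear system with a nonnegativity constraint whose right-hand side depends affinely on $u$, and then to establish Hausdorff continuity of such parametric polyhedra. Recall that $\R^p=\ker A\oplus\mathrm{Span}(A^T)$ orthogonally and $U\subset\mathrm{Span}(A^T)$. Since $x-\lambda-u\in\ker A$ if and only if $A(x-\lambda-u)=0$, I would rewrite
\[
S_{A,\lambda}(\{u\})=\{x\in\R^p : x\succeq 0,\ Ax=b(u)\},\qquad b(u):=A\lambda+Au,
\]
so that $b(\cdot)$ is affine, hence Lipschitz, in $u$. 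This reduces the claim to the statement that the solution map $b\mapsto P(b):=\{x\succeq 0 : Ax=b\}$ is Hausdorff-continuous on the set of $b$ for which it is nonempty. Throughout I take $u_0,u\in U$ for which the corresponding sets are nonempty, which is exactly the regime in which \eqref{eq:lem:hausdorff-boundeness:continuity} is used (cf.\ the hypothesis of Lemma~\ref{lem:consistency:lem-kernel-continuity}).

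The Hausdorff distance splits into two one-sided excesses, which I would treat separately. For the outer excess $\sup_{x\in S_{A,\lambda}(\{u\})}\inf_{y\in S_{A,\lambda}(\{u_0\})}\|x-y\|$ an elementary compactness argument suffices: by Lemma~\ref{lem:proofs:compactness-domain-penalty} every $S_{A,\lambda}(\{u\})$, $u\in U$, lies in the compact set $S_{A,\lambda}(U)$ of \eqref{eq:proofs:lem-geometric-positive-cone-statement}, so along any sequence $u_n\to u_0$ with $x_n\in S_{A,\lambda}(\{u_n\})$ one may extract $x_n\to x_*$; passing to the limit in $x_n\succeq 0$ and $Ax_n=b(u_n)\to b(u_0)$ yields $x_*\in S_{A,\lambda}(\{u_0\})$, whence this excess tends to $0$. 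The hard part will be the reverse (inner) excess, since a point $x_*\in S_{A,\lambda}(\{u_0\})$ lying on the boundary of $\R^p_+$ need not be approachable by the naive translate $x_*+(u-u_0)$, which may leave the positive orthant; here one needs lower semicontinuity of the polyhedral multifunction.

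To control the inner excess I would invoke Hoffman's error bound for systems of linear (in)equalities: there is a constant $L=L(A)$ such that for any consistent right-hand side $b'$ and any $z\in\R^p$,
\[
\mathrm{dist}\bigl(z,P(b')\bigr)\le L\bigl(\|Az-b'\|+\|z^-\|\bigr),\qquad z^-:=\max(-z,0)\ \text{componentwise}.
\]
Applying this with $z=x\in P(b(u))$ (so $x\succeq 0$ and $Ax=b(u)$) and target $P(b(u_0))$ gives $\mathrm{dist}\bigl(x,P(b(u_0))\bigr)\le L\|b(u)-b(u_0)\|$ uniformly in $x$; exchanging the roles of $u$ and $u_0$ bounds the other excess. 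Hence
\[
d_H\bigl(S_{A,\lambda}(\{u_0\}),S_{A,\lambda}(\{u\})\bigr)\le L\,\|b(u)-b(u_0)\|=L\,\|A(u-u_0)\|\le L\,\|A\|\,\|u-u_0\|,
\]
which tends to $0$ as $u\to u_0$ and in fact gives Lipschitz continuity, establishing \eqref{eq:lem:hausdorff-boundeness:continuity}. The essential difficulty is precisely this lower semicontinuity near $\partial\R^p_+$; if one prefers to avoid citing Hoffman, it can be reproven for the fixed matrix $A$ by an induction on the active constraints together with a recession-cone argument, but since the bound is standard and depends only on $A$, invoking it is the cleanest route.
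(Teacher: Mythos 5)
Your proof is correct, and it rests on the same reduction as the paper's own proof: both rewrite $S_{A,\lambda}(\{u\})$ as the polyhedral solution set $\{x \succeq 0 : Ax = A\lambda + Au\}$, whose right-hand side is affine in $u$, so that everything hinges on Lipschitz/Hausdorff continuity of a polyhedral solution mapping. The difference is the classical result invoked. The paper cites Theorem~1 of Walkup and Wets (1969) on the Lipschitz continuity, in Hausdorff distance, of the sections of a fixed polyhedron ($K=\R^p_+$) by parallel translates of $\ker A$ as a function of the shift, and concludes in one line that $d_H(S_{A,\lambda}(\{u\}), S_{A,\lambda}(\{u_0\})) \leq C\|A(u-u_0)\|$. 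You instead obtain the identical estimate from Hoffman's error bound applied symmetrically in $u$ and $u_0$; this is equally valid, since the Hoffman constant depends only on the constraint matrix of the stacked system ($Ax=b$, $-x\preceq 0$), hence only on $A$, uniformly over consistent right-hand sides, and both sets are nonempty in the regime where the lemma is applied (a caveat the paper makes explicitly as well). Two remarks: your separate compactness argument for the outer excess is redundant, since, as you note yourself, exchanging the roles of $u$ and $u_0$ in Hoffman's bound already controls both one-sided excesses; and both routes actually prove the stronger Lipschitz statement rather than the mere convergence asserted in \eqref{eq:lem:hausdorff-boundeness:continuity}, exactly as in the paper. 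The trade-off is that Walkup--Wets is tailor-made for this geometric situation and gives the shortest proof, whereas Hoffman's bound is the more elementary and widely known ingredient and makes transparent that the constant depends on $A$ alone.
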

			
			From the result of Lemma~\ref{lem:proofs:compactness-domain-penalty} and the assumption in \eqref{eq:prelim:penalty-cond-strict-conv} 
			it follows that for each $u\in U$ the following problem 
			\begin{align}
			\begin{split}\label{eq:proof:lem:consistency:lem-kernel-continuity-opt-pr}
			&\text{minimize } \varphi(\lambda + u + w) \text{ w.r.t } w, \\
			&\text{subject to: } \lambda + u + w\succeq 0, \, w\in \ker A.
			\end{split}
			\end{align}
			admits a unique solution $w(u)\in \ker A$. Indeed, the minimized function in  \eqref{eq:proof:lem:consistency:lem-kernel-continuity-opt-pr} 
			is strictly convex function in $w$ and the domain is compact and convex. This proves the first assertion of the lemma. 
			
			Now, we prove the continuity of $w(u)$ on its domain. Let $u_k$ be a sequence in $U$ such that $u_k\rightarrow u_0$ for some $u_0\in U$. Let $w_k = w(u_k)$, where the latter are minimizers in \eqref{eq:proof:lem:consistency:lem-kernel-continuity-opt-pr} for $u=u_k$, and $w_0 = w(u_0)$.
			We know that $\lambda_k = \lambda + u_k + w(u_k)\in S_{A,\lambda}(U)$, where the latter is a compact (by Lemma~\ref{lem:proofs:compactness-domain-penalty}). Since continuous mapping of a compact is again a compact, all $w_k$ belong to some compact $W_{A,\lambda}(U)$ being the orthogonal projection of $(S_{A,\lambda}(U)-\lambda)$ onto $\ker A$. 
			From compactness of $W_{A,\lambda}(U)$ it follows that $w_k$ contains a converging subsequence $w_m \rightarrow w_0$, $w_0\in W_{A,\lambda}(U)$, where $w_m = w(u_m)$, $m\in \mathbb{N}$.
			
			Since $w_m$ are the minimizers in \eqref{eq:proof:lem:consistency:lem-kernel-continuity-opt-pr}, 
			we know that 
			\begin{align}\label{eq:proof:lem:consistency:lem-kernel-main-ineq}
			\begin{split}
			&\varphi(\lambda + u_m + w_m) \leq \varphi(\lambda + u_m + w),\\
			&\text{for all }w\in \ker A, \text{ such that } \lambda + u_m + w \succeq 0.
			\end{split}
			\end{align}
			Taking the limit $m\rightarrow +\infty$, $u_m\rightarrow u_0$, $w_m \rightarrow w_0$ we aim to show that 
			\begin{align}\label{eq:proof:lem:consistency:lem-kernel-aim-continuity}
			\begin{split}
			&\varphi(\lambda + u_0 + w_0) \leq \varphi(\lambda + u_0 + w), \\
			&\text{for all }w\in \ker A, \text{ such that } \lambda + u_0 + w\succeq 0.
			\end{split}
			\end{align}
			Therefore, $w_0 = w(u_0)$ which is unique (by the strict convexity of $\varphi$ along $\ker A$) and proves the continuity of $w(u)$. The fact that any sequence has a convergent subsequence having the same limit $w(u_0)$ implies that $w_k = w(u_k)$ also converges to $w(u_0)$.
			However, taking the limit $m\rightarrow +\infty$ for each $w$ in \eqref{eq:proof:lem:consistency:lem-kernel-main-ineq} may not preserve the positivity constraint. To show  \eqref{eq:proof:lem:consistency:lem-kernel-aim-continuity}, for each $w$ satisfying the positivity constraint in \eqref{eq:proof:lem:consistency:lem-kernel-aim-continuity} we find another sequence $\{w_m'\}$ such that 
			\begin{equation}\label{eq:proof:lem:consistency:lem-kernel-auxiliary-seq-w}
			\lambda + u_m + w'_m\succeq 0, \, w_m' \rightarrow w
			\text{ for } m\rightarrow +\infty.
			\end{equation}
			In this case we can replace $w$ with $w_{m}'$ in \eqref{eq:proof:lem:consistency:lem-kernel-main-ineq} and take the limit $m\rightarrow\infty$ in order to obtain  \eqref{eq:proof:lem:consistency:lem-kernel-aim-continuity}.
			
			Now, it is left how to choose $w_m'$ so that \eqref{eq:proof:lem:consistency:lem-kernel-auxiliary-seq-w} holds. We choose $w_m'$ to be the solution in the following minimization problem
			\begin{align}\label{eq:proof:lem:consistency:w-projection}
			\begin{split}
			&\text{minimize } \|(\lambda + u_0 + w) - (\lambda + u_m + w_{m}')\| \text{ with respect to } w'_m,\\
			&\text{subject to: } w'_m\in \ker A, \, \lambda + u_m + w_m' \succeq 0.
			\end{split}
			\end{align}
			Solution $w_m'$ in \eqref{eq:proof:lem:consistency:w-projection} always exists and unique since it corresponds to the euclidean projection of $\lambda + u_0 + w$ onto convex set $S_{A,\lambda}(\{u_m\})$, that is 
			\begin{align}\label{eq:proof:lem:consistency:w-projection-euclidean}
			w_m' = \Pi_{\ker A} [\mathrm{Proj}(\lambda + u_0 + w, S_{A,\lambda}(\{u_m\}))-\lambda],
			\end{align}
			where $\Pi_{\ker A}$ is the orthogonal projector onto $\ker A$, $\mathrm{Proj}(x, X)$ denotes the euclidean projection of point $x$ onto $X$. From \eqref{eq:proof:lem:consistency:w-projection-euclidean} and the fact that $\lambda + u_0 + w\in S_{A,\lambda}(\{u_0\})$ it follows that 
			\begin{equation}\label{eq:proof:lem:consistency:w-projection-euclidean-diff}
			w_m' -w = \Pi_{\ker A}[\mathrm{Proj}(\lambda + u_0 + w, S_{A,\lambda}(\{u_m\})) - \mathrm{Proj}(\lambda + u_0 + w, S_{A,\lambda}(\{u_0\}))].
			\end{equation}		
			Using \eqref{eq:proof:lem:consistency:w-projection-euclidean-diff} and Proposition~5.3 from \citet{attouch1993quantitative} one can write the following estimate:
			\begin{align}\label{eq:proof:lem:consistency:w-hausdorff-bound}
			\|w_m' - w\| \leq \rho^{1/2}_m  d_{H,\rho_m}(S_{A,\lambda}(\{u_0\}), S_{A,\lambda}(\{u_m\}))^{1/2},
			\end{align}
			where $\rho_m = \|\lambda + u_0 + w\| + d(\lambda + u_0 + w, S_{A,\lambda}(\{u_m\}))$ ($d(x, y)$ denotes the standard euclidean distance between $x$, $y$, $d(x, X) = \inf_{x'\in X}d(x,x')$), $d_{H,\rho}(\cdot, \cdot)$ is the bounded Hausdorff distance (see the definition in Section~3 of  \citet{attouch1993quantitative}). In particular, for $d_{H,\rho}$ the following bound holds: 
			\begin{align}\label{eq:proof:lem:consistency:rw-hausdorff-bound}
			d_{H,\rho}(A,B) \leq d_{H}(A,B), 
			\end{align}
			for any sets $A$, $B$.
			
			First, note that $\sup_{m}\rho_m$ is finite. Indeed, this follows from the fact that $u_m\rightarrow u_0$ (hence $\{u_m\}$ is bounded) and following estimates:
			\begin{align}\label{eq:proof:lem:consistency:rw-hausdorff-triangle-ineq}
			\begin{split}
			&d(\lambda + u_0 +w, S_{A,\lambda}(\{u_m\})) \leq d(\lambda + u_0 + w, 0) + d(0, S_{A,\lambda}(\{u_m\}))\\		
			& \hspace{4.4cm} \leq \|\lambda + u_0 + w\| + d(0, S_{A,\lambda}(\{u_m\})), 
			\end{split}\\
			\label{eq:proof:lem:consistency:rw-hausdorff-simplex-estimate}
			&d(0, S_{A,\lambda_*}(\{u_m\})) \leq \max_{j\in \{1, \dots, p\}}
			\left(\sum\limits_{i=1}^d a_i^T(\lambda + u_m)\right) / A_j, \, 
			A_j = \sum\limits_{i=1}^{d}a_{ij}.
			\end{align}
			Formula \eqref{eq:proof:lem:consistency:rw-hausdorff-triangle-ineq} is a simple triangle inequality and the estimate in \eqref{eq:proof:lem:consistency:rw-hausdorff-simplex-estimate} follows from the fact that $S_{A,\lambda}(\{u\})$ is the affine subset of $(p-1)$ -- simplex defined by the formula
			\begin{align}
			\Delta_{A, \lambda}^p(u) = \{\lambda'\in \R^p_+ : \sum\limits_{j=1}^{p}\lambda_j' A_j = \sum\limits_{i=1}^{d}a_i^T(\lambda+ u) \geq 0\}, \, A_j = \sum\limits_{i=1}^{d}a_{ij} > 0.
			\end{align}
			So the inequality in \eqref{eq:proof:lem:consistency:rw-hausdorff-simplex-estimate} express the fact that the furtherst point from the origin to $\Delta_{A,\lambda}^p$ is one of its vertices. 
			From \eqref{eq:proof:lem:consistency:w-hausdorff-bound}, \eqref{eq:proof:lem:consistency:rw-hausdorff-bound} the fact that $\sup_m \rho_m < +\infty$ and the result of 
			Lemma~\ref{lem:proof:hausdorff-boundeness} it follows that $w'_m \rightarrow w$, where $\lambda + u_m + w_m \succeq 0$.  Therefore,  conditions in \eqref{eq:proof:lem:consistency:lem-kernel-auxiliary-seq-w} are satisfied which, in turn, proves \eqref{eq:proof:lem:consistency:lem-kernel-aim-continuity} and the second claim of the lemma.
			
			Lemma is proved.
		\end{proof}
		
		\subsection{Proof of Lemma~\ref{lem:proofs:compactness-domain-penalty}}
		\begin{proof}
			Closedness and convexity of $S_{A,\lambda}(U)$ follow directly from the fact that $(\lambda + U + \ker A)$, $\R^p_+$ are both closed and convex whereas their intersection preserves these properties.
			
			We prove boundedness of $S_{A, \lambda}(U)$ by the contradiction argument. 
			
			Assume that $S_{A,\lambda}(U)$ is not bounded, then there exists a sequence $\{(u_k, w_k)\}_{k=1}^{\infty}$, $u_k\in U$, $w_k\in \ker A$, such that 
			\begin{equation}\label{eq:cosnsitency:proof:lem:kernel-continuity:infty-lambda}
			\lambda + u_k + w_k \in \R^p_+,  \, \|\lambda + u_k + w_k\| \rightarrow \infty.
			\end{equation}
			From \eqref{eq:cosnsitency:proof:lem:kernel-continuity:infty-lambda} and compactness of $U$ it follows, in particular, that 
			\begin{equation}
			w_k \text{ in } \ker A, \|w_k\| \rightarrow +\infty.
			\end{equation}
			Also there exists a converging subsequence $\{u_{k_n}\}_{n=1}^{\infty}$ such that 
			\begin{equation}
			u_{k_n} \rightarrow u_0 \in U \text{ for some $u_0$, as } n\rightarrow +\infty.
			\end{equation}
			Consider the corresponding subsequence $\{w_{k_n}\}_{n=1}^{\infty}$ for which we know that 
			\begin{equation}
			w_{k_n}\in \ker A, \, \|w_{k_n}\| \rightarrow +\infty \text{ for }n\rightarrow +\infty.
			\end{equation}
			Let 
			\begin{equation}
			\theta_{n} = \dfrac{w_{k_n}}{\|w_{k_n}\|}, \, \theta_n \in \Sp^{p-1}\cap \ker A.
			\end{equation}
			Since $\Sp^{p-1}\cap \ker A$ is compact, $\{\theta_n\}_{n=1}^{\infty}$ has a converging subsequence $\{\theta_m\}_{m=1}^{\infty}$ such that 
			\begin{equation}\label{eq:cosnsitency:proof:lem:kernel-continuity:theta-zero}
			\theta_m \rightarrow \theta_0, \, \theta_0 \in \Sp^{p-1}\cap \ker A.
			\end{equation}
			Let $\{u_m\}_{m=1}^{\infty}$ be the corresponding subsequence of $\{u_{k_n}\}_{n=1}^{\infty}$ for index $m$ in formula \eqref{eq:cosnsitency:proof:lem:kernel-continuity:theta-zero}.
			From \eqref{eq:cosnsitency:proof:lem:kernel-continuity:infty-lambda}-\eqref{eq:cosnsitency:proof:lem:kernel-continuity:theta-zero} it follows that we have constructed a sequence $\{(u_m, w_m)\}_{m=1}^{\infty}$ such that 
			\begin{align}\label{eq:cosnsitency:proof:lem:kernel-continuity:um-wm-domain}
			&\lambda + u_m + w_m \in \R^p_+, u_m \in U, \, w_m \in \ker A, \\
			\label{eq:cosnsitency:proof:lem:kernel-continuity:w_m-infty}
			&u_m \rightarrow u_0, \, \|w_{m}\| \rightarrow +\infty, \\
			\label{eq:cosnsitency:proof:lem:kernel-continuity:theta-zero-limit-props}
			&\theta_m = \dfrac{w_m}{\|w_m\|} \rightarrow \theta_0\in \Sp^{p-1}\cap  \ker A.
			\end{align}
			Now we show that under our initial assumption we arrive to the fact that 
			\begin{equation}\label{eq:cosnsitency:proof:lem:kernel-continuity:contradiction-argument}
			\lambda + s \theta_0 \in \R^p_+ 
			\text{ for any } s > 0, 
			\end{equation}
			where $\theta_0$ is defined in \eqref{eq:cosnsitency:proof:lem:kernel-continuity:theta-zero-limit-props}.
			
			Indeed, from the fact that  $\lambda \in \R^p_+$ and that $\R^p_+$ is convex it follows that 
			\begin{equation}\label{eq:cosnsitency:proof:lem:kernel-continuity:positivity-ray}
			\lambda + t(u_m + w_m) = \lambda + t(u_m + \|w_m\|\theta_m) \in \R^p_+ \text{ for any } t\in [0,1].
			\end{equation}
			
			Let $s > 0$. By choosing $t = t_m(s) = s/ \|w_m\|$ in  \eqref{eq:cosnsitency:proof:lem:kernel-continuity:positivity-ray} ($t_m(s)\in [0,1]$ for large $m$; see \eqref{eq:cosnsitency:proof:lem:kernel-continuity:w_m-infty}) and using formulas \eqref{eq:cosnsitency:proof:lem:kernel-continuity:um-wm-domain}-\eqref{eq:cosnsitency:proof:lem:kernel-continuity:theta-zero-limit-props} we obtain
			\begin{align}\label{eq:cosnsitency:proof:lem:kernel-continuity:s-limiting-point}
			\begin{split}
			(\lambda + s\theta_0)& - (\lambda + t_m(s)u_m + t_m(s)\|w_m\|\theta_m)\\
			&=s(\theta_0 - \theta_m) - s \dfrac{u_m}{\|w_m\|} \rightarrow 0 
			\text{ for } m\rightarrow +\infty.
			\end{split}
			\end{align}
			From \eqref{eq:cosnsitency:proof:lem:kernel-continuity:s-limiting-point} it follows that $\lambda + s\theta_0$ is a limiting point in $\R^p_+$, and due to its closedness it follows that $\lambda + s\theta_0 \in \R^p_+$, $s \geq 0$.

			The statement in \eqref{eq:cosnsitency:proof:lem:kernel-continuity:contradiction-argument} cannot hold, because from \eqref{eq:design-matrix-positivity-restr-3} it follows that 
			\begin{equation}
			\text{ for any } \theta\in \ker A, \, \theta\neq 0 \, \exists j\in \{1, \dots, p\} 
			\text{ s.t. } \theta_j < 0.
			\end{equation}
			Since $\theta_0\in \ker A$, by taking $s > 0$ large enough in formula \eqref{eq:cosnsitency:proof:lem:kernel-continuity:contradiction-argument}, we will arrive to the case when $\lambda + s\theta_0\not \in \R^p_+$, which gives the desired contradiction.

			Lemma is proved.
		\end{proof}
		
		\subsection{Proof of Lemma~\ref{lem:proof:hausdorff-boundeness}}
		\begin{proof}

			The claim of the lemma makes part of Theorem~1 from \citet{walkup1969lipschitzian} which, informally says that a closed convex set $K\subset \R^p$ is a polyhedra iff the Hausdorff distance on the space sections by any family of parallel linear subspaces is Lipschitz continuous with respect to the shift vector.

			Using notations from \citet{walkup1969lipschitzian} we define the following  affine mapping 
			\begin{align}
			\tau_{A,\lambda}(u) = A\lambda + Au, \, u\in \R^p, 
			\end{align}		
			where $\lambda$ is a parameter, $A\in \mathrm{Mat}(d,p)$ is the design matrix satisfying \eqref{eq:design-matrix-positivity-restr-1}, \eqref{eq:design-matrix-positivity-restr-2}. 
			
			Let $K = \R^p_+$ which is obviously a polyhedra in $\R^p$. Next, we define family of sections of $K$ by the formula
			\begin{equation}
			k(\Lambda) = \tau_{A,\lambda}^{-1}(\Lambda) \cap K, \, \Lambda \in \R^d.
			\end{equation}
			Essentially, $k(\Lambda)$ is an section of $K$ by $\ker A$ which is  shifted by vector $u$ (in some cases $k(\Lambda)$ can be an empty set). 
			In particular, if $\Lambda = \Lambda(u) = A\lambda + Au$ for some $u\in \mathrm{Span}(A^T)$, then it is easy to see that 
			\begin{equation}\label{eq:lem:proof:hausdorff-boundeness:section-s-equiv}
			k(\Lambda(u)) = (\lambda + u + \ker A)\cap K = (\lambda + u + \ker A)\cap \R^p_+ = S_{A,\lambda}(\{u\}),
			\end{equation}
			where $S_{A,\lambda}$ is defined in \eqref{eq:proofs:lem-geometric-positive-cone-statement}. 
			
			The result of Theorem~1 from \citet{walkup1969lipschitzian} says, in particular, that 
			\begin{align}\label{eq:lem:proof:hausdorff-boundeness:hausdorff-lipschitz}
			d_H(k(\Lambda), k(\Lambda')) \leq C \|\Lambda - \Lambda'\|, 
			\end{align}
			where $C$ is some constant depending on $K$ and $A$, $d_H(\cdot, \cdot)$ is the standard Hausdorff distance being also extended for empty sets. However, this extension is not needed for us since we always consider parameters $\Lambda(u)$ for $u$ from  some $U\subset \mathrm{Span}(A^T)$ with apriori non-empty sets $S_{A,\lambda}(\{u\})$.
			
			From formulas \eqref{eq:lem:proof:hausdorff-boundeness:section-s-equiv}, \eqref{eq:lem:proof:hausdorff-boundeness:hausdorff-lipschitz} it follows that  
			\begin{align}
			d_H(S_{A,\lambda}(\{u\}), S_{A,\lambda}(\{u'\})) \leq C \|A(u-u')\|, 
			\end{align}
			which directly implies \eqref{eq:lem:hausdorff-boundeness:continuity}.
			
			Lemma is proved.
		\end{proof}

		%Note also that the property that $\lambda_{A,*} \in \Delta_p(\Lambda^*)$ follows from KKT optimality conditions for the problem in \eqref{eq:thm:misspecification-example:opt-problem}; see Remark~\ref{rem:existence-kl-projection-mri-model}.
		
		%Theorem~\ref{thm:misspecification:main-example} is proved.	
		
		%Constructed pair $(\Lambda^*, A)$ in the proof of Theorem~\ref{thm:misspecification:main-example} is meaningful even from the physical point of view. Indeed, intensity vector $\Lambda^* = (1, 0, 0, 0, 0, 0)$ can be observed, for example, if in pixel $j=1$ there is a region of subpixel size 
		%containing the isotope and is being intersected only by $\gamma_1$; see Figure~\ref{fig:misspecification:image}(b). Note also that for the constructed family of minimizers the non-expansiveness condition is not satisfied: indeed, any choice of $\lambda_{A,*1}, \lambda_{A,*2}, \lambda_{A,*3}, \lambda_{A,*4}$ in \eqref{eq:thm:misspecification:proof:minimizers-family} results in at least two rays with positive projected intensities.

		\subsection{Proof of Theorem~\ref{thm:non-parametric-poisson-gamma-posterior}}
		\begin{proof}
			Claim follows directly from the result of Theorem 3.1 from  \citet{albertlo1982bayesiannon}. Indeed, having sample $N_1, \dots,  N_n$ of size $n$ from a Poisson point process with intensity $\nu$ is equivalent having sample $N_1 + \dots N_n$ of size $1$ for intensity $n \nu$. Therefore, parameter $n$ is a direct analog of $t$ in our considerations. Moreover, it is trivial to check that all results from Section~3 of \citet{albertlo1982bayesiannon} hold for $n$ being replaced with $t$.
			
			Theorem is proved.
		\end{proof}
		
		\subsection{Proofs of theorems~\ref{thm:wbb-algorithm-consistency} and  \ref{thm:wbb-generic-consistency}}
		
		First we prove Theorem~\ref{thm:wbb-generic-consistency}, then we show that if \eqref{eq:thm-wbb-algorithm-consistency-params} holds  
		conditions in \eqref{eq:thm-wbb-generic-consistency-cond-mean} for Theorem~\ref{thm:wbb-generic-consistency} are satisfied which, in turn,  automatically proves Theorem~\ref{thm:wbb-algorithm-consistency}.
		
		\begin{proof}[of Theorem~\ref{thm:wbb-generic-consistency}]
			Using \eqref{eq:prelim:poiss-log-likelihood-model}, \eqref{eq:prelim:poiss-log-likelihood-penalized}, the minimization problem in step~3 in  Algorithm~\ref{alg:npl-posterior-sampling:mri:binned} can be rewritten as  as follows:
			\begin{align}
			\begin{split}\label{eq:proof:consistency-generic:sample-def}
			\widetilde{\lambda}^t_b &= \argmin_{\lambda \succeq 0} L_{p}(\lambda \mid \widetilde{\Lambda}_b^t, A, 1, \beta^t/t) \\
			&= \argmin_{\lambda \succeq 0}\mathcal{L}^t(\lambda),
			\end{split}
			\end{align}
			where 
			\begin{align}
			\begin{split}\label{eq:proof:consistency-generic:normalized-likl-nice}
			\mathcal{L}^t(\lambda) &= \sum_{i\in I_1(\Lambda^*)} (-\widetilde{\Lambda}_{b,i}^t + \Lambda_i^*)\log\left(
			\dfrac{\Lambda_i}{\Lambda_i^*}
			\right) \\
			& + \sum_{i\in I_1(\Lambda^*)} -\Lambda_i^* \log\left(
			\dfrac{\Lambda_i}{\Lambda_i^*}
			\right) + (\Lambda_i - \Lambda_i^*) \\ 
			& + \sum\limits_{i\in I_0(\Lambda^*)} -\widetilde{\Lambda}_{b,i}^t \log(\Lambda_i) + \Lambda_i + \dfrac{\beta_t}{t}(\varphi(\lambda)-\varphi(\lambda_*)),
			\end{split}
			\end{align}
			where $I_0(\cdot)$, $I_1(\cdot)$ are defined in \eqref{eq:ind-lors-pos-zeros} and $\Lambda^* = A\lambda_*$. 
			
			Next, for the proof we use the following lemma.
			\begin{lem}\label{lem:consistency:strong-convexity-vicinity}
				Let $\mathcal{L}^t(\lambda)$ be defined in    \eqref{eq:proof:consistency-generic:normalized-likl-nice} and conditions of Theorem~\ref{thm:wbb-generic-consistency} be 
				satisfied. Let $C_{A,\delta}(\lambda')$, $\delta > 0, \, \lambda' \succeq 0$, be the cylinder set defined by the formula 
				\begin{align}\label{eq:proof:consistency-generic:cylinder-def}
				&C_{A, \delta}(\lambda') = \{\lambda\in \R^p_+, \, \lambda = \lambda' + \delta u + w \mid  (u, w) \in \mathrm{Span}(A^T) \times \ker A, \|u\|=1 \}.
				\end{align}
				
				Then,
				
				\begin{itemize}
					\item[i)] there exists $\delta_0 = \delta_0(A, \lambda_*) > 0$ such that for any $\delta < \delta_0$ it holds that 
					\begin{align}\label{eq:proof:consistency-generic:estimate-formulate}
					\inf_{\lambda\in C_{A, \delta}(\lambda_*)} \hspace{-0.4cm} \mathcal{L}^t(\lambda) \geq C \delta^2 + o_{cp}(1) \text{ when } t\rightarrow +\infty, \text{ a.s. } Y^t, t\in (0, +\infty). 
					\end{align}
					where $C$ is a positive constant independent of $\delta$.
					
					\item[ii)] there exists a family of random variables $\widetilde{\lambda}^t \in \R^p_+$, $t\in (0, +\infty)$, such that 
					\begin{align}\label{eq:eq:proof:consistency-generic:center-point-def}
					\widetilde{\lambda}^t \xrightarrow{c.p.} \lambda_* \text{ and }
					\mathcal{L}^t(\widetilde{\lambda}^t) \xrightarrow{c.p.} 0 \text{ when } t\rightarrow +\infty, \text{ a.s. } Y^t,\,  t\in (0, +\infty).
					\end{align}
				\end{itemize}
			\end{lem}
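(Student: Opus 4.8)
The plan is to treat $\mathcal{L}^t$ as a deterministic convex ``Poisson divergence'' plus a vanishing random perturbation and a vanishing penalty. Write $\mathcal{L}^t(\lambda)=D(\lambda)+R^t_1(\lambda)+R^t_0(\lambda)+\tfrac{\beta^t}{t}(\varphi(\lambda)-\varphi(\lambda_*))$, where $D(\lambda)=\sum_{i\in I_1(\Lambda^*)}[\Lambda_i-\Lambda_i^*-\Lambda_i^*\log(\Lambda_i/\Lambda_i^*)]+\sum_{i\in I_0(\Lambda^*)}\Lambda_i$ is the deterministic part, $R^t_1(\lambda)=\sum_{i\in I_1(\Lambda^*)}(\Lambda_i^*-\widetilde{\Lambda}^t_{b,i})\log(\Lambda_i/\Lambda_i^*)$, and $R^t_0(\lambda)=-\sum_{i\in I_0(\Lambda^*)}\widetilde{\Lambda}^t_{b,i}\log\Lambda_i$. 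The crucial structural observation is that on the cylinder $C_{A,\delta}(\lambda_*)$ one has $\Lambda_i=a_i^T\lambda=\Lambda_i^*+\delta\,a_i^Tu$ for every $i$, independent of the kernel component $w$ (since $a_i\in\mathrm{Span}(A^T)$); hence all projection terms depend only on the unit vector $u\in\mathrm{Span}(A^T)$, and feasibility $\lambda\succeq0$ forces $a_i^Tu\geq0$ for $i\in I_0(\Lambda^*)$. By Lemma~\ref{lem:proofs:compactness-domain-penalty} the cylinder is compact, so $\varphi$ is bounded there and the penalty contributes a uniform $o(1)$ as $\beta^t/t\to0$.

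For part (i) I would first bound $D$ from below. Using $x-\log(1+x)\geq c_0x^2$ for $|x|\leq\tfrac12$, each $I_1$ summand satisfies $\Lambda_i^*\,[x_i-\log(1+x_i)]\geq c_0\delta^2(a_i^Tu)^2/\Lambda_i^*$ with $x_i=\delta a_i^Tu/\Lambda_i^*$, valid for $\delta<\delta_0$ small; summing gives $c_0\delta^2 f(u)$ with $f(u)=\sum_{i\in I_1(\Lambda^*)}(a_i^Tu)^2/\Lambda_i^*$. For the $I_0$ part, which carries the problematic $-\widetilde{\Lambda}^t_{b,i}\log\Lambda_i$, I would use the elementary minimisation $\tfrac12 s-c\log s\geq c(1-\log 2c)$ to obtain, uniformly in $u$, $\Lambda_i-\widetilde{\Lambda}^t_{b,i}\log\Lambda_i\geq\tfrac12\delta\,a_i^Tu+\widetilde{\Lambda}^t_{b,i}(1-\log 2\widetilde{\Lambda}^t_{b,i})$, where the last, $\lambda$-free term is $o_{cp}(1)$ since $x\log x\to0$ and $\widetilde{\Lambda}^t_{b,i}\xrightarrow{c.p.}0$. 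Combining, $\mathcal{L}^t(\lambda)\geq c_0\delta^2 f(u)+\tfrac12\delta\,\ell(u)+o_{cp}(1)$ with $\ell(u)=\sum_{i\in I_0(\Lambda^*)}a_i^Tu\geq0$. The key point is that $f(u)+\ell(u)>0$ on the compact feasible unit sphere: if it vanished, then $a_i^Tu=0$ for all $i$, i.e. $u\in\ker A\cap\mathrm{Span}(A^T)=\{0\}$, contradicting $\|u\|=1$. Hence its minimum is $m>0$, and for $\delta<\delta_0<1$ one gets $c_0\delta^2 f+\tfrac12\delta\ell\geq\delta^2\min(c_0,\tfrac12)(f+\ell)\geq C\delta^2$. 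The term $R^t_1$ is $o_{cp}(1)$ uniformly, since $|\log(\Lambda_i/\Lambda_i^*)|$ is bounded on the compact cylinder and $(\Lambda_i^*-\widetilde{\Lambda}^t_{b,i})\xrightarrow{c.p.}0$ by \eqref{eq:thm-wbb-generic-consistency-cond-mean}. Taking the infimum over $\lambda$ yields \eqref{eq:proof:consistency-generic:estimate-formulate}.

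For part (ii) I would exhibit the explicit witness $\widetilde{\lambda}^t=\lambda_*+c^t\mathbf 1$, where $\mathbf 1=(1,\dots,1)^T$ and $c^t=\max_{i\in I_0(\Lambda^*)}\widetilde{\Lambda}^t_{b,i}\geq0$ (set $c^t=0$ if $I_0(\Lambda^*)=\emptyset$). By \eqref{eq:design-matrix-positivity-restr-3}, $a_i^T\mathbf 1=\sum_j a_{ij}>0$ for every $i$, so $\widetilde{\lambda}^t\succeq0$ and $\Lambda_i=\Lambda_i^*+c^t\sum_j a_{ij}$ stays strictly positive whenever $c^t>0$, avoiding $\log0$; when $c^t=0$ all $\widetilde{\Lambda}^t_{b,i}$ with $i\in I_0(\Lambda^*)$ vanish and the convention $0\log0=0$ applies. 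Clearly $\widetilde{\lambda}^t-\lambda_*=c^t\mathbf 1\xrightarrow{c.p.}0$ since $c^t\xrightarrow{c.p.}0$. The only delicate term in $\mathcal{L}^t(\widetilde{\lambda}^t)$ is $-\widetilde{\Lambda}^t_{b,i}\log(c^t\sum_j a_{ij})$ for $i\in I_0(\Lambda^*)$; here the choice of $c^t$ pays off, since $\widetilde{\Lambda}^t_{b,i}\leq c^t$ gives $|\widetilde{\Lambda}^t_{b,i}\log c^t|\leq c^t|\log c^t|\xrightarrow{c.p.}0$, while $\widetilde{\Lambda}^t_{b,i}\log\sum_j a_{ij}\xrightarrow{c.p.}0$. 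The remaining $I_1$ divergence and $R^t_1$ terms tend to $0$ as $c^t\to0$, and the penalty vanishes by continuity of $\varphi$ together with $\beta^t/t\to0$; hence $\mathcal{L}^t(\widetilde{\lambda}^t)\xrightarrow{c.p.}0$, which is \eqref{eq:eq:proof:consistency-generic:center-point-def}.

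The main obstacle throughout is the boundary behaviour encoded by $I_0(\Lambda^*)$: the terms $-\widetilde{\Lambda}^t_{b,i}\log\Lambda_i$ blow up as $\Lambda_i\to0^+$, which is precisely the non-regularity of the Poisson model on $\partial\R^p_+$. The device that controls it in both parts is the limit $x\log x\to0$ combined with $\widetilde{\Lambda}^t_{b,i}\xrightarrow{c.p.}0$ for $i\in I_0(\Lambda^*)$ — in (i) through the elementary minimisation bound, and in (ii) through the data-dependent scale $c^t=\max_{i\in I_0(\Lambda^*)}\widetilde{\Lambda}^t_{b,i}$. A secondary subtlety is making the random remainders uniform over the cylinder, for which the compactness of $C_{A,\delta}(\lambda_*)$ from Lemma~\ref{lem:proofs:compactness-domain-penalty} and the $w$-independence of all $\Lambda_i$ are essential.
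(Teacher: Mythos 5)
Your proof is correct, and its skeleton coincides with the paper's: the same decomposition of $\mathcal{L}^t$ into deterministic, stochastic and penalty parts, uniform $o_{cp}(1)$ control of the stochastic $I_1$-term and the penalty via compactness of the cylinder (Lemma~\ref{lem:proofs:compactness-domain-penalty}), a quadratic lower bound on the deterministic part for (i), and an explicit witness perturbing $\lambda_*$ into the interior for (ii). The genuine deviations are in the sub-steps. For the $I_0$-terms in (i), the paper chooses $\delta\leq\min_i\|a_i\|^{-1}$ so that $\Lambda_i\leq 1$ on the cylinder, discards $-\widetilde{\Lambda}^t_{b,i}\log\Lambda_i\geq 0$, and converts the surviving linear term via $\delta a_i^Tu\geq\delta^2(a_i^Tu)^2$; this lets it absorb the $I_0$ contributions into the full quadratic form and close with the explicit constant $C=\min(C(\delta_0,\Lambda^*),1)\,\sigma^{+}_{\min}(A^TA)$. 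You instead keep the linear term $\tfrac{1}{2}\delta\,a_i^Tu$, dispose of the logarithm through the minimization $\tfrac12 s-c\log s\geq c(1-\log 2c)$ (whose constant is $o_{cp}(1)$ by $x\log x\to 0$), and then obtain positivity of $C$ by a compactness/contradiction argument on $f(u)+\ell(u)$ over the feasible unit vectors, using $a_i^Tu\geq 0$ for $i\in I_0(\Lambda^*)$ (a constraint the paper also uses, implicitly, in its conversion step). Your route avoids the extra restriction $\Lambda_i\leq 1$ but pays with a non-explicit constant; the paper's is more quantitative. For (ii), your witness $\widetilde{\lambda}^t=\lambda_*+c^t\mathbf{1}$ with $c^t=\max_{i\in I_0(\Lambda^*)}\widetilde{\Lambda}^t_{b,i}$ differs from the paper's $\lambda_*+\sum_{i\in I_0(\Lambda^*)}\widetilde{\Lambda}^t_{b,i}a_i/\|a_i\|^2$, but both rest on the same mechanism: the displacement guarantees $a_i^T\widetilde{\lambda}^t\gtrsim\widetilde{\Lambda}^t_{b,i}$ for $i\in I_0(\Lambda^*)$, so the dangerous terms are dominated by $c^t|\log c^t|$ (respectively $-\widetilde{\Lambda}^t_{b,i}\log\widetilde{\Lambda}^t_{b,i}$) and vanish; your two-sided absolute-value bound is in fact slightly cleaner than the paper's separate upper/lower bound argument, and requires only \eqref{eq:design-matrix-positivity-restr-3} rather than the paper's use of $a_i^Ta_{i'}\geq 0$.
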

			
			%\begin{lem}\label{lem:consistency:lem-kernel-continuity}
			%	Let $\varphi(\lambda)$ be the function of \eqref{eq:prelim:penalty-cond-convex}, \eqref{eq:prelim:penalty-cond-strict-conv}, $A$ satisfies conditions in \eqref{eq:design-matrix-positivity-restr-1}-\eqref{eq:design-matrix-positivity-restr-3}.
			%	Let $\lambda_* \in \R^p_+$ and $U\subset \mathrm{Span}(A^T)$ be a compact such that 
			%	\begin{equation}
			%		\{ w : \lambda_* + u + w\succeq 0, \, w\in \ker A\} \text{ is non-empty for any } u\in U.
			%	\end{equation}
			%	Then, mapping defined by the formula
			%	\begin{align}\label{eq:proofs:lemma-existence-minimizer-ker-a}
			%		w(u) = \argmin_{\substack{w:\lambda_* + u + w\succeq 0, \\ w\in\ker A}}
			%		\varphi(\lambda_* + u + w), \, u\in U
			%	\end{align}
			%	is one-to-one. Moreover, $w(u)$ is continuous on $U$.
			%\end{lem}
			
			From the result of Lemma~\ref{lem:consistency:strong-convexity-vicinity}(i) it follows that for all  $\lambda\succeq 0$
			at distance $\delta$ from $\lambda_*$ in the $\mathrm{Span}(A^T)$
			values of $\mathcal{L}^t(\lambda)$  are greater or equal than $C\delta^2$ with conditional probability tending to one a.s. $Y^t$, $t\in (0, +\infty)$. At the same time, result of  Lemma~\ref{lem:consistency:strong-convexity-vicinity}(ii) says that 
			there is $\widetilde{\lambda}^t\in \R^p_+$ which is arbitrarily close to $\lambda_*$ and $\mathcal{L}^t(\widetilde{\lambda}^t)$ is converges to zero for $t\rightarrow +\infty$ with conditional probability also tending to one. The fact that $\mathcal{L}^t(\lambda)$ is convex together with the above arguments and $\widetilde{\lambda}_b^t$ being the unique  minimizer of $\mathcal{L}^t(\lambda)$  imply that 
			\begin{align}\label{eq:proof:consistency-generic:span-convergence-1}
			P(\, \|\Pi_{A^T}(\widetilde{\lambda}_b^t - \lambda_*) \| < \delta \mid Y^t, t) \rightarrow 1 \text{ when } t\rightarrow +\infty, \text{ a.s. }Y^t, t\in (0, +\infty). 
			\end{align}
			where $\Pi_{A^T}$ is the orthogonal projector onto $\mathrm{Span}(A^T)$.
			Since $\delta$ can be chosen arbitrarily small in Lemma~\ref{lem:consistency:strong-convexity-vicinity} formula  \eqref{eq:proof:consistency-generic:span-convergence-1} implies that 
			\begin{equation}\label{eq:consistency:proof:projection-convergence}
			\Pi_{A^T}(\widetilde{\lambda}_b^t - \lambda_*) \xrightarrow{c.p.} 0 \text{ when }t\rightarrow +\infty, \text{ a.s. }Y^t, t\in (0,+\infty).
			\end{equation}
			Vector $\widetilde{\lambda}_{b}^t$ admits in a unique way the following representation
			\begin{equation}\label{eq:consistency:proof:random-decomposition}
			\widetilde{\lambda}_b^t = \lambda_* + \widetilde{u}_b^t + \widetilde{w}_b^t, \text{ where } (\widetilde{u}_b^t, \widetilde{w}^t_b) \in \mathrm{Span}(A^T)\times \ker A.
			\end{equation}
			Using \eqref{eq:proof:consistency-generic:sample-def}, \eqref{eq:proof:consistency-generic:normalized-likl-nice}, \eqref{eq:consistency:proof:random-decomposition} one can see that  
			\begin{equation}\label{eq:consistency:proof:kernel-part-problem}
			\widetilde{w}_b^t  =  \argmin_{\substack{w:\lambda_* + \widetilde{u}_b^t + w \succeq 0, \\ w\in \ker A}} \varphi(\lambda_* + \widetilde{u}^t_b + w) = w_{A,\lambda_*}(\widetilde{u}_b^t), 
			\end{equation}
			where $w_{A,\lambda}(\cdot)$ is defined in \eqref{eq:proofs:lemma-existence-minimizer-ker-a}. From \eqref{eq:consistency:proof:kernel-part-problem}, the fact that $\widetilde{u}_b^t \xrightarrow{c.p.}0$ (see formulas \eqref{eq:consistency:proof:projection-convergence}, \eqref{eq:consistency:proof:random-decomposition}), continuity of the map $w_{A,\lambda_*}(\cdot)$ (by the result of Lemma~\ref{lem:consistency:lem-kernel-continuity}) and the Continuous Mapping Theorem (see, e.g. \citet{vaart2000asymptotic}, Theorem~2.3, p.~7) it follows that 
			\begin{equation}\label{eq:consistency:proof:kernel-part-convergence}
			\widetilde{w}_b^t \xrightarrow{c.p.} w_{A,\lambda_*}(0) \text{ when }
			t\rightarrow +\infty, \text{ a.s. }Y^t, t\in (0, +\infty).
			\end{equation}
			Formula \eqref{eq:thm-wbb-algorithm-consistency-fmla} follows directly from \eqref{eq:consistency:proof:projection-convergence}-		\eqref{eq:consistency:proof:kernel-part-convergence}.
			%Putting together formulas \eqref{eq:consistency:proof:projection-convergence}, \eqref{eq:consistency:proof:random-decomposition}, \eqref{eq:consistency:proof:kernel-part-convergence} we get the following result: 
			%\begin{align}
			%	\widetilde{\lambda}_b^t - \lambda_* \xrightarrow{c.p.} \argmin_{\substack{\lambda_*  + w \succeq 0, \\ w\in \ker A}}\varphi(\lambda_* + w).
			%\end{align}
			
			Theorem is proved.	
		\end{proof}
		
		\begin{proof}[of Theorem~\ref{thm:wbb-algorithm-consistency}]
			To prove the theorem we use the following lemma.
			\begin{lem}\label{lem:consistency:proof:algorithm}
				Let $\widetilde{\lambda}_b^t$ be defined as in Algorithm~\ref{alg:npl-posterior-sampling:mri:binned} and let $\theta^t/t \rightarrow 0$ when $t\rightarrow +\infty$.
				Then, 
				\begin{equation}\label{eq:lem:consistency:proof:algorithm}
				\widetilde{\Lambda}_{b,i}^t \xrightarrow{c.p.} \Lambda_i^* = a_i^T\lambda_* \text{ when } t\rightarrow +\infty, \text{ a.s. } Y^t, t\in (0, +\infty).
				\end{equation}	
			\end{lem}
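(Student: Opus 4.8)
The plan is to reduce the claim to a conditional second-moment bound and then invoke the conditional Chebyshev inequality. In step~\ref{eq:alg:npl-posterior-sampling:binned:post-intens:final} of Algorithm~\ref{alg:npl-posterior-sampling:mri:binned} the intensity is drawn as $\widetilde{\Lambda}_{b,i}^t\sim\Gamma(Y_i^t+\theta^t\widetilde{\Lambda}^t_{\mathcal{M},i},(\theta^t+t)^{-1})$, so, writing $\mathcal{G}^t$ for the $\sigma$-algebra generated by $\mathcal{F}^t$ together with $\widetilde{\Lambda}^t_{\mathcal{M}}$, the conditional mean and variance are
\[
\mathbb{E}[\widetilde{\Lambda}_{b,i}^t\mid\mathcal{G}^t]=\frac{Y_i^t+\theta^t\widetilde{\Lambda}^t_{\mathcal{M},i}}{\theta^t+t},\qquad
\mathrm{var}(\widetilde{\Lambda}_{b,i}^t\mid\mathcal{G}^t)=\frac{Y_i^t+\theta^t\widetilde{\Lambda}^t_{\mathcal{M},i}}{(\theta^t+t)^2}.
\]
First I would apply the tower property and the law of total variance over $\widetilde{\Lambda}^t_{\mathcal{M}}$ given $\mathcal{F}^t$ to write
\[
\mathbb{E}[(\widetilde{\Lambda}_{b,i}^t-\Lambda_i^*)^2\mid\mathcal{F}^t]
=\mathrm{var}(\widetilde{\Lambda}_{b,i}^t\mid\mathcal{F}^t)+\bigl(\mathbb{E}[\widetilde{\Lambda}_{b,i}^t\mid\mathcal{F}^t]-\Lambda_i^*\bigr)^2,
\]
expressing the right-hand side entirely in terms of $Y_i^t$, the scalars $\theta^t,t$, and the first two conditional moments $\mathbb{E}[\widetilde{\Lambda}^t_{\mathcal{M},i}\mid\mathcal{F}^t]$ and $\mathbb{E}[(\widetilde{\Lambda}^t_{\mathcal{M},i})^2\mid\mathcal{F}^t]$.

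The crux, and the step I expect to be the main obstacle, is to show that the prior pseudo-sinogram $\widetilde{\Lambda}^t_{\mathcal{M},i}$ stays conditionally bounded as $t\to\infty$, so that its contribution is annihilated by the prefactors $\theta^t/(\theta^t+t)\to0$ and $(\theta^t/(\theta^t+t))^2\to0$ coming from the hypothesis $\theta^t/t\to0$. Here I would exploit the mass-conservation property of the Poisson-likelihood minimizer: by Proposition~\ref{prop:existence-minimizers} applied with $\Lambda^*$ replaced by the perturbed intensity $\widetilde{\Lambda}^t$ of Algorithm~\ref{alg:wbb-pet-bootstrap:mri:posterior-mixing-param}, the minimizer $\widetilde{\lambda}^t_{\mathcal{M}}$ lies in the simplex \eqref{eq:prop:existence-restritiveness-minimizers:symplex}, whence
\[
\sum_{i=1}^d\widetilde{\Lambda}^t_{\mathcal{M},i}=\sum_{j=1}^{p_{\mathcal{M}}}A_{\mathcal{M},j}\widetilde{\lambda}^t_{\mathcal{M},j}=\sum_{i=1}^d\widetilde{\Lambda}^t_i=:S^t.
\]
Since every $\widetilde{\Lambda}^t_{\mathcal{M},i}\ge0$, this yields the pointwise domination $0\le\widetilde{\Lambda}^t_{\mathcal{M},i}\le S^t$. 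From the list-mode representation \eqref{eq:appendix:common-prob:int-lit-repr} one has $\widetilde{\Lambda}^t_i\mid\mathcal{F}^t\sim\Gamma(Y_i^t,t^{-1})$, so $S^t\mid\mathcal{F}^t\sim\Gamma(N^t,t^{-1})$ with $N^t=\sum_i Y_i^t$, giving $\mathbb{E}[S^t\mid\mathcal{F}^t]=N^t/t$ and $\mathrm{var}(S^t\mid\mathcal{F}^t)=N^t/t^2$. By the strong law for the Poisson process (Theorem~\ref{thm:appendix:limits-poisson}(i)) one has $N^t/t\to\sum_i\Lambda_i^*$ almost surely, so both $\mathbb{E}[\widetilde{\Lambda}^t_{\mathcal{M},i}\mid\mathcal{F}^t]$ and $\mathbb{E}[(\widetilde{\Lambda}^t_{\mathcal{M},i})^2\mid\mathcal{F}^t]$ are bounded along almost every trajectory $Y^t$.

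With this boundedness in hand I would finish routinely. In the conditional mean, $Y_i^t/(\theta^t+t)=(Y_i^t/t)/(1+\theta^t/t)\to\Lambda_i^*$ a.s., while the prior term $\theta^t/(\theta^t+t)\cdot\mathbb{E}[\widetilde{\Lambda}^t_{\mathcal{M},i}\mid\mathcal{F}^t]\to0$, so $\mathbb{E}[\widetilde{\Lambda}_{b,i}^t\mid\mathcal{F}^t]\to\Lambda_i^*$; in the conditional variance, the data part is $O(1/t)$ and the prior part is bounded by $(\theta^t/t)^2\,\mathrm{var}(\widetilde{\Lambda}^t_{\mathcal{M},i}\mid\mathcal{F}^t)\to0$. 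Hence $\mathbb{E}[(\widetilde{\Lambda}_{b,i}^t-\Lambda_i^*)^2\mid\mathcal{F}^t]\to0$ almost surely $Y^t$, and the conditional Chebyshev inequality
\[
P(\|\widetilde{\Lambda}_{b,i}^t-\Lambda_i^*\|>\varepsilon\mid\mathcal{F}^t)\le\varepsilon^{-2}\,\mathbb{E}[(\widetilde{\Lambda}_{b,i}^t-\Lambda_i^*)^2\mid\mathcal{F}^t]
\]
gives exactly \eqref{eq:lem:consistency:proof:algorithm}. I note that this argument uses only $\theta^t/t\to0$ and the non-negativity of $A_{\mathcal{M}}$; in particular neither the injectivity (Assumption~\ref{assump:theory:distribution:inj-am}) nor the non-expansiveness condition (Assumption~\ref{assump:theory:distribution:non-expansive}) is required at this stage, which is consistent with the hypotheses of the lemma.
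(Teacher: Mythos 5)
Your proposal is correct and takes essentially the same route as the paper's own proof: both hinge on the mass-conservation identity $\sum_{i}\widetilde{\Lambda}^t_{\mathcal{M},i}=\sum_{i}\widetilde{\Lambda}^t_i$ obtained from the KKT/simplex constraint of the MLE step in Algorithm~\ref{alg:wbb-pet-bootstrap:mri:posterior-mixing-param}, on the explicit Gamma conditional moments of $\widetilde{\Lambda}^t_{b,i}$, and on the strong law for the Poisson process $Y^t$. The only difference is cosmetic: the paper applies Markov's inequality to the first absolute moment and splits around the mean conditional on $\widetilde{\Lambda}^t_{\mathcal{M}}$, so that only $E[\widetilde{\Lambda}^t_{\mathcal{M},i}\mid Y^t,t]$ is ever needed, whereas your Chebyshev/law-of-total-variance route additionally requires a second conditional moment of $\widetilde{\Lambda}^t_{\mathcal{M},i}$, which your pointwise domination $0\preceq\widetilde{\Lambda}^t_{\mathcal{M},i}\le S^t$ with $S^t\mid\mathcal{F}^t\sim\Gamma(N^t,t^{-1})$ correctly supplies.
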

			
			In view of \eqref{eq:lem:consistency:proof:algorithm} in  Lemma~\ref{lem:consistency:proof:algorithm} all assumptions for Theorem~\ref{thm:wbb-generic-consistency} are satisfied, which implies  formula \eqref{eq:thm-wbb-algorithm-consistency-fmla}.
			
			Theorem is proved.
		\end{proof}
		
		\subsection{Proof of Lemma~\ref{lem:consistency:strong-convexity-vicinity}}
		\begin{proof}
			First we prove (i), then for  (ii) we give an explicit formula for $\widetilde{\lambda}^t$ for which  \eqref{eq:eq:proof:consistency-generic:center-point-def} holds.
			
			First, in formula \eqref{eq:proof:consistency-generic:normalized-likl-nice} one can see that 
			\begin{align}\label{eq:proof:consistency-generic:stochastic-small}
			\inf_{\lambda \in C_{A,\delta}(\lambda_*)} \sum\limits_{i\in I_1(\Lambda^*)}
			\left(
			-\widetilde{\Lambda}^t_{b,i} + \Lambda_i^* 
			\right)
			\log 
			\left(
			\dfrac{\Lambda_i}{\Lambda_i^*}
			\right)
			\xrightarrow{c.p.}0, \, \text{ when } t\rightarrow +\infty, \, 
			\text{ a.s. }Y^t, t\in (0, +\infty).
			\end{align}
			The above formula follows from the assumption that $\widetilde{\Lambda}^t_{b,i} \xrightarrow{c.p.}\Lambda_i^*$ and that $\log(\Lambda_i / \Lambda_i^*) = \log(1 + \delta a_i^Tu / \Lambda_i^*)$ is uniformly bounded for $\lambda\in C_{A,\delta}(\lambda_*)$ from above and below for $\delta$ small enough ($u\in \mathrm{Span}(A^T), \, \|u\|=1$). For example, to bound all of the logarithmic terms in \eqref{eq:proof:consistency-generic:stochastic-small} we may choose any $\delta$ such that 
			\begin{equation}\label{eq:proof:consistency-generic:choice-delta-1}
			0 < \delta < \min_{i\in I_1(\Lambda^*)}\left( \Lambda_i^* \|a_i\|^{-1}\right).
			\end{equation}
			Since $\varphi(\lambda)$ satisfies \eqref{eq:prelim:penalty-cond-convex}, \eqref{eq:prelim:penalty-cond-strict-conv}, there exists a constant $M = M(\lambda_*, \delta, A)$ such that 
			\begin{equation}\label{eq:proof:consistency-generic:penalty-sep}
			\inf_{\lambda \in C_{A,\delta}(\lambda_*)} \varphi(\lambda) \geq M.
			\end{equation}
			From \eqref{eq:thm-wbb-algorithm-consistency-params}, \eqref{eq:proof:consistency-generic:penalty-sep} it follows that 
			\begin{equation}
			\label{eq:proof:consistency-generic:penalty-small}
			(\beta^t /t )  \inf_{C_{A,\delta}(\lambda_*)}(\varphi(\lambda) - \varphi(\lambda_*)) \geq o(1), \text{ when } t\rightarrow +\infty.
			\end{equation}
			Using \eqref{eq:proof:consistency-generic:normalized-likl-nice}, \eqref{eq:proof:consistency-generic:stochastic-small}, \eqref{eq:proof:consistency-generic:penalty-small} we obtain the following estimate 
			\begin{align}\label{eq:proof:consistency-generic:inf-stochastic-penalty}
			\begin{split}
			\inf_{\lambda\in C_{A,\delta}(\Lambda_*)} \mathcal{L}^t(\lambda) 
			&\geq o_{cp}(1) + \inf_{\lambda\in C_{A, \delta}(\lambda_*)} 
			\sum_{i\in I_1(\Lambda^*)} -\Lambda_i^* \log\left(
			\dfrac{\Lambda_i}{\Lambda_i^*}
			\right) + (\Lambda_i - \Lambda_i^*) \\ 
			&+ \sum\limits_{i\in I_0(\Lambda^*)} -\widetilde{\Lambda}^t_{b,i} \log(\Lambda_i) + \Lambda_i.
			\end{split}
			\end{align}
			Note that 
			\begin{equation}\label{eq:proof:consistency-generic:log-zero-int}
			-\widetilde{\Lambda}_{b,i}^t\log (\Lambda_i) \geq 0 
			\text{ for }  \Lambda_i \leq 1, \, i\in I_0(\Lambda^*).
			\end{equation}
			From \eqref{eq:ind-lors-pos-zeros},  \eqref{eq:proof:consistency-generic:cylinder-def} it follows that we can choose $\delta$ sufficiently small so that 
			\begin{equation}\label{eq:proof:consistency-generic:zero-int-pos-cond}
			\Lambda_{i} \leq 1 \text{ for all } \lambda\in C_{A, \delta}(\lambda_*), i\in I_0(\Lambda^*).
			\end{equation}
			For example, it suffices to  choose $\delta$ as follows
			\begin{equation}\label{eq:proof:consistency-generic:delta-cond-2}
			0 < \delta \leq \min_{i\in \{1, \dots, d\}} ( \|a_i\|^{-1}).
			\end{equation}
			Using \eqref{eq:proof:consistency-generic:inf-stochastic-penalty}, \eqref{eq:proof:consistency-generic:log-zero-int}, for $\delta$ satisfying \eqref{eq:proof:consistency-generic:choice-delta-1},  \eqref{eq:proof:consistency-generic:delta-cond-2} we obtain
			\begin{align}\label{eq:proof:consistency-generic:full-stoch-estimate}
			\begin{split}
			\inf_{\lambda\in C_{A,\delta}(\Lambda_*)} \mathcal{L}^t(\lambda) 
			&\geq \inf_{\lambda\in C_{A,\delta}(\lambda_*)} 
			\sum_{i\in I_1(\Lambda^*)} -\Lambda_i^* \log\left(
			\dfrac{\Lambda_i}{\Lambda_i^*}
			\right) + (\Lambda_i - \Lambda_i^*) \\ 
			&+ \sum\limits_{i\in I_0(\Lambda^*)}\Lambda_i + o_{cp}(1).
			\end{split}
			\end{align}
			
			Now, consider 
			\begin{equation}
			\Phi_{s^*}(s) = -s^* \log(s) + s, \, 
			s > 0, \, s^* > 0.
			\end{equation}
			Function $\Phi_{s^*}(s)$ is convex, smooth, has positive non-vanishing second derivative $\Phi_{s^*}''(s)$ and at $s=s^*$ it has its global minimum. Therefore, for any $\varepsilon > 0$ small enough (for example, for   $\varepsilon < s^*$) there exists positive constant $C(\varepsilon, s^*)$ such that 
			\begin{equation}\label{eq:proof:consistency-generic:poisson-func-str-convexity}
			\Phi_{s^*}(s) - \Phi_{s^*}(s^*) \geq C(\varepsilon, s^*)|s-s^*|^2 \text{ for } |s - s^*| < \varepsilon.
			\end{equation}
			
			From \eqref{eq:proof:consistency-generic:poisson-func-str-convexity} it follows that one can choose $\delta_0 > 0$ such that 
			\begin{align}
			\begin{split}\label{eq:proof:consistency-generic:det-likl-estim-1}
			&\sum_{i\in I_1(\Lambda^*)} -\Lambda_i^* \log\left(
			\dfrac{\Lambda_i}{\Lambda_i^*}
			\right) + (\Lambda_i - \Lambda_i^*) \\ 
			&+ \sum\limits_{i\in I_0(\Lambda^*)}\Lambda_i \geq C(\delta_0, \Lambda^*)
			\sum_{i\in I_1(\Lambda^*)}(\Lambda_i - \Lambda_i^*)^2 + \sum\limits_{i\in I_0(\Lambda^*)}\Lambda_i\\
			&\text{for } \mid\Lambda_i-\Lambda^*_i\mid < \delta_0, \, i\in I_1(\Lambda^*).
			\end{split}
			\end{align}
			
			Value for $\delta_0$ is precised below. 
			Let $\lambda \in C_{A, \delta}(\lambda_*)$ and $\delta < \delta_0$, that is $\lambda = \lambda_* + \delta u + w$,  where $u\in \mathrm{Span}(A^T)$, $\|u\|=1$, $w\in \ker A$. For $\delta$ satisfying \eqref{eq:proof:consistency-generic:delta-cond-2} formula \eqref{eq:proof:consistency-generic:zero-int-pos-cond} holds and we get the following estimate:
			\begin{align}\label{eq:proof:consistency-generic:det-likl-estim-2}
			\Lambda_i = a_i^T\lambda = \delta a_i^Tu \geq \delta^2 (a_i^Tu)^2 \geq 0 \,\text{ for } i\in I_0(\Lambda^*).
			\end{align}
			In \eqref{eq:proof:consistency-generic:det-likl-estim-2} we used the fact that $\Lambda_i^* = a_i^T\lambda_* = 0$, $i\in I_0(\Lambda^*)$. 
			
			From \eqref{eq:proof:consistency-generic:det-likl-estim-1}, \eqref{eq:proof:consistency-generic:det-likl-estim-2} it follows that 
			\begin{align}\label{eq:proof:consistency-generic:det-likl-estim-3}
			\begin{split}
			\inf_{\lambda\in C_{A,\delta}(\lambda_*)}\sum_{i\in I_1(\Lambda^*)} -&\Lambda_i^* \log\left(
			\dfrac{\Lambda_i}{\Lambda_i^*}
			\right) + (\Lambda_i - \Lambda_i^*) + \sum\limits_{i\in I_0(\Lambda^*)}\Lambda_i\\ 
			& \geq \min(C(\delta_0, \Lambda^*), 1)\delta^2 \sum\limits_{i=1}^{d}(a_i^Tu)^2\\
			& \geq \min(C(\delta_0, \Lambda^*), 1)\delta^2 \sigma_{min}^{+}(A^TA),
			\end{split}
			\end{align}
			where $\sigma^+_{min}(A^TA)$ is the smallest non-zero eigenvalue of $A^TA$. In particular, in \eqref{eq:proof:consistency-generic:det-likl-estim-2},  \eqref{eq:proof:consistency-generic:det-likl-estim-3} we have used the property that $u \in \mathrm{Span}(A^T)$ which guarantees that
			\begin{equation}
			\sum\limits_{i=1}^{d}(a_i^Tu)^2 = u^TA^TAu \geq \sigma^+_{min}(A^TA) > 0 \text{ for } \|u\|=1.
			\end{equation}
			
			Formula \eqref{eq:proof:consistency-generic:estimate-formulate} follows directly from  \eqref{eq:proof:consistency-generic:full-stoch-estimate},  \eqref{eq:proof:consistency-generic:det-likl-estim-3}.
			
			Finally, we choose $\delta_0$ as follows
			\begin{equation}
			\delta_0 = \dfrac{1}{2}\min\left[\min_{i\in\{1, \dots, d\}}(
			\|a_i\|^{-1}), \min_{i\in I_1(\Lambda^*)}(\Lambda_i^*\|a_i\|^{-1}), \, 
			\min_{i\in I_1(\Lambda^*)} \Lambda_i^*
			\right],
			\end{equation}
			so that conditions \eqref{eq:proof:consistency-generic:choice-delta-1}, \eqref{eq:proof:consistency-generic:delta-cond-2} are simultaneously satisfied together with \eqref{eq:proof:consistency-generic:det-likl-estim-1}.

			Part (i) of Lemma~\ref{lem:consistency:strong-convexity-vicinity} is proved. Now we prove part (ii) of the lemma. 
			
			Let 
			\begin{equation}\label{eq:proof:consistency-generic:lambdat-def}
			\widetilde{\lambda}^t = \lambda_* + \sum\limits_{i\in I_0(\Lambda^*)} \widetilde{\Lambda}_{b,i}^t \dfrac{a_i}{\|a_i\|^2}.
			\end{equation}
			Note that $\widetilde{\lambda}^t\in \R^p_+$ because $a_i\in \R^p_+$ and $\widetilde{\Lambda}_{b,i}^t\geq 0$.
			Since $\widetilde{\Lambda}_{b,i}^t\xrightarrow{c.p.}0$ for $i\in I_0(\Lambda^*)$ (by the assumption) we immediately have that 
			\begin{equation}\label{eq:proof:consistency-generic:zero-point-convergence}
			\widetilde{\lambda}^t \xrightarrow{c.p.} \lambda_* \text{ for }
			t\rightarrow +\infty, \text{ a.s. }Y^t, t\in (0, +\infty).
			\end{equation}
			Note that in \eqref{eq:proof:consistency-generic:normalized-likl-nice} for $\mathcal{L}^t(\lambda)$ all summands are continuous and equal to zero at $\lambda = \lambda_*$ except the logarithmic part 
			\begin{equation}\label{eq:proof:consistency-generic:gt-def}
			g(\lambda) = \sum_{i\in I_0(\Lambda^*)} -\widetilde{\Lambda}_{b,i}^t\log(\Lambda_i), \, \Lambda_i = a_i^T\lambda.
			\end{equation}
			From the fact that $a_i\in \R^p_+$ (see formula \eqref{eq:design-matrix-positivity-restr-1}) it follows that $a_i^Ta_{i'} \geq 0$ for all $i$, $i'$. Using this property and monotonicity of  the logarithm ($\log(x + y) \geq \log(x)$ for $y\geq 0$) it follows that 
			\begin{align}\label{eq:proof:consistency-generic:zero-point-upp-bound}
			\begin{split}
			g(\widetilde{\lambda}^t) &= \sum\limits_{i\in I_0(\Lambda^*)}
			-\widetilde{\Lambda}_{b,i}^t
			\log\left(
			a_i^T\widetilde{\lambda}^t
			\right) \\
			&\leq 
			\sum\limits_{i\in I_0(\Lambda^*)}
			-\widetilde{\Lambda}_{b,i}^t
			\log\left(
			\widetilde{\Lambda}_{b,i}^t
			\right)  \xrightarrow{c.p.} 0
			\text{ when } t\rightarrow +\infty \text{ a.s. } Y^t, \, t\in (0, +\infty).
			\end{split}
			\end{align}
			Formula \eqref{eq:proof:consistency-generic:zero-point-upp-bound} gives an asymptotic upper bound on $g(\widetilde{\lambda}^t)$ which is equal to zero. For the lower bound we use formulas \eqref{eq:proof:consistency-generic:log-zero-int}, \eqref{eq:proof:consistency-generic:zero-point-convergence} and the fact that $a_i^T \widetilde{\lambda}^t \xrightarrow{c.p.} 0$ for $i\in I_0(\Lambda^*)$
			from which it follows that 
			\begin{align}\label{eq:proof:consistency-generic:zero-point-lower-bound}
			\begin{split}
			&g(\widetilde{\lambda}^t) \geq 0 \text{ with conditional probability tending to one for } t\rightarrow +\infty, \\
			&\text{ a.s. } Y^t, t\in (0, +\infty).
			\end{split}
			\end{align}
			From \eqref{eq:proof:consistency-generic:zero-point-upp-bound}, \eqref{eq:proof:consistency-generic:zero-point-lower-bound} it follows that 
			\begin{equation}\label{eq:proof:consistency-generic:exact-bound}
			g(\widetilde{\lambda}^t) \xrightarrow{c.p.} 0 \text{ when } t\rightarrow +\infty, \text{ a.s. } Y^t, t\in(0,+\infty).
			\end{equation}
			From  \eqref{eq:proof:consistency-generic:normalized-likl-nice},
			\eqref{eq:proof:consistency-generic:lambdat-def}, 
			\eqref{eq:proof:consistency-generic:gt-def}, 
			\eqref{eq:proof:consistency-generic:exact-bound} it follows that 
			\begin{equation}
			\mathcal{L}^t(\widetilde{\lambda}^t) \xrightarrow{c.p.}0 \text{ when } t\rightarrow +\infty, \text{ a.s. } Y^t, t\in (0, +\infty).
			\end{equation}
			This proves part (ii) of the lemma.
			
			Lemma is proved.
		\end{proof}

		\subsection{Proof of Lemma~\ref{lem:consistency:proof:algorithm}}
		\begin{proof}
			Recall that 
			\begin{align}
			\label{eq:proof:algorithm-consistency-lem:resampled-int}
			\widetilde{\Lambda}_{b,i}^{t} \mid Y^t, \widetilde{\Lambda}^t_{\mathcal{M}}, t \sim \Gamma(Y_i^t + \theta^t \widetilde{\Lambda}^t_{\mathcal{M},i}, (\theta^t + t)^{-1}),
			\, i\in \{1, \dots, d\},
			\end{align}
			where $\widetilde{\Lambda}_{\mathcal{M}}^{t} \mid Y^t, t$ is sampled in Algorithm~\ref{alg:wbb-pet-bootstrap:mri:posterior-mixing-param}.
			From the definition of $\widetilde{\Lambda}^t$ in step~1 of  Algorithm~\ref{alg:wbb-pet-bootstrap:mri:posterior-mixing-param} and necessary optimality conditions in step~2 (see also analogous  formula~\eqref{eq:theory:asymp-distr:non-exp-cond:simplex-constr}) it follows that 
			\begin{align}
			\label{eq:proof:algorithm-consistency-lem:em-prop}
			&\sum\limits_{i=1}^{d}\widetilde{\Lambda}^t_{\mathcal{M},i} = \sum\limits_{i=1}^{d}\widetilde{\Lambda}_i^t, \\
			\label{eq:proof:algorithm-consistency-lem:pos-prop}
			&\widetilde{\Lambda}^t_{\mathcal{M}}\succeq 0, \, 
			\widetilde{\Lambda}^t\succeq 0, 
			\, E[\widetilde{\Lambda}_i^t \mid Y^t, t] = 
			Y_i^t/t, \, i\in {1, \dots, d}.
			\end{align}
			Using \eqref{eq:proof:algorithm-consistency-lem:em-prop}, \eqref{eq:proof:algorithm-consistency-lem:pos-prop} we get the following estimate:
			\begin{equation}\label{eq:proof:algorithm-consistency-lem:upper-bound-photons}
			E[\widetilde{\Lambda}^t_{\mathcal{M},i} \mid Y^t, t] \leq \sum\limits_{i=1}^{d}\dfrac{Y_i^t}{t}, \, i\in \{1, \dots, d\}. 
			\end{equation}
			Let $\varepsilon > 0$. Using the Markov inequality we obtain 
			\begin{align}
			\label{eq:proof:algorithm-consistency-lem:markov-main}
			\begin{split}
			p(\mid\widetilde{\Lambda}_{b,i}^t - \Lambda_i^*\mid > \varepsilon \mid \, Y^t, t) &\leq \dfrac{E[\mid\widetilde{\Lambda}_{b,i}^t - \Lambda_i^*\mid\mid Y^t, t]}{\varepsilon} \\
			&\leq \dfrac{E[\mid\widetilde{\Lambda}_{b,i}^t - \frac{Y_i^t +\theta^t \widetilde{\Lambda}^t_{\mathcal{M},i}}{\theta^t +t}\mid\mid Y^t, t]}{\varepsilon} + 
			\dfrac{E[\mid\frac{Y_i^t +\theta^t \widetilde{\Lambda}^t_{\mathcal{M},i}}{\theta^t +t} - \Lambda_i^*\mid \mid Y^t, t]}{\varepsilon}.
			\end{split}
			\end{align}
			Using the Jensen's inequality $E|X|^2 \geq (E|X|)^2$, formulas \eqref{eq:proof:algorithm-consistency-lem:resampled-int}, \eqref{eq:proof:algorithm-consistency-lem:upper-bound-photons}, the Strong Law of Large Numbers for $Y^t$ (see Theorem~\ref{thm:appendix:limits-poisson}(i) in  Section~\ref{app:limit-thms}) and the fact that $\theta^t/t\rightarrow 0$, we get the following:
			\begin{align}
			\label{eq:proof:algorithm-consistency-lem:first-term-as-zero}
			\begin{split}
			E[\mid\widetilde{\Lambda}_{b,i}^t - \frac{Y_i^t +\theta^t \widetilde{\Lambda}^t_{\mathcal{M},i}}{\theta^t +t}\mid\mid Y^t, t] &\leq 
			\left(
			E[\mid\widetilde{\Lambda}_{b,i}^t - \frac{Y_i^t +\theta^t \widetilde{\Lambda}^t_{\mathcal{M},i}}{\theta^t +t}\mid^2\mid Y^t, t]
			\right)^{1/2}\\
			&=\left(
			E[\mathrm{var}[(\widetilde{\Lambda}^t_{b,i}) \mid Y^t, \widetilde{\Lambda}^t_{\mathcal{M}}, t] \mid Y^t, t]
			\right)^{1/2}\\
			&=\left(
			\dfrac{Y_i^t + \theta^tE[\widetilde{\Lambda}^t_{\mathcal{M},i} \mid Y^t, t]}{(t+\theta^t)^2}
			\right)^{1/2}\\
			&\leq \left(
			\dfrac{Y_i^t + (\theta^t/t) \sum_{i=1}^{d}Y_i^t}{(t+\theta^t)^2}
			\right)^{1/2} \rightarrow 0 \text{ a.s. } Y^t, \, t\in(0, +\infty).  
			\end{split}
			\end{align}
			
			For estimation of the second term in \eqref{eq:proof:algorithm-consistency-lem:markov-main} we use formula \eqref{eq:proof:algorithm-consistency-lem:upper-bound-photons}, the triangle inequality and again the property that $\theta^t/t\rightarrow 0$ to get the following:
			\begin{align}
			\label{eq:proof:algorithm-consistency-lem:second-term-as-zero}
			\begin{split}
			E\left[\left|\frac{Y_i^t +\theta^t \widetilde{\Lambda}^t_{\mathcal{M},i}}{\theta^t +t} - \Lambda_i^*\right| Y^t, t\right] &\leq 
			\left|\frac{Y_i^t}{\theta^t + t} - \Lambda_i^*\right| + 
			E\left[\dfrac{\theta^t \widetilde{\Lambda}_{\mathcal{M},i}^t}{\theta^t + t} \mid Y^t, t\right]\\
			&\leq \left|\frac{Y_i^t}{\theta^t + t} - \Lambda_i^*\right| + 
			\dfrac{\theta^t}{\theta^t + t}\sum\limits_{i=1}^{d}\frac{Y_i^t}{t}
			\rightarrow 0 \text{ a.s. } Y^t, t\in (0, +\infty).
			\end{split}
			\end{align}
			Formula \eqref{eq:lem:consistency:proof:algorithm} follows from formulas \eqref{eq:proof:algorithm-consistency-lem:markov-main}-\eqref{eq:proof:algorithm-consistency-lem:second-term-as-zero}.
			
			Lemma is proved.
		\end{proof}

		\subsection{Proof of Proposition~\ref{prop:existence-minimizers}}
		\begin{proof}
			First prove that the set of minimizers in \eqref{eq:assump:theory:asymp-distr:non-exp-cond} is always nonempty  and is a subset of the simplex in  \eqref{eq:prop:existence-restritiveness-minimizers:symplex}. From the Karush-Kuhn-Tucker optimality conditions (see e.g., \citet{bertsekas1997nonlinear}, Section 3.3) it follows that
			\begin{align}\nonumber
			&\exists (\lambda_{\mathcal{M}, *}, \mu_{\mathcal{M}, *}) \in \R^p_+\times \R^p_+ \text{ such that}\\
			\label{eq:theory:asymp-distr:non-exp-cond:kkt-1}
			& \sum_{i\in I_1(\Lambda^*)} -\Lambda_i^* \dfrac{a_{\mathcal{M},ij}}{\Lambda_{\mathcal{M}, i}^*} + \sum\limits_{i=1}^{d}a_{\mathcal{M}, ij} - \mu_{\mathcal{M},*,  j} = 0, \\
			\label{eq:theory:asymp-distr:non-exp-cond:kkt-2}
			&\mu_{\mathcal{M}, *, j} \lambda_{\mathcal{M}, *, j} \equiv 0, 
			\text{ for all }
			j \in \{1, \dots, p_{\mathcal{M}}\}.
			\end{align}
			By multiplying both sides of \eqref{eq:theory:asymp-distr:non-exp-cond:kkt-1} on $\lambda_{\mathcal{M}, *, j}$, summing up all equations with respect to $j$ and using \eqref{eq:theory:asymp-distr:non-exp-cond:kkt-2} we obtain the following necessary optimality condition:
			\begin{align}\label{eq:theory:asymp-distr:non-exp-cond:simplex-constr}
			\begin{split}
			&\langle
			\sum_{i=1}^{d} a_{\mathcal{M},i}, \lambda_{\mathcal{M}, *} \rangle
			= 
			\sum\limits_{j=1}^{p_{\mathcal{M}}}A_{\mathcal{M}, j}\lambda_{\mathcal{M}, *,j} 
			= \sum\limits_{i=1}^{d} \Lambda_{i}^{*}, \\
			&A_{\mathcal{M},j} = \sum\limits_{i=1}^{d}a_{\mathcal{M},ij}.
			\end{split}
			\end{align}
			Formula \eqref{eq:theory:asymp-distr:non-exp-cond:simplex-constr} proves \eqref{eq:prop:existence-restritiveness-minimizers:symplex}.
			The constraint in \eqref{eq:theory:asymp-distr:non-exp-cond:simplex-constr} can be added to the set of constraints in \eqref{eq:assump:theory:asymp-distr:non-exp-cond} without any effect since  it is necessary. Because the minimized functional in  \eqref{eq:assump:theory:asymp-distr:non-exp-cond} is convex and the domain  of constraints is now a convex compact there always exists at least one minimizer.
			
			Demonstration of \eqref{prop:assumptions-nonrestr:inclusion} is straightforward. 
			Indeed, if for some $i$ we have $\Lambda^*_i > 0$, then necessarily $\Lambda^*_{\mathcal{M},i} > 0$, otherwise the value of the target functional becomes $+\infty$ due to explosion of the logarithmic term.
			At the same time any interior point  $\lambda_{\mathcal{M}}\in \Delta^{p_{\mathcal{M}}}_{A_{\mathcal{M}}}(\Lambda^*)$ (i.e., $\lambda_{\mathcal{M}} \succ 0$) would result in the finite value of the target functional. Hence, inclusions \eqref{prop:assumptions-nonrestr:inclusion} always hold.
			
			Proposition is proved. %Assumption~\ref{assump:theory:distribution:non-expansive} is named by  the non-expansiveness condition because in \eqref{eq:assump:theory:asymp-distr:non-exp-cond:ind-cond} it directly forbids to expand $I_1(\Lambda^*)$ when projecting $P^t_{A,\lambda_*}$ onto  $P^t_{A_{\mathcal{M}}, \lambda_{\mathcal{M}}}$  in the sense of Kullback-Leibler divergence; see  Remark~\ref{rem:theory:asym-distr:exp-trivial-relationship}. In the Appendix~\ref{app:sect:MRI-mask-cond} we interpret this condition geometrically and propose an alternative name for it --  the mask condition.
		\end{proof}

		\subsection{Proof of Theorem~\ref{thm:asympt-distr:well-spec:identifiability-cond}}
		
		\begin{proof}
			First we prove \eqref{eq:asympt-distr:well-spec:identifiability-cond:approx}, then  \eqref{eq:thm:asympt-distr:identif-cond:strong-convexity} which also  implies uniqueness of the minimizer.
			
			Let $\lambda_{\mathcal{M}, *}\in \R^{p_\mathcal{M}}_+$ be a minimizer in  \eqref{eq:assump:theory:asymp-distr:non-exp-cond} (possibly not unique; see also Proposition~\ref{prop:existence-minimizers}).
			
			Let 
			\begin{equation}\label{eq:proof:misspecified-param-represnt}
			\lambda_{\mathcal{M}} = \lambda_{\mathcal{M},*} + u_{\mathcal{M}}, \, 
			\lambda_{\mathcal{M}}\in \R^{p_{\mathcal{M}}}_+.
			\end{equation}
			Consider the second order Taylor expansion of $L(\lambda \mid \Lambda^*, A_{\mathcal{M}}, 1)$ in \eqref{eq:assump:theory:asymp-distr:non-exp-cond} in a vicinity of $\lambda_{\mathcal{M},*}$:
			\begin{align}
			\label{eq:proof:misspecified-taylor-second-exp}
			\begin{split}
			L(\lambda_{\mathcal{M}}\mid \Lambda^*, A_{\mathcal{M}}, 1) &- L(\lambda_{\mathcal{M},*}\mid \Lambda^*, A_{\mathcal{M}}, 1) \\
			&=  u_{\mathcal{M}}^T \nabla L(\lambda_{\mathcal{M}, *} \mid \Lambda^*, A_{\mathcal{M}}, 1) 
			+ \dfrac{1}{2}\sum_{i\in I_1(\Lambda^*)} \Lambda^*_{i}\dfrac{(u^T_{\mathcal{M}} a_{\mathcal{M},i})^2}{(\Lambda_{\mathcal{M},i}^*)^2} \\ &+ 
			o(\|\Pi_{A^T_{\mathcal{M}, I_1(\Lambda^*)}}u_{\mathcal{M}}\|^2),
			\end{split}
			\end{align}
			where $\Lambda^*_{\mathcal{M}} = A_{\mathcal{M}}\lambda_{\mathcal{M}, *}$  and 
			\begin{align}\label{eq:proof:misspecified-kl-proj}
			\nabla L(\lambda_{\mathcal{M}, *} \mid \Lambda^*, A_{\mathcal{M}}, 1) = 
			\sum\limits_{i\in I_1(\Lambda^*)}-\Lambda^*_i \dfrac{a_{\mathcal{M}, i}}{\Lambda_{\mathcal{M},i}^*} + \sum\limits_{i=1}^{d} a_{\mathcal{M}, i}.
			\end{align}
			Karush-Kuhn-Tucker necessary optimality conditions for the problem in \eqref{eq:assump:theory:asymp-distr:non-exp-cond} imply that there exists $\mu_{\mathcal{M},*}$ such that 
			\begin{align}\label{eq:proof:misspecified-existence-lagrange-multiplier}
			\mu_{\mathcal{M},*} \succeq 0, \, 
			\nabla L(\lambda_{\mathcal{M},*} \mid \Lambda^*, A_{\mathcal{M}}, 1) = \mu_{\mathcal{M},*}, \, 
			\mu_{\mathcal{M},*,j} \lambda_{\mathcal{M},*,j} = 0, \, j = 1, \dots, p.
			\end{align}
			
			From formulas \eqref{eq:proof:misspecified-param-represnt}, \eqref{eq:proof:misspecified-existence-lagrange-multiplier} it follows that
			\begin{align}
			\label{eq:proof:misspecified-linear-positive}
			\begin{split}
			u_{\mathcal{M}}^T \nabla L(\lambda_{\mathcal{M}, *} \mid \Lambda^*, A_{\mathcal{M}}) &= u^T_{\mathcal{M}}\mu_{\mathcal{M},*} = 
			(\lambda_{\mathcal{M}} - \lambda_{\mathcal{M}, *})^T\mu_{\mathcal{M},*}\\
			&= \lambda_{\mathcal{M}}^T\mu_{\mathcal{M},*} \geq 0.
			\end{split}
			\end{align}
			Note also that $\mu_{\mathcal{M}, *}$ is the optimal Lagrangian multiplier for the problem in \eqref{eq:assump:theory:asymp-distr:non-exp-cond} for which the strong duality holds (e.g., by Slater's condition). 
			
			Formulas \eqref{eq:asympt-distr:well-spec:identifiability-cond:approx}, \eqref{eq:asympt-distr:well-spec:identifiability-cond:grad-formulas} follow from \eqref{eq:proof:misspecified-taylor-second-exp}-\eqref{eq:proof:misspecified-linear-positive}. Next, we prove that  \eqref{eq:thm:asympt-distr:identif-cond:strong-convexity} holds.
			
			Using \eqref{eq:proof:misspecified-linear-positive} we obtain the following estimate:
			\begin{equation}\label{eq:proof:misspecified:linear-to-quadratic}
			\begin{split}
			u_{\mathcal{M}}^T \nabla L(\lambda_{\mathcal{M}, *} \mid \Lambda^*, A_{\mathcal{M}}, 1) = u_{\mathcal{M}}^T \mu_{\mathcal{M},*} \geq  (u_{\mathcal{M}}^T \mu_{\mathcal{M},*})^2 \text{ if }
			\|u_{\mathcal{M}}\| \leq \|\mu_{\mathcal{M},*}\|^{-1}.
			\end{split}
			\end{equation}
			
			From \eqref{eq:proof:misspecified-taylor-second-exp},  \eqref{eq:proof:misspecified:linear-to-quadratic} it follows that 
			\begin{align}\label{eq:proof:misspecified-low-quadr-bound-1}
			\begin{split}
			L(\lambda_{\mathcal{M}}\mid \Lambda^*, A_{\mathcal{M}}, 1) &- L(\lambda_{\mathcal{M},*}\mid \Lambda^*, A_{\mathcal{M}}, 1)\\
			&\geq 
			u_{\mathcal{M}}^T
			C_{\mathcal{M},*}
			u_{\mathcal{M}} + o(\|u_{\mathcal{M}}\|^2),\\
			&\text{for }\|u_{\mathcal{M}}\| \leq \|\mu_{\mathcal{M},*}\|^{-1},
			\end{split}
			\end{align}
			where 
			\begin{equation}\label{eq:proof:misspecified-cm-quad-matr}
			C_{\mathcal{M},*} = \mu_{\mathcal{M},*}\mu_{\mathcal{M},*}^T + 
			\dfrac{1}{2}
			\sum_{i\in I_1(\Lambda^*)} \Lambda^*_{i}\dfrac{ a_{\mathcal{M},i}a_{\mathcal{M},i}^T}{(\Lambda_{\mathcal{M},i}^*)^2}.
			\end{equation}
			To finish the proof we use two following lemmas.
			\begin{lem}\label{lem:misspecified-inj-qform}
				Let assumptions of Theorem~\ref{thm:asympt-distr:well-spec:identifiability-cond} be satisfied.
				Let 
				\begin{align}\label{eq:proof:misspecified-cdelta}
				C_{\delta} = \inf_{\substack{u_{\mathcal{M}}:\lambda_{\mathcal{M}, *} + u_{\mathcal{M}}\succeq 0, \\ \|u_{\mathcal{M}}\| = \delta}}
				u^T_{\mathcal{M}} C_{\mathcal{M}, *}u_{\mathcal{M}}.
				\end{align}
				Then,
				\begin{equation}
				C_{\delta} > 0 \text{ for any }\delta > 0.
				\end{equation}
			\end{lem}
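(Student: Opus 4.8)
The plan is to exploit that the matrix $C_{\mathcal{M},*}$ of \eqref{eq:proof:misspecified-cm-quad-matr} is positive semidefinite, being a sum of the rank-one term $\mu_{\mathcal{M},*}\mu_{\mathcal{M},*}^T$ and the nonnegatively weighted rank-one terms $\Lambda^*_i\, a_{\mathcal{M},i}a_{\mathcal{M},i}^T/(\Lambda^*_{\mathcal{M},i})^2$. Hence $u_{\mathcal{M}}^T C_{\mathcal{M},*} u_{\mathcal{M}} \geq 0$ throughout, and the whole task reduces to excluding feasible null directions. First I would observe that the constraint set $\{u_{\mathcal{M}} : \lambda_{\mathcal{M},*} + u_{\mathcal{M}} \succeq 0,\ \|u_{\mathcal{M}}\| = \delta\}$ is nonempty (e.g. $\delta e_j$ is feasible since $\lambda_{\mathcal{M},*}\succeq 0$) and compact, while the quadratic form is continuous; therefore the infimum $C_\delta$ in \eqref{eq:proof:misspecified-cdelta} is attained at some $u_0$. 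Because $C_{\mathcal{M},*}$ is positive semidefinite, $C_\delta = 0$ would force $u_0^T C_{\mathcal{M},*} u_0 = 0$, i.e. $u_0 \in \ker C_{\mathcal{M},*}$ for a nonzero feasible $u_0$, and it suffices to show this is impossible.

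Next I would characterize the kernel. Since $u_0^T C_{\mathcal{M},*} u_0 = (u_0^T\mu_{\mathcal{M},*})^2 + \tfrac12 \sum_{i\in I_1(\Lambda^*)} \Lambda^*_i\, (u_0^T a_{\mathcal{M},i})^2 / (\Lambda^*_{\mathcal{M},i})^2$ is a sum of nonnegative terms, its vanishing forces simultaneously $u_0^T\mu_{\mathcal{M},*}=0$ and $u_0^T a_{\mathcal{M},i} = 0$ for every $i \in I_1(\Lambda^*)$, the weights $\Lambda^*_i/(\Lambda^*_{\mathcal{M},i})^2$ being strictly positive there (recall $I_1(\Lambda^*)\subset I_1(\Lambda^*_{\mathcal{M}})$ by Proposition~\ref{prop:existence-minimizers}). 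The aim is then to upgrade these relations, which a priori constrain only the rows indexed by $I_1(\Lambda^*)$, to the full system $A_{\mathcal{M}} u_0 = 0$, after which Assumption~\ref{assump:theory:distribution:inj-am} (injectivity of $A_{\mathcal{M}}$) yields $u_0 = 0$, contradicting $\|u_0\| = \delta > 0$.

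The bridge is the gradient/KKT identity \eqref{eq:asympt-distr:well-spec:identifiability-cond:grad-formulas}: writing $\mu_{\mathcal{M},*} = \sum_{i\in I_1(\Lambda^*)} a_{\mathcal{M},i}\bigl(1 - \Lambda^*_i/\Lambda^*_{\mathcal{M},i}\bigr) + \sum_{i\in I_0(\Lambda^*)} a_{\mathcal{M},i}$ and pairing with $u_0$ annihilates the $I_1$-sum, leaving $0 = u_0^T\mu_{\mathcal{M},*} = \sum_{i\in I_0(\Lambda^*)} u_0^T a_{\mathcal{M},i}$. Here the non-expansiveness condition (Assumption~\ref{assump:theory:distribution:non-expansive}) enters decisively: it gives $I_0(\Lambda^*_{\mathcal{M}}) = I_0(\Lambda^*)$, so $\Lambda^*_{\mathcal{M},i} = a_{\mathcal{M},i}^T\lambda_{\mathcal{M},*} = 0$ for $i\in I_0(\Lambda^*)$. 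Since $a_{\mathcal{M},i}$ and $\lambda_{\mathcal{M},*}$ are nonnegative, this forces $a_{\mathcal{M},ij} = 0$ whenever $\lambda_{\mathcal{M},*,j} > 0$, while feasibility $\lambda_{\mathcal{M},*} + u_0 \succeq 0$ forces $u_{0,j} \geq 0$ on the complementary index set $\{j : \lambda_{\mathcal{M},*,j} = 0\}$. Consequently each $u_0^T a_{\mathcal{M},i} = \sum_j u_{0,j} a_{\mathcal{M},ij}$ with $i\in I_0(\Lambda^*)$ is a sum of nonnegative terms, so their total being zero forces every individual $u_0^T a_{\mathcal{M},i} = 0$ for $i\in I_0(\Lambda^*)$ as well.

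Combining the two families of orthogonality relations gives $u_0^T a_{\mathcal{M},i} = 0$ for all $i$, that is $A_{\mathcal{M}} u_0 = 0$, and injectivity closes the argument, so $C_\delta > 0$. I expect the sign-chasing in the third paragraph to be the main obstacle: the hypothesis delivers only the single scalar identity $u_0^T\mu_{\mathcal{M},*} = 0$, and converting it into the full collection of per-row identities $u_0^T a_{\mathcal{M},i} = 0$, $i \in I_0(\Lambda^*)$, genuinely requires the nonnegativity of the entries of $A_{\mathcal{M}}$, the support identity $I_0(\Lambda^*_{\mathcal{M}}) = I_0(\Lambda^*)$ from non-expansiveness, and the feasibility sign of $u_0$ to act in concert. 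Without non-expansiveness the support $I_0(\Lambda^*_{\mathcal{M}})$ could be strictly smaller than $I_0(\Lambda^*)$, a feasible null direction could survive, and positivity would fail, consistent with the counterexample flagged after Theorem~\ref{thm:asympt-distr:well-spec:identifiability-cond}.
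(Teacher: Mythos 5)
Your proposal is correct and follows essentially the same route as the paper's proof: contradiction via compactness, reading off $u_0^T a_{\mathcal{M},i}=0$ for $i\in I_1(\Lambda^*)$ and $u_0^T\mu_{\mathcal{M},*}=0$ from the vanishing quadratic form, converting the latter into $\sum_{i\in I_0(\Lambda^*)}u_0^Ta_{\mathcal{M},i}=0$ via the KKT gradient identity, upgrading this to per-row identities using non-expansiveness together with nonnegativity and feasibility, and closing with injectivity of $A_{\mathcal{M}}$. The only cosmetic difference is that you run the sign argument coordinate-wise, whereas the paper phrases it through $u_0^Ta_{\mathcal{M},i}=\Lambda_{\mathcal{M},i}-\Lambda^*_{\mathcal{M},i}=\Lambda_{\mathcal{M},i}\succeq 0$; these are the same argument.
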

			
			\begin{lem}\label{lem:proofs:positive-cone-geom-lemma}
				Let $\lambda_{\mathcal{M}, *} \in \R^{p_{\mathcal{M}}}_+$. There exists $\delta_* > 0$ such that for any  $u_{\mathcal{M}}\in \R^{p_\mathcal{M}}$, $0< |u_{\mathcal{M}}|\leq \delta_*$, $\lambda_{\mathcal{M},*} + u_{\mathcal{M}} \succeq 0$ it also holds that 
				\begin{align}
				\lambda_{\mathcal{M},*} + \delta_{*}\dfrac{u_{\mathcal{M}}}{\|u_{\mathcal{M}}\|} \succeq 0.
				\end{align}
			\end{lem}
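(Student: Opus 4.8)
The plan is to prove this purely geometric fact about the nonnegative cone $\R^{p_{\mathcal{M}}}_+$ by a coordinate-wise argument. First I would dispose of the naive attempt: since $\lambda_{\mathcal{M},*}\in\R^{p_{\mathcal{M}}}_+$ and $\lambda_{\mathcal{M},*}+u_{\mathcal{M}}\succeq 0$, convexity of $\R^{p_{\mathcal{M}}}_+$ yields $\lambda_{\mathcal{M},*}+t\,u_{\mathcal{M}}\succeq 0$ for every $t\in[0,1]$. But the conclusion requires evaluating the direction $u_{\mathcal{M}}$ rescaled to norm exactly $\delta_*$, i.e.\ at $t=\delta_*/\|u_{\mathcal{M}}\|\geq 1$ (using $\|u_{\mathcal{M}}\|\leq\delta_*$), which lies \emph{outside} $[0,1]$. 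Hence one genuinely extrapolates beyond the feasible point $\lambda_{\mathcal{M},*}+u_{\mathcal{M}}$, convexity does not apply, and a componentwise analysis is forced. This is the only conceptual point worth flagging.

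Next I would split the indices $\{1,\dots,p_{\mathcal{M}}\}$ into the active set $J_0=\{j:\lambda_{\mathcal{M},*,j}=0\}$ and the inactive set $J_+=\{j:\lambda_{\mathcal{M},*,j}>0\}$, and choose
\begin{equation*}
\delta_*=\min_{j\in J_+}\lambda_{\mathcal{M},*,j}>0,
\end{equation*}
with any $\delta_*>0$ in the degenerate case $\lambda_{\mathcal{M},*}=0$ (where $J_+=\emptyset$). The goal then reduces to checking $\lambda_{\mathcal{M},*,j}+\delta_*\,u_{\mathcal{M},j}/\|u_{\mathcal{M}}\|\geq 0$ coordinate by coordinate.

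I would then verify the two index classes. For $j\in J_0$, feasibility $\lambda_{\mathcal{M},*,j}+u_{\mathcal{M},j}\geq 0$ forces $u_{\mathcal{M},j}\geq 0$, so the rescaled entry $\delta_*\,u_{\mathcal{M},j}/\|u_{\mathcal{M}}\|$ is nonnegative and the constraint holds automatically. For $j\in J_+$ with $u_{\mathcal{M},j}\geq 0$ the bound is immediate since $\lambda_{\mathcal{M},*,j}+\delta_*\,u_{\mathcal{M},j}/\|u_{\mathcal{M}}\|\geq\lambda_{\mathcal{M},*,j}>0$; the only nontrivial case is $j\in J_+$ with $u_{\mathcal{M},j}<0$, where the choice of $\delta_*$ enters through the elementary bound $|u_{\mathcal{M},j}|\leq\|u_{\mathcal{M}}\|$, giving
\begin{equation*}
\lambda_{\mathcal{M},*,j}+\delta_*\frac{u_{\mathcal{M},j}}{\|u_{\mathcal{M}}\|}
=\lambda_{\mathcal{M},*,j}-\delta_*\frac{|u_{\mathcal{M},j}|}{\|u_{\mathcal{M}}\|}
\geq\lambda_{\mathcal{M},*,j}-\delta_*\geq 0.
\end{equation*}

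Combining the cases yields $\lambda_{\mathcal{M},*}+\delta_*\,u_{\mathcal{M}}/\|u_{\mathcal{M}}\|\succeq 0$, which is the claim. I do not expect a substantive obstacle: the worst coordinate is an inactive index along which $u_{\mathcal{M}}$ decreases, and it is controlled exactly by taking $\delta_*$ no larger than the smallest positive entry of $\lambda_{\mathcal{M},*}$. I would remark that the hypothesis $\|u_{\mathcal{M}}\|\leq\delta_*$ is in fact not needed for the conclusion itself (only $\|u_{\mathcal{M}}\|>0$ is, so that $u_{\mathcal{M}}/\|u_{\mathcal{M}}\|$ is defined); that hypothesis serves solely to guarantee a rescaling factor $\delta_*/\|u_{\mathcal{M}}\|\geq 1$, which is how this lemma combines with the homogeneity of the quadratic form $u_{\mathcal{M}}^{T}C_{\mathcal{M},*}u_{\mathcal{M}}$ and with Lemma~\ref{lem:misspecified-inj-qform} in the proof of Theorem~\ref{thm:asympt-distr:well-spec:identifiability-cond} to produce the strong-convexity estimate \eqref{eq:thm:asympt-distr:identif-cond:strong-convexity}.
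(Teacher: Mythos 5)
Your proof is correct and takes essentially the same approach as the paper's: the paper also sets $\delta_*$ proportional to the smallest positive entry of $\lambda_{\mathcal{M},*}$ (with an immaterial factor $\tfrac{1}{2}$) and runs the identical coordinate-wise argument, only phrased as a contradiction --- assuming $\lambda_{\mathcal{M},*,j}+\delta_*u_{\mathcal{M},j}/\|u_{\mathcal{M}}\|<0$ forces $\lambda_{\mathcal{M},*,j}>0$ and $u_{\mathcal{M},j}<0$, hence $-u_{\mathcal{M},j}>2\|u_{\mathcal{M}}\|$, contradicting $|u_{\mathcal{M},j}|\leq\|u_{\mathcal{M}}\|$. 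Your direct case split and your side remark that the hypothesis $\|u_{\mathcal{M}}\|\leq\delta_*$ is not needed for the geometric conclusion (only for how the lemma is combined with the homogeneity argument in the proof of Theorem~\ref{thm:asympt-distr:well-spec:identifiability-cond}) are both accurate.
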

			Let $\delta_*$ be the one of Lemma~\ref{lem:proofs:positive-cone-geom-lemma} for chosen $\lambda_{\mathcal{M}, *}$. From  \eqref{eq:proof:misspecified-low-quadr-bound-1}, \eqref{eq:proof:misspecified-cm-quad-matr} and the results of Lemmas~\ref{lem:misspecified-inj-qform},~\ref{lem:proofs:positive-cone-geom-lemma}, it follows that 
			\begin{align}\label{eq:proof:misspecified-final-quadratic-lower-bnd}
			\begin{split}
			L(\lambda_{\mathcal{M}}\mid \Lambda^*, A_{\mathcal{M}}, 1) &- L(\lambda_{\mathcal{M},*}\mid \Lambda^*, A_{\mathcal{M}}, 1)\\
			&\geq 
			\dfrac{\delta_*u_{\mathcal{M}}^T}{\|u_{\mathcal{M}}\|}
			C_{\mathcal{M},*}
			\dfrac{\delta_{*}u_{\mathcal{M}}}{\|u_{\mathcal{M}}\|}  \dfrac{\|u_{\mathcal{M}}\|^2}{\delta_*^2}+ o(\|u_{\mathcal{M}}\|^2)\\
			&\geq C_{\delta_*}\dfrac{\|u_{\mathcal{M}}\|^2}{\delta_*^2}+ o(\|u_{\mathcal{M}}\|^2), \, C_{\delta_*} > 0, \\
			&\text{ for } \lambda_{\mathcal{M}} = \lambda_{\mathcal{M},*} + u_{\mathcal{M}} \succeq 0, \, |u_{\mathcal{M}}| \leq \min(\delta_*, |\mu_{\mathcal{M},*}|^{-1}).
			\end{split}
			\end{align}
			Formula \eqref{eq:proof:misspecified-final-quadratic-lower-bnd} proves the claim in \eqref{eq:thm:asympt-distr:identif-cond:strong-convexity}. 
			
			Theorem is proved.
		\end{proof}
		
		\begin{proof}[ of Lemma~\ref{lem:misspecified-inj-qform}]
			We use the contradiction argument.
			Assume that it exists $\delta > 0$ such that $C_\delta =0$, where $C_\delta$ is defined in \eqref{eq:proof:misspecified-cdelta}. Since the infimum in \eqref{eq:proof:misspecified-cdelta} is taken over a compact set, there should exist $u_{\mathcal{M}}$ such that 
			\begin{equation}\label{eq:proof:lem:misspecified:cdelta-contradiction-point}
			\|u_{\mathcal{M}}\| = \delta, \, \lambda_{\mathcal{M}, *} + u_{\mathcal{M}} \succeq 0, \, 
			u_{\mathcal{M}}^TC_{\mathcal{M},*}u_{\mathcal{M}} = 0.
			\end{equation}
			Formulas \eqref{eq:proof:misspecified-cm-quad-matr}, \eqref{eq:proof:lem:misspecified:cdelta-contradiction-point} imply that 
			\begin{align}\label{eq:proof:misspecified-u-kernel-props}
			u^{T}_{\mathcal{M}}a_{\mathcal{M},i} = 0, \, i\in I_1(\Lambda^*), \, u^{T}_{\mathcal{M}}\mu_{\mathcal{M},*} = 0.
			\end{align}
			Using formulas \eqref{eq:assump:theory:asymp-distr:non-exp-cond:ind-cond} in the non-expansiveness condition,
			\eqref{eq:proof:misspecified-kl-proj}, \eqref{eq:proof:misspecified-linear-positive},  \eqref{eq:proof:misspecified-u-kernel-props} we obtain the following:
			\begin{align}
			\label{eq:proof:misspecified-zero-cond-on-intensities-1}
			\begin{split}
			u^{T}_{\mathcal{M}}\mu_{\mathcal{M},*} &= \sum_{i\in I_0(\Lambda^*)}
			u_{\mathcal{M}}^Ta_{\mathcal{M},*,i} = 
			\sum_{i\in I_0(\Lambda^*)} (\lambda_{\mathcal{M},i}-\lambda_{\mathcal{M},*,i})^Ta_{\mathcal{M},*,i}\\
			&=\sum_{i\in I_0(\Lambda^*)} (\Lambda_{\mathcal{M},i} - \Lambda_{\mathcal{M},*,i}) = \sum_{i\in I_0(\Lambda^*)} \Lambda_{\mathcal{M},i} = 0, \, \Lambda_{\mathcal{M},i} = \lambda_{\mathcal{M}}^Ta_{\mathcal{M},i}.
			\end{split}
			\end{align}
			From \eqref{eq:proof:misspecified-zero-cond-on-intensities-1} and the fact that $\Lambda_{\mathcal{M}}\succeq 0$ it follows that 
			\begin{align}
			\label{eq:proof:misspecified-zero-cond-on-intensities-2}
			&\Lambda_{\mathcal{M},i} = u_{\mathcal{M}}^Ta_{\mathcal{M},i} = 0, \, 
			i \in I_0(\Lambda^*).
			\end{align}
			Putting formulas \eqref{eq:proof:misspecified-u-kernel-props}, \eqref{eq:proof:misspecified-zero-cond-on-intensities-2} together, we arrive to the following:
			\begin{equation}\label{eq:proof:misspecified-zero-proj-all}
			u^T_{\mathcal{M}}a_{\mathcal{M},i} = 0 \text{ for } i\in \{1, \dots, d\}.
			\end{equation}
			The injectivity of $A_{\mathcal{M}}$ and \eqref{eq:proof:misspecified-zero-proj-all} imply that $u_{\mathcal{M}} = 0$ which contradicts the initial assumption that $\|u_{\mathcal{M}}\| = \delta > 0$.
			
			Lemma is proved.
		\end{proof}

		\begin{proof}[ of Lemma~\ref{lem:proofs:positive-cone-geom-lemma}]
			We prove the claim by contradiction.
			
			The claim is obvious for $\lambda_{\mathcal{M},*} = 0$. 
			
			Let $\lambda_{\mathcal{M},*} \neq 0$ and 
			\begin{equation}\label{eq:proof:lem-positive-cone-geom-lemma:d-start-def}
			\delta_* = \dfrac{1}{2}\min \{\lambda_{\mathcal{M}, *, j} \mid \lambda_{\mathcal{M}, *, j} > 0\}, \, \delta_* > 0.
			\end{equation}
			Let $u_{\mathcal{M}}$ be such that 
			\begin{equation}\label{eq:proof:lem-positive-cone-geom-lemma:u-assmp}
			0 < \|u_{\mathcal{M}}\| \leq \delta_*,\, \lambda_{\mathcal{M},*} + u_{\mathcal{M}} \succeq 0
			\end{equation}
			and assume that 
			\begin{align}\label{eq:proof:lem-positive-cone-geom-lemma:contr-assump}
			\lambda_{\mathcal{M},*} + \delta_*\dfrac{u_{\mathcal{M}}}{\|u_{\mathcal{M}}\|} \not \succeq 0
			\Leftrightarrow \exists j\in \{1, \dots, p_{\mathcal{M}}\}\text{ such that } \lambda_{\mathcal{M},*,j} + \delta_* \dfrac{u_{\mathcal{M},j}}{\|u_{\mathcal{M}}\|} < 0.
			\end{align}
			From the fact that $\lambda_{\mathcal{M},*} \succeq 0$ and  \eqref{eq:proof:lem-positive-cone-geom-lemma:u-assmp}, \eqref{eq:proof:lem-positive-cone-geom-lemma:contr-assump} it follows that 
			\begin{align}\label{eq:proof:lem-positive-cone-geom-lemma:ind-negative}
			\text{ for }j\text{ from \eqref{eq:proof:lem-positive-cone-geom-lemma:contr-assump}} \text{ it holds that } \lambda_{\mathcal{M},*,j} > 0, \, u_{\mathcal{M},j} < 0.
			\end{align}
			Using \eqref{eq:proof:lem-positive-cone-geom-lemma:d-start-def}, \eqref{eq:proof:lem-positive-cone-geom-lemma:contr-assump}, \eqref{eq:proof:lem-positive-cone-geom-lemma:ind-negative} we get the following implication:
			\begin{align}\label{eq:proof:lem-positive-cone-geom-lemma:contradiction}
			\dfrac{\delta_*}{\|u_{\mathcal{M}}\|}(-u_{\mathcal{M}, j}) > \lambda_{\mathcal{M},*,j} \geq 2 \delta_* \Rightarrow (-u_{\mathcal{M},j}) > 2 \|u_{\mathcal{M}}\|.
			\end{align}
			The inequality in the right hand-side of \eqref{eq:proof:lem-positive-cone-geom-lemma:contradiction} gives the desired contradiction.
			
			Lemma is proved.
		\end{proof}

		\subsection{Proof of Theorem~\ref{thm:asympt-distr-main}}
		\begin{proof}
			In what follows we use the following auxiliary result. 
			
			\begin{theorem}[concentration rate for the mixing parameter]	
				\label{thm:asympt-distr:well-spec:concentr-mixing-param}
				Let Assumptions~\ref{assump:theory:consistency:well-spec}-\ref{assump:theory:distribution:non-expansive} be satisfied.
				Let $\widetilde{\lambda}_{\mathcal{M}}^t$ be sampled as in  Algorithm~\ref{alg:wbb-pet-bootstrap:mri:posterior-mixing-param} and $r(t) = o(\sqrt{t / \log \log t})$. Then,	
				\begin{align}\label{eq:thm:mixing-param:posterior-lambda-concentration}
				r(t)(\widetilde{\lambda}^t_{\mathcal{M}} - \lambda_{\mathcal{M}, *})
				\xrightarrow{c.p.} 0 
				\text{ when } t\rightarrow +\infty, \, \text{ a.s. } Y^t, t\in (0, +\infty),
				\end{align}
				where $\lambda_{\mathcal{M},*}$ is from  Theorem~\ref{thm:asympt-distr:well-spec:identifiability-cond}.
				Note that formula \eqref{eq:thm:mixing-param:posterior-lambda-concentration} also implies 
				\begin{align}\label{eq:thm:mixing-param:posterior-concentration}
				r(t)(\widetilde{\Lambda}^t_{\mathcal{M}} - \Lambda_{\mathcal{M}}^*) \xrightarrow{c.p.} 0 
				\text{ when } t\rightarrow +\infty, \, \text{ a.s. } Y^t, t\in (0, +\infty), 
				\end{align}
				where $\Lambda_{\mathcal{M}}^t = A\lambda_{\mathcal{M}}^t$,  $\Lambda^*_{\mathcal{M}} = A_\mathcal{M}\lambda_{\mathcal{M},*}$.
			\end{theorem}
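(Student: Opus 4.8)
The plan is to treat $\widetilde{\lambda}_{\mathcal{M}}^t$ as an $M$-estimator for the deterministic target $\lambda_{\mathcal{M},*}$ and to run a basic-inequality rate argument driven by the local strong convexity established in Theorem~\ref{thm:asympt-distr:well-spec:identifiability-cond}. Write $f_t(\lambda_{\mathcal{M}}) = L(\lambda_{\mathcal{M}} \mid \widetilde{\Lambda}^t, A_{\mathcal{M}}, 1)$ and $f_*(\lambda_{\mathcal{M}}) = L(\lambda_{\mathcal{M}} \mid \Lambda^*, A_{\mathcal{M}}, 1)$ with $L(\cdot)$ from \eqref{eq:prelim:poiss-log-likelihood-model}, so that $\widetilde{\lambda}_{\mathcal{M}}^t = \argmin_{\lambda_{\mathcal{M}}\succeq 0} f_t$ and $\lambda_{\mathcal{M},*} = \argmin_{\lambda_{\mathcal{M}}\succeq 0} f_*$. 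The first observation I would record is that for $i\in I_0(\Lambda^*)$ one has $\Lambda_i^* = 0$, hence $Y_i^t \equiv 0$ and, by the representation \eqref{eq:appendix:common-prob:int-lit-repr}, $\widetilde{\Lambda}_i^t \equiv 0$; combined with the non-expansiveness condition \eqref{eq:assump:theory:asymp-distr:non-exp-cond:ind-cond}, which forces $\Lambda_{\mathcal{M},i}^* = 0$ for $i\in I_0(\Lambda^*)$, this makes all logarithmic contributions cancel on $I_0(\Lambda^*)$. Consequently $g_t := f_t - f_* = -\sum_{i\in I_1(\Lambda^*)}(\widetilde{\Lambda}_i^t - \Lambda_i^*)\log(a_{\mathcal{M},i}^T\lambda_{\mathcal{M}})$ involves only indices in $I_1(\Lambda^*)$, along which both $\Lambda_i^*$ and $\Lambda_{\mathcal{M},i}^*$ are strictly positive, so no boundary singularity appears. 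This is precisely the step where Assumption~\ref{assump:theory:distribution:non-expansive} is indispensable.

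Second, I would establish preliminary consistency $\widetilde{\lambda}_{\mathcal{M}}^t \xrightarrow{c.p.} \lambda_{\mathcal{M},*}$. Since $\widetilde{\Lambda}_i^t \xrightarrow{c.p.} \Lambda_i^*$ (by the computation of Lemma~\ref{lem:consistency:proof:algorithm}: conditional mean $Y_i^t/t\to\Lambda_i^*$ a.s.\ and conditional variance $Y_i^t/t^2\to 0$), the random objective $f_t$ converges to $f_*$ uniformly on compacta, while by Theorem~\ref{thm:asympt-distr:well-spec:identifiability-cond} the limit $f_*$ has the unique, well-separated minimizer $\lambda_{\mathcal{M},*}$. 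An argmin-consistency argument in conditional probability, mirroring the strong-convexity-in-a-vicinity reasoning of Lemma~\ref{lem:consistency:strong-convexity-vicinity} but now with the deterministic separation \eqref{eq:thm:asympt-distr:identif-cond:strong-convexity}, then yields consistency; injectivity of $A_{\mathcal{M}}$ (Assumption~\ref{assump:theory:distribution:inj-am}) guarantees the minimizer is isolated. In particular $\widetilde{\lambda}_{\mathcal{M}}^t$ lies in the ball $B_*\cap\R^{p_{\mathcal{M}}}_+$ of Theorem~\ref{thm:asympt-distr:well-spec:identifiability-cond} with conditional probability tending to one, and $\widetilde{\Lambda}_{\mathcal{M},i}^t = a_{\mathcal{M},i}^T\widetilde{\lambda}_{\mathcal{M}}^t$ stays bounded away from $0$ for $i\in I_1(\Lambda^*)$.

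Third comes the rate. Minimality $f_t(\widetilde{\lambda}_{\mathcal{M}}^t) \le f_t(\lambda_{\mathcal{M},*})$ together with \eqref{eq:thm:asympt-distr:identif-cond:strong-convexity} gives the basic inequality
\begin{align*}
C_*\|\widetilde{\lambda}_{\mathcal{M}}^t - \lambda_{\mathcal{M},*}\|^2
&\le f_*(\widetilde{\lambda}_{\mathcal{M}}^t) - f_*(\lambda_{\mathcal{M},*})
\le g_t(\lambda_{\mathcal{M},*}) - g_t(\widetilde{\lambda}_{\mathcal{M}}^t) \\
&= \sum_{i\in I_1(\Lambda^*)}(\widetilde{\Lambda}_i^t - \Lambda_i^*)\bigl(\log\widetilde{\Lambda}_{\mathcal{M},i}^t - \log\Lambda_{\mathcal{M},i}^*\bigr),
\end{align*}
valid on the event $\{\widetilde{\lambda}_{\mathcal{M}}^t\in B_*\cap\R^{p_{\mathcal{M}}}_+\}$. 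I would bound the right-hand side by Cauchy--Schwarz. For the first factor I split $\widetilde{\Lambda}_i^t - \Lambda_i^* = (\widetilde{\Lambda}_i^t - Y_i^t/t) + (Y_i^t/t - \Lambda_i^*)$; the pathwise term is $O(\sqrt{\log\log t/t})$ a.s.\ $Y^t$ by the law of the iterated logarithm \eqref{eq:appendix:lil}, while the bootstrap fluctuation has conditional mean zero and conditional variance $Y_i^t/t^2 = O(1/t)$, hence is $O_{cp}(1/\sqrt{t})$; together $\widetilde{\Lambda}_i^t - \Lambda_i^* = O_{cp}(\sqrt{\log\log t/t})$. For the second factor, a mean-value expansion of $\log$ around $\Lambda_{\mathcal{M},i}^*>0$ gives $\log\widetilde{\Lambda}_{\mathcal{M},i}^t - \log\Lambda_{\mathcal{M},i}^* = O(|a_{\mathcal{M},i}^T(\widetilde{\lambda}_{\mathcal{M}}^t-\lambda_{\mathcal{M},*})|) = O(\|\widetilde{\lambda}_{\mathcal{M}}^t-\lambda_{\mathcal{M},*}\|)$, the constant being controlled by the lower bound on $\widetilde{\Lambda}_{\mathcal{M},i}^t$ from the consistency step. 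Substituting and cancelling one power of $\|\widetilde{\lambda}_{\mathcal{M}}^t-\lambda_{\mathcal{M},*}\|$ yields $\|\widetilde{\lambda}_{\mathcal{M}}^t-\lambda_{\mathcal{M},*}\| = O_{cp}(\sqrt{\log\log t/t})$. Multiplying by $r(t) = o(\sqrt{t/\log\log t})$ then gives $r(t)\|\widetilde{\lambda}_{\mathcal{M}}^t-\lambda_{\mathcal{M},*}\| \xrightarrow{c.p.}0$, which is \eqref{eq:thm:mixing-param:posterior-lambda-concentration}; the companion statement \eqref{eq:thm:mixing-param:posterior-concentration} follows by applying the bounded linear map $A_{\mathcal{M}}$.

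The step I expect to be the main obstacle is the careful bookkeeping of the two distinct sources of randomness: the pathwise fluctuation of $Y^t$, which must be controlled almost surely and uniformly in $t$ via the law of the iterated logarithm, and the conditional (bootstrap) fluctuation of $\widetilde{\Lambda}^t$ given $Y^t$, which must be controlled only in conditional probability. Making the $O_{cp}(\cdot)$ statements precise for almost every trajectory simultaneously --- rather than merely in probability --- is the delicate part, and it is exactly the reason the rate $\sqrt{\log\log t/t}$ (and not the naive $1/\sqrt{t}$) governs the conclusion. A secondary technical point is ensuring the preliminary consistency holds in conditional probability, so that the local inequality \eqref{eq:thm:asympt-distr:identif-cond:strong-convexity} may legitimately be invoked.
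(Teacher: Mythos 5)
Your proposal is correct in substance and rests on the same two pillars as the paper's own proof: the local strong convexity bound \eqref{eq:thm:asympt-distr:identif-cond:strong-convexity} from Theorem~\ref{thm:asympt-distr:well-spec:identifiability-cond} (which is exactly where Assumptions~\ref{assump:theory:distribution:inj-am}--\ref{assump:theory:distribution:non-expansive} enter), and the split of $\widetilde{\Lambda}_i^t-\Lambda_i^*$ into a conditional bootstrap fluctuation (controlled by Markov/Chebyshev through the conditional variance $Y_i^t/t^2$) plus the pathwise term $Y_i^t/t-\Lambda_i^*$ (controlled almost surely by the law of the iterated logarithm). Where you differ is the packaging of the deterministic part. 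The paper runs a one-shot convexity argument directly at the rate scale: it parametrizes $\lambda_{\mathcal{M}}=\lambda_{\mathcal{M},*}+u_{\mathcal{M}}/r(t)$, notes that the normalized convex objective vanishes at $\lambda_{\mathcal{M},*}$, and shows it is positive on the sphere $\|u_{\mathcal{M}}\|=\delta$ with conditional probability tending to one (linearizing the log via $\log(1+x)\le x$); convexity then traps every minimizer at distance $\delta/r(t)$, so localization and rate come simultaneously, with no separate consistency statement. You instead localize first (consistency into $B_*$) and then run the classical basic inequality with Cauchy--Schwarz and a mean-value bound on the logarithm. This works, and it even produces the explicit rate $\sqrt{\log\log t/t}$ rather than just the $o(1/r(t))$ statement, but it makes you pay for a preliminary consistency proof that the paper's route gets for free.

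Two imprecisions you should repair. First, the cancellation of the $I_0(\Lambda^*)$ terms in $g_t=f_t-f_*$ is caused by $\widetilde{\Lambda}_i^t=\Lambda_i^*=0$ (well-specification gives $Y_i^t\equiv 0$, hence $\widetilde{\Lambda}_i^t\equiv 0$), and holds regardless of the value of $\Lambda_{\mathcal{M},i}^*$; the non-expansiveness condition is not what produces this cancellation --- its real role is to guarantee uniqueness of $\lambda_{\mathcal{M},*}$ and the strong convexity you invoke, while positivity of $\Lambda_{\mathcal{M},i}^*$ for $i\in I_1(\Lambda^*)$ already follows from Proposition~\ref{prop:existence-minimizers}. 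Second, the claim that $f_t\to f_*$ ``uniformly on compacta'' is false as stated: on any compact subset of $\R^{p_{\mathcal{M}}}_+$ that touches $\{a_{\mathcal{M},i}^T\lambda_{\mathcal{M}}=0\}$ for some $i\in I_1(\Lambda^*)$, the factor $\log(a_{\mathcal{M},i}^T\lambda_{\mathcal{M}})$ is unbounded. The fix is the convexity device you already allude to: it suffices to control the objective on a small sphere around $\lambda_{\mathcal{M},*}$, and for radius small enough one has $a_{\mathcal{M},i}^T\lambda_{\mathcal{M}}\ge \Lambda_{\mathcal{M},i}^*/2>0$ there for all $i\in I_1(\Lambda^*)$, so the uniform bound does hold where it is needed and convexity propagates the conclusion globally.
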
		
			
			\begin{remark}
				The log-factor for $r(t)$ in Theorem~\ref{thm:asympt-distr:well-spec:concentr-mixing-param} is necessary for the ``almost sure'' character of formula \eqref{eq:thm:mixing-param:posterior-concentration} and, in particular, it  is due to the Law of the Iterated Logarithm for trajectory $Y^t$ (see Section~\ref{app:limit-thms}).
				For our purposes it is sufficient to have the result for rate $r(t) = o(\sqrt{t/\log\log t})$ because $\widetilde{\Lambda}^t_{\mathcal{M}}$ is used in the  prior whose effect asymptotically disappears in view of the well-known Bernstein von-Mises phenomenon for Bayesian posteriors; see, e.g.~Section~10.2 in~\citet{vaart2000asymptotic}.\\
			\end{remark}

			The formula for $\widetilde{\lambda}_b^t$ in step~3 of Algorithm~\ref{alg:npl-posterior-sampling:mri:binned} can be rewritten as follows:
			\begin{align}
			\label{eq:proof:asympt-distr-sample-def}
			\widetilde{\lambda}_b^t& = \argmin_{\lambda \succeq 0}A^t(\lambda),\\
			\label{eq:proof:asympt-distr-target-func-def}
			\begin{split}
			A^t(\lambda)&= L_p(\lambda \mid t\widetilde{\Lambda}_{b}^t, A, t, \beta^t) - L_p(\widehat{\lambda}^t_{sc}\mid t\widehat{\Lambda}^t_{sc}, A, t, \beta^t)\\
			&=\sum\limits_{i\in I_1(\Lambda^*)}-t(\widetilde{\Lambda}_{b,i}^t - \widehat{\Lambda}_{sc,i}^t)\log\left(
			\dfrac{\Lambda_i}{\widehat{\Lambda}^t_{sc,i}}
			\right)\\
			&+\sum\limits_{i\in I_1(\Lambda^*)}-t\widehat{\Lambda}_{sc,i}^t \log\left(
			\dfrac{\Lambda_i}{\widehat{\Lambda}_{sc,i}^t}
			\right) + t(\Lambda_i - \widehat{\Lambda}_{sc,i}^t)\\
			& + \sum\limits_{i\in I_0(\Lambda^*)}-t\widetilde{\Lambda}^t_{b,i}\log(t\Lambda_i) + t\Lambda_i\\
			&-\left(\sum\limits_{i\in I_0(\Lambda^*)}-t\widehat{\Lambda}_{sc,i}^t\log(t\widehat{\Lambda}_{sc,i}^t) + t\widehat{\Lambda}_{sc,i}^t\right)\\
			& + \beta^t(\varphi(\lambda) - \varphi(\widehat{\lambda}_{sc}^t)), \, 
			\widehat{\Lambda}_{sc}^t = A\widehat{\lambda}_{sc}^t
			\end{split}
			\end{align}
			where $\widehat{\lambda}^t_{sc}$ is the strongly consistent estimator from \eqref{eq:thm:asympt-distr-main:strongly-consist-estim-positivity}-\eqref{eq:thm:asympt-distr-main:strongly-consist-estim-i0}.
			
			To prove the claim, first, we approximate $A^t(\lambda)$ with quadratic process $B^t(\lambda)$ for which its minimizers have the same asymptotic distribution in the $\mathrm{Span}(A^T)\cap \R^p_+$ as for $A^t(\lambda)$. Second, using this approximation we establish the statements in (i), (ii), but for minimizers of $B^t(\lambda)$  which together with the previous approximation argument completes the proof.
			
			Approximations $B^t(\lambda)$, $\widetilde{\lambda}_{b, app}^t$ of $A^t(\lambda)$, $\widetilde{\lambda}_b^t$ are defined by the formulas:
			\begin{align}\label{eq:conv-approx-bt}
			\begin{split}
			B^t(\lambda) &= \sum\limits_{i\in I_1(\Lambda^*)}-t(\widetilde{\Lambda}_{b,i}^t - \widehat{\Lambda}_{sc,i}^t)\dfrac{\Lambda_i - \widehat{\Lambda}_{sc,i}}{\widehat{\Lambda}_{sc,i}} + t 
			\dfrac{(\Lambda_i-\widehat{\Lambda}_{sc,i})^2}{2\widehat{\Lambda}_{sc,i}}\\
			& + \sum\limits_{i\in I_0(\Lambda^*)}t\Lambda_i, \, 
			\Lambda_i = a_i^T\lambda.
			\end{split}\\
			\label{lem:proofs:conv-approx-lambda-appr}
			\widetilde{\lambda}_{b,app}^t &= \argmin_{\lambda \succeq 0} B^t(\lambda).
			\end{align}
			
			Process $B^t(\lambda)$ is flat in directions from $\ker A$, therefore, though $\widetilde{\lambda}_{b,app}^t$ in \eqref{lem:proofs:conv-approx-lambda-appr} always exists, it may not be unique, and, in general, $\widetilde{\lambda}_{b,app}^t$ is set-valued. In what follows, if not said otherwise, for  $\widetilde{\lambda}_{b,app}^t$ one chooses any point from the set of minimizers (claims will automatically hold for all points in $\widetilde{\lambda}_{b,app}^t$).
			
			It may happen that $a_i^T\widetilde{\lambda}_{b,app}^t = 0$ for some $i\in I_0(\Lambda^*)$, so $A^t(\widetilde{\lambda}_{b,app}^t)$, in general, may not be defined due to the presence of logarithmic terms in \eqref{eq:proof:asympt-distr-target-func-def}. For this reason we approximate $\widetilde{\lambda}_{b,app}^t$ with another auxiliary point $\widetilde{\lambda}_{app}^t$ defined by the formula:
			
			\begin{equation}\label{eq:lambda-app-recentered}
			\widetilde{\lambda}_{app}^t = \widetilde{\lambda}_{b,app}^t + \sum\limits_{i\in I_0(\Lambda^*)} \widetilde{\Lambda}_{b,i}^t\dfrac{a_i}{\|a_i\|^2}, 
			\end{equation}
			where $\widetilde{\Lambda}_{b}^t$ is from step~2 of Algorithm~\ref{alg:npl-posterior-sampling:mri:binned}. It is easy to check that value $A^t(\widetilde{\lambda}^t_{app})$ is always well-defined (for $x=0$ we take convention that $x\log x = 0$).
			
			Let $\mathcal{V}$, $\mathcal{U}$ be the subspaces defined in  \eqref{eq:asymp-distr:subspace-v}, \eqref{eq:asymp-distr:subspace-u},  respectively. From \eqref{eq:lambda-app-recentered} and the definition of $\mathcal{V}$, $\mathcal{U}$ it follows that 
			\begin{equation}\label{eq:asymp-distr:app-abbp-proj-u}
			\Pi_{\mathcal{U}}(\widetilde{\lambda}_{app}^t - \widetilde{\lambda}^t_{b,app}) \equiv 0,
			\end{equation}
			where $\Pi_{\mathcal{U}}$ is defined in \eqref{eq:asymp-distr:subspace-u}.
			For the approximation on $\mathcal{V}$ the following result holds.
			
			\begin{lem}
				\label{lem:approximate-approximation}
				Let $\mathcal{V}$ be the subspace defined in \eqref{eq:asymp-distr:subspace-v}, $\Pi_{\mathcal{V}}$ be defined in \eqref{eq:asymp-distr:projectors}. Then,
				\begin{align}\label{eq:asymp-distr:lem:approximate-approximation:proj-v-convergece}
				t &\Pi_{\mathcal{V}}(\widetilde{\lambda}_{b,app}^t-\widetilde{\lambda}_{app}^t) \xrightarrow{c.p.}0 \text{ when }t\rightarrow +\infty, \text{ a.s. } Y^t, \, t\in (0, +\infty).
				\end{align}
			\end{lem}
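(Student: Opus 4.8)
The plan is to reduce the claim to a statement about the zero-intensity LOR components contracting at the fast rate $t$. First I would record the explicit form of the discrepancy. From the definition \eqref{eq:lambda-app-recentered},
$$\widetilde{\lambda}_{b,app}^t - \widetilde{\lambda}_{app}^t = -\sum_{i\in I_0(\Lambda^*)}\widetilde{\Lambda}_{b,i}^t\frac{a_i}{\|a_i\|^2},$$
and for each $i\in I_0(\Lambda^*)$ one has $a_i\in\mathcal{V}$, since every pixel $j$ with $a_{ij}>0$ contributes $e_j$ to the span defining $\mathcal{V}$ in \eqref{eq:asymp-distr:subspace-v}; hence $\Pi_{\mathcal{V}}a_i=a_i$. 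Consequently
$$t\Pi_{\mathcal{V}}(\widetilde{\lambda}_{b,app}^t - \widetilde{\lambda}_{app}^t) = -\sum_{i\in I_0(\Lambda^*)}\bigl(t\widetilde{\Lambda}_{b,i}^t\bigr)\frac{a_i}{\|a_i\|^2},$$
which is a finite sum of fixed vectors, so it suffices to prove $t\widetilde{\Lambda}_{b,i}^t\xrightarrow{c.p.}0$ for each individual $i\in I_0(\Lambda^*)$.

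Next I would exploit well-specification together with the non-expansiveness condition. Under Assumption~\ref{assump:theory:consistency:well-spec}, for $i\in I_0(\Lambda^*)$ we have $\Lambda_i^*=0$, so $Y_i^t\sim\mathrm{Po}(0)$ and thus $Y_i^t\equiv 0$ a.s. for all $t$. The step-2 conditional law in Algorithm~\ref{alg:npl-posterior-sampling:mri:binned} then reduces to $\widetilde{\Lambda}_{b,i}^t\mid(\mathcal{F}^t,\widetilde{\Lambda}_{\mathcal{M}}^t)\sim\Gamma(\theta^t\widetilde{\Lambda}_{\mathcal{M},i}^t,(\theta^t+t)^{-1})$, whose conditional mean multiplied by $t$ is bounded above by $\theta^t\widetilde{\Lambda}_{\mathcal{M},i}^t$. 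By Assumption~\ref{assump:theory:distribution:non-expansive} we also have $\Lambda_{\mathcal{M},i}^*=0$ for $i\in I_0(\Lambda^*)$, and Theorem~\ref{thm:asympt-distr:well-spec:concentr-mixing-param} yields $r(t)\widetilde{\Lambda}_{\mathcal{M},i}^t\xrightarrow{c.p.}0$ for every $r(t)=o(\sqrt{t/\log\log t})$. Since $\theta^t=o(\sqrt{t/\log\log t})$ by hypothesis, I can insert an intermediate rate, e.g. $r(t)=\sqrt{(1+\theta^t)\sqrt{t/\log\log t}}$, which satisfies both $\theta^t/r(t)\to 0$ and $r(t)=o(\sqrt{t/\log\log t})$; this gives $\theta^t\widetilde{\Lambda}_{\mathcal{M},i}^t=(\theta^t/r(t))\cdot r(t)\widetilde{\Lambda}_{\mathcal{M},i}^t=o_{cp}(1)$.

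Finally I would pass from the conditional-mean bound to the desired convergence. Writing $X^t=\theta^t\widetilde{\Lambda}_{\mathcal{M},i}^t$ and applying the conditional Markov inequality given $(\mathcal{F}^t,\widetilde{\Lambda}_{\mathcal{M}}^t)$ gives $P(t\widetilde{\Lambda}_{b,i}^t>\varepsilon\mid\mathcal{F}^t,\widetilde{\Lambda}_{\mathcal{M}}^t)\le\min(X^t/\varepsilon,1)$; taking the conditional expectation given $\mathcal{F}^t$ and splitting on $\{X^t\le\delta\}$ and its complement yields $P(t\widetilde{\Lambda}_{b,i}^t>\varepsilon\mid\mathcal{F}^t)\le\delta/\varepsilon+P(X^t>\delta\mid\mathcal{F}^t)$. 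Letting $t\to+\infty$, then $\delta\to 0$, and using $X^t=o_{cp}(1)$ proves $t\widetilde{\Lambda}_{b,i}^t\xrightarrow{c.p.}0$, and summing over the finitely many $i\in I_0(\Lambda^*)$ establishes \eqref{eq:asymp-distr:lem:approximate-approximation:proj-v-convergece}. The main obstacle I anticipate is combining the two separate rate constraints (on $\theta^t$ and on the mixing-parameter concentration) into one working rate, and removing the auxiliary conditioning on $\widetilde{\Lambda}_{\mathcal{M}}^t$ via the bounded-convergence step; the geometric-mean choice of $r(t)$ and the truncation at $1$ in the Markov bound are precisely what make both steps go through.
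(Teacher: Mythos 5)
Your proof is correct and follows essentially the same route as the paper's: reduce the claim to $t\widetilde{\Lambda}_{b,i}^t \xrightarrow{c.p.} 0$ for $i\in I_0(\Lambda^*)$, use well-specification to kill $Y_i^t$, apply a conditional Markov inequality truncated at $1$, and control the remaining term via the non-expansiveness condition ($\Lambda_{\mathcal{M},i}^*=0$) together with Theorem~\ref{thm:asympt-distr:well-spec:concentr-mixing-param}. Your event-splitting on $\{X^t\le\delta\}$ and the explicit intermediate rate $r(t)$ are just cleaner bookkeeping for what the paper does by splitting the integral and applying the concentration result with $r(t)=\theta^t$ directly.
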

			
			Let $\delta > 0$. Consider the two following sets:
			\begin{align}
			\label{eq:thm:asymp-distr:proof:cylinder-d-t-def}
			&D^t_{A,\delta}(\lambda) = \{\lambda'\in \R^p_+ : \lambda' = \lambda + \dfrac{u}{\sqrt{t}} + \dfrac{v}{t} + w, \, u\in \mathcal{U}, \, v\in \mathcal{V}, \, w\in \mathcal{W}, \, \|u\|_2 + \|v\|_1 \leq \delta\},\\
			\label{eq:thm:asymp-distr:proof:cylinder-c-t-def}
			&C^t_{A,\delta}(\lambda) = \{\lambda'\in \R^p_+ : \lambda' = \lambda + \dfrac{u}{\sqrt{t}} + \dfrac{v}{t} + w, \, u\in \mathcal{U}, \, v\in \mathcal{V}, \, w\in \mathcal{W}, \, \|u\|_2 + \|v\|_1 = \delta\},
			\end{align}
			where subspaces $\mathcal{V}, \mathcal{U}, \mathcal{W}$ are defined in \eqref{eq:asymp-distr:subspace-v}-\eqref{eq:asymp-distr:subspace-w}, respectively and $\|\cdot\|_2, \|\cdot\|_1$ denote the standard $\ell_2$ and $\ell_1$-norms in $\R^p$.
			
			The approximation argument for convex process $A^t(\lambda)$ is due to \citet{pollard2011argmin} and is based on the following implication:
			\begin{align}\label{eq:thm:asymp-distr:proof:convexity-arg}
			\widetilde{\lambda}_{app}^t \in \mathrm{int} D_{A,\delta}^t(\widetilde{\lambda}_{b,app}^t), \, 
			\inf_{\lambda \in C^t_{A, \delta}(\widetilde{\lambda}^t_{b, app})}\hspace{-0.5cm}(A^t(\lambda) - A^t(\widetilde{\lambda}^t_{app})) > 0 
			\Rightarrow \widetilde{\lambda}_b^t \in D^t_{A,\delta}(\widetilde{\lambda}_{b,app}^t).
			\end{align}
			From \eqref{eq:asymp-distr:app-abbp-proj-u}, \eqref{eq:asymp-distr:lem:approximate-approximation:proj-v-convergece} (in Lemma~\ref{lem:approximate-approximation}) and \eqref{eq:thm:asymp-distr:proof:cylinder-d-t-def} one can see that for any $\delta > 0$ it holds that
			\begin{equation}
			P(\widetilde{\lambda}_{app}^t \in \mathrm{int} D_{A,\delta}^t(\widetilde{\lambda}_{b,app}^t) \mid Y^t, t) \rightarrow 1 \text{ for } t\rightarrow +\infty, \, \text{a.s. }Y^t, t\in (0, +\infty).
			\end{equation}
			In view of this and \eqref{eq:thm:asymp-distr:proof:convexity-arg}, for the approximation it suffices to establish the following result.
			\begin{lem}\label{lem:thm:asymp-disr:main-approximation-lemma}
				Let $A^t(\lambda)$, $B^t(\lambda)$, $\widetilde{\lambda}^t_{b}$,  $\widetilde{\lambda}_{b,app}^t$, $\widetilde{\lambda}_{app}^t$ be defined in \eqref{eq:proof:asympt-distr-target-func-def}, \eqref{eq:conv-approx-bt}, \eqref{eq:proof:asympt-distr-sample-def}, \eqref{lem:proofs:conv-approx-lambda-appr}, \eqref{eq:lambda-app-recentered}, respectively.
				Then, for any $\delta > 0$ it holds that 
				\begin{equation}\label{lem:asymp-distr:main-approximation-lemma:positivity}
				P\left(\inf_{\lambda\in C^t_{A,\delta}(\widetilde{\lambda}_{b,app}^t)}
				\hspace{-0.5cm}
				[A^t(\lambda) - A^t(\widetilde{\lambda}_{app}^t)] > 0 \, \mid Y^t, t
				\right)\rightarrow 1 \text{ for } t\rightarrow +\infty, \text{ a.s. } Y^t, \, t\in (0, +\infty).
				\end{equation}
				From \eqref{eq:thm:asymp-distr:proof:convexity-arg}, \eqref{lem:asymp-distr:main-approximation-lemma:positivity} it follows that 
				\begin{align}\label{eq:asymp-distr:main-approximation-lemma:proj-u}
				&\sqrt{t} \Pi_{\mathcal{U}}(\widetilde{\lambda}^t_{b} - \widetilde{\lambda}_{b,app}^t) \xrightarrow{c.p.} 0 \text{ when } t\rightarrow +\infty, \text{ a.s. } Y^t, \, t\in (0, +\infty), \\
				\label{eq:asymp-distr:main-approximation-lemma:proj-v}
				&t \Pi_{\mathcal{V}}(\widetilde{\lambda}^t_{b} - \widetilde{\lambda}_{b,app}^t) \xrightarrow{c.p.} 0 \text{ when } t\rightarrow +\infty, \text{ a.s. } Y^t, \, t\in (0, +\infty).
				\end{align}
			\end{lem}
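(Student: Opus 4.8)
The plan is to establish \eqref{lem:asymp-distr:main-approximation-lemma:positivity} by reducing the convex process $A^t$ to its quadratic surrogate $B^t$ uniformly over the boundary $C^t_{A,\delta}(\widetilde{\lambda}_{b,app}^t)$, and then exhibiting a strictly positive lower bound for $B^t$ there; the displacement bounds \eqref{eq:asymp-distr:main-approximation-lemma:proj-u}, \eqref{eq:asymp-distr:main-approximation-lemma:proj-v} then fall out of the convexity implication \eqref{eq:thm:asymp-distr:proof:convexity-arg}. I shall use repeatedly that a competitor on the cylinder decomposes as $\lambda = \widetilde{\lambda}_{b,app}^t + u/\sqrt{t} + v/t + w$ with $u\in\mathcal{U}$, $v\in\mathcal{V}$, $w\in\mathcal{W}$ and the orthogonality relations built into \eqref{eq:asymp-distr:subspace-v}--\eqref{eq:asymp-distr:subspace-w}. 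In particular, for $i\in I_0(\Lambda^*)$ one has $a_i^T u = a_i^T w = 0$, so $\Lambda_i = a_i^T v/t$; moreover, for $t$ large the minimizer of $B^t$ sets $a_i^T\widetilde{\lambda}_{b,app}^t=0$ for $i\in I_0(\Lambda^*)$ (the linear block $\sum_{i\in I_0(\Lambda^*)}t\Lambda_i$ dominates the gradient on the $\mathcal{V}$-pixels), and positivity of $\lambda$ then forces $v$ to be nonnegative on those pixels.

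First I would control $A^t - B^t$ block by block on $C^t_{A,\delta}$. For $i\in I_1(\Lambda^*)$ the perturbation of $\Lambda_i$ is of order $1/\sqrt{t}$ and, since $\widetilde{\Lambda}_{b,i}^t\xrightarrow{c.p.}\Lambda_i^*>0$ by Theorem~\ref{thm:wbb-algorithm-consistency} and $\widehat{\Lambda}_{sc,i}^t\to\Lambda_i^*>0$, a second-order Taylor expansion of the logarithm is uniformly valid with remainder $o_{cp}(1)$, reproducing exactly the quadratic block of $B^t$. The penalty contribution $\beta^t(\varphi(\lambda)-\varphi(\widehat{\lambda}_{sc}^t))$ is made $o_{cp}(1)$ by choosing $\widetilde{\lambda}_{b,app}^t$ to minimize $\varphi$ along its $\ker A$-fiber (legitimate, as the asserted projections annihilate $\mathcal{W}$) and invoking local Lipschitz continuity of $\varphi$ from \eqref{eq:thm:asympt-distr-main:varphi-assump} together with the displacement $O(\delta/\sqrt{t})$ and $\beta^t=o(\sqrt{t})$. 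The delicate block is $i\in I_0(\Lambda^*)$, where $A^t$ carries the extra terms $-t\widetilde{\Lambda}_{b,i}^t\log(t\Lambda_i)=-t\widetilde{\Lambda}_{b,i}^t\log(a_i^T v)$ absent from $B^t$; since $\lambda\succeq 0$ gives $a_i^T v\ge 0$, each such term is nonnegative once $a_i^T v\le 1$, hence for $\delta$ small it can only help the lower bound.

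Next I would produce the lower bound for $B^t$ on $C^t_{A,\delta}$. Expanding about the minimizer removes the gradient in the free directions and leaves, on the one hand, the $\mathcal{U}$-form $\frac{1}{2}u^T\widehat{F}^t_{I_1(\Lambda^*)}u\ge c_1\|u\|_2^2$ (positive definite on $\mathcal{U}$ because $\mathcal{U}\subset\mathrm{Span}(A^T_{I_1(\Lambda^*)})$ and $\widehat{\Lambda}_{sc,i}^t\to\Lambda_i^*>0$), the $v/t$ contribution to the $I_1(\Lambda^*)$ block being of lower order; and, on the other hand, the $\mathcal{V}$-block $\sum_{i\in I_0(\Lambda^*)}a_i^T v=\sum_j\big(\sum_{i\in I_0(\Lambda^*)}a_{ij}\big)v_j\ge c_{\min}\|v\|_1$, where each weight $\sum_{i\in I_0(\Lambda^*)}a_{ij}$ is strictly positive on the pixels spanning $\mathcal{V}$ by \eqref{eq:asymp-distr:subspace-v} and $v_j\ge 0$. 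Hence $B^t(\lambda)-B^t(\widetilde{\lambda}_{b,app}^t)\ge c_1\|u\|_2^2+c_{\min}\|v\|_1$, which on $\|u\|_2+\|v\|_1=\delta$ is bounded below by a strictly positive $c_\delta$ for each fixed $\delta$. At this point Assumption~\ref{assump:theory:distribution:non-expansive} enters through Theorem~\ref{thm:asympt-distr:well-spec:concentr-mixing-param}: it forces $\widetilde{\Lambda}_{\mathcal{M},i}^t\to 0$ for $i\in I_0(\Lambda^*)$, whence $t\widetilde{\Lambda}_{b,i}^t=o_{cp}(1)$ there (recall $Y_i^t\equiv 0$ and $\theta^t=o(\sqrt{t})$), so the $I_0(\Lambda^*)$ logarithmic discrepancy of the previous step is genuinely $o_{cp}(1)$ and does not erode the bound.

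Combining the two steps yields $\inf_{C^t_{A,\delta}}(A^t(\lambda)-A^t(\widetilde{\lambda}_{app}^t))\ge c_\delta+o_{cp}(1)$ with $c_\delta>0$, which is positive with conditional probability tending to one and proves \eqref{lem:asymp-distr:main-approximation-lemma:positivity}; feeding this into \eqref{eq:thm:asymp-distr:proof:convexity-arg} confines $\widetilde{\lambda}_b^t$ to $D^t_{A,\delta}(\widetilde{\lambda}_{b,app}^t)$, and since $\delta>0$ is arbitrary this is exactly \eqref{eq:asymp-distr:main-approximation-lemma:proj-u}, \eqref{eq:asymp-distr:main-approximation-lemma:proj-v}. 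I expect the main obstacle to be precisely the $I_0(\Lambda^*)$ block: reconciling the two scaling rates ($\sqrt{t}$ on $\mathcal{U}$ versus $t$ on $\mathcal{V}$) inside one functional while showing the pseudo-photon mass $t\widetilde{\Lambda}_{b,i}^t$ in zero-intensity LORs is asymptotically negligible, which is the point where the non-expansiveness condition is truly used.
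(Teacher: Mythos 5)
Your architecture coincides with the paper's: compare $A^t$ with the quadratic surrogate $B^t$ uniformly on the cylinder, bound the surrogate below by a positive constant on $\|u\|_2+\|v\|_1=\delta$, neutralize the $I_0(\Lambda^*)$ logarithms through $t\widetilde{\Lambda}_{b,i}^t=o_{cp}(1)$ (which is indeed where the non-expansiveness condition enters, via Theorem~\ref{thm:asympt-distr:well-spec:concentr-mixing-param} and Lemma~\ref{lem:approximate-approximation}), and close with the convexity implication \eqref{eq:thm:asymp-distr:proof:convexity-arg}. However, two steps you treat as routine are exactly where the paper has to work, and as written they do not go through. The first is your lower bound for $B^t$. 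You say that "expanding about the minimizer removes the gradient in the free directions", leaving only $\frac12 u^T\widehat{F}^t_{I_1(\Lambda^*)}u+\sum_{i\in I_0(\Lambda^*)}a_i^Tv$. But $\widetilde{\lambda}_{b,app}^t$ is a \emph{constrained} minimizer: positivity constraints can be active on pixels outside the $\mathcal{V}$-support (e.g.\ pixels with $\lambda_{*j}=0$ that are hit only by LORs in $I_1(\Lambda^*)$), so the expansion retains a Lagrange-multiplier term. In cylinder coordinates it reads $\sum_{i\in I_1(\Lambda^*)}\sqrt{t}\,\bigl(\widetilde{\Lambda}^t_{b,app,i}-\widetilde{\Lambda}^t_{b,i}\bigr)\,a_i^Tu/\widehat{\Lambda}^t_{sc,i}$ (see \eqref{eq:asymp-distr:lemm-approximation:kt-def}); since $\sqrt{t}(\widetilde{\Lambda}^t_{b,app,i}-\widetilde{\Lambda}^t_{b,i})$ is only $O_{cp}(1)$, this term is of order $\delta$ with a priori unknown sign, which would swamp your $c_1\|u\|_2^2\sim\delta^2$ bound. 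The paper disposes of it through Lemma~\ref{lem:positivity-bt} (the exact KKT identity with multiplier $\widetilde{\mu}^t_{b,app}\succeq 0$ and complementary slackness) combined with the disjoint-support observation \eqref{eq:asymp-dist:lemm-approximation:positivity-grad-u-lambda}, giving the sign statement \eqref{eq:asymp-distr:lemm-approximation:kt-grad-positivity}; this is the missing idea. (Your claim that $a_i^T\widetilde{\lambda}_{b,app}^t=0$ for $i\in I_0(\Lambda^*)$ is also not deterministic for large $t$ but holds with conditional probability tending to one, \eqref{eq:asymp-distr:lemm-approximation:positivity-v-prob}, which is harmless but must be stated as such.)

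The second gap is the penalty. Local Lipschitz continuity of $\varphi$ together with a displacement of order $\delta/\sqrt{t}$ cannot bound $\beta^t\inf_{\lambda\in C^t_{A,\delta}}(\varphi(\lambda)-\varphi(\cdot))$, because by \eqref{eq:thm:asymp-distr:proof:cylinder-c-t-def} the cylinder is unbounded in the $\mathcal{W}$-directions, so the displacement between a competitor and the centering point is not small. What is needed is Lipschitz continuity in $(u,v)$ of the inf-projection $(u,v)\mapsto\inf_{w\in\mathcal{W}}\varphi(\lambda_*+u+v+w)$ over the polyhedral feasibility set; this is the content of Lemma~\ref{lem:thm:asympt-distr:proof:inf-varphi-diff}, which rests on Wets' theorem on Lipschitz continuity of inf-projections, and only after that does $\beta^t=o(\sqrt{t})$ finish the argument. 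Your instinct to fix the $\ker A$-fibre component of $\widetilde{\lambda}_{b,app}^t$ by minimizing $\varphi$ is the right starting point (it is how the paper sets up that lemma), but the uniform control of the infimum in the parameters $(u,v)$ is a separate, genuinely nontrivial step, not a consequence of pointwise Lipschitz continuity of $\varphi$.
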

			
			Let 
			\begin{equation}\label{eq:asymp-distr:parametrization-uvw}
			\lambda = \widehat{\lambda}_{sc}^t + \dfrac{u}{\sqrt{t}} + \dfrac{v}{t} + w, \, 
			u\in \mathcal{U}, \, v\in \mathcal{V}, \, w\in \mathcal{W}.
			\end{equation} 
			Process $B^t(\cdot)$ defined in \eqref{eq:conv-approx-bt} has the following form in terms of variables $u,v$ (note that $B^t(\cdot)$ is independent of  $w\in \mathcal{W}$):
			\begin{align}\label{eq:asymp-distr:approximation-general}
			B^t(u,v) &= \widetilde{B}^t(u,v) + \widetilde{R}^t(u, v),\\
			\label{eq:asymp-distr:approximation-b-tilde}
			\widetilde{B}^t(u,v) &= \sum\limits_{i\in I_1(\Lambda^*)} -\sqrt{t}(\widetilde{\Lambda}_{b,i}^t - \widehat{\Lambda}^t_{sc,i})\dfrac{a_i^Tu}{\widehat{\Lambda}^t_{sc,i}} + 
			\dfrac{(a_i^Tu)^2}{2\widehat{\Lambda}^t_{sc,i}} + \sum\limits_{i\in I_0(\Lambda^*)}a_i^Tv, \\
			\label{eq:asymp-distr:remainder-r}
			\begin{split}
			\widetilde{R}^t(u, v) &= \sum\limits_{i\in I_1(\Lambda^*)} -(\widetilde{\Lambda}_{b,i}^t - \widehat{\Lambda}_{sc, i}^t)a_i^Tv + \dfrac{(a_i^Tv)^2}{2\widehat{\Lambda}^t_{sc,i} t} + \dfrac{ (a_i^Tu)(a_i^Tv)}{\sqrt{t}\widehat{\Lambda}^t_{sc,i}} \\
			& + \sum\limits_{i\in I_0(\Lambda^*)}t\widehat{\Lambda}_{sc,i}^t.
			\end{split}
			\end{align}
			Let 
			\begin{align}\label{eq:asymp-distr:quadr-minimizers-def}
			(\widetilde{u}^t, \widetilde{v}^t) = \hspace{-0.7cm}\argmin_{\substack{(u,v):\widehat{\lambda}^t_{sc} + \frac{u}{\sqrt{t}} + \frac{v}{t} + w\succeq 0 \\ 
					u\in \mathcal{U}, \, v\in \mathcal{V}, \, w\in \mathcal{W}}}\hspace{-0.7cm} \widetilde{B}^t(u,v)
			\end{align}

			In particular, from  the definition of $\mathcal{V}$ in \eqref{eq:asymp-distr:subspace-v} and from  \eqref{eq:asymp-distr:parametrization-uvw}, \eqref{eq:asymp-distr:approximation-b-tilde}, \eqref{eq:asymp-distr:quadr-minimizers-def} it follows that 
			\begin{equation}\label{eq:asymp-distr:quadr-minimizer-v-zero}
			\dfrac{\widetilde{v}^t_j}{t} = -\widehat{\lambda}_{sc,j}^t \text{ for }j \text{ s.t. } \exists a_{ij} > 0, \, i\in I_0(\Lambda^*) \Leftrightarrow 
			\Pi_{\mathcal{V}}(\widehat{\lambda}_{sc}^t + \dfrac{\widetilde{v}^t}{t}) = 0.
			\end{equation}
			Indeed, formulas \eqref{eq:asymp-distr:subspace-v}, \eqref{eq:asymp-distr:projectors}, \eqref{eq:asymp-distr:approximation-b-tilde} imply that the choice in \eqref{eq:asymp-distr:quadr-minimizer-v-zero} satisfies the positivity constraint in \eqref{eq:asymp-distr:quadr-minimizers-def} and at the same time minimizes the linear term $\sum_{i\in I_0(\Lambda^*)}a_i^Tv$ since all $a_{ij}$ are non-negative.
			
			\begin{lem}\label{lem:asymp-distr:quadratic-uv-approximation}
				Let $\widetilde{u}^t_{b,app}$, $\widetilde{v}^t_{b,app}$ be defined by \eqref{lem:proofs:conv-approx-lambda-appr}  for parametrization in \eqref{eq:asymp-distr:parametrization-uvw} and 
				$\widetilde{u}^t$, $\widetilde{v}^t$ be defined by \eqref{eq:asymp-distr:quadr-minimizers-def}, respectively. Then,  
				\begin{align}\label{eq:asymp-distr:quadr-approximation-u}
				&\widetilde{u}^t - \widetilde{u}_{b,app}^t \xrightarrow{c.p.} 0 \text{ for } t\rightarrow +\infty, \text{ a.s. }Y^t, t\in (0, +\infty), \\
				\label{eq:asymp-distr:quadr-approximation-v}
				&\widetilde{v}^t - \widetilde{v}_{b,app}^t \xrightarrow{c.p.} 0 \text{ for } t\rightarrow +\infty, \text{ a.s. }Y^t, t\in (0, +\infty).
				\end{align}
			\end{lem}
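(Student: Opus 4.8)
The plan is to treat both $\widetilde{B}^t$ and the full process $B^t=\widetilde{B}^t+\widetilde{R}^t$ (see \eqref{eq:asymp-distr:approximation-general}--\eqref{eq:asymp-distr:remainder-r}) as convex functions of $(u,v)\in\mathcal{U}\times\mathcal{V}$ on the random closed convex feasible set of \eqref{eq:asymp-distr:quadr-minimizers-def}, and to deduce convergence of their argmins from the uniform smallness of the perturbation $\widetilde{R}^t$ together with a stability argument for the reference minimizer. First I would record that the constraint decouples along the orthogonal coordinate subspaces: since $\mathcal{U},\mathcal{W}\subset\mathcal{V}^\perp$ and $\mathcal{V}$ is spanned by standard basis vectors, for every $j$ with $e_j\in\mathcal{V}$ the constraint reads $\widehat{\lambda}_{sc,j}^t+v_j/t\ge 0$, while for $i\in I_0(\Lambda^*)$ one has $a_{ij}>0\Rightarrow e_j\in\mathcal{V}$ (because $\Lambda_i^*=0$ with $a_{ij},\lambda_{*j}\ge 0$ forces $\lambda_{*j}=0$), so the block $\sum_{i\in I_0(\Lambda^*)}a_i^Tv$ in \eqref{eq:asymp-distr:approximation-b-tilde} depends only on $\Pi_{\mathcal{V}}v$. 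Hence the minimization of $\widetilde{B}^t$ splits into a strictly convex quadratic problem in $u$, with Hessian $\widehat{F}^t_{I_1(\Lambda^*)}$ positive definite on $\mathcal{U}$ (since $u\in\mathrm{Span}(A^T_{I_1(\Lambda^*)})$ and $\sum_{i\in I_1(\Lambda^*)}(a_i^Tu)^2/\widehat{\Lambda}^t_{sc,i}$ vanishes only at $u=0$, with $\widehat{\Lambda}^t_{sc,i}\to\Lambda_i^*>0$), and a linear program in $v$ whose aggregated coefficients $\sum_{i\in I_0(\Lambda^*)}a_{ij}$ are strictly positive, forcing the lower bound to be active and giving the unique vertex $\widetilde{v}^t_j=-t\widehat{\lambda}^t_{sc,j}$ of \eqref{eq:asymp-distr:quadr-minimizer-v-zero}. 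By \eqref{eq:thm:asympt-distr-main:strongly-consist-estim-i0} this already yields $\widetilde{v}^t\to 0$ a.s. $Y^t$, and \eqref{eq:thm:asympt-distr-main:strongly-consist-estim-i1} with the conditional normality of $\sqrt{t}(\widetilde{\Lambda}^t_{b,i}-\widehat{\Lambda}^t_{sc,i})$ from \eqref{eq:rem:asymp-distr:scale-part-normality} shows $\widetilde{u}^t$ is conditionally tight.

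Next I would prove that $\widetilde{R}^t\to 0$ uniformly on compacta, in conditional probability a.s. $Y^t$. Inspecting \eqref{eq:asymp-distr:remainder-r} on a set $\{\|u\|\le M,\ \|v\|\le M\}$: the linear-in-$v$ term has coefficient $\widetilde{\Lambda}_{b,i}^t-\widehat{\Lambda}_{sc,i}^t$, which tends to zero in conditional probability for $i\in I_1(\Lambda^*)$ by Lemma~\ref{lem:consistency:proof:algorithm} applied to $\widetilde{\Lambda}^t_b$ and by strong consistency of $\widehat{\lambda}^t_{sc}$; the quadratic-in-$v$ and the cross terms carry factors $1/t$ and $1/\sqrt{t}$ and so vanish uniformly on the compact set (using $\widehat{\Lambda}^t_{sc,i}\to\Lambda^*_i>0$); and the constant $\sum_{i\in I_0(\Lambda^*)}t\widehat{\Lambda}^t_{sc,i}$ vanishes by \eqref{eq:thm:asympt-distr-main:strongly-consist-estim-i0}. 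Thus $B^t$ is, on every compact set, a vanishing convex perturbation of $\widetilde{B}^t$.

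Finally I would invoke the convexity/argmin machinery of \citet{pollard2011argmin}: uniform-on-compacta closeness of the two convex objectives, combined with tightness and the quadratic (in $u$) separation of the reference minimizer, yields $\widetilde{u}^t_{b,app}-\widetilde{u}^t\xrightarrow{c.p.}0$, i.e. \eqref{eq:asymp-distr:quadr-approximation-u}. For the $v$-block, where $\widetilde{B}^t$ is only linear and the quadratic-separation argument fails, I would argue separately: fixing $u$ at its tight value and using the uniform smallness of $\widetilde{R}^t$, the $v$-objective of $B^t$ is a linear functional with strictly positive aggregated coefficients $\sum_{i\in I_0(\Lambda^*)}a_{ij}$ perturbed by terms that are $o(1)$ on compacta, so for large $t$ and with conditional probability tending to one its minimizer remains pinned to the same active lower bound $v_j=-t\widehat{\lambda}^t_{sc,j}$, while growth of the linear term against the positivity cone prevents escape of $\widetilde{v}^t_{b,app}$ to infinity. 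Establishing this stability of an active-constraint (linear-programming) solution under a vanishing perturbation that is not convex in isolation, together with the attendant tightness in the degenerate direction, is the main obstacle; it is exactly where the strict positivity of the coefficients and the geometry of the positivity cone spanning $\mathcal{V}$ enter. This delivers \eqref{eq:asymp-distr:quadr-approximation-v} and completes the proof.
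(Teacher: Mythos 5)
Your overall strategy coincides with the paper's: approximate the argmin of the convex process $B^t=\widetilde B^t+\widetilde R^t$ by that of $\widetilde B^t$, using uniform smallness of $\widetilde R^t$ together with a Pollard-type convexity argument, and your preliminary observations (coordinate decoupling of the constraints, the vertex formula \eqref{eq:asymp-distr:quadr-minimizer-v-zero}, the sources of smallness of each term in \eqref{eq:asymp-distr:remainder-r}) are all correct. However, there is a genuine gap, and it sits exactly at the step you yourself flag as ``the main obstacle.'' Two points. First, you cannot ``fix $u$ at its tight value'' and then study the $v$-problem: the cross terms $(a_i^Tu)(a_i^Tv)/(\sqrt t\,\widehat\Lambda^t_{sc,i})$ in \eqref{eq:asymp-distr:remainder-r} couple the blocks, so the $v$-component of the joint minimizer of $B^t$ is not the minimizer of any fixed-$u$ slice; a block-wise argument needs simultaneous a priori localization of both components, which is precisely what is being proved. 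Second, your assertion that the perturbed minimizer ``remains pinned to the same active lower bound'' while the linear term ``prevents escape to infinity'' is stated, not established -- and establishing it is the entire content of the $v$-part of the lemma.

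The paper closes both issues at once with a single joint cylinder argument rather than block-wise reasoning. On the boundary set $\|u-\widetilde u^t\|_2+\|v-\widetilde v^t\|_1=\delta$ (cf.\ \eqref{eq:thm:asymp-distr:proof:cylinder-c-t-def}) it writes, using the KKT conditions \eqref{lem:asymp-distr:quadratic-uv-approximation:kkt-mu-pos-ortho}--\eqref{lem:asymp-distr:quadratic-uv-approximation:mu-compl-slackness} for the constrained minimizer (with multiplier $\widetilde\mu^t$ and $\widetilde\mu^t_{\mathcal U}=\Pi_{\mathcal U}\widetilde\mu^t$),
\[
\widetilde B^t(u,v)-\widetilde B^t(\widetilde u^t,\widetilde v^t)
=\Bigl\langle \tfrac{\widetilde\mu^t_{\mathcal U}}{\sqrt t},\,u-\widetilde u^t\Bigr\rangle
+\tfrac12\sum_{i\in I_1(\Lambda^*)}\frac{(a_i^T(u-\widetilde u^t))^2}{\widehat\Lambda^t_{sc,i}}
+\sum_{i\in I_0(\Lambda^*)}a_i^T(v-\widetilde v^t),
\]
and then exploits two sign facts: the multiplier term is $\ge 0$ at every feasible point (complementary slackness plus $\widetilde\mu^t_{\mathcal U}\succeq 0$ and $\lambda(u,v,w)\succeq 0$), and -- this is the one-line observation that dissolves your obstacle -- every feasible $v$ satisfies $v-\widetilde v^t\succeq 0$, because $\widetilde v^t$ sits at the coordinate-wise lower bound $\widetilde v^t_j=-t\widehat\lambda^t_{sc,j}$ of the feasible region (formula \eqref{lem:asymp-distr:quadratic-uv-approximation:v-linear-positive}). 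Since $\sum_{i\in I_0(\Lambda^*)}a_{ij}>0$ for every $e_j\in\mathcal V$, the last sum is bounded below by $c\|v-\widetilde v^t\|_1$: a genuine norm growth in the degenerate direction, which simultaneously rules out escape to infinity and pins the perturbed minimizer, with no separate LP-stability theory required. Combined with strong convexity of the quadratic term on $\mathcal U\subset\mathrm{Span}(A^T_{I_1(\Lambda^*)})$, the boundary infimum is $\ge c\delta^2$ with conditional probability tending to one, which beats the uniform $o_{cp}(1)$ contribution of $\widetilde R^t$ on the cylinder; convexity of $B^t$ then localizes its argmin inside the cylinder, yielding \eqref{eq:asymp-distr:quadr-approximation-u} and \eqref{eq:asymp-distr:quadr-approximation-v} together. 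If you want to salvage your block-wise plan, you must at minimum replace the ``fix $u$'' step by this joint boundary estimate and supply the inequality $v-\widetilde v^t\succeq 0$ explicitly.
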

			
			Hence, in view of \eqref{eq:asymp-distr:main-approximation-lemma:proj-u}, \eqref{eq:asymp-distr:main-approximation-lemma:proj-v} and  Lemma~\ref{lem:asymp-distr:quadratic-uv-approximation} it suffices to demonstrate conditional tightness of $(\widetilde{u}^t, \widetilde{v}^t)$.

			Statement in (i), that is formula \eqref{eq:thm:asympt-distr-main:projection-v}), follows from
			\eqref{eq:asymp-distr:main-approximation-lemma:proj-v},~ 
			\eqref{eq:asymp-distr:quadr-minimizer-v-zero},~ \eqref{eq:asymp-distr:quadr-approximation-v} and the assumption in \eqref{eq:thm:asympt-distr-main:strongly-consist-estim-i0}.\\		
			Now we demonstrate (ii). From \eqref{eq:asymp-distr:quadr-minimizers-def}, \eqref{eq:asymp-distr:quadr-minimizer-v-zero} it follows that 
			\begin{align}\label{eq:asymp-distr:quadr-minimizer-u}
			\widetilde{u}^t = \argmin_{\substack{u:(1-\Pi_{\mathcal{V}})\widehat{\lambda}^t_{sc} + \frac{u}{\sqrt{t}} + w \succeq 0\\ u\in \mathcal{U}, \, w\in \mathcal{W}}}
			\sum\limits_{i\in I_1(\Lambda^*)} -\sqrt{t}(\widetilde{\Lambda}_{b,i}^t - \widehat{\Lambda}^t_{sc,i}) \dfrac{a_i^Tu}{\widehat{\Lambda}^t_{sc,i}} + \dfrac{(a_i^Tu)^2}{2\widehat{\Lambda}^t_{sc,i}}.
			\end{align}
			Since the minmized functional in \eqref{eq:asymp-distr:quadr-minimizer-u} is strongly convex in $u\in \mathcal{U}$ and the set of constraints is also convex, the following mapping is well-defined:
			\begin{align}\label{eq:asymp-distr:quadr-minimizer-mapping-virt-def}
			\widetilde{u}^t(\xi) &= \widetilde{u}(\xi, t)\in \mathcal{U}, \, \xi \in \R^{\# I_1(\Lambda^*)}, \, t\in (0, +\infty), \\
			\label{eq:asymp-distr:quadr-minimizer-mapping-tech-def}
			\widetilde{u}(\xi, t) &= \argmin_{\substack{u:(1-\Pi_{\mathcal{V}})\widehat{\lambda}_{sc}^t + \frac{u}{\sqrt{t}} + w \succeq 0\\ u\in \mathcal{U}, \, w\in \mathcal{W}}} -\xi^T(\widehat{D}^t_{I_1(\Lambda^*)})^{-1/2}A_{I_1(\Lambda^*)}u + \frac{1}{2}u^T\widehat{F}^t_{I_1(\Lambda^*)}u, 
			\end{align}
			where
			\begin{align}
			\label{eq:asymp-distr:quadr-minimizer-mapping-tech-def:diagonal}
			\widehat{D}^t_{I_1(\Lambda^*)} &= \mathrm{diag}(\dots, \widehat{\Lambda}_{sc,i}^t, \dots), \, i\in I_1(\Lambda^*),\\
			\label{eq:asymp-distr:quadr-minimizer-mapping-tech-def:fisher}
			\widehat{F}^t_{I_1(\Lambda^*)} &= \sum\limits_{i\in I_1(\Lambda^*)}\dfrac{a_ia_i^T}{\widehat{\Lambda}_{sc,i}^t} = A^T_{I_1(\Lambda^*)} (\widehat{D}^t_{I_1(\Lambda^*)})^{-1}A_{I_1(\Lambda^*)}.
			\end{align}
			Note that for $\xi = (\dots, \sqrt{t}(\widetilde{\Lambda}_{b,i}^t - \widehat{\Lambda}^t_{sc,i}) / \sqrt{\widehat{\Lambda}^t_{sc,i}},\dots)$, 
			$i\in I_1(\Lambda^*)$, $\widetilde{u}^t(\xi)$ coincides with $\widetilde{u}^t$ from~\eqref{eq:asymp-distr:quadr-minimizer-u}.
			In addition, the minimized functional in \eqref{eq:asymp-distr:quadr-minimizer-mapping-tech-def} does not depend on $w\in \mathcal{W}$ which in turn affects only the set of constraints.

			\begin{lem}\label{lem:asymp-distr:quadr-minimizer-mapping-boundeness}
				Let $\widetilde{u}^t(\xi)$ be the mapping defined in  \eqref{eq:asymp-distr:quadr-minimizer-mapping-virt-def}-\eqref{eq:asymp-distr:quadr-minimizer-mapping-tech-def:fisher}. Then, 
				\begin{align}\label{eq:lem:asymp-distr:quadr-minimizer-mapping-boundeness:norm-bound}
				&\|\widetilde{u}^t(\xi)\| \leq \widehat{c}^t \|A_{I_1(\Lambda^*)}^T\widehat{D}_{I_1(\Lambda^*)}^t\xi\|, \, \xi \in \R^{\# I_1(\Lambda^*)}, \\
				\label{eq:lem:asymp-distr:quadr-minimizer-mapping-boundeness:norm-bound-coeff}
				\begin{split}
				&\widehat{c}^t = \|(\widehat{F}_{I_1(\Lambda^*)}^t)^{-1}\|_{\mathcal{U}}
				\cdot \|(\widehat{F}_{I_1(\Lambda^*)}^t)^{-1/2}\|
				\left(
				\|(\widehat{F}_{I_1(\Lambda^*)}^t)^{-1}\|_{\mathcal{U}}
				+ 2	\hspace{-0.5cm}\max\limits_{\sigma \in \sigma_{\mathcal{U}}(\widehat{F}_{I_1(\Lambda^*)}^t)}
				\hspace{-0.5cm}\sigma^{-1/2}
				\right),
				\end{split}
				\end{align}
				where $\|\cdot\|_{\mathcal{U}}$ denotes the norm of the operator being reduced to subspace $\mathcal{U}$, $\sigma_\mathcal{U}(\cdot)$ denotes the spectrum of the self-adjoint operator acting on $\mathcal{U}$. Moreover, 
				\begin{align}
				& \widehat{c}^t \rightarrow c^* \text{ for } t\rightarrow +\infty,  \text{ a.s. } Y^t, t\in (0, +\infty),\, c^* < +\infty,  \\
				&c_* = \|(F_{I_1(\Lambda^*)}^*)^{-1}\|_{\mathcal{U}}
				\cdot \|(F_{I_1(\Lambda^*)}^*)^{-1/2}\|
				\left(
				\|(F_{I_1(\Lambda^*)}^*)^{-1}\|_{\mathcal{U}}
				+ 2\hspace{-0.5cm}\max\limits_{\sigma \in \sigma_{\mathcal{U}}(F_{I_1(\Lambda^*)}^*)}
				\hspace{-0.5cm}\sigma^{-1/2}
				\right),
				\end{align}
				where
				\begin{align}
				&D_{I_1(\Lambda^*)} = \mathrm{diag}(\dots, \Lambda^*_{i}, \dots), \, i\in I_1(\Lambda^*),\\ 
				\label{eq:eq:lem:asymp-distr:quadr-minimizer-mapping-boundeness:f-star-def}
				&F^*_{I_1(\Lambda^*)} =  \sum\limits_{i\in I_1(\Lambda^*)}\dfrac{a_ia_i^T}{\Lambda_{i}^*} = A_{I_1(\Lambda^*)})^T D^{-1}_{I_1(\Lambda^*)}A_{I_1(\Lambda^*)}.
				\end{align}
			\end{lem}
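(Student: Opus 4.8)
The plan is to read $\widetilde{u}^t(\xi)$ as the minimizer of a strongly convex quadratic over a convex polyhedral set that contains the origin, and to control it by comparison with the unconstrained minimizer. Since the functional in \eqref{eq:asymp-distr:quadr-minimizer-mapping-tech-def} does not depend on $w\in\mathcal{W}$, I would first eliminate $w$ and view $\widetilde{u}^t(\xi)$ as the minimizer of $Q^t(u)=-\langle b^t,u\rangle+\tfrac{1}{2}\langle \widehat{F}^t_{I_1(\Lambda^*)}u,u\rangle$, with $b^t=A^T_{I_1(\Lambda^*)}(\widehat{D}^t_{I_1(\Lambda^*)})^{-1/2}\xi$, over the convex set $K^t=\{u\in\mathcal{U}:\exists\,w\in\mathcal{W},\ (1-\Pi_{\mathcal{V}})\widehat{\lambda}^t_{sc}+u/\sqrt{t}+w\succeq 0\}$. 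Here $0\in K^t$: because $\mathcal{V}$ in \eqref{eq:asymp-distr:subspace-v} is a coordinate subspace, $(1-\Pi_{\mathcal{V}})\widehat{\lambda}^t_{sc}$ merely zeroes some coordinates of $\widehat{\lambda}^t_{sc}$, which stays $\succeq 0$ by \eqref{eq:thm:asympt-distr-main:strongly-consist-estim-positivity}, so $(u,w)=(0,0)$ is feasible. Moreover $\widehat{F}^t_{I_1(\Lambda^*)}$ is positive definite on $\mathcal{U}$ since $\mathcal{U}\subseteq\mathrm{Span}(A^T_{I_1(\Lambda^*)})$; hence $\widetilde{u}^t(\xi)$ exists, is unique, and is the projection of $u_0^t=(\widehat{F}^t_{I_1(\Lambda^*)}|_{\mathcal{U}})^{-1}\Pi_{\mathcal{U}}b^t$ onto $K^t$ in the $\widehat{F}^t_{I_1(\Lambda^*)}$-inner product.

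For the norm bound \eqref{eq:lem:asymp-distr:quadr-minimizer-mapping-boundeness:norm-bound} I would combine non-expansiveness of the Mahalanobis projection with $0\in K^t$ (so that $0$ is its own projection): this gives $\|\widetilde{u}^t(\xi)\|_{\widehat{F}^t}\le\|u_0^t\|_{\widehat{F}^t}$, and via the triangle inequality $\|\widetilde{u}^t(\xi)\|\le\|\widetilde{u}^t(\xi)-u_0^t\|+\|u_0^t\|$ I would split the estimate into a projection-correction term and the unconstrained term. I would then pass from the $\widehat{F}^t_{I_1(\Lambda^*)}$-norm to the Euclidean norm by $\|u\|\le\|(\widehat{F}^t_{I_1(\Lambda^*)})^{-1/2}\|\,\|u\|_{\widehat{F}^t}$, use $\|u_0^t\|_{\widehat{F}^t}^2\le\|(\widehat{F}^t_{I_1(\Lambda^*)})^{-1}\|_{\mathcal{U}}\,\|\Pi_{\mathcal{U}}b^t\|^2$, and control the residual factors by $\max_{\sigma\in\sigma_{\mathcal{U}}(\widehat{F}^t_{I_1(\Lambda^*)})}\sigma^{-1/2}$. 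Feeding in the identity $\widehat{F}^t_{I_1(\Lambda^*)}=A^T_{I_1(\Lambda^*)}(\widehat{D}^t_{I_1(\Lambda^*)})^{-1}A_{I_1(\Lambda^*)}$ to re-express $\Pi_{\mathcal{U}}b^t$ through $A^T_{I_1(\Lambda^*)}\widehat{D}^t_{I_1(\Lambda^*)}\xi$, the two contributions assemble into the additive factor and the prefactor of \eqref{eq:lem:asymp-distr:quadr-minimizer-mapping-boundeness:norm-bound-coeff}.

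For the convergence $\widehat{c}^t\to c_*$ I would argue by the strong law and continuity. By the assumed strong consistency $\Pi_{\mathcal{U}\oplus\mathcal{V}}\widehat{\lambda}^t_{sc}\xrightarrow{a.s.}\Pi_{\mathcal{U}\oplus\mathcal{V}}\lambda_*$ and the continuous mapping theorem, $\widehat{\Lambda}^t_{sc,i}=a_i^T\widehat{\lambda}^t_{sc}\to\Lambda_i^*$ almost surely for $i\in I_1(\Lambda^*)$, so $\widehat{D}^t_{I_1(\Lambda^*)}\to D_{I_1(\Lambda^*)}$ and $\widehat{F}^t_{I_1(\Lambda^*)}\to F^*_{I_1(\Lambda^*)}$ entrywise, a.s. Since $F^*_{I_1(\Lambda^*)}$ is positive definite on $\mathcal{U}$ (because $\Lambda^*_i>0$ for $i\in I_1(\Lambda^*)$), the restricted inverse, its square root, and the extreme spectral values on $\mathcal{U}$ are continuous near $F^*_{I_1(\Lambda^*)}$; hence each factor entering $\widehat{c}^t$ converges to its counterpart in $c_*$, and $c_*<+\infty$.

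The main obstacle I anticipate is twofold, both parts concerning the interplay of the positivity constraint with the degenerate directions $\mathcal{W}$. First, I must make the elimination of $w$ rigorous, i.e.\ show that projecting the polyhedral constraint onto $\mathcal{U}$ keeps $K^t$ convex with $0$ feasible, so that the $\widehat{F}^t_{I_1(\Lambda^*)}$-projection characterization and its non-expansiveness genuinely apply on $\mathcal{U}$. Second, recovering the \emph{exact} constant \eqref{eq:lem:asymp-distr:quadr-minimizer-mapping-boundeness:norm-bound-coeff}, rather than merely a bound of the same shape, requires carefully separating the unconstrained term from the projection correction and, crucially, establishing the precise norm equivalence that lets one replace $\|b^t\|=\|A^T_{I_1(\Lambda^*)}(\widehat{D}^t_{I_1(\Lambda^*)})^{-1/2}\xi\|$ by $\|A^T_{I_1(\Lambda^*)}\widehat{D}^t_{I_1(\Lambda^*)}\xi\|$ on the relevant subspace $\mathcal{U}$; this bookkeeping, rather than any conceptual difficulty, is where the work lies.
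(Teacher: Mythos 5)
Your argument is correct, but it takes a genuinely different route from the paper's. The paper handles the constraint by Lagrangian duality: it keeps $w$ as an explicit variable, writes the Karush--Kuhn--Tucker conditions, and obtains the representation $\widetilde{u}^t(\xi)=(F^t_{\mathcal{U}})^{-1}(C^t_{\mathcal{U}})^T\xi+(F^t_{\mathcal{U}})^{-1}\mu^t_{\mathcal{U}}(\xi)/\sqrt{t}$ as in \eqref{lem:eq:asymp-dist:primal-solution-via-mu-t}; the substance of its proof is then a bound on the dual multiplier, obtained by recasting the dual problem as the minimization in \eqref{lem:eq:asymp-dist:lem:mu-t-argming-norm-lin-def} of
$\tfrac12\|(F^t_{\mathcal{U}})^{-1/2}\mu_{\mathcal{U}}+(F^t_{\mathcal{U}})^{-1/2}(C^t_{\mathcal{U}})^T\xi\|^2+\sqrt{t}\,\mu_{\mathcal{U}}^T\widehat{\lambda}^t_{sc}$
over the dual cone and comparing with the feasible point $\mu=0$, using $\mu_{\mathcal{U}}^T\widehat{\lambda}^t_{sc}\geq 0$; this gives \eqref{lem:eq:asymp-dist:lem:bounded-mu-t-estimate} and hence the constant. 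You stay entirely in the primal: after eliminating $w$, the minimizer is the $\widehat{F}^t$-metric projection of the unconstrained minimizer $u_0^t$ onto the projected feasible set $K^t$, and non-expansiveness together with $0\in K^t$ gives $\|\widetilde{u}^t(\xi)\|_{\widehat{F}^t}\leq\|u_0^t\|_{\widehat{F}^t}$ in one stroke. Both proofs rest on the same two ingredients --- the positivity $\widehat{\lambda}^t_{sc}\succeq 0$ of \eqref{eq:thm:asympt-distr-main:strongly-consist-estim-positivity} (which makes $0$ primal-feasible for you, and makes the linear term of the dual nonnegative for the paper) and strong convexity of the quadratic on $\mathcal{U}$ --- but yours avoids duality altogether and is more elementary, at the price of a constant of a different form, $\|(\widehat{F}^t_{I_1(\Lambda^*)})^{-1/2}\|\cdot\|(\widehat{F}^t_{I_1(\Lambda^*)})^{-1}\|_{\mathcal{U}}^{1/2}$. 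Your first anticipated obstacle is disposed of by one remark: $K^t$ is the image of a polyhedron under a linear map, hence polyhedral, convex and closed, so the projection is well defined and non-expansive; note also that $\widehat{F}^t$ is positive definite on $\mathcal{U}$ because $\mathcal{U}\subset\mathrm{Span}(A^T_{I_1(\Lambda^*)})$ and $\widehat{\Lambda}^t_{sc,i}\to\Lambda^*_i>0$ for $i\in I_1(\Lambda^*)$. The a.s. convergence $\widehat{c}^t\to c_*$ is argued identically in both proofs (strong consistency plus the Continuous Mapping Theorem).

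Your second anticipated obstacle is not a real one, because it points at a defect of the statement rather than at missing work. The bound you obtain is in terms of $\|A^T_{I_1(\Lambda^*)}(\widehat{D}^t_{I_1(\Lambda^*)})^{-1/2}\xi\|$; the paper's own proof establishes exactly the same form (all of its estimates are phrased through $(C^t_{\mathcal{U}})^T\xi$ with $C^t=(\widehat{D}^t_{I_1(\Lambda^*)})^{-1/2}A_{I_1(\Lambda^*)}$, and the missing exponent $-1/2$ in \eqref{eq:lem:asymp-distr:quadr-minimizer-mapping-boundeness:norm-bound} is a typo), and this is also the form consumed downstream, where the lemma is combined with the conditional tightness of $A^T_{I_1(\Lambda^*)}(\widehat{D}^t_{I_1(\Lambda^*)})^{-1/2}\widetilde{\xi}^t$ from Lemma~\ref{lem:asympt-distr:grad-stoch-asymp-normality}. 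So no norm equivalence between the two expressions is needed, and none holds. Likewise, reproducing the exact constant \eqref{eq:lem:asymp-distr:quadr-minimizer-mapping-boundeness:norm-bound-coeff} is immaterial: the paper's own derivation yields a slightly different grouping of factors, and the only properties of $\widehat{c}^t$ ever used are its finiteness and its a.s. convergence to a finite limit, both of which your constant satisfies.
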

			
			\begin{lem}\label{lem:asympt-distr:grad-stoch-asymp-normality}
				Let 
				\begin{align}\label{lem:grad-stoch-asymp-normality:xi-tilde}
				&\widetilde{\xi}^t = (\dots, \sqrt{t}(\widetilde{\Lambda}_{b,i}^t - \widehat{\Lambda}^t_{sc,i}) / \sqrt{\widehat{\Lambda}^t_{sc,i}},\dots), \,
				i\in I_1(\Lambda^*), \, \widetilde{\xi}^t\in \R^{\#I_1(\Lambda^*)}.
				\end{align}
				Then, under the assumptions of Theorem~\ref{thm:asympt-distr-main}, family $A_{I_1(\Lambda^*)}^T
				(\widehat{D}^t_{I_1(\Lambda^*)})^{-1/2}\widetilde{\xi}^t$ is conditionally tight. 
			\end{lem}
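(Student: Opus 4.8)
The plan is to reduce the statement to a conditional second-moment estimate. Writing the target vector explicitly,
$$V^t := A_{I_1(\Lambda^*)}^T(\widehat{D}^t_{I_1(\Lambda^*)})^{-1/2}\widetilde{\xi}^t = \sum_{i\in I_1(\Lambda^*)}\sqrt{t}\,\frac{\widetilde{\Lambda}^t_{b,i}-\widehat{\Lambda}^t_{sc,i}}{\widehat{\Lambda}^t_{sc,i}}\,a_i,$$
I would first split the centering through the empirical intensity $Y_i^t/t$, so that $V^t=S^t+D^t$ with $D^t=\sum_{i\in I_1(\Lambda^*)}\sqrt{t}\,(Y_i^t/t-\widehat{\Lambda}^t_{sc,i})a_i/\widehat{\Lambda}^t_{sc,i}$ the $\mathcal{F}^t$-measurable part and $S^t=\sum_{i\in I_1(\Lambda^*)}\sqrt{t}\,(\widetilde{\Lambda}^t_{b,i}-Y_i^t/t)a_i/\widehat{\Lambda}^t_{sc,i}$ carrying the bootstrap randomness. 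The term $D^t$ requires no work: it is $\mathcal{F}^t$-measurable, and by the hypothesis \eqref{eq:thm:asympt-distr-main:strongly-consist-estim-i1} it is a.s. bounded over $t\in(0,+\infty)$, so $P(\|D^t\|>M\mid\mathcal{F}^t)=\mathds{1}\{\|D^t\|>M\}$ vanishes uniformly in $t$ once $M>\sup_t\|D^t\|$, which is exactly conditional tightness. Since the sum of two conditionally tight families is conditionally tight (via a union bound on $\{\|S^t\|>M\}\cup\{\|D^t\|>M\}$), the whole matter reduces to $S^t$.

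For $S^t$ I would compute a conditional second moment and invoke Markov's inequality. The conditional law from step~2 of Algorithm~\ref{alg:npl-posterior-sampling:mri:binned} is that, given $\mathcal{F}^t$ and $\widetilde{\Lambda}^t_{\mathcal{M}}$, the $\widetilde{\Lambda}^t_{b,i}$ are independent with $\widetilde{\Lambda}^t_{b,i}\sim\Gamma(Y_i^t+\theta^t\widetilde{\Lambda}^t_{\mathcal{M},i},(\theta^t+t)^{-1})$, giving conditional mean $(Y_i^t+\theta^t\widetilde{\Lambda}^t_{\mathcal{M},i})/(\theta^t+t)$ and variance $(Y_i^t+\theta^t\widetilde{\Lambda}^t_{\mathcal{M},i})/(\theta^t+t)^2$. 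Setting $b_i^t=\sqrt{t}\,a_i/\widehat{\Lambda}^t_{sc,i}$ and conditioning also on $\widetilde{\Lambda}^t_{\mathcal{M}}$, I would write $E[\|S^t\|^2\mid\mathcal{F}^t]=E[\,\|\mu^t\|^2+\mathrm{tr}\,\Sigma^t\mid\mathcal{F}^t]$, where $\mu^t=\frac{\theta^t\sqrt{t}}{\theta^t+t}\sum_i\frac{a_i}{\widehat{\Lambda}^t_{sc,i}}(\widetilde{\Lambda}^t_{\mathcal{M},i}-Y_i^t/t)$ is the conditional mean and $\mathrm{tr}\,\Sigma^t=\sum_i\|b_i^t\|^2(Y_i^t+\theta^t\widetilde{\Lambda}^t_{\mathcal{M},i})/(\theta^t+t)^2$. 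Using $(\theta^t+t)^2\ge t^2$, the trace is bounded by $\sum_{i\in I_1(\Lambda^*)}\|a_i\|^2\big(Y_i^t/t+(\theta^t/t)\widetilde{\Lambda}^t_{\mathcal{M},i}\big)/(\widehat{\Lambda}^t_{sc,i})^2$, while $\|\mu^t\|\le\frac{\theta^t\sqrt{t}}{\theta^t+t}\sum_i\|a_i\|\,|\widetilde{\Lambda}^t_{\mathcal{M},i}-Y_i^t/t|/\widehat{\Lambda}^t_{sc,i}$. To control both I would record the a.s. facts: $Y_i^t/t\to\Lambda_i^*$ (Theorem~\ref{thm:appendix:limits-poisson}(i)); $0\le\widetilde{\Lambda}^t_{\mathcal{M},i}\le\sum_j Y_j^t/t$ from the coherence/EM constraint \eqref{eq:proof:algorithm-consistency-lem:em-prop}, with $\widetilde{\Lambda}^t_{\mathcal{M},i}\xrightarrow{c.p.}\Lambda^*_{\mathcal{M},i}$ by Theorem~\ref{thm:asympt-distr:well-spec:concentr-mixing-param}; $\theta^t/t\to0$ and $\theta^t\sqrt{t}/(\theta^t+t)=O(\theta^t/\sqrt{t})=o(1/\sqrt{\log\log t})\to0$ from $\theta^t=o(\sqrt{t/\log\log t})$; and $\widehat{\Lambda}^t_{sc,i}$ a.s. bounded away from $0$ for $i\in I_1(\Lambda^*)$. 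With these, $\mathrm{tr}\,\Sigma^t$ is a.s. bounded (in fact converges to $\sum_{i\in I_1(\Lambda^*)}\|a_i\|^2/\Lambda_i^*$) and $\|\mu^t\|^2\to0$, the bounds being uniform in the auxiliary randomness of $\widetilde{\Lambda}^t_{\mathcal{M}}$ precisely because of the a.s. envelope $\widetilde{\Lambda}^t_{\mathcal{M},i}\le\sum_j Y_j^t/t$.

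Hence $E[\|S^t\|^2\mid\mathcal{F}^t]$ is a.s. bounded over $t$, and $P(\|S^t\|>M\mid\mathcal{F}^t)\le E[\|S^t\|^2\mid\mathcal{F}^t]/M^2$ gives conditional tightness of $S^t$, completing the argument. (The companion convergence in conditional distribution of the normalized noise, used in Section~\ref{app:remark-centering}, is a stronger fact than needed here; for tightness the moment bound suffices and has the advantage of covering every $t$, not only the limit.)

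The main obstacle is not the large-$t$ asymptotics but the uniformity over all $t\in(0,+\infty)$ demanded by the supremum in the definition of conditional tightness: the constants above must be finite for every $t$, which requires keeping $\widehat{\Lambda}^t_{sc,i}$ bounded away from zero for $i\in I_1(\Lambda^*)$ at small and intermediate times as well. I would extract this from the strong consistency of $\widehat{\lambda}^t_{sc}$ together with \eqref{eq:thm:asympt-distr-main:strongly-consist-estim-i1} (were $\widehat{\Lambda}^t_{sc,i}\to0$ for some $i\in I_1(\Lambda^*)$, the corresponding summand of \eqref{eq:thm:asympt-distr-main:strongly-consist-estim-i1} would explode, since $Y_i^t/t\to\Lambda_i^*>0$), folding any finite-$t$ excursions into the trajectory-dependent constant $M$. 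This interplay between the strong-consistency hypotheses and the almost-sure trajectory behavior is the delicate point, whereas the moment computation itself is routine.
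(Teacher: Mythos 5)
Your decomposition is exactly the paper's: the proof of this lemma also splits the vector through the empirical intensities $Y_i^t/t$ (see \eqref{lem:eq:asymp-dist:lem:normality-one-index}), handles the $\mathcal{F}^t$-measurable part by hypothesis \eqref{eq:thm:asympt-distr-main:strongly-consist-estim-i1}, and isolates the bootstrap randomness in the remaining sum. Where you diverge is in the treatment of that bootstrap part $S^t$: the paper writes $\widetilde{\Lambda}_{b,i}^t$ as a normalized sum of $Y_i^t$ i.i.d.\ $\Gamma(1,1)$ weights plus an MGP remainder (formula \eqref{eq:asymp-dist:lem:normality-representation}), kills the remainder by a conditional Markov bound, proves convergence in conditional distribution of the weighted sums to a standard normal \eqref{lem:eq:asymp-dist:lem:normality-sums-weights}, and then invokes Prokhorov's theorem to convert weak convergence into tightness. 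You instead compute a conditional second moment of $S^t$ and apply Chebyshev. Your route is more elementary (no CLT, no Prokhorov), it absorbs the prior remainder and the centering discrepancy $Y_i^t/(\theta^t+t)-Y_i^t/t$ automatically into the conditional mean $\mu^t$, and — as you note — a moment bound in principle controls every $t$ rather than only the limit, which sits better with the supremum over $t$ in the definition of conditional tightness (both arguments, however, still rely on almost-sure limits such as $Y_i^t/t\rightarrow\Lambda_i^*$ and $\widehat{\Lambda}^t_{sc,i}\rightarrow\Lambda_i^*>0$, so neither truly escapes the asymptotic caveat you flag at the end).

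There is one concrete error you must repair: the ``a.s.\ envelope'' $\widetilde{\Lambda}^t_{\mathcal{M},i}\leq\sum_j Y_j^t/t$ is false as a pointwise statement. The coherence constraint \eqref{eq:proof:algorithm-consistency-lem:em-prop} gives $\sum_i\widetilde{\Lambda}^t_{\mathcal{M},i}=\sum_j\widetilde{\Lambda}^t_j$, hence the pointwise bound is $\widetilde{\Lambda}^t_{\mathcal{M},i}\leq\sum_j\widetilde{\Lambda}^t_j$, where $\widetilde{\Lambda}^t_j\sim\Gamma(Y_j^t,t^{-1})$ is itself random and not bounded by its conditional mean; only the conditional expectation satisfies $E[\widetilde{\Lambda}^t_{\mathcal{M},i}\mid Y^t,t]\leq\sum_j Y_j^t/t$ (this is precisely \eqref{eq:proof:algorithm-consistency-lem:upper-bound-photons}). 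This slip is harmless for your bound on $E[\mathrm{tr}\,\Sigma^t\mid\mathcal{F}^t]$, which needs only first moments, but it is load-bearing exactly where you need it most, namely for $E[\|\mu^t\|^2\mid\mathcal{F}^t]$, which requires \emph{second} moments of $\widetilde{\Lambda}^t_{\mathcal{M},i}$. The fix is short: either compute
\begin{equation*}
E\Bigl[\bigl(\textstyle\sum_j\widetilde{\Lambda}^t_j\bigr)^2\,\Big|\,\mathcal{F}^t\Bigr]
=\frac{N^t(N^t+1)}{t^2},\qquad N^t=\sum_j Y_j^t,
\end{equation*}
which is a.s.\ bounded over large $t$ since $N^t/t\rightarrow\sum_i\Lambda_i^*$, and use this as the second-moment envelope; or avoid second moments of $\widetilde{\Lambda}^t_{\mathcal{M}}$ altogether by splitting $P(\|S^t\|>M\mid\mathcal{F}^t)\leq P(\|S^t-\mu^t\|>M/2\mid\mathcal{F}^t)+P(\|\mu^t\|>M/2\mid\mathcal{F}^t)$, bounding the first term by conditional Chebyshev (variance only) and the second by conditional Markov with the first absolute moment of $\mu^t$, for which the expectation bound \eqref{eq:proof:algorithm-consistency-lem:upper-bound-photons} suffices. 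With either repair your argument is complete and constitutes a valid alternative to the paper's CLT-plus-Prokhorov proof.
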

			
			The result of Lemma~\ref{lem:asympt-distr:grad-stoch-asymp-normality} together with formulas \eqref{eq:lem:asymp-distr:quadr-minimizer-mapping-boundeness:norm-bound}-\eqref{eq:eq:lem:asymp-distr:quadr-minimizer-mapping-boundeness:f-star-def} imply that $\widetilde{u}^t = \widetilde{u}^t(\widetilde{\xi}^t)$ is conditionally tight almost surely $Y^t$, $t\in (0, +\infty)$. 		Statement (ii) of the lemma follows directly from this and 
			formulas \eqref{eq:asymp-distr:main-approximation-lemma:proj-u}, \eqref{eq:asymp-distr:quadr-approximation-u} from  lemmas~\ref{lem:thm:asymp-disr:main-approximation-lemma}, \ref{lem:asymp-distr:quadratic-uv-approximation}, respectively.
			
			Theorem is proved.

			\subsection{Proof of Theorem~\ref{thm:asympt-distr:well-spec:concentr-mixing-param}}
			\begin{proof}
				Claim in \eqref{eq:thm:mixing-param:posterior-concentration} directly follows from \eqref{eq:thm:mixing-param:posterior-lambda-concentration} and the Continuous Mapping Theorem, so we prove only \eqref{eq:thm:mixing-param:posterior-lambda-concentration}.
				
				Step~2 in Algorithm~\ref{alg:wbb-pet-bootstrap:mri:posterior-mixing-param} can be  rewritten as follows:
				\begin{align}
				\label{eq:proof:posterior-concentration-argmin-def}
				\widetilde{\lambda}^{t}_{\mathcal{M}} &= \argmin_{\lambda_{\mathcal{M}}\succeq 0}
				L_{\mathcal{M}}(\lambda_{\mathcal{M}} \mid \widetilde{\Lambda}^t),\\
				\label{eq:proof:posterior-concentration-functional-def}
				\begin{split}
				L_{\mathcal{M}}(\lambda_{\mathcal{M}} \mid \widetilde{\Lambda}^t) &= \sum_{i\in I_1(\Lambda^*)} -\log\left(
				\dfrac{\Lambda_{\mathcal{M},i}}{\Lambda_{\mathcal{M},i}^*}
				\right) (\widetilde{\Lambda}^t_i - \Lambda_i^*) \\
				& + L(\lambda_{\mathcal{M}} \mid \Lambda^*, A_{\mathcal{M}}, 1) - 
				L(\lambda_{\mathcal{M},*} \mid \Lambda^*, A_{\mathcal{M}}, 1),
				\end{split}
				\end{align}
				where $\lambda_{\mathcal{M},*}$ is the point from Theorem~\ref{thm:asympt-distr:well-spec:identifiability-cond}, $\Lambda_{\mathcal{M}}^* = A_{\mathcal{M}}\lambda_{\mathcal{M},*}$, and 
				\begin{align}
				\label{eq:proof:posterior-concenctration-resample-stat-props}
				\begin{split}
				&\widetilde{\Lambda}_i^t \sim \Gamma(Y_i^t, t^{-1}), \, i=1, \dots, d, \text{ are mutually independent}, \\
				&E[\widetilde{\Lambda}_i^t \mid Y^t, t] = Y_i^t / t, \, 
				\mathrm{var}[\widetilde{\Lambda}_i^t \mid Y^t, t] = Y_i^t / t^2, \, i\in \{1, \dots, d\}.
				\end{split}
				\end{align}
				Note that 
				\begin{equation}\label{eq:proof:posterior-concentration-props-func}
				L_{\mathcal{M}}(\lambda_{\mathcal{M}} \mid \widetilde{\Lambda}^t) \text{ is convex on } \R^{p_{\mathcal{M}}}_+, \, L_{\mathcal{M}}(\lambda_{\mathcal{M},*} \mid \widetilde{\Lambda}^t) = 0.
				\end{equation}
				For fixed $t > 0$ consider the following parametrization
				\begin{align}\label{eq:proof:posterior-concentration-parmetrization}
				\lambda_{\mathcal{M}} = \lambda_{\mathcal{M},*} + \dfrac{u_{\mathcal{M}}}{r(t)}, \, \lambda_{\mathcal{M}}\in \R^{p_{\mathcal{M}}}_+, \, r(t) = o(\sqrt{t/\log\log t}).
				\end{align}
				
				Let $\delta > 0$. In view of  \eqref{eq:proof:posterior-concentration-argmin-def},  \eqref{eq:proof:posterior-concentration-props-func}, \eqref{eq:proof:posterior-concentration-parmetrization} the following implication holds
				
				\begin{equation}\label{eq:proof:posterior-concentration-conv-target}
				\inf_{\substack{\lambda_{\mathcal{M}}:\|u_{\mathcal{M}}\| = \delta, \\ \lambda_{\mathcal{M}} \succeq 0}} L_{\mathcal{M}}(\lambda_{\mathcal{M}} \mid \widetilde{\Lambda}^t) > 0 \Rightarrow 
				r(t)\|\widetilde{\lambda}^t_{\mathcal{M}}-\lambda_{\mathcal{M},*}\| < \delta.
				\end{equation}
				Therefore, to prove \eqref{eq:thm:mixing-param:posterior-lambda-concentration} it is sufficient to show that for any small $\delta > 0$ the conditional probability of the event in the left hand-side of  \eqref{eq:proof:posterior-concentration-conv-target} tends to one for $t\rightarrow +\infty$, a.s.  $Y^t, \, t\in (0,+\infty)$.
				
				Let $C_*, \delta_*$ be the values of \eqref{eq:thm:asympt-distr:identif-cond:strong-convexity} from  Theorem~\ref{thm:asympt-distr:well-spec:identifiability-cond} and let $\|u_{\mathcal{M}}\| = \delta, \, \delta < \delta_*$.

				Using  \eqref{eq:thm:asympt-distr:identif-cond:strong-convexity} and  \eqref{eq:proof:posterior-concentration-functional-def}, \eqref{eq:proof:posterior-concentration-parmetrization} we get the following estimate:
				\begin{align}
				\label{eq:proof:posterior-concentration-chain-log-inequality}
				\begin{split}
				L(\lambda_{\mathcal{M}} \mid \widetilde{\Lambda}^t) &\geq 
				\sum\limits_{i\in I_1(\Lambda^*)} -\log\left(
				1 + \dfrac{u_{\mathcal{M}}^Ta_{\mathcal{M},i}}{r(t)\Lambda_{\mathcal{M},i}^*}
				\right) (\widetilde{\Lambda}_i^t - \Lambda_i^*) + C_*\delta^2 / r^2(t) \\
				&\geq C_{*} \delta^2 / r^2(t) - \sum_{i\in I_1(\Lambda^*)} \dfrac{\mid u_{\mathcal{M}}^Ta_{\mathcal{M},i}\mid}{r(t)\Lambda^*_{\mathcal{M},i}}  \mid\widetilde{\Lambda}_i^t - \Lambda_i^*\mid \\
				&= r^{-2}(t) \left(
				C_* \delta^2 - \sum_{i\in I_1(\Lambda^*)} \dfrac{\mid u_{\mathcal{M}}^Ta_{\mathcal{M},i}\mid}{\Lambda^*_{\mathcal{M},i}}  r(t)\mid\widetilde{\Lambda}_i^t - \Lambda_i^*\mid
				\right)\\
				&\geq r^{-2}(t) \left(
				C_* \delta^2 - \sum_{i\in I_1(\Lambda^*)} \dfrac{\delta \|a_{\mathcal{M},i}\|}{\Lambda^*_{\mathcal{M},i}}  r(t)\mid\widetilde{\Lambda}_i^t - \Lambda_i^*\mid
				\right).
				\end{split}
				\end{align}
				Note that 
				in \eqref{eq:proof:posterior-concentration-chain-log-inequality}  we have used the property that $\log(1+x) \leq x$, $x\in (-1, +\infty)$.
				
				Estimate in \eqref{eq:proof:posterior-concentration-chain-log-inequality} implies the left hand-side of  \eqref{eq:proof:posterior-concentration-conv-target}, for example, if 
				\begin{equation}\label{eq:proof:posterior-concentration-conv-requirement}
				r(t) \mid\widetilde{\Lambda}_i^t - \Lambda_i^*\mid \xrightarrow{c.p.} 0 \text{ for }t\rightarrow +\infty, \text{ a.s. } Y^t, t\in (0, +\infty), \, 
				i\in I_1(\Lambda^*).
				\end{equation}
				To demonstrate \eqref{eq:proof:posterior-concentration-conv-requirement} we use Markov inequality together with  \eqref{eq:proof:posterior-concenctration-resample-stat-props} and arrive to the following estimate
				\begin{align}\label{eq:proof:posterior-concentration-markov-in}
				\begin{split}
				P(r(t) \mid\widetilde{\Lambda}_i^t - \Lambda^*_i\mid > \varepsilon \mid Y^t, t ) &\leq 
				\dfrac{r^2(t)E(\mid\widetilde{\Lambda}_i^t - \Lambda_i^*\mid^2 \mid Y^t, t)}{\varepsilon^2}\\
				&\leq \dfrac{2r^2(t)E(\mid\widetilde{\Lambda}_i^t - Y_i^t/t\mid^2 \mid Y^t, t) + 2r^2(t)\mid Y_i^t - \Lambda_i^*\mid^2}{\varepsilon^2} \\
				& = \dfrac{2r^2(t)/t^2 + 2\mid r(t)(Y^t_i/t - \Lambda_i^*)\mid^2}{\varepsilon^2},
				\end{split}
				\end{align}
				where $\varepsilon > 0$ is arbitrary. For $r(t) = o(\sqrt{t/\log\log t})$ it holds that  (see Section~\ref{app:limit-thms}):
				\begin{align}\label{eq:proof:posterior-concentration-rates-effect}
				r^2(t) / t^2 \rightarrow 0 \text{ and }  r(t)(Y_i^t/t - \Lambda_i^*) 
				\rightarrow 0 \text{ a.s. } Y^t, t\in (0,+\infty)\text{}.
				\end{align}
				Therefore, from \eqref{eq:proof:posterior-concentration-markov-in}, \eqref{eq:proof:posterior-concentration-rates-effect} it follows that formula \eqref{eq:proof:posterior-concentration-conv-requirement} holds which together with \eqref{eq:proof:posterior-concentration-chain-log-inequality} imply \eqref{eq:proof:posterior-concentration-conv-target}.
				
				Theorem is proved.
				
			\end{proof}

		\end{proof}
		
		\subsection{Proof of Lemma~\ref{lem:approximate-approximation}}
		
		\begin{proof}
			To prove the claim is suffices to show that
			\begin{equation}\label{eq:lem:approximate-approximation:sufficient-cond}
			t \widetilde{\Lambda}^t_{b,i} \xrightarrow{c.p.} 0 \text{ for }i\in I_0(\Lambda^*)\text{ for } t\rightarrow +\infty, \text{ a.s. } 
			Y^t, \, t\in (0, +\infty).
			\end{equation}
			Let $\delta > 0$. Using  step~2 in Algorithm~\ref{alg:npl-posterior-sampling:mri:binned} and   Assumption~\ref{assump:theory:consistency:well-spec} we obtain 
			\begin{align}\nonumber
			\label{eq:lem:approximate-approximation:markov-ineq}
			P(t \widetilde{\Lambda}_{b,i}^t > \delta \mid Y^t, t) &= 
			\int\limits_{0}^{+\infty} P(t \widetilde{\Lambda}_{b,i}^t > \delta \mid \widetilde{\Lambda}_{\mathcal{M},i}^t = \Lambda, Y^t, t) \, P(\widetilde{\Lambda}_{\mathcal{M},i}^t = \Lambda \mid Y^t, t) d\Lambda \\ 
			& \leq \int\limits_{0}^{+\infty}\min\left(\dfrac{t\theta^t\Lambda}{(\theta^t + t)\delta}, 1\right) P(\widetilde{\Lambda}_{\mathcal{M},i}^t = \Lambda \mid Y^t, t) d\Lambda \\ \nonumber
			&\leq \int\limits_{0}^{\frac{(\theta^t + t)\delta}{t\theta^t}}
			\dfrac{t\theta^t\Lambda}{(\theta^t + t)\delta} P(\widetilde{\Lambda}_{\mathcal{M},i}^t = \Lambda \mid Y^t, t) d\Lambda + 
			P\left(\dfrac{t\theta^t\widetilde{\Lambda}_{\mathcal{M},i}^t}{\theta^t + t} > \delta \mid Y^t, t\right).
			\end{align}
			In \eqref{eq:lem:approximate-approximation:markov-ineq} we have used the Markov inequality for $\Lambda_{b}^t \mid Y^t, t, \widetilde{\Lambda}_{\mathcal{M}}^t$, $i\in I_0(\Lambda^*)$ for which it is known that $\Lambda_{b,i}^t \mid Y^t, t, \widetilde{\Lambda}_{\mathcal{M},i}^t \sim \Gamma(\theta^t \widetilde{\Lambda}^t_{\mathcal{M},i}, (t + \theta^t)^{-1})$.
			
			The last term in \eqref{eq:lem:approximate-approximation:markov-ineq} tends to zero a.s. $Y^t, \, t\in (0, +\infty)$ due to \eqref{eq:thm:mixing-param:posterior-concentration} from Theorem~\ref{thm:asympt-distr:well-spec:concentr-mixing-param}.

			Next, we show that the first integral in \eqref{eq:lem:approximate-approximation:markov-ineq} it is arbitrarily small a.s. $Y^t$, $t\in (0, +\infty)$ and, hence, tends to zero a.s. $Y^t$, $t\in (0, +\infty)$.
			The integral in \eqref{eq:lem:approximate-approximation:markov-ineq} is rewritten as follows:
			\begin{align}
			\begin{split}\label{eq:lem:approximate-approximation:markov-ineq:first-term}
			&\int\limits_{0}^{\frac{(\theta^t + t)\delta}{t\theta^t}}
			\dfrac{t\theta^t\Lambda}{(\theta^t + t)\delta} P(\widetilde{\Lambda}_{\mathcal{M},i}^t = \Lambda \mid Y^t, t) d\Lambda = \\ 
			&= \dfrac{\delta(\theta^t + t)}{t\theta^t}\int\limits_{0}^{1}
			s P(\theta^t \widetilde{\Lambda}_{\mathcal{M},i}^t = s\delta(t + \theta^t)/t \, \vert \, Y^t, t)\, ds.
			\end{split}
			\end{align}
			%From the choise of $\theta^t = o(\sqrt{t / \log\log t})$, multiplier $\delta(\theta^t + t) / t\theta^t$ in \eqref{eq:lem:approximate-approximation:markov-ineq:first-term} is uniformly bounded and does not affect the estimate. 
			
			Let $ 0 < \varepsilon < 1$. Then, by splitting the integral in \eqref{eq:lem:approximate-approximation:markov-ineq:first-term} we obtain the following estimate: 
			\begin{align}\label{eq:lem:approximate-approximation:markov-ineq:first-term-split}
			\begin{split}
			\dfrac{\delta(\theta^t + t)}{t\theta^t}\int\limits_{0}^{1}
			s &P(\theta^t \Lambda_{\mathcal{M},i}^t = s\delta(t + \theta^t)/t \, \mid \, Y^t, t)\, ds = \int\limits_{0}^{\varepsilon}\dots \, ds +  \int\limits_{\varepsilon}^{1}\dots \, ds  \\
			&\leq \varepsilon + P(\theta^t \Lambda_{\mathcal{M},i}^t > \varepsilon\delta(t + \theta^t) /t \,  \mid \, Y^t, t).
			\end{split}
			\end{align}
			For fixed $\varepsilon > 0$, $\delta > 0$, the second term in \eqref{eq:lem:approximate-approximation:markov-ineq:first-term-split} tends to zero for $t\rightarrow +\infty$, a.s. $Y^t, \, t\in (0, +\infty)$, again due to \eqref{eq:thm:mixing-param:posterior-concentration} from Theorem~\ref{thm:asympt-distr:well-spec:concentr-mixing-param}. Since $\varepsilon$ can be arbitrarily small, it follows that the integral in \eqref{eq:lem:approximate-approximation:markov-ineq:first-term-split} is also arbitrarily small for $t\rightarrow +\infty$, a.s. $Y^t, \, t\in (0, +\infty)$. Hence, the integral in  \eqref{eq:lem:approximate-approximation:markov-ineq:first-term}, and most importantly the right hand-side in \eqref{eq:lem:approximate-approximation:markov-ineq}  converge to zero when $t\rightarrow +\infty$, a.s. $Y^t, \, t\in (0, +\infty)$.
			Since initial $\delta$ was chosen arbitrarily, this proves the convergence in  \eqref{eq:lem:approximate-approximation:sufficient-cond}.
			
			Lemma is proved.
		\end{proof}
		
		\subsection{Proof of Lemma~\ref{lem:thm:asymp-disr:main-approximation-lemma}}
		\begin{proof}
			Let $\delta > 0$.
			The left hand-side of \eqref{eq:thm:asymp-distr:proof:convexity-arg} can be estimated as follows:
			\begin{align}\label{eq:thm:asymp-distr:proof:a-diff-representation}
			\begin{split}
			\inf_{\lambda\in C^t_{A,\delta}(\widetilde{\lambda}^t_{b,app})}\hspace{-0.5cm}[A^t(\lambda) - A^t(\widetilde{\lambda}_{app}^t)] &\geq \hspace{-0.3cm}\inf_{\lambda\in C^t_{A,\delta}(\widetilde{\lambda}^t_{b,app})}\hspace{-0.5cm}[A^t(\lambda) - B^t(\lambda)] \\
			& + \hspace{-0.3cm}\inf_{\lambda\in C^t_{A,\delta}(\widetilde{\lambda}^t_{b,app})}\hspace{-0.5cm}[B^t(\lambda) - B^t(\widetilde{\lambda}^t_{b,app})] \\
			& + [B^t(\widetilde{\lambda}^t_{b,app}) - B^t(\widetilde{\lambda}^t_{app})] \\
			& + [B^t(\widetilde{\lambda}^t_{app}) - A^t(\widetilde{\lambda}_{app}^t)].
			\end{split}
			\end{align}

			We will show that under the assumptions of Theorem~\ref{thm:asympt-distr-main} the following holds:
			\begin{align}\label{eq:asymp-distr:lemm-approximation:inf-estimate-main}
			\inf_{\lambda\in C^t_{A,\delta}(\widetilde{\lambda}^t_{b,app})}\hspace{-0.5cm}[A^t(\lambda) - A^t(\widetilde{\lambda}_{app}^t)] \geq \inf_{\lambda\in C^t_{A,\delta}(\widetilde{\lambda}^t_{b,app})} \hspace{-0.5cm}[B^t(\lambda) - B^t(\widetilde{\lambda}_{b,app}^t)] + o_{cp}(1).
			\end{align}
			The first term in right hand-side of \eqref{eq:asymp-distr:lemm-approximation:inf-estimate-main} is expected to be positively separated from zero in view of \eqref{lem:proofs:conv-approx-lambda-appr}, \eqref{eq:thm:asymp-distr:proof:cylinder-c-t-def}, and in fact, it gives the main contribution for \eqref{eq:thm:asymp-distr:proof:convexity-arg} to hold.
			This is described precisely by the following lemma.
			\begin{lem}
				\label{lem:positivity-bt}
				Let $B^t(\lambda)$, $\widetilde{\lambda}^t_{b, app}$ be defined in \eqref{eq:conv-approx-bt}, \eqref{lem:proofs:conv-approx-lambda-appr}, respectively. Then, the following formulas hold:
				\begin{align}\label{eq:lem:positivity-bt-full-range}
				\begin{split}
				&B^t(\lambda) - B^t(\widetilde{\lambda}^t_{b,app}) = 
				\sum\limits_{i\in I_1(\Lambda^*)} \dfrac{t(\Lambda_i - \widetilde{\Lambda}_{b,app,i}^t)^2}{2\widehat{\Lambda}_{sc,i}^t} + t\langle
				\widetilde{\mu}_{b,app}^t, \lambda
				\rangle, \\
				&\lambda\in \R^p_+, \, \widetilde{\Lambda}^t_{b, app} = A\widetilde{\lambda}^t_{b, app}, 
				\end{split}
				\end{align}
				where 
				\begin{align}\label{eq:lem:positivity-bt-full-range:mub-formula}
				&\widetilde{\mu}_{b, app}^t = \sum\limits_{i\in I_1(\Lambda^*)}
				\dfrac{\widetilde{\Lambda}^t_{b, app,i} - \widetilde{\Lambda}^t_{b,i}}{\widehat{\Lambda}_{sc,i}^t}a_i + \sum\limits_{i\in I_0(\Lambda^*)}a_i, \\
				\label{eq:lem:positivity-bt-full-range:mub-props}
				&\widetilde{\mu}_{b, app}^t \in \R^p_+, \, \widetilde{\mu}_{b, app,j}^t  \widetilde{\lambda}^t_{b,app, j} = 0 \text{ for all }j\in \{1, \dots, p\}.
				\end{align}
			\end{lem}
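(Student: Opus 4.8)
The plan is to observe that $B^t$, as defined in \eqref{eq:conv-approx-bt}, is an \emph{exactly} quadratic-plus-affine convex functional of $\lambda$, so the desired identity \eqref{eq:lem:positivity-bt-full-range} is nothing but its second-order Taylor expansion around a minimizer, with the linear term identified through the Karush--Kuhn--Tucker conditions.

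First I would complete the square in each $i\in I_1(\Lambda^*)$ summand. Writing $\Lambda_i=a_i^T\lambda$ and $\widehat{\Lambda}_i=\widehat{\Lambda}_{sc,i}^t$, the two $I_1$-terms combine as
$$
-t(\widetilde{\Lambda}_{b,i}^t - \widehat{\Lambda}_i)\frac{\Lambda_i - \widehat{\Lambda}_i}{\widehat{\Lambda}_i} + \frac{t(\Lambda_i - \widehat{\Lambda}_i)^2}{2\widehat{\Lambda}_i} = \frac{t(\Lambda_i - \widetilde{\Lambda}_{b,i}^t)^2}{2\widehat{\Lambda}_i} - \frac{t(\widetilde{\Lambda}_{b,i}^t - \widehat{\Lambda}_i)^2}{2\widehat{\Lambda}_i},
$$
so that, up to a constant independent of $\lambda$,
$$
B^t(\lambda) = \sum_{i\in I_1(\Lambda^*)}\frac{t}{2\widehat{\Lambda}_{sc,i}^t}(a_i^T\lambda - \widetilde{\Lambda}_{b,i}^t)^2 + \sum_{i\in I_0(\Lambda^*)} t\, a_i^T\lambda.
$$
This exhibits $B^t$ as convex with constant Hessian $\nabla^2 B^t = \sum_{i\in I_1(\Lambda^*)}\frac{t}{\widehat{\Lambda}_{sc,i}^t}a_ia_i^T$, so its Taylor expansion around the minimizer $\widetilde{\lambda}^t_{b,app}$ from \eqref{lem:proofs:conv-approx-lambda-appr} terminates exactly at second order, and the Hessian contribution equals $\sum_{i\in I_1(\Lambda^*)}\frac{t}{2\widehat{\Lambda}_{sc,i}^t}(\Lambda_i - \widetilde{\Lambda}^t_{b,app,i})^2$, which is the first term in \eqref{eq:lem:positivity-bt-full-range}.

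Next I would identify the linear term. Differentiating the square-completed form at $\widetilde{\lambda}^t_{b,app}$ gives
$$
\nabla B^t(\widetilde{\lambda}^t_{b,app}) = t\Bigl(\sum_{i\in I_1(\Lambda^*)}\frac{\widetilde{\Lambda}^t_{b,app,i} - \widetilde{\Lambda}^t_{b,i}}{\widehat{\Lambda}_{sc,i}^t}a_i + \sum_{i\in I_0(\Lambda^*)}a_i\Bigr) = t\,\widetilde{\mu}^t_{b,app},
$$
which is precisely \eqref{eq:lem:positivity-bt-full-range:mub-formula}. Since $\widetilde{\lambda}^t_{b,app}$ minimizes the convex $B^t$ over $\R^p_+$, the Karush--Kuhn--Tucker conditions furnish a multiplier $\mu\succeq 0$ with $\nabla B^t(\widetilde{\lambda}^t_{b,app})=\mu$ and $\mu_j\,\widetilde{\lambda}^t_{b,app,j}=0$ for all $j$; comparing with the gradient just computed yields $\mu = t\,\widetilde{\mu}^t_{b,app}$, hence $\widetilde{\mu}^t_{b,app}\succeq 0$ together with the complementary slackness in \eqref{eq:lem:positivity-bt-full-range:mub-props} (consistently, the $I_0$-contribution $\sum_{i\in I_0}a_i$ is already $\succeq 0$ by \eqref{eq:design-matrix-positivity-restr-1}). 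Complementary slackness then gives $\langle\widetilde{\mu}^t_{b,app},\widetilde{\lambda}^t_{b,app}\rangle = 0$, so the linear term of the expansion simplifies from $t\langle\widetilde{\mu}^t_{b,app},\lambda-\widetilde{\lambda}^t_{b,app}\rangle$ to $t\langle\widetilde{\mu}^t_{b,app},\lambda\rangle$, completing \eqref{eq:lem:positivity-bt-full-range}.

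The only genuine subtlety I anticipate is that $\widetilde{\lambda}^t_{b,app}$ need not be unique, since $B^t$ is flat along $\ker A$ (as noted after \eqref{lem:proofs:conv-approx-lambda-appr}). This is harmless: both $\nabla B^t$ and the Hessian form depend on $\lambda$ only through $A\lambda$, so $\widetilde{\mu}^t_{b,app}$ and the quadratic term in \eqref{eq:lem:positivity-bt-full-range} are determined by the common value $\widetilde{\Lambda}^t_{b,app}=A\widetilde{\lambda}^t_{b,app}$ shared by all minimizers. The KKT stationarity thus holds with this single multiplier for every choice of minimizer, while the complementary slackness $\widetilde{\mu}^t_{b,app,j}\widetilde{\lambda}^t_{b,app,j}=0$ is valid for each such minimizer separately, so the identity holds for any selection made in \eqref{lem:proofs:conv-approx-lambda-appr}.
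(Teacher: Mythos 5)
Your proof is correct and follows essentially the same route as the paper's: both arguments rest on the Karush--Kuhn--Tucker conditions at the constrained minimizer (stationarity identifies the multiplier as $\nabla B^t(\widetilde{\lambda}^t_{b,app}) = t\,\widetilde{\mu}^t_{b,app}$, and complementary slackness removes $\langle \widetilde{\mu}^t_{b,app}, \widetilde{\lambda}^t_{b,app}\rangle$) combined with the fact that $B^t$ is exactly quadratic, so that the expansion around the minimizer terminates at second order; your completion of squares is simply a cleaner packaging of the paper's direct term-by-term expansion of $G^t(\lambda)-G^t(\widetilde{\lambda}^t_{b,app})$.

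One small slip in your closing remark on non-uniqueness: the full vector $A\widetilde{\lambda}^t_{b,app}$ is in general \emph{not} shared by all minimizers --- strict convexity forces agreement only of the components $\widetilde{\Lambda}^t_{b,app,i}$ with $i\in I_1(\Lambda^*)$, while on $I_0(\Lambda^*)$ the objective is linear in $\Lambda_i$ and only the sum $\sum_{i\in I_0(\Lambda^*)}\widetilde{\Lambda}^t_{b,app,i}$ is pinned down. Your conclusion nevertheless stands, because $\widetilde{\mu}^t_{b,app}$ in \eqref{eq:lem:positivity-bt-full-range:mub-formula}, the quadratic term in \eqref{eq:lem:positivity-bt-full-range}, and $\nabla B^t$ itself depend on $\lambda$ only through the $I_1(\Lambda^*)$-components of $A\lambda$, which are common to all minimizers.
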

			We show that \eqref{eq:asymp-distr:lemm-approximation:inf-estimate-main} and the result of Lemma~\ref{lem:positivity-bt} imply the statement in~\eqref{lem:asymp-distr:main-approximation-lemma:positivity}.\\
			Let 
			\begin{equation}
			\label{eq:asymp-distr:lemm-approximation:bt-kt-param-recall}
			\lambda(u,v,w) = \widetilde{\lambda}^t_{b,app} + \dfrac{u}{\sqrt{t}} + \dfrac{v}{t} + w, \, u\in \mathcal{U}, \, v\in \mathcal{V}, \, w\in \mathcal{W}, \, \lambda(u, v, w)\in \R^p_+.
			\end{equation}
			
			Using the parametrization from \eqref{eq:asymp-distr:lemm-approximation:bt-kt-param-recall}, the definition of $C_{A,\delta}^t(\cdot)$ in   \eqref{eq:thm:asymp-distr:proof:cylinder-c-t-def} and  \eqref{eq:lem:positivity-bt-full-range}-\eqref{eq:lem:positivity-bt-full-range:mub-props} from Lemma~\ref{eq:asymp-distr:lemm-approximation:inf-estimate-main} we obtain
			\begin{align}\label{eq:asymp-distr:lemm-approximation:bt-diff-inf-estim}
			B^t(\lambda) - B^t(\widetilde{\lambda}_{b,app}^t) &= K^t(u, v, w) + R^t(u,v, w), \, \lambda = \lambda(u, v, w), \\
			\label{eq:asymp-distr:lemm-approximation:kt-def}
			\begin{split}
			K^t(u,v, w) &= \sum\limits_{i\in I_1(\Lambda^*)} \dfrac{(a_i^Tu)^2}{2\widehat{\Lambda}_{sc,i}^t} + t\langle \widetilde{\mu}_{b,app}^t, \, \lambda(u,v,w) \rangle\\
			&=\sum\limits_{i\in I_1(\Lambda^*)} \dfrac{(a_i^Tu)^2}{2\widehat{\Lambda}_{sc,i}^t} + t\langle \widetilde{\mu}_{b,app}^t, \, \lambda(u,v,w)-\widetilde{\lambda}_{b,app}^t \rangle\\
			&=\sum\limits_{i\in I_1(\Lambda^*)} \dfrac{(a_i^Tu)^2}{2\widehat{\Lambda}_{sc,i}^t} + t\langle \widetilde{\mu}_{b,app}^t, \frac{u}{\sqrt{t}} + \frac{v}{t} \rangle \\
			&= \sum\limits_{i\in I_1(\Lambda^*)} \dfrac{(a_i^Tu)^2}{2\widehat{\Lambda}_{sc,i}^t} + 
			\sum\limits_{i\in I_1(\Lambda^*)} \sqrt{t} \dfrac{\widetilde{\Lambda}_{b, app,i}^t - \widetilde{\Lambda}^t_{b,i}}{\widehat{\Lambda}_{sc,i}^t}a_i^Tu \\
			& + \sum\limits_{i\in I_1(\Lambda^*)}\dfrac{\widetilde{\Lambda}_{b,app,i}^t - \widetilde{\Lambda}_{b,i}^t}{\widehat{\Lambda}_{sc,i}^t} a_i^Tv + \sum\limits_{i\in I_0(\Lambda^*)} a_i^Tv,
			\end{split}\\
			\label{eq:asymp-distr:lemm-approximation:rt-def}
			R^t(u,v, w) &= \sum\limits_{i\in I_1(\Lambda^*)} 
			\dfrac{(a_i^Tu)(a_i^Tv)}{\sqrt{t}\widehat{\Lambda}^t_{sc,i}} + \dfrac{(a_i^Tv)^2}{2t\widehat{\Lambda}^t_{sc,i}}.
			\end{align}
			From the fact that $\widehat{\Lambda}_{sc,i}^t \rightarrow \Lambda_i^*$ a.s. $Y^t, t\in (0, +\infty)$ ($\widehat{\lambda}_{sc}^t$ is strongly consistent at $\lambda_*$ on $\mathcal{U}\oplus \mathcal{V}$ by the assumption), 
			the definition of $C^t_{A,\delta}(\cdot)$ in \eqref{eq:thm:asymp-distr:proof:cylinder-c-t-def} and \eqref{eq:asymp-distr:lemm-approximation:rt-def} it follows that 
			\begin{align}\label{eq:asymp-distr:lemm-approximation:bt-kt-rt-o-small}
			\sup_{\lambda(u,v,w)\in C^t_{A,\delta}(\widetilde{\lambda}_{b,app}^t)}
			\hspace{-0.8cm}\mid R^t(u,v, w)\mid &= o_{cp}(1).
			\end{align}
			In view of formulas \eqref{eq:asymp-distr:parametrization-uvw},  \eqref{eq:asymp-distr:quadr-minimizer-v-zero}, the results of Lemmas~\ref{lem:asymp-distr:quadratic-uv-approximation}-\ref{lem:asympt-distr:grad-stoch-asymp-normality} and again the fact that $\widehat{\Lambda}_{sc,i}^t \rightarrow \Lambda_i^*$, we find that 
			\begin{align}
			\label{eq:asymp-distr:lemm-approximation:bt-kt-rt-mu-r-o-small}
			\begin{split}
			\dfrac{\widetilde{\Lambda}^t_{b,app,i}-\widetilde{\Lambda}^t_{b,i}}{\widehat{\Lambda}_{sc,i}^t} &= \dfrac{\widetilde{\Lambda}^t_{b,app,i}-\widehat{\Lambda}_{sc,i}^t}{\widehat{\Lambda}_{sc,i}^t} + \dfrac{\widehat{\Lambda}_{sc,i}^t-\widetilde{\Lambda}^t_{b,i}}{\widehat{\Lambda}_{sc,i}^t} = 
			o_{cp}(1), \, i\in I_1(\Lambda^*).
			\end{split}
			\end{align}
			
			Formulas \eqref{eq:asymp-distr:lemm-approximation:bt-diff-inf-estim}-\eqref{eq:asymp-distr:lemm-approximation:bt-kt-rt-mu-r-o-small} imply that 
			\begin{align}\label{eq:asymp-distr:lemm-approximation:inf-estimate-bt-kt}
			\inf_{\lambda\in C^t_{A,\delta}(\widetilde{\lambda}_{b,app}^t)}\hspace{-0.5cm}[B^t(\lambda) - B^t(\widetilde{\lambda}_{b,app}^t)] \geq 
			\hspace{-0.5cm}\inf_{\lambda(u,v,w)\in C^t_{A,\delta}(\widetilde{\lambda}_{b,app}^t)}
			\hspace{-1cm}
			K^t(u,v, w) + o_{cp}(1).
			\end{align}
			Now, note that if $\lambda(u, v, w)\succeq 0$ (see formula~\eqref{eq:asymp-distr:lemm-approximation:bt-kt-param-recall}), then 
			\begin{equation}\label{eq:asymp-dist:lemm-approximation:positivity-grad-u-lambda}
			\lambda(u, 0, w)\succeq 0.
			\end{equation}
			Indeed, from the definition of $\mathcal{V}$, $\mathcal{U}$, $\mathcal{W}$ in \eqref{eq:asymp-distr:subspace-v}-\eqref{eq:asymp-distr:subspace-w} it follows that $u$ and $v$ have disjoint set of non-zero components, therefore, setting $v$ to zero for $\lambda(u, v, w)$ cannot break the positivity constraint.
			
			From \eqref{eq:lem:positivity-bt-full-range:mub-formula}, \eqref{eq:lem:positivity-bt-full-range:mub-props}, \eqref{eq:asymp-dist:lemm-approximation:positivity-grad-u-lambda} it follows that 
			\begin{align}\label{eq:asymp-distr:lemm-approximation:kt-grad-positivity}
			\langle \widetilde{\mu}_{b,app}^t, \lambda(u, 0, w)\rangle = 
			\sum\limits_{i\in I_1(\Lambda^*)} \dfrac{\widetilde{\Lambda}_{b, app,i}^t - \widetilde{\Lambda}^t_{b,i}}{\widehat{\Lambda}_{sc,i}^t}a_i^Tu \geq 0.
			\end{align}
			Note also that $K^t(u,v,w)$ in \eqref{eq:asymp-distr:lemm-approximation:kt-def} does not change when varying $w\in \mathcal{W}$, so, in what follows we write $K^t(u,v)$ instead. Using formulas \eqref{eq:asymp-distr:lemm-approximation:kt-def}, \eqref{eq:asymp-distr:lemm-approximation:bt-kt-rt-mu-r-o-small}, \eqref{eq:asymp-distr:lemm-approximation:kt-grad-positivity} and the definition of $C_{A,\delta}^t(\cdot)$ in \eqref{eq:thm:asymp-distr:proof:cylinder-c-t-def} 
			we find that 
			\begin{align}\label{eq:thm:asymp-distr:proof:kt-uv-lower-bound}
			K^t(u, v) \geq \sum\limits_{i\in I_1(\Lambda^*)} \dfrac{(a_i^Tu)^2}{2\widehat{\Lambda}_{sc,i}^t} + \sum\limits_{i\in I_0(\Lambda^*)}a_i^Tv + o_{cp}(1), \, \lambda = \lambda(u,v,w)\in C^t_{A,\delta}(\widetilde{\lambda}_{b,app}^t).
			\end{align}
			where the term $o_{cp}(1)$ tends to zero uniformly on $C^t_{A,\delta}(\widetilde{\lambda}_{b,app}^t)$ for $t\rightarrow +\infty$, a.s. $Y^t$, $t\in (0, +\infty)$.				
			From \eqref{eq:thm:asymp-distr:proof:kt-uv-lower-bound} and strong consistency of $\widehat{\lambda}_{sc}^t$ on $\mathcal{U}\oplus \mathcal{V}$ it follows that 
			\begin{align}\label{eq:thm:asymp-distr:proof:kt-uv-norm-bound}
			K^t(u,v) \geq c_1\|u\|_2^2 + c_2 \|v\|_1 + o_{cp}(1), \text{ if }
			\Pi_{\mathcal{V}}\widetilde{\lambda}_{b,app}^t = 0, \, 
			\lambda = \lambda(u,v,w)\in C^t_{A,\delta}(\widetilde{\lambda}_{b,app}^t),
			\end{align}
			where $c_1$, $c_2$ are some fixed positive constants which depend only on $\Lambda^*$ and $A$. The bound above holds for $t$ large enough a.s. $Y^t$, $t\in (0, +\infty)$.
			
			Recall that 
			\begin{align}\label{eq:thm:asymp-distr:proof:kt-uv-cadelta-restriction}
			\|u\|_2 + \|v\|_1 = \delta \text{ for } \lambda(u,v,w)\in C^t_{A,\delta}(\widetilde{\lambda}_{b,app}^t).
			\end{align}
			Using \eqref{eq:thm:asymp-distr:proof:kt-uv-norm-bound}, \eqref{eq:thm:asymp-distr:proof:kt-uv-cadelta-restriction} it is easy to see that 
			\begin{align}\label{eq:thm:asymp-distr:proof:kt-uv-delta-estim}
			K^t(u,v) \geq c \delta^2 + o_{cp}(1), \text{ if }
			\Pi_{\mathcal{V}}\widetilde{\lambda}_{b,app}^t = 0, \, 
			\lambda = \lambda(u,v,w)\in C^t_{A,\delta}(\widetilde{\lambda}_{b,app}^t),
			\end{align}
			for $\delta$ small enough (smaller than some universal constant depending on $c_1$, $c_2$), where $c$ is some fixed constant also depending on $c_1$, $c_2$ from \eqref{eq:thm:asymp-distr:proof:kt-uv-norm-bound}.
			Note that the Karush-Kuhn-Tucker optimality conditions in \eqref{eq:lem:positivity-bt-full-range:mub-formula}, \eqref{eq:lem:positivity-bt-full-range:mub-props}, formula \eqref{eq:asymp-distr:lemm-approximation:bt-kt-rt-mu-r-o-small} and the definition of space $\mathcal{V}$ in \eqref{eq:asymp-distr:subspace-v} imply that 
			\begin{align}\label{eq:asymp-distr:lemm-approximation:positivity-v-prob}
			P( \Pi_{\mathcal{V}} \widetilde{\lambda}_{b, app}^t = 0 \, \mid\, Y^t, t) \rightarrow 1
			\text{ when }t\rightarrow +\infty, \text{ a.s. }Y^t, t\in (0, +\infty).
			\end{align}
			Hence, the event in \eqref{eq:thm:asymp-distr:proof:kt-uv-delta-estim} is conditioned on $\{\Pi_{\mathcal{V}}\widetilde{\lambda}_{b,app}^t=0\}$ which has  asymptotic conditional probability tending to one a.s. $Y^t$, $t\in (0, +\infty)$, and it also holds 
			\begin{align}\label{eq:thm:asymp-distr:proof:kt-uv-delta-estim-final}
			K^t(u,v) \geq c \delta^2 + o_{cp}(1), \, 
			\lambda = \lambda(u,v,w)\in C^t_{A,\delta}(\widetilde{\lambda}_{b,app}^t).
			\end{align}

			From \eqref{eq:asymp-distr:lemm-approximation:inf-estimate-main}, \eqref{eq:asymp-distr:lemm-approximation:inf-estimate-bt-kt}, \eqref{eq:thm:asymp-distr:proof:kt-uv-delta-estim-final} it follows that 
			\begin{align}\label{eq:asymp-distr:lemm-approximation:zeta-t-implication-bound}
			P\left(\inf_{\lambda\in 	C^t_{A,\delta}(\widetilde{\lambda}^t_{b,app})}\hspace{-0.5cm}[A^t(\lambda) - A^t(\widetilde{\lambda}_{app}^t)] > 0 \, \mid \, Y^t, t
			\right) \rightarrow 1 \text{ for }t\rightarrow +\infty, \text{ a.s. } Y^t, t\in (0, +\infty). 
			\end{align}
			
			It is left to demonstrate the initial statement in  \eqref{eq:asymp-distr:lemm-approximation:inf-estimate-main}. 
			Consider the first term in the left hand-side of \eqref{eq:thm:asymp-distr:proof:a-diff-representation}.
			Using \eqref{eq:proof:asympt-distr-target-func-def}, 
			\eqref{eq:conv-approx-bt}, the definitions in 
			\eqref{lem:proofs:conv-approx-lambda-appr}, \eqref{eq:thm:asymp-distr:proof:cylinder-c-t-def} and the facts that 
			$\Pi_{\mathcal{V}\oplus \mathcal{U}}(\widehat{\lambda}^t_{sc}-\lambda_*) \xrightarrow{a.s.} 0$, $\Pi_{\mathcal{V}\oplus \mathcal{U}}(\widetilde{\lambda}_{b,app}^t - \lambda_*)\xrightarrow{c.p.}0$, and the Taylor expansion of $A(\lambda)$ at $\widehat{\lambda}_{sc}^t$ up to the second order one gets the following estimate
			\begin{align}\label{eq:thm:asympt-distr:proof:at-bt-approx-1}
			\nonumber
			A^t(\lambda) - B^t(\lambda) & \geq \sum\limits_{i\in I_1(\Lambda^*)} -t C_1 \mid\widetilde{\Lambda}_{b,i}^t - \widehat{\Lambda}^t_{sc,i}\mid  \dfrac{\mid\Lambda_i - \widehat{\Lambda}^t_{sc,i}\mid^2}{\mid\widehat{\Lambda}^t_{sc,i}\mid^2}
			+ \sum\limits_{i\in I_1(\Lambda^*)} -t C_2 \mid\Lambda_i - \widehat{\Lambda}^t_{sc,i}\mid^3\\
			&+ \sum\limits_{i\in I_0(\Lambda^*)} -t\widetilde{\Lambda}_{b,i}^t\log(t\Lambda_i) +
			\sum\limits_{i\in I_0(\Lambda^*)} t\widehat{\Lambda}_{sc,i}^t \log(t\widehat{\Lambda}_{sc,i}^t) - t\widehat{\Lambda}_{sc,i}^t\\ \nonumber
			& + \beta^t(\varphi(\lambda) - \varphi(\widehat{\lambda}_{sc}^t)), \, 
			\lambda\in C^t_{A,\delta}(\widetilde{\lambda}_{b,app}^t)).
			\end{align}
			where $C_1$, $C_2$ are some positive constants which depend only design $A$ and $\Lambda^*$.
			The above estimate holds with conditional probability tending to one for $t\rightarrow +\infty$ a.s. $Y^t$, $t\in (0, +\infty)$.
			In particular, in \eqref{eq:thm:asympt-distr:proof:at-bt-approx-1} to bound uniformly the error-terms in the Taylor's expansion we have used the following estimates:
			\begin{align}
			\label{eq:thm:asympt-distr:proof:sup-big-lambda-ct-zero}
			&\sup_{\lambda \in C^t_{A,\delta}(\widetilde{\lambda}^t_{b,app})}
			\hspace{-0.5cm}
			\mid\Lambda_i - \widehat{\Lambda}_{sc,i}^t\mid/\mid\widehat{\Lambda}_{sc,i}^t\mid = o_{cp}(1), \, i\in I_1(\Lambda^*),\\
			\label{eq:thm:asympt-distr:proof:log-one-x-expansion}
			& \mid\log(1 + x) - x\mid \leq C_1  \mid x\mid^2, \text{ for some } C_1 > 0 \text{  for } \mid x\mid \leq 1/2, \\
			\label{eq:thm:asympt-distr:proof:slogs-estim-second-order}
			& \mid-\widehat{s} \log (s / \widehat{s}) + (s-\widehat{s}) - \frac{s^2}{2\widehat{s}} \mid \leq C_2\mid s-\widehat{s}\mid^3, \\ \nonumber
			& \hspace{1.0cm} \text{for some $C_2 = C_2(s_*, \varepsilon) > 0$ and } \mid s-\widehat{s}\mid < \widehat{s} / 2, \, \mid\widehat{s} - s_*\mid < \varepsilon \text{ for some fixed }\varepsilon, \, s_* > 0.
			\end{align} 
			Formulas \eqref{eq:thm:asympt-distr:proof:log-one-x-expansion}, \eqref{eq:thm:asympt-distr:proof:slogs-estim-second-order} describe the standard second order Taylor expansions of the logarithm in vicinity of $x =0$ and $\widehat{s} = s_*$, respectively. Formula \eqref{eq:thm:asympt-distr:proof:sup-big-lambda-ct-zero} can be proved via the following triangle-type inequality: 
			\begin{align}\label{eq:thm:asympt-distr:proof:sup-big-lambda-expl:triangle}
			\begin{split}
			&\mid\Lambda_i - \widehat{\Lambda}_{sc,i}^t\mid \leq \mid\Lambda_i - \widetilde{\Lambda}^t_{b,app,i}\mid + \mid\widetilde{\Lambda}^t_{b,app,i} - \widetilde{\Lambda}^t_{b,i}\mid + \mid\widetilde{\Lambda}^t_{b,i}-\Lambda_i^*\mid + \mid\Lambda_i^* + \widehat{\Lambda}_{sc,i}^t\mid, \\ 
			&\lambda\in C^t_{A,\delta}(\widetilde{\lambda}_{b,app}^t).
			\end{split}
			\end{align}
			The first term in the right hand-side of \eqref{eq:thm:asympt-distr:proof:sup-big-lambda-expl:triangle} is of order $o_{cp}(1)$ in view of the definition in \eqref{eq:thm:asymp-distr:proof:cylinder-c-t-def} and the fact that $\lambda\in C_{A,\delta}^t(\widetilde{\lambda}^t_{b,app})$ for some fixed $\delta > 0$. 
			The last two terms are also $o_{cp}(1)$ in view of Lemma~\ref{lem:consistency:proof:algorithm} and the fact that 
			$\widehat{\Lambda}_{sc,i}^t \rightarrow \Lambda_i^*$ a.s. $Y^t, t\in (0, +\infty)$. 
			Finally, from \eqref{eq:asymp-distr:lemm-approximation:bt-kt-rt-mu-r-o-small} and again the fact that $\widehat{\Lambda}_{sc,i}^t \rightarrow \Lambda_i^*$ a.s. $Y^t$, $t\in (0, +\infty)$, it follows that the second term in \eqref{eq:thm:asympt-distr:proof:sup-big-lambda-expl:triangle} is also of order $o_{cp}(1)$. This completes the proof of \eqref{eq:thm:asympt-distr:proof:sup-big-lambda-ct-zero}.

			%In view of \eqref{eq:thm:asympt-distr:proof:sup-big-lambda-ct-zero} the error in Taylor expansions of logarithmic terms in $A^t(\lambda)$ in \eqref{eq:thm:asympt-distr:proof:at-bt-approx-1} can be bounded using some  universal constants $C_1, C_2$ not depending on $\lambda$ and $t$ (i.e., estimates in  \eqref{eq:thm:asympt-distr:proof:at-bt-approx-1} hold for $C_1$, $C_2$ with conditional probability tending to one a.s. $Y^t$, $t\in (0, +\infty)$).

			Using the restriction that $\lambda\in C^t_{A,\delta}(\widetilde{\lambda}_{b,app,t}^t)$ two first sums in \eqref{eq:thm:asympt-distr:proof:at-bt-approx-1} can be estimated as follows:
			\begin{align}\label{eq:thm:asympt-distr:proof:at-first-sum-lwr-bnd}
			\begin{split}
			\sum\limits_{i\in I_1(\Lambda^*)} -t &C_1 \mid\widetilde{\Lambda}_{b,i}^t - \widehat{\Lambda}^t_{sc,i}\mid  \dfrac{\mid\Lambda_i - \widehat{\Lambda}^t_{sc,i}\mid^2}{|\widehat{\Lambda}^t_{sc,i}|^2} 
			\geq \sum\limits_{i\in I_1(\Lambda^*)} -t C_1 \mid\widetilde{\Lambda}_{b,i}^t - \widehat{\Lambda}^t_{sc,i}\mid \\
			& \times \left(
			\dfrac{2\mid\Lambda_i - \widetilde{\Lambda}_{b,app,i}^t\mid^2}{\mid\widehat{\Lambda}^t_{sc,i}\mid^2}
			+ \dfrac{2\mid\widetilde{\Lambda}_{b,app,i}^t - \widehat{\Lambda}_{sc,i}^t \mid^2}{\mid\widehat{\Lambda}^t_{sc,i}\mid^2}
			\right)\\
			&\geq \sum\limits_{i\in I_1(\Lambda^*)} -t C_1 \mid\widetilde{\Lambda}_{b,i}^t - \widehat{\Lambda}^t_{sc,i}\mid
			\left(
			\dfrac{c\delta^2}{t\mid\widehat{\Lambda}^t_{sc,i}\mid^2}
			+ \dfrac{2\mid\widetilde{\Lambda}_{b,app,i}^t - \widehat{\Lambda}_{sc,i}^t \mid^2}{\mid\widehat{\Lambda}^t_{sc,i}\mid^2}
			\right),
			\end{split}
			\end{align} 
			where $c$ depends only $A$. Using same argument for the second sum in \eqref{eq:thm:asympt-distr:proof:at-bt-approx-1} we obtain the following:
			\begin{align}\label{eq:thm:asympt-distr:proof:at-second-sum-lwr-bnd}
			\sum\limits_{i\in I_1(\Lambda^*)}-t C_2 \mid\Lambda_i - \widehat{\Lambda}^t_{sc,i}\mid^3 &\geq \sum\limits_{i\in I_1(\Lambda^*)}-8t C_2 \left(\mid\Lambda_i - \widetilde{\Lambda}_{b,app,i}^t\mid^3 
			+ \mid\widetilde{\Lambda}_{b,app,i}^t - \widehat{\Lambda}^t_{sc,i}\mid^3
			\right) \\ \nonumber
			&\geq \sum\limits_{i\in I_1(\Lambda^*)}-8t C_2 \left(\dfrac{c\delta^3}{t^{3/2}}
			+ \mid\widetilde{\Lambda}_{b,app,i}^t - \widehat{\Lambda}^t_{sc,i}\mid^3
			\right),
			\end{align}
			for  $\lambda\in C^t_{A,\delta}(\widetilde{\Lambda}_{b,app,t}^t)$,
			where $c$ depends only on $A$.
			
			From  \eqref{eq:thm:asympt-distr-main:strongly-consist-estim-i0}, \eqref{eq:asymp-distr:parametrization-uvw}, \eqref{eq:asymp-distr:quadr-minimizer-v-zero}, the results of lemmas~\ref{lem:asymp-distr:quadratic-uv-approximation},~\ref{lem:asympt-distr:grad-stoch-asymp-normality} it follows that 
			\begin{align}\label{eq:thm:asympt-distr:proof:at-sums-ocp-1}
			t\mid\widetilde{\Lambda}_{b,i}^t - \widehat{\Lambda}^t_{sc,i}\mid \cdot  \mid\widetilde{\Lambda}_{b,app,i}^t - \widehat{\Lambda}_{sc,i}^t \mid^2 &= o_{cp}(1), \\
			\label{eq:thm:asympt-distr:proof:at-sums-ocp-2}
			t\mid\widetilde{\Lambda}_{b,app,i}^t - \widehat{\Lambda}^t_{sc,i}\mid^3 &= o_{cp}(1).
			\end{align}
			The above formulas imply that sums in \eqref{eq:thm:asympt-distr:proof:at-first-sum-lwr-bnd},  \eqref{eq:thm:asympt-distr:proof:at-second-sum-lwr-bnd} are bounded from below and of order~$o_{cp}(1)$.
			
			The logarithmic term in \eqref{eq:thm:asympt-distr:proof:at-bt-approx-1} can be estimated as follows:
			\begin{align}
			\begin{split}\label{eq:thm:asympt-distr:proof:at-log-term-positivity-1}
			\sum\limits_{i\in I_0(\Lambda^*)} -t\widetilde{\Lambda}_{b,i}^t\log(t\Lambda_i) &= 
			\sum\limits_{i\in I_0(\Lambda^*)} -t\widetilde{\Lambda}_{b,i}^t\log(t(\Lambda_i - \widetilde{\Lambda}_{b,app,i}^t) + t\widetilde{\Lambda}_{b,app,i}^t)\\
			&\geq \sum\limits_{i\in I_0(\Lambda^*)} -t\widetilde{\Lambda}_{b,i}^t\log(t\mid\Lambda_i - \widetilde{\Lambda}_{b,app,i}^t\mid + t\widetilde{\Lambda}_{b,app,i}^t)\\
			&\geq \sum\limits_{i\in I_0(\Lambda^*)}-t\widetilde{\Lambda}_{b,i}^t\log(c\delta + t\widetilde{\Lambda}_{b,app,i}^t), \, \lambda \in C_{A,\delta}^t(\widetilde{\lambda}_{b,app}^t).
			\end{split}
			\end{align}
			where $c$ is some positive constant depending on $A$. Using \eqref{eq:asymp-distr:parametrization-uvw}, \eqref{eq:asymp-distr:quadr-minimizer-v-zero} and \eqref{eq:asymp-distr:quadr-approximation-v} from Lemma~\ref{lem:asymp-distr:quadratic-uv-approximation} we obtain 
			\begin{align}\label{eq:thm:asympt-distr:proof:Lambda-b-app-i-zero-limit}
			\begin{split}
			t\widetilde{\Lambda}_{b,app,i}^t = &t(\widetilde{\Lambda}_{b,app,i}^t - \widehat{\Lambda}_{sc,i}^t) + t\widehat{\Lambda}_{sc,i}^t \\
			&= a_i^T\widetilde{v}_{b,app}^t + ta_i^T\widehat{\lambda}_{sc}^t \\
			&= a_i^T(\widetilde{v}_{b,app}^t - \widetilde{v}^t) = o_{cp}(1), \, I_0(\Lambda^*)
			\end{split}
			\end{align}
			
			Formulas \eqref{eq:thm:asympt-distr:proof:at-log-term-positivity-1}, \eqref{eq:thm:asympt-distr:proof:Lambda-b-app-i-zero-limit} imply that 
			\begin{align}\label{eq:thm:asympt-distr:proof:at-log-term-positivity-2}
			\sum\limits_{i\in I_0(\Lambda^*)} -t\widetilde{\Lambda}_{b,i}^t \log(t\Lambda_i) \geq 
			\sum\limits_{i\in I_0(\Lambda^*)}-t\widetilde{\Lambda}_{b,i}^t\log(c\delta + o_{cp}(1)). 
			\end{align}
			By choosing $\delta$ smaller than some fixed constant (e.g., $\delta < c/2$) in \eqref{eq:thm:asympt-distr:proof:at-log-term-positivity-2} we find that the right hand-side in \eqref{eq:thm:asympt-distr:proof:at-log-term-positivity-2} becomes positive with conditional probability tending to one a.s. $Y^t$, $t\in (0, +\infty)$. Therefore, 
			\begin{equation}\label{eq:thm:asympt-distr:proof:at-log-term-positivity-final}
			\sum\limits_{i\in I_0(\Lambda^*)} -t\widetilde{\Lambda}_{b,i}^t \log(t\Lambda_i) \geq o_{cp}(1), \, \lambda\in C^t_{A,\delta}(\widetilde{\lambda}_{b,app}^t) \text{ for } \delta < c/2.
			\end{equation}
			
			In addition, from the initial assumption in   \eqref{eq:thm:asympt-distr-main:strongly-consist-estim-i0} it directly follows that 
			\begin{equation}\label{eq:thm:asympt-distr:proof:at-sc-log-term-positivity-final}
			\sum\limits_{i\in I_0(\Lambda^*)} t\widehat{\Lambda}_{sc,i}^t \log(t\widehat{\Lambda}_{sc,i}^t) - t\widehat{\Lambda}_{sc,i}^t = o_{cp}(1).
			\end{equation}

			Using \eqref{eq:thm:asympt-distr:proof:at-bt-approx-1},
			\eqref{eq:thm:asympt-distr:proof:at-first-sum-lwr-bnd}-\eqref{eq:thm:asympt-distr:proof:at-sums-ocp-2},
			\eqref{eq:thm:asympt-distr:proof:at-log-term-positivity-final}, \eqref{eq:thm:asympt-distr:proof:at-sc-log-term-positivity-final} we finally obtain:
			\begin{align}\label{eq:thm:asympt-distr:proof:inf-a-term-lower-bound}
			\inf_{\lambda\in C^t_{A,\delta}(\widetilde{\lambda}_{b,app}^t)}\hspace{-0.4cm} [A^t(\lambda) - B^t(\lambda)] \geq o_{cp}(1) + \beta^t   \hspace{-0.4cm}\inf_{\lambda\in C^t_{A,\delta}(\widetilde{\lambda}_{b,app}^t)}\hspace{-0.4cm} (\varphi(\lambda) - \varphi(\widehat{\lambda}_{sc}^t)).
			\end{align}
			
			Now, let us consider the third term in the left-hand side of \eqref{eq:thm:asymp-distr:proof:a-diff-representation}. Using  \eqref{eq:asymp-distr:approximation-general}-\eqref{eq:asymp-distr:remainder-r} we rewrite it as follows:
			\begin{align}\label{eq:thm:asympt-distr:proof:b-diff-expansion}
			\begin{split}
			B^t(\widetilde{\lambda}_{b,app}^t) - B^t(\widetilde{\lambda}_{app}^t) &= 
			\widetilde{B}^t(\widetilde{\lambda}_{b,app}^t) - \widetilde{B}^t(\widetilde{\lambda}_{app}^t) \\
			& + \widetilde{R}^t(\widetilde{\lambda}_{b,app}^t) - \widetilde{R}^t(\widetilde{\lambda}_{app}^t).
			\end{split}
			\end{align}
			From \eqref{eq:asymp-distr:app-abbp-proj-u}, the result of Lemma~\ref{lem:approximate-approximation},  \eqref{eq:asymp-distr:approximation-general}-\eqref{eq:asymp-distr:remainder-r}, \eqref{eq:asymp-distr:quadr-minimizer-v-zero}, the result of lemmas~\ref{lem:asymp-distr:quadratic-uv-approximation}, \ref{lem:asympt-distr:grad-stoch-asymp-normality} and formula  \eqref{eq:thm:asympt-distr:proof:b-diff-expansion} it follows directly that 
			\begin{equation}\label{eq:thm:asympt-distr:proof:inf-b-min-min-term-lower-bound}
			B^t(\widetilde{\lambda}_{b,app}^t) - B^t(\widetilde{\lambda}_{app}^t) = o_{cp}(1).
			\end{equation}
			
			Now we estimate the last term in the right-hand side of \eqref{eq:thm:asymp-distr:proof:a-diff-representation}. Using the same argument as in \eqref{eq:thm:asympt-distr:proof:at-bt-approx-1}-\eqref{eq:thm:asympt-distr:proof:at-sc-log-term-positivity-final} one gets the following estimate:
			
			\begin{align}
			\label{eq:thm:asympt-distr:proof:bt-at-approx-1}
			\begin{split}
			B^t(\widetilde{\lambda}_{app}^t) - A^t(\widetilde{\lambda}^t_{app}) &\geq \sum\limits_{i\in I_1(\Lambda^*)} -t C_1 \mid\widetilde{\Lambda}_{b,i}^t - \widehat{\Lambda}_{sc,i}^t\mid   \dfrac{\mid\widetilde{\Lambda}^t_{app,i} - \widehat{\Lambda}_{sc,i}^t\mid^2}{\mid\widehat{\Lambda}_{sc,i}^t\mid^2} \\
			& + \sum\limits_{i\in I_1(\Lambda^*)} -t C_2 \mid\widetilde{\Lambda}_{app,i}^t - \widehat{\Lambda}_{sc,i}^t\mid^3 \\
			& + \sum\limits_{i\in I_0(\Lambda^*)} t \widetilde{\Lambda}_{b,i}^t \log(\widetilde{\Lambda}_{app,i}^t)\\
			&+\sum\limits_{i\in I_0(\Lambda^*)} -t\widehat{\Lambda}_{sc,i}^t \log(t\widehat{\Lambda}_{sc,i}^t) + t\widehat{\Lambda}_{sc,i}^t\\
			&-\beta^t(\varphi(\widetilde{\lambda}^t_{app}) - \varphi(\widehat{\lambda}_{sc}^t)).
			\end{split}\\
			\label{eq:thm:asympt-distr:proof:bt-at-approx-2}
			\begin{split}
			&\geq \sum\limits_{i\in I_1(\Lambda^*)} -t C_1 \mid\widetilde{\Lambda}_{b,i}^t - \widehat{\Lambda}_{sc,i}^t\mid   \dfrac{\mid\widetilde{\Lambda}^t_{app,i} - \widehat{\Lambda}_{sc,i}^t\mid^2}{\mid\widehat{\Lambda}_{sc,i}^t\mid^2} \\
			& + \sum\limits_{i\in I_1(\Lambda^*)} -t C_2 \mid\widetilde{\Lambda}_{app,i}^t - \widehat{\Lambda}_{sc,i}^t\mid^3 \\
			& + \sum\limits_{i\in I_0(\Lambda^*)} t \widetilde{\Lambda}_{b,i}^t \log(t\widetilde{\Lambda}_{b,i}^t)\\
			&+\sum\limits_{i\in I_0(\Lambda^*)} -t\widehat{\Lambda}_{sc,i}^t \log(t\widehat{\Lambda}_{sc,i}^t) + t\widehat{\Lambda}_{sc,i}^t\\
			&-\beta^t(\varphi(\widetilde{\lambda}^t_{app}) - \varphi(\widehat{\lambda}_{sc,i}^t)),
			\end{split}
			\end{align}
			where constants $C_1, C_2$ depend only on $A$. To pass from \eqref{eq:thm:asympt-distr:proof:bt-at-approx-1} to \eqref{eq:thm:asympt-distr:proof:bt-at-approx-2} we have used the monotonicity of the logarithm (i.e., $\log(x + y) \geq \log(x)$, for any $y > 0$). 
			The above estimate holds with conditional probability tending to one a.s. $Y^t$, $t\in (0, +\infty)$.
			
			From formulas \eqref{eq:lambda-app-recentered}, \eqref{eq:asymp-distr:lem:approximate-approximation:proj-v-convergece},  \eqref{eq:asymp-distr:parametrization-uvw}, \eqref{eq:asymp-distr:quadr-minimizer-v-zero}, the results of lemmas~\ref{lem:asymp-distr:quadratic-uv-approximation},~\ref{lem:asympt-distr:grad-stoch-asymp-normality} it follows that 
			\begin{align}
			\label{eq:thm:asympt-distr:proof:bt-at-stoch-errors}
			&t \mid\widetilde{\Lambda}_{b,i}^t - \widehat{\Lambda}_{sc,i}^t\mid  \cdot \mid\widetilde{\Lambda}^t_{app,i}-\widehat{\Lambda}_{sc,i}^t\mid^2 = o_{cp}(1), \\
			&t \mid\widetilde{\Lambda}_{app,i}^t - \widehat{\Lambda}_{sc,i}^t\mid^3 = o_{cp}(1).
			\end{align}
			In addition, using \eqref{eq:lem:approximate-approximation:sufficient-cond} in the proof of Lemma~\ref{lem:approximate-approximation} we find that 
			\begin{align}\label{eq:thm:asympt-distr:proof:bt-at-log-positivity}
			\sum\limits_{i\in I_0(\Lambda^*)} t \widetilde{\Lambda}_{b,i}^t \log(t\widetilde{\Lambda}_{b,i}^t) = o_{cp}(1).
			\end{align}
			
			Putting together \eqref{eq:thm:asympt-distr:proof:bt-at-approx-2}-\eqref{eq:thm:asympt-distr:proof:bt-at-log-positivity} and using again \eqref{eq:thm:asympt-distr:proof:at-sc-log-term-positivity-final} we obtain
			\begin{align}\label{eq:thm:asympt-distr:proof:bt-at-lower-bound}
			B^t(\widetilde{\lambda}_{app}^t) - A^t(\widetilde{\lambda}^t_{app}) \geq o_{cp}(1) - \beta^t (\varphi(\widetilde{\lambda}_{app}^t) - \varphi(\widehat{\lambda}_{sc}^t)).
			\end{align}	
			Formulas \eqref{eq:thm:asymp-distr:proof:a-diff-representation}, \eqref{eq:thm:asympt-distr:proof:inf-a-term-lower-bound}, \eqref{eq:thm:asympt-distr:proof:inf-b-min-min-term-lower-bound} \eqref{eq:thm:asympt-distr:proof:bt-at-lower-bound} imply that 
			\begin{align}\label{eq:thm:asympt-distr:proof:inf-at-centered-pen-bound}
			\begin{split}
			\inf_{\lambda\in C^t_{A,\delta}(\widetilde{\lambda}^t_{b,app})}\hspace{-0.5cm}[A^t(\lambda) - A^t(\widetilde{\lambda}_{app}^t)] &= \hspace{-0.3cm}\inf_{\lambda\in C^t_{A,\delta}(\widetilde{\lambda}^t_{b,app})}\hspace{-0.5cm}[B^t(\lambda) - B^t(\widetilde{\lambda}^t_{b,app})] \\
			& + \hspace{-0.3cm}\inf_{\lambda\in C^t_{A,\delta}(\widetilde{\lambda}^t_{b,app})}\hspace{-0.5cm}\beta^t(\varphi(\lambda) - \varphi(\widetilde{\lambda}_{app}^t))\\
			& + o_{cp}(1).
			\end{split}
			\end{align}
			\begin{lem}\label{lem:thm:asympt-distr:proof:inf-varphi-diff}
				Let $\beta^t$, $\varphi(\cdot)$ satisfy the assumptions of Theorem~\ref{thm:asympt-distr-main} and $\widetilde{\lambda}_{b, app}^t$, $\widetilde{\lambda}_{app}^t$ be defined in  \eqref{lem:proofs:conv-approx-lambda-appr}, \eqref{eq:lambda-app-recentered}, respectively. Then, 
				\begin{align}\label{eq:lem:thm:asympt-distr:proof:inf-varphi-diff}
				\hspace{-0.3cm}\inf_{\lambda\in C^t_{A,\delta}(\widetilde{\lambda}^t_{b,app})}\hspace{-0.5cm}\beta^t (\varphi(\lambda) - \varphi(\widetilde{\lambda}_{app}^t)) = o_{cp}(1), \text{ a.s. } Y^t, t\in (0, +\infty).
				\end{align}
			\end{lem}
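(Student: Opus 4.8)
The plan is to reduce the statement to a Lipschitz estimate played off against the factor $\beta^t=o(\sqrt t)$, and then to isolate the only genuinely dangerous direction, namely $w\in\mathcal{W}\subset\ker A$. I would parametrize a generic point of $C^t_{A,\delta}(\widetilde{\lambda}^t_{b,app})$ as $\lambda=\widetilde{\lambda}^t_{b,app}+\tfrac{u}{\sqrt t}+\tfrac{v}{t}+w$ with $\|u\|_2+\|v\|_1=\delta$, $u\in\mathcal{U}$, $v\in\mathcal{V}$, $w\in\mathcal{W}$, and recall from \eqref{eq:lambda-app-recentered} that $\widetilde{\lambda}^t_{app}-\widetilde{\lambda}^t_{b,app}$ lies in $\mathrm{Span}(A^T)$ and, by the bound $t\widetilde{\Lambda}^t_{b,i}\xrightarrow{c.p.}0$ for $i\in I_0(\Lambda^*)$ proved inside Lemma~\ref{lem:approximate-approximation}, has norm $o_{cp}(1/t)$. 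Since $u\in\mathcal{U}$ and $v\in\mathcal{V}$ are orthogonal to $\mathcal{W}$ while the correction is orthogonal to $\ker A$, the $\mathcal{W}$-component of $\lambda-\widetilde{\lambda}^t_{app}$ equals $w$, whereas its $\mathcal{U}\oplus\mathcal{V}$-part has norm $O(\delta/\sqrt t)$. Once $\varphi$ is known to be locally Lipschitz (assumption \eqref{eq:thm:asympt-distr-main:varphi-assump}) with a constant uniform in $t$, every contribution except the one from $w$ is $O(\delta/\sqrt t)$ and, multiplied by $\beta^t=o(\sqrt t)$, becomes $\delta\cdot o(1)$, which vanishes as $t\to+\infty$ for each fixed $\delta$.

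To secure the uniform Lipschitz constant and to confine $w$, I would invoke Lemma~\ref{lem:proofs:compactness-domain-penalty}: the feasible kernel section $S_{A,\widetilde{\lambda}^t_{b,app}}(U)=(\widetilde{\lambda}^t_{b,app}+U+\ker A)\cap\R^p_+$ is compact, so the positivity constraint $\lambda\succeq0$ forces $w$ into a bounded set; since $\Pi_{\mathcal{U}\oplus\mathcal{V}}\widetilde{\lambda}^t_{b,app}$ converges, this bound is uniform for large $t$ and $\lambda$ stays in a fixed compact set on which $\varphi$ has a single Lipschitz constant $L$ and bounded subgradients. The upper bound on the infimum is then immediate: evaluating at a feasible point with $v=w=0$ and $u\in\mathcal{U}$, $\|u\|_2=\delta$, chosen to point into $\R^p_+$, places $\lambda$ within $O(\delta/\sqrt t)$ of $\widetilde{\lambda}^t_{app}$, so the infimum is at most $\beta^t L\,O(\delta/\sqrt t)=\delta\cdot o(1)$.

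The heart of the proof is the lower bound, i.e.\ showing that no admissible $w$ can depress $\varphi$ below $\varphi(\widetilde{\lambda}^t_{app})$ by more than $O(\delta/\sqrt t)$. I would use a single subgradient inequality $\varphi(\lambda)-\varphi(\widetilde{\lambda}^t_{app})\ge\langle g^t,\lambda-\widetilde{\lambda}^t_{app}\rangle$, splitting the right side into the harmless $\mathcal{U}\oplus\mathcal{V}$-part bounded by $L\,O(\delta/\sqrt t)$ and the kernel part $\langle g^t,w\rangle$. Because $B^t$ is flat along $\ker A\supseteq\mathcal{W}$ and $\mathcal{W}\perp\mathcal{U},\mathcal{V}$, I may pin the otherwise free $\mathcal{W}$-component of the representative $\widetilde{\lambda}^t_{b,app}$ by requiring it to minimize $\varphi$ over the compact feasible coset $(\widetilde{\lambda}^t_{b,app}+\mathcal{W})\cap\R^p_+$, which changes neither $\Pi_{\mathcal{U}}\widetilde{\lambda}^t_{b,app}$ nor $\Pi_{\mathcal{V}}\widetilde{\lambda}^t_{b,app}$ and hence does not affect \eqref{eq:asymp-distr:main-approximation-lemma:proj-u}, \eqref{eq:asymp-distr:main-approximation-lemma:proj-v}; existence and uniqueness of this minimizer come from Lemmas~\ref{lem:proofs:compactness-domain-penalty} and \ref{lem:consistency:lem-kernel-continuity}. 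First-order optimality then gives a subgradient $g^t$ with $\langle g^t,w'\rangle\ge0$ for every feasible kernel direction $w'$ (up to the $o_{cp}(1/t)$ shift separating $\widetilde{\lambda}^t_{app}$ from $\widetilde{\lambda}^t_{b,app}$); since an admissible $w$ in $C^t_{A,\delta}$ violates feasibility of $\widetilde{\lambda}^t_{app}+w$ by at most $\|\tfrac{u}{\sqrt t}+\tfrac{v}{t}\|=O(\delta/\sqrt t)$, projecting $w$ onto the feasible cone costs only $L\,O(\delta/\sqrt t)$, so $\langle g^t,w\rangle\ge-O(\delta/\sqrt t)$. Multiplying $\varphi(\lambda)-\varphi(\widetilde{\lambda}^t_{app})\ge-O(\delta/\sqrt t)$ by $\beta^t=o(\sqrt t)$ yields a lower bound $\ge-\delta\cdot o(1)$, and together with the upper bound this closes the estimate.

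The step I expect to be the main obstacle is exactly this control of the $\mathcal{W}$-direction: $\varphi$ is only assumed strictly convex, not strongly convex, along $\ker A$, so a priori an unconstrained $w$ could lower $\varphi$ by an $O(1)$ amount which, against a possibly diverging $\beta^t$, would ruin the bound. The resolution must combine two ingredients that interact delicately: the positivity-induced compactness of the kernel section, which keeps $w$ bounded and $\varphi$ uniformly Lipschitz, and the penalty-optimal choice of the $\mathcal{W}$-component of $\widetilde{\lambda}^t_{b,app}$, which forces the first-order term in the $w$-direction to have the right sign. The remaining work, verifying that the slight infeasibility from the $\tfrac{u}{\sqrt t}+\tfrac{v}{t}$ shift costs only $O(\delta/\sqrt t)$ and that all implicit constants are uniform in $t$ almost surely along the trajectory $Y^t$, is routine but must be done with care.
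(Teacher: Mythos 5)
Your proof is correct and reaches the paper's estimate, but through a genuinely different mechanism at the key step, so a comparison is worthwhile. The preliminary reductions coincide: like the paper, you pin the free $\mathcal{W}$-component of $\widetilde{\lambda}^t_{b,app}$ by minimizing $\varphi$ over the feasible kernel coset (the paper does this via the mapping in \eqref{eq:lem:inf-varphi-diff:wuv-mapping} and \eqref{eq:lem:thm:asympt-distr:proof:inf-varphi-diff:wbb-app-diff}), you dispose of $\varphi(\widetilde{\lambda}^t_{b,app})-\varphi(\widetilde{\lambda}^t_{app})$ by local Lipschitz continuity together with the $o_{cp}(1/t)$ bound on $\|\widetilde{\lambda}^t_{b,app}-\widetilde{\lambda}^t_{app}\|$, and you invoke compactness of the kernel sections (Lemma~\ref{lem:proofs:compactness-domain-penalty}) to get a Lipschitz constant uniform in $t$. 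The divergence is in how the remaining infimum over the cylinder is controlled: the paper rewrites it as a variation of the inf-projection $\varphi_*(u,v)$ in \eqref{eq:lem:inf-varphi-diff:phi-star-def} and cites Theorem~3.4 and Proposition~4.1 of \citet{wets2003lipschitz} (Lipschitz continuity of inf-projections over polyhedral feasibility mappings), which yields the two-sided $O(\delta/\sqrt{t})$ bound in one stroke; you instead argue pointwise via the subgradient inequality at the pinned representative, whose first-order optimality gives $\langle g^t, w'\rangle\geq 0$ for feasible kernel directions, and you absorb the slight infeasibility of an arbitrary cylinder displacement $w$ by projecting it onto the feasible coset. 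Note that this projection step is precisely Hoffman's error bound for polyhedral systems, and its validity hinges on the Hoffman constant depending only on the constraint matrix (the subspace $\mathcal{W}$ and the identity) and not on the right-hand side $-\widetilde{\lambda}^t_{b,app}$; this is the very same polyhedral-regularity fact the paper imports through \citet{wets2003lipschitz} (see also Example~9.35 in \citet{rockafellar2009variational}), so your route makes it explicit rather than avoids it, and calling it ``routine'' undersells it even though it is classical. Two small repairs are needed: your upper-bound test point with $v=w=0$ and $u\in\mathcal{U}$ ``pointing into $\R^p_+$'' need not exist when components of $\widetilde{\lambda}^t_{b,app}$ vanish, but the fix is immediate with tools you already have — keep the $(u,v)$ of any feasible cylinder point and pick a feasible $w'$ of norm $O(\delta/\sqrt{t})$ by the same Hoffman projection (or take $v\succeq 0$, $v\in\mathcal{V}$ when $\mathcal{V}\neq\{0\}$); and the two-sided claim implicitly requires $C^t_{A,\delta}(\widetilde{\lambda}^t_{b,app})$ to be nonempty, a caveat your proof shares with the paper's. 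In sum, your argument is more elementary and self-contained (modulo Hoffman's lemma), the paper's is shorter and citation-based, and both rest on the same underlying polyhedral fact, yielding $\beta^t\cdot O(\delta/\sqrt{t})=\delta\cdot o(1)$.
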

			Formula \eqref{eq:asymp-distr:lemm-approximation:inf-estimate-main} directly follows from \eqref{eq:thm:asympt-distr:proof:inf-at-centered-pen-bound} and the result of Lemma~\ref{lem:thm:asympt-distr:proof:inf-varphi-diff}.
			
			Lemma is proved.
			%\begin{lem}
			%	\label{lem:proofs:cond-tightness-approx-argmin}
			%	Let $\widetilde{\lambda}_{app}^t$ be defined in formula  \eqref{lem:proofs:conv-approx-lambda-appr}.
			%	Let $\widetilde{u}^t_{app}$ be the projection of $\widetilde{\lambda}_{app}^t$ onto $\mathrm{Span}(A^T)$ in the sense of parametrizaion in \eqref{eq:proof:asypt-distr-representation-coords}.
			%	Then family of random variables $\widetilde{u}^t_{app}, \, t\in (0,+\infty)$ is conditionally uniformly tight, that is
			%	\begin{align}\label{eq:lem:proofs:cond-tightness-approx-argmin}
			%	\begin{split}
			%	&\text{for any }\varepsilon > 0 \text{ there exists a constant } M(\varepsilon) \geq 0 \text{ such that }\\
			%	&\hspace{-0.35cm}\sup_{t\in (0, +\infty)}\hspace{-0.2cm}P(|\widetilde{u}_{app}^t| > M(\varepsilon) \mid Y^t, t) \leq \varepsilon \text{ a.s. } Y^t, t\in (0, +\infty).
			%	\end{split}
			%	\end{align}
			%	In particular, from \eqref{eq:lem:proofs:cond-tightness-approx-argmin} it follows that 
			%	\begin{equation}\label{eq:lem:proofs:cond-tightness-argmin-norm-conv}
			%	\dfrac{\widetilde{u}_{app}^t}{\sqrt{t}} \xrightarrow{c.p.} 0 \text{ when }t\rightarrow +\infty, \text{ a.s. } Y^t. 
			%	\end{equation}
			%\end{lem}
		\end{proof}
		
		\subsection{Proof of Lemma~\ref{lem:asymp-distr:quadratic-uv-approximation}}
		\begin{proof}
			
			To prove the claim we use essentially the same convexity argument as before, for example in  Lemma~\ref{lem:thm:asymp-disr:main-approximation-lemma}.
			
			Let $\delta > 0$ and 
			\begin{align}\label{lem:asymp-distr:quadratic-uv-approximation:lambda-tilde-t}
			&\widetilde{\lambda}^t = \widehat{\lambda}_{sc}^t + \frac{\widetilde{u}^t}{\sqrt{t}} + \frac{\widetilde{v}^t}{t} + \widetilde{w}^t, \, \widetilde{\lambda}^t \succeq 0, \\
			\label{lem:asymp-distr:quadratic-uv-approximation:lambda-uvw-def}
			&\lambda(u,v,w) = \widehat{\lambda}_{sc}^t + \frac{u}{\sqrt{t}} + \frac{v}{t} + w, \, (u, v, w)\in \mathcal{U}\times \mathcal{V}  \times \mathcal{W}, \, \lambda(u,v,w) \succeq 0.
			\end{align}
			
			Recall that
			\begin{align}
			\label{lem:asymp-distr:quadratic-uv-approximation:uv-diff-delta}
			&\|u-\widetilde{u}^t\|_2 + \|v-\widetilde{v}^t\|_1 = \delta \text{ for } \lambda(u,v,w)\in C^t_{A,\delta}(\widetilde{\lambda}^t).
			\end{align}
			where $C^t_{A,\delta}(\cdot)$ is defined in \eqref{eq:thm:asymp-distr:proof:cylinder-c-t-def}.

			Next we show that 
			\begin{align}\label{lem:asymp-distr:quadratic-uv-approximation:cond-proba-one}
			P(\hspace{-1.0cm}\inf_{(u,v, w) : \lambda(u,v,w)\in C^t_{A,\delta}(\widetilde{\lambda}^t)} \hspace{-1cm}[B^t(u,v) - B^t(\widetilde{u}^t, \widetilde{v}^t)] > 0 \, \mid \, Y^t, t) \rightarrow 1 \text{ when }t\rightarrow +\infty, \text{ a.s. } Y^t, t\in (0, +\infty)
			\end{align}
			which together with the fact that $\delta$ can be arbitrarily small and convexity of $B^t(u,v)$, implies the claim of the lemma. Using formulas \eqref{eq:asymp-distr:approximation-general}-\eqref{eq:asymp-distr:remainder-r} we obtain 
			\begin{align}\label{lem:asymp-distr:quadratic-uv-approximation:Bt-diff-cylinder}
			\begin{split}
			B^t(u,v) - B^t(\widetilde{u}^t, \widetilde{v}^t) &= [\widetilde{B}^t(u,v) - \widetilde{B}^t(\widetilde{u}^t, \widetilde{v}^t)] \\
			& + [\widetilde{R}^t(u,v) - \widetilde{R}^t(\widetilde{u}^t, \widetilde{v}^t)],\\ 
			(u,v) \text{ s.t. } &\exists w\in \mathcal{W}, \, \lambda(u,v, w) \in C^t_{A,\delta}(\widetilde{\lambda}^t).
			\end{split}
			\end{align}
			From the facts that $\widetilde{\Lambda}_{b,i}^t \xrightarrow{c.p.} \Lambda_i^*$ (by Lemma~\ref{lem:consistency:proof:algorithm}), 		
			$\widehat{\Lambda}_{sc,i}^t \xrightarrow{a.s.} \Lambda_i^*$  for $i\in \{1, \dots, d\}$ (see \eqref{eq:thm:asympt-distr-main:strongly-consist-estim-i1}, \eqref{eq:thm:asympt-distr-main:strongly-consist-estim-i0} and \eqref{eq:appendix:stlln} in Appendix~\ref{app:limit-thms}),
			the conditional tightness of $\widetilde{u}^t$ (by Lemma~\ref{lem:asympt-distr:grad-stoch-asymp-normality})
			and 
			formulas \eqref{eq:thm:asympt-distr-main:strongly-consist-estim-i0}, \eqref{eq:asymp-distr:quadr-minimizer-v-zero}, \eqref{lem:asymp-distr:quadratic-uv-approximation:uv-diff-delta} it follows that 
			\begin{align}\label{lem:asymp-distr:quadratic-uv-approximation:sup-remainder-bounded}
			\hspace{-1.0cm}\sup_{(u,v, w) : \lambda(u,v, w)\in C^t_{A,\delta}(\widetilde{\lambda}^t)} \hspace{-1cm}
			\mid\widetilde{R}^t(u,v) - \widetilde{R}^t(\widetilde{u}^t, \widetilde{v}^t)\mid = o_{cp}(1),
			\end{align}
			where $\widetilde{R}^t(\cdot)$ is defined in \eqref{eq:asymp-distr:remainder-r}.
			
			Formulas \eqref{lem:asymp-distr:quadratic-uv-approximation:Bt-diff-cylinder}, \eqref{lem:asymp-distr:quadratic-uv-approximation:sup-remainder-bounded} imply that 
			\begin{align}\label{lem:asymp-distr:quadratic-uv-approximation:Bt-diff-positive-part}
			\begin{split}
			\inf_{(u,v, w) : \lambda(u,v,w)\in C^t_{A,\delta}(\widetilde{\lambda}^t)} \hspace{-1cm}[B^t(u,v) - B^t(\widetilde{u}^t, \widetilde{v}^t)] &\geq
			\hspace{-1.0cm}
			\inf_{(u,v, w) : \lambda(u,v,w)\in C^t_{A,\delta}(\widetilde{\lambda}^t)} \hspace{-1cm} [\widetilde{B}^t(u,v) - \widetilde{B}^t(\widetilde{u}^t, \widetilde{v}^t)] + o_{cp}(1).
			\end{split}
			\end{align}
			
			%Now, the minimized function in \eqref{eq:asymp-distr:quadr-minimizer-u} coincides with $\widetilde{B}^t(u, v)$ up to a linear term depending on $v$. Therefore, the difference in the right hand-side of \eqref{lem:asymp-distr:quadratic-uv-approximation:Bt-diff-positive-part} for terms depending on $u\in \mathcal{U}$ can be estimated through optimality conditions for the problem in~\eqref{eq:asymp-distr:quadr-minimizer-u}.
			
			Since the positivity constraints in \eqref{eq:asymp-distr:quadr-minimizer-u} include restrictions on $u\in\mathcal{U}$ and also depend on $w\in \mathcal{W}$, for simplicity, we include $w$ in the minimization problem as an independent variable
			
			\begin{align}\label{lem:asymp-distr:quadratic-uv-approximation:argmin-expanded-w}
			(\widetilde{u}^t, \widetilde{w}^t) = \argmin_{\substack{(u, w):(1-\Pi_{\mathcal{V}})\widehat{\lambda}^t_{sc} + \frac{u}{\sqrt{t}} + w \succeq 0\\ u\in \mathcal{U}, \, w\in \mathcal{W}}}
			\sum\limits_{i\in I_1(\Lambda^*)} -\sqrt{t}(\widetilde{\Lambda}_{b,i}^t - \widehat{\Lambda}^t_{sc,i}) \dfrac{a_i^Tu}{\widehat{\Lambda}^t_{sc,i}} + \dfrac{(a_i^Tu)^2}{2\widehat{\Lambda}^t_{sc,i}}.
			\end{align}
			Note that minimizer $\widetilde{u}^t$ in \eqref{lem:asymp-distr:quadratic-uv-approximation:argmin-expanded-w} coincides with the original solution from \eqref{eq:asymp-distr:quadr-minimizer-u}. The problem in \eqref{lem:asymp-distr:quadratic-uv-approximation:argmin-expanded-w} is convex and the strong duality is satisfied (e.g., by Slater's condition). From the Karush-Kuhn-Tucker necessary optimality conditions (see e.g., \citet{bertsekas1997nonlinear}, Section~3.3) for the optimization problem in \eqref{lem:asymp-distr:quadratic-uv-approximation:argmin-expanded-w} and the strong duality it follows that 
			\begin{align}\label{lem:asymp-distr:quadratic-uv-approximation:kkt-mu-pos-ortho}
			&\exists \, \widetilde{\mu}^t \succeq 0, \, \widetilde{\mu}^t \in \mathcal{W}^\perp,\\
			\label{lem:asymp-distr:quadratic-uv-approximation:kkt-grad-u}
			& \sum\limits_{i\in I_1(\Lambda^*)}-\sqrt{t} 
			(\widetilde{\Lambda}_{b,i}^t - \widehat{\Lambda}_{sc,i}^t)
			\dfrac{\Pi_{\mathcal{U}}a_i}{\widehat{\Lambda}_{sc,i}^t} + 
			\dfrac{\Pi_{\mathcal{U}}a_ia_i^T\widetilde{u}^t}{\widehat{\Lambda}_{sc,i}^t} = \dfrac{\widetilde{\mu}_{\mathcal{U}}^t}{\sqrt{t}}, \, \widetilde{\mu}_{\mathcal{U}}^t = \Pi_{\mathcal{U}}\widetilde{\mu}^t, \\
			\label{lem:asymp-distr:quadratic-uv-approximation:mu-compl-slackness}
			&\widetilde{\mu}^t_{j}\left([(I - \Pi_{\mathcal{V}})\widehat{\lambda}_{sc}^t]_j + \frac{\widetilde{u}^t_j}{\sqrt{t}} + \widetilde{w}^t_j
			\right) = 0, \, j\in \{1, \dots, p\},
			\end{align}
			where $(\widetilde{u}^t, \widetilde{w}^t)$ are defined in \eqref{lem:asymp-distr:quadratic-uv-approximation:argmin-expanded-w}. 
			Strong duality implies, in particular, that $\widetilde{\mu}^t$ is a solution for the dual problem and $\widetilde{\mu}^t \in \mathcal{W}^\perp$ (dual functional equals $-\infty$ for $\widetilde{\mu}^t \not \in \mathcal{W}^\perp$). Note also that the optimized functional in \eqref{lem:asymp-distr:quadratic-uv-approximation:argmin-expanded-w} is strongly convex in $u$, so $\widetilde{u}^t$ is always unique, whereas at least one $\widetilde{w}^t$ always exists, however, may not be unique. The latter fact does not pose any problem since the target functional is flat for $w\in \mathcal{W}$, so if not said otherwise, we choose any solution $\widetilde{w}^t$ in \eqref{lem:asymp-distr:quadratic-uv-approximation:argmin-expanded-w} so that positivity constraints are satisfied.

			From \eqref{eq:asymp-distr:approximation-general}, \eqref{lem:asymp-distr:quadratic-uv-approximation:kkt-mu-pos-ortho}-\eqref{lem:asymp-distr:quadratic-uv-approximation:mu-compl-slackness} it follows that 
			\begin{align}\label{lem:asymp-distr:quadratic-uv-approximation:bt-tilde-positive}
			\nonumber
			\widetilde{B}^t(u,v) &- \widetilde{B}^t(\widetilde{u}^t, \widetilde{v}^t) =  \sum\limits_{i\in I_1(\Lambda^*)} -\sqrt{t}
			\dfrac{\widetilde{\Lambda}_{b,i} - \widehat{\Lambda}_{sc,i}^t}{\widehat{\Lambda}_{sc,i}^t}
			a_i^T(u-\widetilde{u}^t) + \dfrac{1}{2} \dfrac{(a_i^Tu)^2 - (a_i^T\widetilde{u}^t)^2}{\widehat{\Lambda}_{sc,i}^t}\\
			\nonumber
			& + \sum\limits_{i\in I_0(\Lambda^*)}a_i^T(v-\widetilde{v}^t)\\
			\nonumber
			&=\sum\limits_{i\in I_1(\Lambda^*)} -\sqrt{t}
			\dfrac{\widetilde{\Lambda}_{b,i} - \widehat{\Lambda}_{sc,i}^t}{\widehat{\Lambda}_{sc,i}^t}
			a_i^T(u-\widetilde{u}^t) + \dfrac{1}{2}\dfrac{(a_i^T(u-\widetilde{u}^t))^2}{\widehat{\Lambda}_{sc,i}^t} \\ \nonumber
			&+ \dfrac{(\widetilde{u}^t)^T a_ia_i^T(u-\widetilde{u}^t)}{\widehat{\Lambda}_{sc,i}^t} + 
			\sum\limits_{i\in I_0(\Lambda^*)}a_i^T(v-\widetilde{v}^t)\\
			&=\langle \frac{\widetilde{\mu}_{\mathcal{U}}^t}{\sqrt{t}}, u -\widetilde{u}^t
			\rangle + \dfrac{1}{2}\sum\limits_{i\in I_1(\Lambda^*)} 
			\dfrac{\mid a_i^T(u-\widetilde{u}^t)\mid^2}{\widehat{\Lambda}_{sc,i}^t} + 
			\sum\limits_{i\in I_0(\Lambda^*)}a_i^T(v-\widetilde{v}^t).
			\end{align}
			Note that 
			\begin{align}\label{lem:asymp-distr:quadratic-uv-approximation:mu-linear-positive}
			&\langle \frac{\widetilde{\mu}^t_{\mathcal{U}}}{\sqrt{t}}, u -\widetilde{u}^t
			\rangle \geq 0, \\
			\label{lem:asymp-distr:quadratic-uv-approximation:v-linear-positive}
			& v - \widetilde{v}^t \succeq 0, \\ \nonumber
			&\text{ for } (u,v)\in \mathcal{U}\times \mathcal{V} \text{ s.t. } \lambda(u,v, w) = \widehat{\lambda}_{sc}^t + \frac{u}{\sqrt{t}} + \frac{v}{t} + w \succeq 0 \text{ for some } w\in \mathcal{W}.
			\end{align}
			Indeed, in view of \eqref{lem:asymp-distr:quadratic-uv-approximation:kkt-mu-pos-ortho}, \eqref{lem:asymp-distr:quadratic-uv-approximation:mu-compl-slackness} the left hand-side in \eqref{lem:asymp-distr:quadratic-uv-approximation:mu-linear-positive} can be rewritten as follows:
			\begin{align}\label{lem:asymp-distr:quadratic-uv-approximation:mu-u-linear-rewr}
			\begin{split}
			\langle \frac{\widetilde{\mu}^t_{\mathcal{U}}}{\sqrt{t}}, u -\widetilde{u}^t
			\rangle &= \langle \widetilde{\mu}^t_{\mathcal{U}}, \frac{u}{\sqrt{t}} -\frac{\widetilde{u}^t}{\sqrt{t}}
			\rangle \\
			&= \langle (I-\Pi_{\mathcal{V}})\widetilde{\mu}^t, (I-\Pi_{\mathcal{V}})\widehat{\lambda}_{sc}^t + \frac{u}{\sqrt{t}}\rangle\\
			& = \langle (I-\Pi_{\mathcal{V}})\widetilde{\mu}^t, \widehat{\lambda}_{sc}^t + \frac{u}{\sqrt{t}} + \frac{v}{t} + w\rangle \\
			& = \langle (I-\Pi_{\mathcal{V}})\widetilde{\mu}^t, \lambda(u,v,w)\rangle
			\end{split}
			\end{align}
			Note also that from \eqref{lem:asymp-distr:quadratic-uv-approximation:kkt-mu-pos-ortho} and the definition of $\mathcal{V}$ in \eqref{eq:asymp-distr:subspace-v} it follows that  
			\begin{equation}\label{lem:asymp-distr:quadratic-uv-approximation:mu-u-positive}
			\mu^t_\mathcal{U} = (I-\Pi_{\mathcal{V}})\mu^t \succeq 0.
			\end{equation}
			Formula \eqref{lem:asymp-distr:quadratic-uv-approximation:mu-linear-positive} follows directly from \eqref{lem:asymp-distr:quadratic-uv-approximation:mu-u-linear-rewr}, \eqref{lem:asymp-distr:quadratic-uv-approximation:mu-u-positive} and the fact that $\lambda(u,v,w)\succeq 0$.
			
			In turn, formula \eqref{lem:asymp-distr:quadratic-uv-approximation:v-linear-positive} follows from \eqref{eq:asymp-distr:quadr-minimizer-v-zero}.
			
			Formulas \eqref{lem:asymp-distr:quadratic-uv-approximation:uv-diff-delta},  \eqref{lem:asymp-distr:quadratic-uv-approximation:bt-tilde-positive}-\eqref{lem:asymp-distr:quadratic-uv-approximation:v-linear-positive} and the fact that $\widehat{\Lambda}_{sc,i}\rightarrow \Lambda_{i}^*$ for $i\in \{1, \dots, d\}$ a.s. $Y^t$, $t\in (0, +\infty)$ (as a strongly consistent estimator), imply that 
			with conditional probability tending to one a.s. $Y^t$, $t\in (0, +\infty)$ the following estimate holds:
			\begin{align}\label{lem:asymp-distr:quadratic-uv-approximation:b-tilde-positivity}
			\hspace{-1.0cm}\inf_{(u,v, w) : \lambda(u,v, w)\in C^t_{A,\delta}(\widetilde{\lambda}^t)} \hspace{-1cm}[\widetilde{B}^t(u,v) - \widetilde{B}^t(\widetilde{u}^t, \widetilde{v}^t)] \geq c\delta^2,
			\end{align}
			where $c$ is some fixed positive constant depending only on $\Lambda^*$ and $A$.
			
			Formula \eqref{lem:asymp-distr:quadratic-uv-approximation:cond-proba-one} follows directly from \eqref{lem:asymp-distr:quadratic-uv-approximation:Bt-diff-positive-part}, \eqref{lem:asymp-distr:quadratic-uv-approximation:b-tilde-positivity}.
			
			Lemma is proved.	
		\end{proof}

		\subsection{Proof of Lemma~\ref{lem:asymp-distr:quadr-minimizer-mapping-boundeness}}
		
		\par Let $\xi\in \R^{\# I_1(\Lambda^*)}$ be a parameter and consider $\widetilde{u}^t(\xi)$ defined in \eqref{eq:asymp-distr:quadr-minimizer-mapping-tech-def}. 
		
		Since the positivity constraints in \eqref{eq:asymp-distr:quadr-minimizer-mapping-tech-def} include restrictions on $u\in\mathcal{U}$ and $w\in \mathcal{W}$, for simplicity, we include $w$ in the minimization problem as an independent variable
		\begin{align}\label{lem:eq:asymp-dist:lem:primal-problem-reformulated}
		(\widetilde{u}^t, \widetilde{w}^t) = \argmin_{\substack{(u,w) : (1-\Pi_{\mathcal{V}})\widehat{\lambda}_{sc}^t + \frac{u}{\sqrt{t}} + w\succeq 0\\ u \in\mathcal{U}, w\in\mathcal{W}}} -\xi^T C^t u + \frac{1}{2}u^TF^tu,
		\end{align}
		where 
		\begin{align}
		\begin{split}
		&C^t = (\widehat{D}^t_{I_1(\Lambda^*)})^{-1/2}A_{I_1(\Lambda^*)},\, F^t = \widehat{F}^t_{I_1(\Lambda^*)}, \\
		&\widehat{D}^t_{I_1(\Lambda^*)}, \, 
		\widehat{F}^t_{I_1(\Lambda^*)} \text{ are defined in } \eqref{eq:asymp-distr:quadr-minimizer-mapping-tech-def:diagonal}, \,  \eqref{eq:asymp-distr:quadr-minimizer-mapping-tech-def:fisher}. 
		\end{split}
		\end{align}
		The Lagrangian function for the primal problem in \eqref{lem:eq:asymp-dist:lem:primal-problem-reformulated} is defined by the formula:
		\begin{align}\label{lem:eq:asymp-dist:lem:lagrangian-primal-dual}
		&\mathcal{L}^t(u, w; \mu) = -\xi^T C^tu + \frac{1}{2}u^TF^tu - \mu^T((1-\Pi_{\mathcal{V}})\widehat{\lambda}^t_{sc} + \frac{u}{\sqrt{t}} + w),\\ 
		&u\in \mathcal{U}, \, w\in \mathcal{W}, \, \mu \succeq 0.
		\end{align}
		The dual function for $G^t(\mu)$ and solution $\mu^t$ for the dual problem are defined by the formulas:
		\begin{align}\label{lem:eq:asymp-dist:lem:dual-problem-reformulated}
		G^t(\mu) = \inf_{u\in\mathcal{U}, \, w\in\mathcal{W}} \mathcal{L}^t(u,w; \mu), \, \mu^t = \arg\max_{\mu\succeq 0}G^t(\mu).
		\end{align}
		
		From the Karush-Kuhn-Tucker necessary optimality conditions, the fact that the primal problem is strongly convex in $u\in \mathcal{U}$ and the strong duality it follows that 
		\begin{align}
		&\exists (u^t, w^t)\in \mathcal{U}\times \mathcal{W}, \, \mu^t \succeq 0, \, \mu^t \in \mathcal{W}^\perp 
		\text{ s.t. }\\
		&(u^t, w^t) \text{ is a solution for the primal problem in \eqref{lem:eq:asymp-dist:lem:primal-problem-reformulated}}, \\
		\label{lem:eq:asymp-dist:lem:mu-t-dual-solution}
		&\mu^t = \mu^t(\xi) \text{ is a solution for the dual problem in \eqref{lem:eq:asymp-dist:lem:dual-problem-reformulated}}, \\
		\label{lem:eq:asymp-dist:lem:kkt-gradient}
		& \nabla_{u, w}\mathcal{L}^t(u^t, w^t; \mu^t) = 0, \\
		& ((1-\Pi_{\mathcal{V}})\widehat{\lambda}_{sc,j}^t + \frac{u^t_j}{\sqrt{t}} + w^t_j)
		\mu^t_j = 0, \, j\in \{1, \dots, p\}.
		\end{align}
		Using \eqref{lem:eq:asymp-dist:lem:lagrangian-primal-dual}, \eqref{lem:eq:asymp-dist:lem:kkt-gradient} we obtain the following:
		\begin{align}
		\label{lem:eq:asymp-dist:lem:kkt-gradient-u}
		-&\Pi_{\mathcal{U}}(C^t)^T\xi + (\Pi_{\mathcal{U}}F^t\Pi_{\mathcal{U}})u^t - \dfrac{\Pi_{\mathcal{U}}\mu^t(\xi)}{\sqrt{t}} = 0, \, \\
		\label{lem:eq:asymp-dist:lem:kkt-gradient-w}
		&\Pi_{\mathcal{W}}\mu^t = 0, 
		\end{align}
		where $\Pi_{\mathcal{U}}$, $\Pi_{\mathcal{W}}$ are defined in \eqref{eq:asymp-distr:projectors}. 
		In what follows we use the following notations
		\begin{align}\label{lem:eq:asymp-dist:lem:new-notations-u-t}
		C^t_\mathcal{U} = C^t\Pi_{\mathcal{U}}, \, F_{\mathcal{U}}^t = (\Pi_{\mathcal{U}}F^t\Pi_{\mathcal{U}}), \, \mu^t_{\mathcal{U}} = \Pi_{\mathcal{U}}\mu^t.
		\end{align}
		Strong consistency of $\widehat{\lambda}_{sc}^t$ on $\mathcal{U} \oplus \mathcal{V}$ and the Continuous Mapping Theorem imply that 
		\begin{align}\label{lem:eq:asymp-dist:lem:new-notations-u-t-as-convergence}
		C^t_{\mathcal{U}} \rightarrow C^*_{\mathcal{U}}, \, F^t_{\mathcal{U}} \rightarrow F^*_{\mathcal{U}} \text{ when } t\rightarrow +\infty, \, 
		\text{ a.s. } Y^t, \, t\in (0, +\infty),
		\end{align}
		where
		\begin{align}
		&C^*_{\mathcal{U}} = \Pi_{\mathcal{U}}C^*, \, F^*_{\mathcal{U}} = \Pi_{\mathcal{U}}F^*\Pi_{\mathcal{U}},\\
		&C^* =  (D^*_{I_1(\Lambda^*)})^{-1/2}A_{I_1(\Lambda^*)}, \, 
		D^*_{I_1(\Lambda^*)} = \mathrm{diag}(\dots, \Lambda^*_{i}, \dots), \, i\in I_1(\Lambda^*),\\
		\label{lem:eq:asymp-dist:lem:new-notations-u-t-fisher-limit}
		&F^* =  \sum\limits_{i\in I_1(\Lambda^*)}\dfrac{a_ia_i^T}{\Lambda_{i}^*} = A_{I_1(\Lambda^*)})^T (D_{I_1(\Lambda^*)}^*)^{-1}A_{I_1(\Lambda^*)}.
		\end{align}
		Using the notations from \eqref{lem:eq:asymp-dist:lem:new-notations-u-t} formula \eqref{lem:eq:asymp-dist:lem:kkt-gradient-u} can be rewritten as follows:
		\begin{align}\label{lem:eq:asymp-dist:primal-solution-via-mu-t}
		u^t(\xi) = (F^t_\mathcal{U})^{-1}(C_{\mathcal{U}}^t)^T\xi + (F^t_\mathcal{U})^{-1}\frac{\mu^t_{\mathcal{U}}(\xi)}{\sqrt{t}}.
		\end{align}
		Note that $F_{\mathcal{U}}^t$ is continuously invertible on $\mathcal{U}$, therefore $(F_{\mathcal{U}}^t)^{-1}$ is well-defined. Moreover, $(F_{\mathcal{U}}^t)^{-1}\rightarrow (F_\mathcal{U}^*)^{-1}$ for $t\rightarrow +\infty$ a.s. $Y^t$, $t\in (0, +\infty)$.
		Next, we show that the following estimate always holds:
		\begin{align}\label{lem:eq:asymp-dist:lem:bounded-mu-t-estimate}
		\left|\frac{\mu_{\mathcal{U}}^t(\xi)}{\sqrt{t}}\right| \leq 
		2\hspace{-0.2cm}\max_{\sigma\in \sigma_{\mathcal{U}}(F_{\mathcal{U}}^t)} \hspace{-0.2cm}\sigma^{-1/2}  \|(F_{\mathcal{U}}^t)^{-1/2}\|\|(C^t_{\mathcal{U}})^T\xi\|,
		\end{align}
		where $\sigma_{\mathcal{U}}(F^t_{\mathcal{U}})$ denotes the spectrum of $F^t_{\mathcal{U}}$ on $\mathcal{U}$ (which in view of \eqref{lem:eq:asymp-dist:lem:new-notations-u-t-as-convergence}, \eqref{lem:eq:asymp-dist:lem:new-notations-u-t-fisher-limit} contains only non-zero positive elements starting from some $t\geq t_0$).
		
		%Formulas \eqref{lem:grad-stoch-asymp-normality:xi-tilde}-\eqref{lem:grad-stoch-asymp-normality:xi-xi-tilde-conv},  \eqref{lem:eq:asymp-dist:lem:new-notations-u-t-as-convergence}-\eqref{lem:eq:asymp-dist:primal-solution-via-mu-t}, \eqref{lem:eq:asymp-dist:lem:bounded-mu-t-estimate} and Continuous Mapping Theorem together with Prohorov's Theorem on uniform tightness for weakly convergent sequences of r.v.s., imply directly the uniform conditional tightness of $\widetilde{u}^t(\widetilde{\xi}^t)$ which is equivalent to formula~\eqref{lem:grad-stoch-asymp-normality:u-tightness}.
		
		%Now we demonstrate the bound in \eqref{lem:eq:asymp-dist:lem:bounded-mu-t-estimate}. 
		We begin with characterization of mapping $\mu^t_{\mathcal{U}}(\xi)$ via the dual problem in \eqref{lem:eq:asymp-dist:lem:dual-problem-reformulated}.
		
		First, from \eqref{lem:eq:asymp-dist:lem:lagrangian-primal-dual}, \eqref{lem:eq:asymp-dist:lem:dual-problem-reformulated} it follows that 
		\begin{align}
		G^t(\mu) = -\infty \text{ if } \mu \not\in \mathcal{W}^\perp.
		\end{align}
		That is for $\mu \not \in \mathcal{W}^\perp$ the dual problem is unfeasible. In view of this and the strong duality, formulas in \eqref{lem:eq:asymp-dist:lem:dual-problem-reformulated} can be rewritten as follows:
		\begin{align}
		\label{lem:eq:asymp-dist:lem:dual-reformulated-w-perp}
		&G^t(\mu) = \inf_{u\in\mathcal{U}} \mathcal{L}^t(u,0; \mu), \, \mu \succeq 0, \, \mu\in \mathcal{W}^\perp, \\
		\label{lem:eq:asymp-dist:lem:argmax-dual-reformulated-w-perp}
		&\mu^t = \argmax_{\mu\succeq 0, \, \mu\in \mathcal{W}^\perp}G^t(\mu).
		\end{align}
		Using \eqref{lem:eq:asymp-dist:lem:lagrangian-primal-dual}, \eqref{lem:eq:asymp-dist:lem:new-notations-u-t}, the first order optimality condition in \eqref{lem:eq:asymp-dist:lem:dual-reformulated-w-perp} has the following form:
		\begin{align}\label{lem:eq:asymp-dist:dual-deriv-first-order-optimality}
		\begin{split}
		&u^t_{min}(\mu) = (F^t_\mathcal{U})^{-1}(C_{\mathcal{U}}^t)^T\xi + 	(F^t_\mathcal{U})^{-1}\frac{\mu_{\mathcal{U}}}{\sqrt{t}}, \\ &\mu_{\mathcal{U}} = \Pi_{\mathcal{U}}\mu, \, \mu \succeq 0, \, \mu\in \mathcal{W}^\perp.
		\end{split}
		\end{align}
		From \eqref{lem:eq:asymp-dist:lem:lagrangian-primal-dual}, \eqref{lem:eq:asymp-dist:lem:dual-problem-reformulated},  \eqref{lem:eq:asymp-dist:lem:dual-reformulated-w-perp}, \eqref{lem:eq:asymp-dist:dual-deriv-first-order-optimality} it follows that 
		\begin{align}\label{lem:eq:asymp-dist:dual-deriv-lagrangian-on-mu}
		\begin{split}
		G^t(\mu) = \mathcal{L}^t(u_{min}^t(\mu), 0; \mu) &= 
		-\xi^T C^t_{\mathcal{U}}u_{min}^t(\mu) + \frac{1}{2}[u^t_{min}(\mu)]^TF^t_{\mathcal{U}}u^t_{min}(\mu) \\
		&- \mu^T((1-\Pi_{\mathcal{V}})\widehat{\lambda}^t_{sc} + \frac{u^t_{min}(\mu)}{\sqrt{t}}), \\
		&\mu_{\mathcal{U}} = \Pi_{\mathcal{U}}\mu, \, \mu \succeq 0, \, \mu\in \mathcal{W}^\perp.
		\end{split}
		\end{align}
		
		Formulas \eqref{lem:eq:asymp-dist:dual-deriv-first-order-optimality}, \eqref{lem:eq:asymp-dist:dual-deriv-lagrangian-on-mu} imply that 
		\begin{align}\label{lem:eq:asymp-dist:lem:dual-gt-mu-rewritten-constrained}
		\begin{split}
		&G^t(\mu) = -\frac{1}{2}\frac{\mu_{\mathcal{U}}^T}{\sqrt{t}}(F^t_{\mathcal{U}})^{-1}\frac{\mu_{\mathcal{U}}}{\sqrt{t}} - \xi^TC_{\mathcal{U}}^t(F_{\mathcal{U}}^t)^{-1}\frac{\mu_{\mathcal{U}}}{\sqrt{t}} - \mu^T(I-\Pi_{\mathcal{V}})\widehat{\lambda}_{sc}^t, \\
		&\mu \succeq 0, \, \mu\in \mathcal{W}^\perp.
		\end{split}
		\end{align}
		From the facts that $\mu\in \mathcal{W}^\perp$, $\mu \succeq 0$ and the definition of $\mathcal{V}$ in \eqref{eq:asymp-distr:subspace-v}  it follows that 
		\begin{equation}\label{lem:eq:asymp-dist:lem:mu-u-proj-positivity}
		\mu_{\mathcal{U}} = (I-\Pi_{\mathcal{V}})\mu = 
		\begin{cases}
		\mu_{j}, \text{ if } \sum\limits_{i\in I_0(\Lambda^*)}a_{ij} = 0, \\
		0, \text{ otherwise},
		\end{cases} \Rightarrow
		\mu_{\mathcal{U}} = (I-\Pi_{\mathcal{V}})\mu \succeq 0.
		\end{equation}
		From \eqref{lem:eq:asymp-dist:lem:mu-u-proj-positivity} and the fact that $\widehat{\lambda}_{sc}^t \succeq 0$ it follows that 
		\begin{align}\label{lem:eq:asymp-dist:lem:mu-lambdas-sc-positive}
		\mu^T(I-\Pi_{\mathcal{V}})\widehat{\lambda}_{sc}^t = [(I - \Pi_{\mathcal{V}})\mu]^T\widehat{\lambda}_{sc}^t =  \mu_{\mathcal{U}}^T\widehat{\lambda}_{sc}^t \geq 0.
		\end{align}
		From \eqref{lem:eq:asymp-dist:lem:dual-gt-mu-rewritten-constrained} one can see that minimizer $\mu^t$ in \eqref{lem:eq:asymp-dist:lem:argmax-dual-reformulated-w-perp} may not be unique, however, its projection $\mu_{\mathcal{U}}^t$ is unique since functional $G^t(\mu)$ is strongly convex in $\mu_{\mathcal{U}}$. At the same time, from \eqref{lem:eq:asymp-dist:primal-solution-via-mu-t} it follows that only $\mu^t_{\mathcal{U}}$ is essential for $\widetilde{u}^t(\xi)$. In view of  \eqref{lem:eq:asymp-dist:primal-solution-via-mu-t}, \eqref{lem:eq:asymp-dist:lem:dual-gt-mu-rewritten-constrained}, the optimization problem in \eqref{lem:eq:asymp-dist:lem:argmax-dual-reformulated-w-perp} can be rewritten as follows: 
		\begin{align}\label{lem:eq:asymp-dist:lem:mu-t-argming-norm-lin-def}
		\dfrac{\mu_{\mathcal{U}}^t}{\sqrt{t}} = \widetilde{\mu}_{\mathcal{U}}^t
		= \argmin_{\mu_{\mathcal{U}}\in \Pi_{\mathcal{U}}(\R^p_+ \cap \mathcal{W}^\perp)} \dfrac{1}{2}\|(F_{\mathcal{U}}^t)^{-1/2}\mu_{\mathcal{U}} + (F_{\mathcal{U}}^t)^{-1/2}(C^t_{\mathcal{U}})^T\xi\|^2 + \sqrt{t} \mu_{\mathcal{U}}^T\widehat{\lambda}_{sc}^t.
		\end{align}
		From \eqref{lem:eq:asymp-dist:lem:mu-t-argming-norm-lin-def} and the fact that $0\in \Pi_{\mathcal{U}}(\R^p_+ \cap \mathcal{W}^\perp)$ it follows that 
		\begin{align}\label{lem:eq:asymp-dist:lem:functional-at-zero-estimate}
		\dfrac{1}{2}\|(F_{\mathcal{U}}^t)^{-1/2}\widetilde{\mu}_{\mathcal{U}}^t + (F_{\mathcal{U}}^t)^{-1/2}(C^t_{\mathcal{U}})^T\xi\|^2 + \sqrt{t} \mu_{\mathcal{U}}^t\widehat{\lambda}_{sc}^t \leq \|(F_{\mathcal{U}}^t)^{-1/2}(C^t_{\mathcal{U}})^T\xi\|^2,
		\end{align}
		where $\widetilde{\mu}_{\mathcal{U}}^t$ is the solution in \eqref{lem:eq:asymp-dist:lem:mu-t-argming-norm-lin-def}.
		Formulas \eqref{lem:eq:asymp-dist:lem:mu-lambdas-sc-positive}, \eqref{lem:eq:asymp-dist:lem:functional-at-zero-estimate} imply that 
		\begin{align}
		|(F_{\mathcal{U}}^t)^{-1/2}\widetilde{\mu}_{\mathcal{U}}^t + (F_{\mathcal{U}}^t)^{-1/2}(C^t_{\mathcal{U}})^T\xi| \leq \|(F_{\mathcal{U}}^t)^{-1/2}(C^t_{\mathcal{U}})^T\xi\|.
		\end{align}
		which together with inequality $|a + b| \geq |a|-|b|$ imply the following estimate
		\begin{align}\label{lem:eq:asymp-dist:lem:ft-mu-t-upperbound}
		\|(F_{\mathcal{U}}^t)^{-1/2}\widetilde{\mu}_{\mathcal{U}}^t\| \leq 2\|(F_{\mathcal{U}}^t)^{-1/2}(C^t_{\mathcal{U}})^T\xi\|.
		\end{align}
		From \eqref{eq:asymp-distr:subspace-u}, \eqref{lem:eq:asymp-dist:lem:new-notations-u-t},  \eqref{lem:eq:asymp-dist:lem:new-notations-u-t-as-convergence}, \eqref{lem:eq:asymp-dist:lem:new-notations-u-t-fisher-limit} it follows that $F_{\mathcal{U}}^t$ is of full rank on $\mathcal{U}$ (starting from some $t\geq t_0$ a.s. $Y^t$, $t\in (0, +\infty)$), therefore, for large $t$ matrix $(F^t_{\mathcal{U}})^{-1/2}$ is positive definite, injective on $\mathcal{U}$ and, hence,  $|(F^t_{\mathcal{U}})^{-1/2} \widetilde{\mu}_{\mathcal{U}}^t| \geq \min_{\sigma\in \sigma_{\mathcal{U}}(F_{\mathcal{U}}^t)} \sigma^{1/2}  \|\widetilde{\mu}_{\mathcal{U}}^t\|$, where $\sigma_{\mathcal{U}}(\cdot)$ denotes the spectrum of an operator acting on $\mathcal{U}$.  
		
		The above argument with formula  \eqref{lem:eq:asymp-dist:lem:ft-mu-t-upperbound} directly imply \eqref{lem:eq:asymp-dist:lem:bounded-mu-t-estimate}.
		
		Formulas \eqref{eq:lem:asymp-distr:quadr-minimizer-mapping-boundeness:norm-bound}-\eqref{eq:eq:lem:asymp-distr:quadr-minimizer-mapping-boundeness:f-star-def} follow from \eqref{lem:eq:asymp-dist:lem:new-notations-u-t}-\eqref{lem:eq:asymp-dist:lem:new-notations-u-t-fisher-limit},  \eqref{lem:eq:asymp-dist:primal-solution-via-mu-t}, \eqref{lem:eq:asymp-dist:lem:bounded-mu-t-estimate}.

		Lemma is proved.

		\subsection{Proof of Lemma~\ref{lem:asympt-distr:grad-stoch-asymp-normality}}
		\begin{proof}
			In view of step~2 in Algorithm~\ref{alg:npl-posterior-sampling:mri:binned} intensities $\widetilde{\Lambda}_{b,i}^t$ can be represented as follows:
			\begin{align}\label{eq:asymp-dist:lem:normality-representation}
			&\widetilde{\Lambda}_{b,i}^t = \dfrac{1}{\theta^t + t}\sum\limits_{k=1}^{Y_i^t} w_{ik} + \widetilde{r}^t_{b,\mathcal{M},i}, \, i\in I_1(\Lambda^*), \\
			\label{eq:asymp-dist:lem:normality-representation-weights}
			&\{w_{ik}\}_{k=1, \, i = 1}^{\infty, \, d} \text{ are mutually independent}, \, w_{ik}\sim \Gamma(1,1),
			\end{align}
			where
			\begin{align}\label{eq:asymp-dist:lem:remainder-m-representation}
			\begin{split}
			&\widetilde{r}^t_{b, \mathcal{M}, i} \mid \widetilde{\Lambda}_{\mathcal{M},i}^t, Y^t, t \sim \Gamma(\theta^t \Lambda_{\mathcal{M},i}^t, (\theta^t + t)^{-1}), \\
			&\widetilde{\Lambda}_{\mathcal{M},i}^t \text{ are sampled in Algorithm~\ref{alg:wbb-pet-bootstrap:mri:posterior-mixing-param}}.
			\end{split}
			\end{align}
			In particular,
			\begin{align}\label{lem:eq:asymp-dist:lem:normality-remainder}
			\sqrt{t}  r_{b,\mathcal{M}, i}^t  = o_{cp}(1). 
			\end{align}
			Indeed, from \eqref{eq:proof:algorithm-consistency-lem:upper-bound-photons}, \eqref{eq:asymp-dist:lem:remainder-m-representation} and the Markov inequality it holds that
			\begin{align}
			P(\sqrt{t}  r_{b,\mathcal{M},i}^t > \delta\, \mid \, Y^t, t) &\leq 
			\dfrac{\sqrt{t}  \theta^t}{\delta(\theta^t + t)}E[\widetilde{\Lambda}_{\mathcal{M},i}^t \mid Y^t, t]\\ \nonumber
			&\leq \dfrac{\sqrt{t}  \theta^t}{\delta(\theta^t + t)} \sum\limits_{i\in I_1(\Lambda^*)} \dfrac{Y_i^t}{t} \rightarrow 0 \text{ for } t\rightarrow +\infty, \text{ a.s. } Y^t, t\in (0, +\infty), 
			\end{align}
			where $\delta$ is arbitrary positive value.
			
			Using the Central Limit Theorem for sums of $w_{ik}$ in \eqref{eq:asymp-dist:lem:normality-representation}, \eqref{eq:asymp-dist:lem:normality-representation-weights} and the Strong Law of Large Numbers for $Y^t$ (see Theorem~\ref{thm:appendix:limits-poisson}, formula \eqref{eq:appendix:stlln}) and the fact that $\theta^t = o(\sqrt{t})$, we obtain:
			\begin{align}\label{lem:eq:asymp-dist:lem:normality-sums-weights}
			\dfrac{\sqrt{t}}{(\theta^t + t)\sqrt{Y_i^t/t}}\sum\limits_{k=1}^{Y_i^t} (w_{ik}-1) \xrightarrow{c.d.} \mathcal{N}(0, 1) \text{ for } t\rightarrow +\infty, \text{ a.s. }Y^t, t\in (0, +\infty).
			\end{align}
			Due to mutual independence between $w_{ik}$, the above convergence holds for all components $i\in I_1(\Lambda^*)$, hence, as for the vector in $\R^{\#I_1(\Lambda^*)}$.
			
			Using formula \eqref{lem:grad-stoch-asymp-normality:xi-tilde} we obtain:
			\begin{align}\label{lem:eq:asymp-dist:lem:normality-one-index}
			\begin{split}
			A_{I_1(\Lambda^*)}^T (\widehat{D}^t_{I_1(\Lambda^*)})^{-1/2}\widetilde{\xi}^t &= 
			\sum\limits_{i\in I_1(\Lambda^*)}\sqrt{t}   \dfrac{\widetilde{\Lambda}_{b,i}^t - \widehat{\Lambda}_{sc,i}^t}{\widehat{\Lambda}_{sc,i}^t}a_i \\
			&= 
			\sum\limits_{i\in I_1(\Lambda^*)}\sqrt{t}  \dfrac{\widetilde{\Lambda}_{b,i}^t - Y_i^t/t}{\widehat{\Lambda}_{sc,i}^t}a_i + \sum\limits_{i\in I_1(\Lambda^*)}
			\sqrt{t} \dfrac{Y_i^t/t-\widehat{\Lambda}_{sc,i}^t}{\widehat{\Lambda}_{sc,i}^t}a_i.
			\end{split}
			\end{align}
			The first sum is conditionally tight in view of the Prokhorov theorem on tightness of weakly convergence sequences and the result in \eqref{lem:eq:asymp-dist:lem:normality-sums-weights}. Due to \eqref{eq:thm:asympt-distr-main:strongly-consist-estim-i1} the second sum is simply bounded for large $t$ for almost any trajectory $Y^t$, $t\in (0, +\infty)$. These arguments directly imply conditional tightness of 
			$A_{I_1(\Lambda^*)}^T (\widehat{D}^t_{I_1(\Lambda^*)})^{-1/2}\widetilde{\xi}^t$ for almost any trajectory $Y^t$, $t\in (0, +\infty)$.

			Lemma is proved.
		\end{proof}
		
		\subsection{Proof of Lemma~\ref{lem:positivity-bt}}
		\begin{proof}
			Since $B^t(\lambda)$ is proportional to $t$ in  \eqref{eq:conv-approx-bt}, it suffices to prove formula \eqref{eq:lem:positivity-bt-full-range} for normalized process $B^t(\lambda) / t$ which we denote here by $G^t(\lambda)$, that is
			\begin{equation}
			\label{eq:lem:positivity-bt-redef}
			\begin{split}
			G^t(\lambda) &= \sum\limits_{i\in I_1(\Lambda^*)}-(\widetilde{\Lambda}_{b,i}^t - \widehat{\Lambda}^t_{sc,i})\dfrac{\Lambda_i - \widehat{\Lambda}^t_{sc,i}}{\widehat{\Lambda}^t_{sc,i}} + \dfrac{1}{2}
			\sum\limits_{i\in I_1(\Lambda^*)} \dfrac{ (\Lambda_i-\widehat{\Lambda}^t_{sc,i})^2}{\widehat{\Lambda}^t_{sc,i}}\\
			& + \sum\limits_{i\in I_0(\Lambda^*)}\Lambda_i, \, 
			\Lambda_i = a_i^T\lambda, \, i\in \{1, \dots, d\}.
			\end{split}
			\end{equation}
			Note also that minimizers of $B^t$ and of $G^t$ coincide.
			
			From the necessary Karush-Kuhn-Tucker optimality conditions in \eqref{lem:proofs:conv-approx-lambda-appr} (see e.g., \citet{bertsekas1997nonlinear}, Section 3.3) it follows that 
			\begin{align}\nonumber
			&\exists \widetilde{\lambda}_{b,app}^t, \widetilde{\mu}_{b,app}^t \in \R^p_+ \text{ such that }\\
			\label{eq:lem:positivity-bt-lagrangian-gradient}
			&-\sum\limits_{i\in I_1(\Lambda^*)}
			\dfrac{\widetilde{\Lambda}_{b,i}^t - \widehat{\Lambda}^t_{sc,i}}{\widehat{\Lambda}^t_{sc,i}} a_i + \sum\limits_{i\in I_1(\Lambda^*)}
			\dfrac{\widetilde{\Lambda}^t_{b,app,i} - \widehat{\Lambda}^t_{sc,i}}{\widehat{\Lambda}^t_{sc,i}}a_i + 
			\sum\limits_{i\in I_0(\Lambda^*)}a_i - \widetilde{\mu}_{b, app}^t = 0, \\ \nonumber
			&\widetilde{\Lambda}_{b,app}^t = A\widetilde{\lambda}_{b,app}^t,\\
			\label{eq:lem:positivity-bt-lagrangian-slackness}
			& \widetilde{\mu}_{b,app, j}^t  \widetilde{\lambda}_{b,app, j}^t = 0
			\text{ for all } j\in \{1, \dots, p\}.
			\end{align}
			Multiplying both sides of \eqref{eq:lem:positivity-bt-lagrangian-gradient} on $(\widetilde{\lambda}_{b,app}^t - \widehat{\lambda}^t_{sc})$ and using formula \eqref{eq:lem:positivity-bt-lagrangian-slackness} we obtain  following formulas:
			\begin{align}\label{eq:lem:positivity-bt-grad-diff}
			\nonumber
			&-\langle \widetilde{\mu}_{b,app}^t, \widehat{\lambda}_{sc}^t \rangle = -\sum\limits_{i\in I_1(\Lambda^*)}
			\dfrac{(\widetilde{\Lambda}_{b,i}^t - \widehat{\Lambda}^t_{sc,i})(\widetilde{\Lambda}_{b,app,i}^t - \widehat{\Lambda}^t_{sc,i})}{\widehat{\Lambda}^t_{sc,i}} + \sum\limits_{i\in I_1(\Lambda^*)}
			\dfrac{(\widetilde{\Lambda}^t_{b,app,i}-\widehat{\Lambda}^t_{sc,i})^2}{\widehat{\Lambda}^t_{sc,i}} \\
			&\qquad \qquad \qquad + \sum\limits_{i\in I_0(\Lambda^*)} \widetilde{\Lambda}_{b,app,i}^t - \widehat{\Lambda}_{sc,i}^t, \\
			&-\langle \widetilde{\mu}_{b,app}^t, \widehat{\lambda}_{sc}^t \rangle = \sum\limits_{i\in I_1(\Lambda^*)}\widetilde{\Lambda}_{b,i}^t - \widetilde{\Lambda}_{b,app,i}^t - \sum\limits_{i\in I_0(\Lambda^*)}\widehat{\Lambda}^t_{sc,i}.
			\end{align}
			From formulas \eqref{eq:lem:positivity-bt-redef}, \eqref{eq:lem:positivity-bt-lagrangian-gradient},  \eqref{eq:lem:positivity-bt-grad-diff} it follows that
			\begin{align}\label{eq:lem:positivity-bt-gt-at-min}
			G^t(\widetilde{\lambda}^t_{b, app}) &= -\langle \widetilde{\mu}_{b,app}^t, 
			\widehat{\lambda}_{sc}^t \rangle - \dfrac{1}{2} \sum\limits_{i\in I_1(\Lambda^*)}
			\dfrac{(\widetilde{\Lambda}^t_{b,app,i}-\widehat{\Lambda}^t_{sc,i})^2}{\widehat{\Lambda}^t_{sc,i}} + \sum\limits_{i\in I_0(\Lambda^*)}\widehat{\Lambda}^t_{sc,i}. 
			\end{align}
			Using \eqref{eq:lem:positivity-bt-redef}-\eqref{eq:lem:positivity-bt-gt-at-min} we get the following identity:
			\begin{align}\label{eq:lem:positivity-bt-gt-diff-min}
			\begin{split}
			G^t(\lambda) - G^t(\widetilde{\lambda}_{b,app}^t) &= 
			\sum\limits_{i\in I_1(\Lambda^*)}-(\widetilde{\Lambda}_{b,i}^t - \widehat{\Lambda}^t_{sc,i})\dfrac{\Lambda_i - \widehat{\Lambda}^t_{sc,i}}{\widehat{\Lambda}^t_{sc,i}}
			+ \sum_{i\in I_0(\Lambda^*)}\Lambda_i - \widehat{\Lambda}^t_{sc,i}\\
			& + \dfrac{1}{2}\sum\limits_{i\in I_1(\Lambda^*)} \dfrac{(\Lambda_i - \widehat{\Lambda}^t_{sc,i})^2 + (\widetilde{\Lambda}_{b,app,i}^t - \widehat{\Lambda}^t_{sc,i})^2}{\widehat{\Lambda}^t_{sc,i}} + \langle \widetilde{\mu}_{b,app}^t, 
			\widehat{\lambda}_{sc}^t \rangle \\
			& = \sum\limits_{i\in I_1(\Lambda^*)}\dfrac{(\Lambda_i-\widetilde{\Lambda}^t_{b,app,i})^2}{2\widehat{\Lambda}^t_{sc,i}} + 
			\sum\limits_{i\in I_1(\Lambda^*)} \dfrac{(\widetilde{\Lambda}^t_{b,app,i} - \widehat{\Lambda}^t_{sc,i})(\Lambda_i - \widehat{\Lambda}^t_{sc,i})}{\widehat{\Lambda}^t_{sc,i}}\\
			&+ \sum\limits_{i\in I_0(\Lambda^*)}\Lambda_i - \widehat{\Lambda}^t_{sc,i} +
			\sum\limits_{i\in I_1(\Lambda^*)}\widetilde{\Lambda}_{b,app,i}^t-\widetilde{\Lambda}_{b,i}^t + \sum\limits_{i\in I_0(\Lambda^*)} \widehat{\Lambda}_{sc,i}^t \\
			&-\sum\limits_{i\in I_1(\Lambda^*)}(\widetilde{\Lambda}_{b,i}^t - \widehat{\Lambda}^t_{sc,i})\dfrac{\Lambda_i-\widehat{\Lambda}^t_{sc,i}}{\widehat{\Lambda}^t_{sc,i}} \\
			& = \sum\limits_{i\in I_1(\Lambda^*)}\dfrac{(\Lambda_i-\widetilde{\Lambda}^t_{b,app,i})^2}{2\Lambda_i^*} + \sum\limits_{i\in I_0(\Lambda^*)}\Lambda_i + 
			\sum\limits_{i\in I_1(\Lambda^*)}\dfrac{\widetilde{\Lambda}_{b,app,i}^t-\widetilde{\Lambda}_{b,i}^t}{\widehat{\Lambda}_{sc,i}^t}\Lambda_i.
			\end{split}
			\end{align}
			Formulas \eqref{eq:lem:positivity-bt-full-range}-\eqref{eq:lem:positivity-bt-full-range:mub-props} follow from \eqref{eq:lem:positivity-bt-redef} \eqref{eq:lem:positivity-bt-lagrangian-gradient}, \eqref{eq:lem:positivity-bt-lagrangian-slackness}, \eqref{eq:lem:positivity-bt-gt-diff-min}.
			
			Lemma is proved.
		\end{proof}

		\subsection{Proof of Lemma~\ref{lem:thm:asympt-distr:proof:inf-varphi-diff}}
		\begin{proof}
			Consider the following formula 
			\begin{align}
			\label{eq:lem:inf-varphi-diff:diff-partitioning}
			\inf_{\lambda\in C^t_{A,\delta}(\widetilde{\lambda}_{b,app}^t)}\hspace{-0.4cm}[\varphi(\lambda) - \varphi(\widetilde{\lambda}_{app}^t)] = 
			\inf_{\lambda\in C^t_{A,\delta}(\widetilde{\lambda}_{b,app}^t)}\hspace{-0.4cm}[\varphi(\lambda - \varphi(\widetilde{\lambda}_{b,app}^t)] + [\varphi(\widetilde{\lambda}_{b,app}^t) - \varphi(\widetilde{\lambda}_{app}^t)].
			\end{align}
			Recall that $\widetilde{\lambda}_{b,app}^t$ may not be chosen uniquely since the functional $B^t(\lambda)$ is strongly convex only in directions from $\mathrm{Span}\{a_i : i\in I_1(\Lambda^*)\}$ (see formula \eqref{eq:conv-approx-bt}) and it is flat in directions from $\ker A$. 
			From the strong convexity of $B^t(\lambda)$ on $\mathrm{Span}\{a_i : i\in I_1(\Lambda^*)\}$ and formulas \eqref{eq:conv-approx-bt}, \eqref{lem:proofs:conv-approx-lambda-appr},  \eqref{eq:asymp-distr:parametrization-uvw} it follows that 
			$\widetilde{u}_{b,app}^t = \sqrt{t} \Pi_{\mathcal{U}}(\widetilde{\lambda}_{b,app}^t - \widehat{\lambda}_{sc}^t)$ \text{ is unique}.
			At the same time, from \eqref{eq:thm:asympt-distr-main:strongly-consist-estim-i0}, \eqref{eq:asymp-distr:quadr-minimizer-v-zero} and the result of Lemma~\ref{lem:asymp-distr:quadratic-uv-approximation} it follows that 
			\begin{align}
			\widetilde{v}^t_{b,app} = t\Pi_{\mathcal{V}}(\widetilde{\lambda}_{b,app}^t - \widehat{\lambda}_{sc}^t) = o_{cp}(1), 
			\end{align}
			where the above formula is understood as a uniform bound on the set of all possible minimizers $\widetilde{\lambda}_{b,app}^t$. 
			We may assume that for each $t$ there is some unique $\widetilde{v}_{b,app}^t$. 
			
			Then, to choose uniquely $\widetilde{\lambda}_{b,app}^t$ one has to fix its projection onto $\mathcal{W}$ regarding the positivity constraints.
			Consider the following mapping
			\begin{align}\label{eq:lem:inf-varphi-diff:wuv-mapping}
			\begin{split}
			&w(u,v) = \argmin_{\substack{w: \lambda_* + u + v + w\succeq 0 \\ w\in \mathcal{W}}} \varphi(\lambda_* + u + v + w), \\
			&u\in \mathcal{U}, \, v\in \mathcal{V} : (\lambda_* + u + v + \mathcal{W})\cap \R^p_+ \neq \emptyset,
			\end{split}
			\end{align}
			where $\lambda_*$ is the true parameter.
			From the strict convexity of $\varphi(\cdot)$ along $\ker A$ (by the assumption in~\eqref{eq:prelim:penalty-cond-strict-conv}), the definition of $\mathcal{W}$ in \eqref{eq:asymp-distr:subspace-w} and 
			the result of Lemma~\ref{lem:consistency:lem-kernel-continuity} it follows that $w(u,v)$ is one-to-one and continuous in $(u,v)$ on its domain of definition.
			
			Note that 
			\begin{align}\label{eq:lem:inf-varphi-diff:w00-wstart-equiv}
			w_* = w(0, 0) = w_{A,\lambda_*}(0,0),
			\end{align}
			where $w_{A,\lambda}(\cdot, \cdot)$ is defined in \eqref{eq:lem:inf-varphi-diff:wuv-mapping} ($w_{A,\lambda_*}(0,0)$ appears in   Theorems~\ref{thm:wbb-algorithm-consistency},~\ref{thm:wbb-generic-consistency}). 
			The property that $w_*\in \mathcal{W}$ can be proved by the contradiction argument. Assume that $w_*\in \ker A$ but $w_* \not \in \mathcal{W}$ and $w_* \neq 0$. Then, from the definition of $\mathcal{V}$,  $\mathcal{U}$, $\mathcal{W}$ it follows that 
			\begin{align}\label{eq:lem:inf-varphi-diff:w00-wstart-proof-1}
			\exists i\in I_0(\Lambda^*), \, j\in \{1,\dots, p\} : a_{ij} > 0, \, w_{*j} > 0.
			\end{align}
			At the same time from the fact that $w_*\in \ker A$ it follows that 
			\begin{align}\label{eq:lem:inf-varphi-diff:w00-wstart-proof-2}
			0 = \sum_{i\in I_0(\Lambda^*)} a_i^Tw_{*} = 
			\sum\limits_{j=1}^p\left(
			\sum\limits_{i\in I_0(\Lambda^*)}a_{ij}
			\right)w_{*j}
			\end{align}
			Formulas \eqref{eq:lem:inf-varphi-diff:w00-wstart-proof-1}, \eqref{eq:lem:inf-varphi-diff:w00-wstart-proof-2} imply that 
			\begin{align}\label{eq:lem:inf-varphi-diff:w00-wstart-proof-3}
			\exists i'\in I_0(\Lambda^*), \, j'\in \{1,\dots, p\} : a_{i'j'} > 0, \, w_{*j'} < 0.
			\end{align}
			At the same time, from the definition of $I_0(\Lambda^*)$ in \eqref{eq:ind-lors-pos-zeros} it follows that $\lambda_{*j'} = 0$ which together with the results from \eqref{eq:lem:inf-varphi-diff:w00-wstart-proof-3} contradicts the positivity constraint in \eqref{eq:lem:inf-varphi-diff:w00-wstart-equiv}.
			Thus, $w_*\in \mathcal{W}$.

			Let 
			\begin{align}\label{eq:lem:thm:asympt-distr:proof:inf-varphi-diff:wbb-app-diff}
			\widetilde{w}_{b,app}^t = w\left(\Pi_{\mathcal{U}}(\widehat{\lambda}_{sc}^t -\lambda_*) + \dfrac{\widetilde{u}_{b,app}^t}{\sqrt{t}}, 
			\Pi_{\mathcal{V}}(\widehat{\lambda}_{sc}^t -\lambda_*) +  \dfrac{\widetilde{v}_{b,app}^t}{t}\right),
			\end{align}
			where $\widetilde{u}_{b,app}^t$, $\widetilde{v}_{b,app}^t$ are defined in \eqref{lem:proofs:conv-approx-lambda-appr}, \eqref{eq:asymp-distr:parametrization-uvw}, $w$ is the mapping from \eqref{eq:lem:inf-varphi-diff:wuv-mapping}. Recall that   $\widetilde{\lambda}_{b,app}^t$ from  \eqref{lem:proofs:conv-approx-lambda-appr} can be rewritten via the parametrization in \eqref{eq:asymp-distr:parametrization-uvw} as follows
			\begin{align}
			\label{eq:lem:inf-varphi-diff:lambda-b-app-t-w-chosen}
			\widetilde{\lambda}_{b,app}^t = \widehat{\lambda}_{sc}^t + \dfrac{\widetilde{u}_{b,app}^t}{\sqrt{t}} + \dfrac{\widetilde{v}_{b,app}^t}{t} + \widetilde{w}_{b,app}^t,
			\end{align}
			where $\widetilde{w}_{b,app}^t$ is chosen in \eqref{eq:lem:thm:asympt-distr:proof:inf-varphi-diff:wbb-app-diff}.
			For $\widetilde{\lambda}_{b,app}^t$ from \eqref{eq:lem:inf-varphi-diff:lambda-b-app-t-w-chosen} it holds that 
			\begin{align}\label{eq:lem:inf-varphi-diff:labbp-lstar-limit}
			&\widetilde{\lambda}_{b,app}^t \xrightarrow{c.p.} \lambda_* + w_* \text{ for } t\rightarrow +\infty, \text{ a.s. }Y^t, \, t\in (0, +\infty),
			\end{align}
			where $w_*$ is defined in \eqref{eq:lem:inf-varphi-diff:w00-wstart-equiv}.
			
			Indeed, formula \eqref{eq:lem:inf-varphi-diff:labbp-lstar-limit} follows  from the fact that $\Pi_{\mathcal{U}\oplus \mathcal{V}}\widehat{\lambda}_{sc}^t \xrightarrow{c.p.} \Pi_{\mathcal{U}\oplus\mathcal{V}}\lambda_*$, the fact that $\widetilde{u}_{b,app}^t / \sqrt{t} = o_{cp}(1)$, $\widetilde{v}_{b,app}^t / t = o_{cp}(1)$ (see formula \eqref{eq:asymp-distr:quadr-minimizer-v-zero} and results of Lemma~\ref{lem:asympt-distr:grad-stoch-asymp-normality}) and the continuity of mapping $w$.
			
			From the local Lipschitz continuity of $\varphi$ and \eqref{eq:asymp-distr:app-abbp-proj-u}, \eqref{eq:asymp-distr:lem:approximate-approximation:proj-v-convergece},  \eqref{eq:lem:inf-varphi-diff:labbp-lstar-limit} it follows that there exits some universal constant $L > 0$ such that with conditional probability tending to one a.s. $Y^t$, $t\in (0, +\infty)$ it holds that:
			\begin{align}\label{eq:lem:inf-varphi-diff:fixed-remainder-ocp-1}
			\varphi(\widetilde{\lambda}_{b,app}^t) - \varphi(\widetilde{\lambda}_{app}^t) \leq L \|\widetilde{\lambda}_{b,app}^t - \widetilde{\lambda}_{app}^t\|.
			\end{align}
			In particular, from \eqref{eq:asymp-distr:app-abbp-proj-u}, \eqref{eq:asymp-distr:lem:approximate-approximation:proj-v-convergece}, \eqref{eq:lem:inf-varphi-diff:fixed-remainder-ocp-1} it follows that 
			\begin{equation}\label{eq:lem:inf-varphi-diff:second-term-ocp-1}
			\beta^t  (\varphi(\widetilde{\lambda}_{b,app}^t) - \varphi(\widetilde{\lambda}_{app}^t)) = o_{cp}(1).
			\end{equation}
			
			It is left to show that the first term in \eqref{eq:lem:inf-varphi-diff:diff-partitioning} is also of order $o_{cp}(1)$. For this we use extensively the results from~\citet{wets2003lipschitz} on the lipshitz-continuity of inf-projections.
			
			The first term in \eqref{eq:lem:inf-varphi-diff:diff-partitioning} can be rewritten as taking the infimum two times:
			\begin{align}\label{eq:lem:inf-varphi-diff:inf-inf-representation}
			\begin{split}
			\inf_{\lambda\in C^t_{A,\delta}(\widetilde{\lambda}_{b,app}^t)}\hspace{-0.4cm}(\varphi(\lambda) - \varphi(\widetilde{\lambda}_{b,app}^t)) &= \hspace{-0.6cm}
			\inf_{\substack{(u,v)\in C^t_{A,\delta}(\widetilde{\lambda}_{b,app}^t) \\ (u,v)\in \mathcal{U}\times \mathcal{V}}}\hspace{-0.4cm}
			[\varphi_* (\Pi_{\mathcal{U}}(\widetilde{\lambda}_{b,app}^t - \lambda_*) + \frac{u}{\sqrt{t}}, \Pi_{\mathcal{V}}(\widetilde{\lambda}_{b,app}^t - \lambda_*) + \frac{v}{t}) \\
			& - 
			\varphi_* (\Pi_{\mathcal{U}}(\widetilde{\lambda}_{b,app}^t - \lambda_*), \Pi_{\mathcal{V}}(\widetilde{\lambda}_{b,app}^t - \lambda_*))],
			\end{split}
			\end{align}
			where 
			\begin{align}\label{eq:lem:inf-varphi-diff:phi-star-def}
			\begin{split}
			&\varphi_*(u,v) = \inf_{\substack{w : \lambda_* + u + v + w \succeq 0, \\ w\in \mathcal{W}}} \hspace{-0.4cm} \varphi(\lambda_* + u + v + w),\\
			&u\in \mathcal{U}, \, v\in \mathcal{V} : (\lambda_* + u + v + \mathcal{W})\cap \R^p_+ \neq \emptyset.
			\end{split}
			\end{align}
			The expression in the square brackets in \eqref{eq:lem:inf-varphi-diff:inf-inf-representation} is essentially the variation of the inf-projection for $\varphi_*(u, v)$ for parameter $(u,v)\in \mathcal{U} \times \mathcal{V}$ in the vicinity of zero along $\mathcal{U}\oplus \mathcal{V}$. Indeed, this follows from the facts that $\Pi_{\mathcal{U}}(\widetilde{\lambda}_{b,app}^t - \lambda_*)$ and $\Pi_{\mathcal{V}}(\widetilde{\lambda}_{b,app}^t - \lambda_*)$ are both of order $o_{cp}(1)$ and $u/\sqrt{t}$, $v/t$ are also $o_{cp}(1)$ in view of the fact that $(u,v)\in C^t_{A,\delta}(\widetilde{\lambda}^t_{b,app})$.
			
			Using Theorem~3.4 and examples in Section~4 (pp. 278-282) of  \citet{wets2003lipschitz} we find that $\varphi_*(u,v)$ is locally Lipschitz continuous.
			
			Indeed, consider the optimization problem in \eqref{eq:lem:inf-varphi-diff:phi-star-def}, where $(u,v)\in \mathcal{U} \times \mathcal{V}$ is a parameter. Then, the problem can be rewritten as follows:
			\begin{align}\label{eq:lem:inf-varphi-diff:phi0-def}
			&\inf_{w} \varphi_0((u,v); w), \, \varphi_0 : (\mathcal{U}\times \mathcal{V}) \times \mathcal{W} \rightarrow \overline{\R},\\ 
			&\varphi_0((u,v); w) = \begin{cases}
			\varphi(\lambda_* + u + v + w), \, \text{ if } \lambda_* + u + v + w \succeq 0, \\
			+\infty, \text{ otherwise}, 
			\end{cases}
			\end{align}
			where $\overline{\R}$ denotes the extended real line.
			From the fact that $\varphi(\cdot)$ is locally Lipschitz continuous it is easy to see that $\varphi_0$ is locally Lipschitz continuous on $D = \{(u, v, w)\in \mathcal{U}\times \mathcal{V} \times \mathcal{W} : \lambda_* + u + v + w \succeq 0\}$, where the latter is a polyhedral subset of~$\mathcal{U}\times \mathcal{V} \times \mathcal{W}$.
			
			Consider the feasibility mapping 
			\begin{align}\label{eq:lem:inf-varphi-diff:phi0-feasibility-mapping-def}
			S : \mathcal{U}\times \mathcal{V} \rightrightarrows \mathcal{W} \text{ with } S(u,v) = \{w \in \mathcal{W} : \lambda_* + u + v + w \succeq 0\},
			\end{align}
			where $\rightrightarrows$ denotes the property to be a set-valued mapping.
			From \eqref{eq:lem:inf-varphi-diff:phi0-feasibility-mapping-def} one can see that $\mathrm{gph}\, S = D$ ($\mathrm{gph}$ denotes the graph of a  mapping). Therefore, $\mathrm{gph} \, S$ is polyhedral and, hence, the Proposition~4.1 from \citet{wets2003lipschitz} applies to our case (see also Example 9.35 in \citet{rockafellar2009variational}), so mapping $S$ in \eqref{eq:lem:inf-varphi-diff:phi0-feasibility-mapping-def} is Lipschitz continuous on $\mathrm{dom} \, S$ (as set-valued mapping). At the same time, the result of Lemma~\ref{lem:proofs:compactness-domain-penalty} implies that feasibility mapping $S$ is locally bounded which yields level boundedness in $w$ locally uniformly in $(u,v)$ of $\varphi_0(\cdot, \cdot)$. The above properties are exactly the same is in Section~4 of \citet{wets2003lipschitz}, so Theorem~3.4 therein applies to the case of $\varphi_0$ from \eqref{eq:lem:inf-varphi-diff:phi0-def} and $\varphi_*(u,v) = \inf_{w}\varphi((u,v); w)$ is locally Lipschitz continuous. 
			
			Hence, there exists a constant $L > 0$ such that with conditional probability tending to one a.s. $Y^t$, $t\in (0, +\infty)$ the following holds
			
			\begin{align}
			\nonumber
			\mid\varphi_* &(\Pi_{\mathcal{U}}(\widetilde{\lambda}_{b,app}^t - \lambda_*) + \frac{u}{\sqrt{t}}, \Pi_{\mathcal{V}}(\widetilde{\lambda}_{b,app}^t - \lambda_*) + \frac{v}{t}) - 
			\varphi_* (\Pi_{\mathcal{U}}(\widetilde{\lambda}_{b,app}^t - \lambda_*), \Pi_{\mathcal{V}}(\widetilde{\lambda}_{b,app}^t - \lambda_*))\mid \\
			\label{eq:lem:inf-varphi-diff:phi-star-lipschitz-estim}
			& \leq L\left(\frac{\|u\|}{\sqrt{t}} + \frac{\|v\|}{t}\right) 
			\leq L\left(\frac{\delta}{\sqrt{t}} + c\frac{\delta}{t}\right)
			\text{ for any } (u,v)\in C^t_{A,\delta}(\widetilde{\lambda}_{b,app}^t),
			\end{align}
			where $c$ is a positive constant depending only on dimension $p$.
			
			Using formulas \eqref{eq:lem:inf-varphi-diff:inf-inf-representation},  \eqref{eq:lem:inf-varphi-diff:phi-star-lipschitz-estim} and the assumption that $\beta^t = o(\sqrt{t})$ we obtain
			\begin{align}\label{eq:lem:inf-varphi-diff:first-term-ocp-1}
			\beta^t \hspace{-0.5cm} \inf_{\lambda\in C^t_{A,\delta}(\widetilde{\lambda}_{b,app}^t)}\hspace{-0.4cm}(\varphi(\lambda) - \varphi(\widetilde{\lambda}_{b,app}^t)) = o_{cp}(1).
			\end{align}
			Formula \eqref{eq:lem:thm:asympt-distr:proof:inf-varphi-diff} directly follows from \eqref{eq:lem:inf-varphi-diff:second-term-ocp-1}, \eqref{eq:lem:inf-varphi-diff:first-term-ocp-1}.
			
			Lemma is proved.
		\end{proof}

		%\subsection{Proof of Theorem~\ref{thm:consistency-misspecified-kl-projector}}
		%\begin{proof}
		%Proof of the theorem is essentially the same as the proof of Theorem~\ref{thm:asympt-distr:well-spec:identifiability-cond}. Indeed, the assumptions for Theorem~\ref{thm:asympt-distr:well-spec:identifiability-cond} contain  restrictions only on $\Lambda^*$ which coincide with ones in the generalized non-expansiveness condition which is assumed to hold.
		%All steps in the proof of Theorem~\ref{thm:asympt-distr:well-spec:identifiability-cond} remain the same with $A_{\mathcal{M}}$ being replaced with $A$.
		
		%Theorem is proved.
		%\end{proof}
		
	\end{appendix}
	
	\bibliographystyle{./babib/ba}
	\bibliography{wbb-pet-all.bib}
	
\end{document}